\title{Learned Reference-based Diffusion Sampling
for multi-modal distributions}
\author{Maxence Noble\thanks{Both authors contributed equally.\\
Corresponding authors: \texttt{\{maxence.noble-bourillot,louis.grenioux\}@polytechnique.edu}}\hspace{0.4em}, Louis Grenioux\textsuperscript{$\ast$},  Marylou Gabrié \& Alain Oliviero Durmus\\
CMAP, CNRS\\
École polytechnique, Institut Polytechnique de Paris\\
91120 Palaiseau, France
}
\newcolumntype{?}{!{\vrule width 1.5pt}}
\colorlet{linkcolor}{blue!70!black}
\definecolor{pearDark}{HTML}{2980B9}
\crefname{theorem}{theorem}{Theorems}
\Crefname{Theorem}{Theorem}{Theorems}
\newtheorem*{lemma_nonumber*}{Lemma}
\newaliascnt{lemma}{theorem}
\newtheorem{lemma}[lemma]{Lemma}
\crefname{lemma}{lemma}{lemmas}
\Crefname{Lemma}{Lemma}{Lemmas}
\newaliascnt{corollary}{theorem}
\newtheorem{corollary}[corollary]{Corollary}
\crefname{corollary}{corollary}{corollaries}
\Crefname{Corollary}{Corollary}{Corollaries}
\newaliascnt{proposition}{theorem}
\newtheorem{proposition}[proposition]{Proposition}
\crefname{proposition}{proposition}{propositions}
\Crefname{Proposition}{Proposition}{Propositions}
\newaliascnt{definition}{theorem}
\crefname{definition}{definition}{definitions}
\Crefname{Definition}{Definition}{Definitions}
\newaliascnt{remark}{theorem}
\crefname{remark}{remark}{remarks}
\Crefname{Remark}{Remark}{Remarks}
\crefname{figure}{figure}{figures}
\Crefname{Figure}{Figure}{Figures}
\newtheorem{assumption}{\textbf{A}\hspace{-3pt}} \setcounter{assumption}{-1}
\Crefname{assumption}{\textbf{A}\hspace{-3pt}}{\textbf{A}\hspace{-3pt}}
\crefname{assumption}{\textbf{A}}{\textbf{A}}
\crefname{assumption}{assumption}{assumptions}
\Crefname{Assumption}{Assumption}{Assumptions}
\newtheorem{assumptionF}{\textbf{F}\hspace{-3pt}}
\Crefname{assumptionB}{\textbf{B}\hspace{-3pt}}{\textbf{B}\hspace{-3pt}}
\crefname{assumptionB}{\textbf{B}}{\textbf{B}}
\Crefname{assumptionC}{\textbf{C}\hspace{-3pt}}{\textbf{C}\hspace{-3pt}}
\crefname{assumptionC}{\textbf{C}}{\textbf{C}}
\Crefname{assumptionH}{\textbf{H}\hspace{-3pt}}{\textbf{H}\hspace{-3pt}}
\crefname{assumptionH}{\textbf{H}}{\textbf{H}}
\Crefname{assumptionT}{\textbf{T}\hspace{-3pt}}{\textbf{T}\hspace{-3pt}}
\crefname{assumptionT}{\textbf{T}}{\textbf{T}}
\Crefname{assumptionT}{\textbf{T}\hspace{-3pt}}{\textbf{T}\hspace{-3pt}}
\crefname{assumptionT}{\textbf{T}}{\textbf{T}}
\Crefname{assumptionL}{\textbf{L}\hspace{-3pt}}{\textbf{L}\hspace{-3pt}}
\crefname{assumptionL}{\textbf{L}}{\textbf{L}}
\Crefname{assumptionQ}{\textbf{Q}\hspace{-3pt}}{\textbf{Q}\hspace{-3pt}}
\crefname{assumptionQ}{\textbf{Q}}{\textbf{Q}}
\Crefname{assumptionAR}{\textbf{AR}\hspace{-3pt}}{\textbf{AR}\hspace{-3pt}}
\crefname{assumptionAR}{\textbf{AR}}{\textbf{AR}}
\newcommand*{\centerfloat}{%
  \parindent \z@
  \leftskip \z@ \@plus 1fil \@minus \textwidth
  \rightskip\leftskip
  \parfillskip \z@skip}
\def\Qbb {\mathbb{Q}}
\def\Pbb {\mathbb{P}}
\def\Pmeasure{\mathscr{P}}
\newcommand{\tcmb}[1]{\textcolor{blue}{#1}}
\def\msx{\mathsf{X}}
\def\mcz{\mathcal{Z}}
\def\rset{\mathbb{R}}
\def\Rset{\mathbb{R}}
\def\rmd{\mathrm{d}}
\def\rmC{\mathrm{C}}
\def\KL{\mathrm{KL}}
\newcommand{\abs}[1]{\left\vert #1 \right\vert}
\newcommandx{\psr}[3][3=]{\left\langle#1,#2 \right\rangle_{#3}}
\newcommandx{\normr}[2][2=]{ \left\Vert#1 \right\Vert_{#2}}
\newcommandx{\psrLigne}[3][3=]{\langle#1,#2 \rangle_{#3}}
\newcommandx{\normrLigne}[2][2=]{ \Vert#1 \Vert_{#2}}
\newcommandx{\norm}[2][1=]{\ifthenelse{\equal{#1}{}}{\left\Vert #2 \right\Vert}{\left\Vert #2 \right\Vert^{#1}}}
\newcommand\probaMarkovTilde[2][2=]
\def\ie{\textit{i.e.}}
\def\eqsp{\;}
\newcommand{\ccint}[1]{\left[#1\right]}
\newcommand\sequence[3][2=,3=]
\newcommand\sequenceD[3][2=,3=]
\newcommand\sequenceDouble[4][3=,4=]
\def\Idd{\mathrm{I}_d}
\newcommand{\ensembleLigne}[2]{\{#1\,:\eqsp #2\}}
\def\path{\operatorname{path}}
\DeclareMathOperator{\Var}{Var}
\def\loiGauss{\mathrm{N}}
\def\gauss{\loiGauss}
\newcommand{\beq}{\begin{equation}}
\newcommand{\eeq}{\end{equation}}
\def\thetas{{\theta^{\star}}}
\def\htheta{\hat{\theta}}
\def\densityGaussian{\gauss}
\def\setFunConT{\mathbf{C}_T}
\def\gammaref{\gamma^{\mathrm{ref}}}
\newcommand{\piprior}{\pi^{\text{base}}}
\newcommand{\piref}{\pi^{\text{ref}}}
\newcommand{\hpiref}{\hat{\pi}^{\text{ref}}}
\newcommand{\pref}{p^{\text{ref}}}
\newcommand{\ptheta}{p^{\theta}}
\newcommand{\pphi}{p^{\varphi}}
\newcommand{\normcte}{\mathcal{Z}}
\algnewcommand{\LeftComment}[1]{ \tcmb{\(\triangleright\) #1}}
\begin{document}

\maketitle

\begin{abstract}
Over the past few years, several approaches utilizing score-based diffusion have been proposed to sample from probability distributions, that is without having access to exact samples and relying solely on evaluations of unnormalized densities. The resulting samplers approximate the time-reversal of a noising diffusion process, bridging the target distribution to an easy-to-sample base distribution. In practice, the performance of these methods heavily depends on key hyperparameters that require ground truth samples to be accurately tuned. Our work aims to highlight and address this fundamental issue, focusing in particular on multi-modal distributions, which pose significant challenges for existing sampling methods. Building on existing approaches, we introduce \emph{Learned Reference-based Diffusion Sampler} (LRDS), a methodology specifically designed to leverage prior knowledge on the location of the target modes in order to bypass the obstacle of hyperparameter tuning. LRDS proceeds in two steps by (i) learning a \emph{reference} diffusion model on samples located in high-density space regions and tailored for multimodality, and (ii) using this reference model to foster the training of a diffusion-based sampler. We experimentally demonstrate that LRDS best exploits prior knowledge on the target distribution compared to competing algorithms on a variety of challenging distributions.
\end{abstract}

\vspace{-0.4cm}
\section{Introduction}\label{sec:intro}
\vspace{-0.1cm}
We consider the problem of sampling from a
probability density known up to a normalizing constant. More precisely, we consider a  target distribution $\pi\in \Pmeasure(\rset^d)$ with probability density given by 
$x \mapsto  \gamma(x)   / \mcz$,
where $\gamma: \rset^d \to \rset_+$ can be pointwise evaluated and the normalizing constant $\normcte=\int_{\rset^d}\gamma(x)\rmd x$ is intractable. This problem appears in a wide variety of applications such as Bayesian statistics \citep{liu2001monte, kroeseHandbookMonteCarlo2011}, statistical mechanics \citep{Krauth2006} or molecular dynamics \citep{stoltz2010free}. In particular, we are interested in sampling from  \emph{multi-modal} distributions, \ie, distributions whose density admits multiple local maxima, called \emph{modes}. 
Finding the modes of such distributions is a notoriously hard problem, yet, maybe surprisingly, even if the location of the modes is known, sampling $\pi$ remains a very challenging problem \citep{noe_boltzmann_2019,pompe_framework_2020,grenioux2023onsampling}.
In this work, we aim to address this specific issue and will assume that we have access to the location of the modes as prior information on $\pi$. However, 
we do not assume to have access \emph{a priori} to ground truth samples from $\pi$.%
\vspace{-0.2cm}
\paragraph{Annealed MCMC.}
Markov Chain Monte Carlo (MCMC) samplers are among the most popular approaches for sampling. In particular, gradient-based methods based on discretizations of Langevin or Hamiltonian dynamics \citep{roberts1996exponential,neal2012mcmc, hoffman2014nuts} are guaranteed to be efficient for high-dimensional target distributions that are log-concave or satisfy functional inequalities \citep{dalalyan2017theoretical, durmus2017nonasymptotic}. However, when applied to multi-modal distributions, these MCMC methods suffer from high mixing time, that dramatically increases with the magnitude of the energy barriers between the modes, see \cref{app:failure-MCMC} for an illustration. To alleviate this issue, \emph{annealing} strategies have been proposed. They consist in bridging an easy-to-sample \emph{base} distribution $\piprior$ to the \emph{target} distribution $\pi$ via intermediate distributions $\{\pi_k\}_{k=0}^K$ with $K\geq 1$, $\pi_0 = \piprior$ and $\pi_K = \pi$. 
This sequence should 
form a smooth path in the distribution space, so that sampling from $\pi_{k+1}$ starting from samples from $\pi_k$ is relatively simple. Typically, up to normalizing constants again, the densities of these annealed distributions are chosen as geometric interpolation between $\piprior$  and $\gamma$, 
and are sampled either sequentially \citep{neal2001annealed, del2006sequential} or in parallel \citep{swendsenReplicaMonteCarlo1986}. However, 
these approaches
may be prone to mode switching, \ie, the relative weight of the modes of the intermediate distributions may vary dramatically along the prescribed path \citep{woodard2009sufficient, tawn2020weight, syed2022non}. Hence, recovering the proportions of the different modes of $\pi$ is hard in practice.

Alternatively, the bridging distributions $\{\pi_k\}_{k=0}^K$ can be implicitly defined as the marginals of a diffusion process. %
A particular case of interest is the annealing path given by the time-reversal of a noising process, which is at the foundation of diffusion-based generative models \citep{sohl2015deep, song2020score, ho2020denoising}, and has been proved to be robust to mode switching \citep{phillips2024particle}. Following this approach, several diffusion-based samplers relying exclusively on Monte Carlo methods have recently shown promises for sampling from multi-modal distributions
\citep{huang2023monte, huang2024faster, grenioux2024stochastic}. However, these non-parametric approaches require the repeated estimation of an intractable drift at each draw and at each annealing step.

\vspace{-0.2cm}
\paragraph{Annealed VI.} Due to their ability to amortize inference,
Variational Inference (VI) techniques are a popular alternative to MCMC methods. Given a class of parameterized (variational) distributions that are easy to sample from, they aim to find the optimal parameters that minimize a fixed discrepancy metric with respect to $\pi$. Standard VI settings involve mean-field approximations \citep{Wainwright2008}, mixture models \citep{arenz2022unified}, and more recently normalizing flows \citep{Rezende2014, 
papamakarios2021normalizing}, that typically optimize a reverse Kullback-Leibler (KL) objective. However, these methods struggle with multi-modal target distributions as they suffer from mode collapse \citep{jerfel_variational_2021,blessing2024beyond}. 

To tackle this issue, it has been proposed to follow the annealing paradigm by considering an extended target distribution with marginals corresponding to the $\{\pi_k\}_{k=0}^K$. 
 This approach has been explored
following explicit bridging paths, \ie, $\{\pi_k\}_{k=0}^K$ admit tractable unnormalized densities \citep{wu2020stochastic,arbel2021annealed, geffner2021mcmc,matthews2022continual,doucet2022score, geffner2023langevin, vargas2023transport}, but the resulting samplers may suffer from mode switching, as explained above. On the other hand, a recent class of annealed VI samplers using implicit diffusion-based paths has emerged \citep{tzen2019theoretical, holdijk2022path, pavon2022local, zhang2021path, berner2022optimal, vargas2023denoising, vargas2023bayesian, zhang2021path, berner2022optimal, vargas2023bayesian, akhound2024iterated, phillips2024particle} and seems well-suited for multi-modal target distributions. 
As we will further evidence in the present work,
the promising results obtained with the latter methods are nevertheless mitigated by the need for careful tuning of their hyperparameters, which requires access to ground truth samples.

\vspace{-0.2cm}
\paragraph{Contributions}
Within this context, we propose the \emph{Learned Reference-based Diffusion Sampler} (LRDS), a variational diffusion-based sampler specifically designed for multi-modal distributions, which extends works from \cite{zhang2021path, vargas2023denoising, richter2023improved}:
\begin{itemize}[wide, labelindent=0pt]
    \vspace{-0.3cm}
    \item In the multi-modal setting, we highlight the sensitivity of previous variational diffusion-based methods with respect to their hyperparameters, which restrains their use in practice.
    \vspace{-0.1cm}
    \item To address this limitation, LRDS leverages the knowledge of the mode locations following two approaches: (a) GMM-LRDS relies on Gaussian Mixture Models and is well suited for a large variety of target distributions while being relatively lightweight, and (b) EBM-LRDS, a more computationally intensive scheme, takes advantage of  Energy-Based Models for harder sampling problems. 
    \vspace{-0.5cm}
    \item We show that GMM-LRDS and EBM-LRDS outperform previous diffusion-based samplers on challenging multi-modal settings.
    \vspace{-0.2cm}
\end{itemize}

\paragraph{Notation.} For any measurable space $(\msx, \mathcal{X})$, we denote
by $\Pmeasure(\msx)$ the space of probability measures defined on
$(\msx, \mathcal{X})$. Unless specified, if $\msx$ is a topological space, $\mathcal{X}$ is defined as the Borel $\sigma$-field of $\msx$. For any $T>0$, we denote by $\setFunConT=\rmC(\ccint{0,T}, \rset^d)$ the space of continuous functions from $\ccint{0,T}$ to $\rset^d$ endowed with the uniform topology; hence $\Pmeasure(\setFunConT)$ is the set of continuous-time stochastic processes (or path measures) defined on $\ccint{0,T}$. For any $\Pbb\in \Pmeasure(\setFunConT)$, we denote by $\Pbb^R$ its \emph{time-reversal}, defined such that if $(X_t)_{t\in \ccint{0,T}}\sim \Pbb$, then $(X_{T-t})_{t\in \ccint{0,T}}\sim \Pbb^R$. For any $t\in \ccint{0,T}$, we denote by $\Pbb_t$ the marginal distribution of $\Pbb$ at time $t$. The density of the Gaussian distribution with mean $\mathbf{m}\in \rset^d$ and covariance $\Sigma\in \rset^{d \times d}$ is denoted by $x \mapsto\densityGaussian(x;\mathbf{m}, \Sigma)$ and $\Delta_J$ refers to the $J$-dimensional simplex, \ie, $\Delta_J=\ensembleLigne{(w_j)_{j=1}^J\in \ccint{0,1}^J}{\sum_{j=1}^J w_j=1}$ where $J \geq 1$. 

Finally, for ease of understanding, we will denote \emph{noising} diffusion processes by $(X_t)_{t\in \ccint{0,T}}$, driven by Brownian motion $(W_t)_{t\in \ccint{0,T}}$, while $(Y_t)_{t\in \ccint{0,T}}$ will refer to \emph{denoising} diffusion processes, driven by Brownian motion $(B_t)_{t\in \ccint{0,T}}$. In this paper, $\Theta$ will refer to as the variational parameter space.

\vspace{-0.2cm}
\section{Reference-based Diffusion Sampling}\label{sec:background}

\subsection{Theoretical framework} \label{subsec:theory}

We first recall the variational diffusion-based framework formulated by \cite{richter2023improved} in continuous time, that generalizes the approaches from \cite{zhang2021path} and \cite{vargas2023denoising}. 

\vspace{-0.3cm}
\paragraph{Time-reversed sampling process.} Consider a general \emph{noising} diffusion process defined on $\ccint{0,T}$ by the linear SDE
\vspace{-0.3cm}
\begin{align}
\label{eq:SDE-noising}
    \rmd X_t = f(t) X_t\rmd t + \sqrt{\beta(t)} \rmd W_t \eqsp, \eqsp  X_0 \sim \pi \eqsp , 
\end{align}
where the horizon $T>0$ is fixed, $(W_t)_{t\in \ccint{0,T}}$ is a standard $d$-dimensional Brownian motion, $\beta:\ccint{0,T} \to (0, \infty)$ and $f:\ccint{0,T} \to \rset$. We denote by $\Pbb^\star$ the path measure associated to the SDE \eqref{eq:SDE-noising}. %
With appropriate choices of $f$ and $\beta$, this SDE admits explicit transition kernels and $X_T$ is approximately, or exactly, distributed according to an easy-to-sample \emph{base} distribution $\piprior$. 

In the case where $\Pbb^\star_T=\piprior$, we refer to this setting as an `exact' noising scheme. One particular instance of this setting the Pinned Brownian Motion (PBM), considered by \cite{tzen2019theoretical, zhang2021path, vargas2023bayesian}, where $\piprior=\updelta_0$. Otherwise, when we only have $\Pbb^\star_T\approx\piprior$, we define this setting as an `ergodic' noising scheme. A well-known example is the Variance Preserving (VP) noising process for which $\piprior=\gauss(0, \sigma^2 \, \Idd)$ with $\sigma >0$, proposed by \cite{song2020score} for score-based generative models and \cite{vargas2023denoising} for sampling.
Numerical experiments presented in this paper focus on these two schemes to provide fair comparison with previous methods, but the presented approach is applicable for an arbitrary noising scheme. We refer to \Cref{app:diffusion_details} for more details.

Denote by $p^\star_t$ the density of $\Pbb^\star_t$ w.r.t. the Lebesgue measure. Under mild assumptions on $\pi$, $f$ and $\beta$, see e.g., \cite{cattiaux2021time}, the time-reversal of the noising process $\Pbb^\star$, \ie, the distribution of $(X_{T-t})_{t \in\ccint{0,T}}$ and denoted by $(\Pbb^\star)^R$, is associated to the SDE
{\small
\begin{align} \label{eq:SDE-denoising}
    \rmd Y_t = -f(T-t)Y_t\rmd t + \beta(T-t)\nabla \log p^\star_{T-t}(Y_t)\rmd t + \sqrt{\beta(T-t)} \rmd B_t \eqsp, \eqsp Y_0 \sim \Pbb^\star_T \eqsp, 
\end{align}
}%
where $(B_t)_{t \in \ccint{0,T}}$ is another standard $d$-dimensional Brownian motion.  By definition of the time-reversal, it holds that $Y_T\sim \pi$. Therefore, if we were able to simulate this diffusion process, we would obtain approximate samples from $\pi$. We therefore refer to $(Y_t)_{t\in \ccint{0,T}}$ as the \emph{target process}. However, the scores $(\nabla \log p^\star_{t})_{t\in \ccint{0,T}}$ involved in the drift function of \eqref{eq:SDE-denoising} are  intractable in general. In addition, these scores cannot be estimated via usual score matching techniques used in generative modeling \citep{hyvarinen2005estimation,vincent2011connection} since samples from $\pi$ are not available \emph{a priori} in the setting at hand. This point has been addressed by \cite{richter2023improved} who alternatively proposed a general variational approach on path measure space, which requires the definition of a reference process as described next.

\vspace{-0.2cm}
\paragraph{Variational reference-based approach.} 
This approach relies on a \emph{reference process} that is solution of SDE \eqref{eq:SDE-noising} with initial condition $X_0\sim \piref$, where $\piref\in \Pmeasure(\rset^d)$ is chosen such that the marginal scores of the associated path measure $\Pbb^{\text{ref}}$ are \emph{tractable}. %
Similarly to $\pi$, we assume that the probability density of $\piref$ is known up to a multiplicative constant and is given by $x \mapsto \gamma^{\text{ref}}(x)/\normcte^{\text{ref}}$, where $\gamma^{\text{ref}}:\rset^d \to \rset_+$ and $\normcte^{\text{ref}}= \int_{\rset^d}\gamma^{\text{ref}}(x) \rmd x$ is \emph{not necessarily tractable}. Finally, for any $t\in \ccint{0,T}$, we denote the marginal scores $s^{\text{ref}}_t=\nabla \log p_t^{\text{ref}}$, with $p_t^{\text{ref}}$ being the density of $\Pbb^{\text{ref}}_t$ w.r.t. the Lebesgue measure.
Based on the tractable reference scores, the SDE describing the target process \eqref{eq:SDE-denoising} can be rewritten as 
{\small
\begin{align} \label{eq:SDE-gen_true}
    \rmd Y_t = -f(T-t)Y_t\rmd t +  \beta(T-t)\{s^{\text{ref}}_{T-t}(Y_t) + g_{T-t}(Y_t)\} \rmd t + \sqrt{\beta(T-t)} \rmd B_t \eqsp,\eqsp Y_0 \sim \Pbb^\star_T \eqsp ,
\end{align}
}%
where $g_t=\nabla \log p^\star_t/p_t^{\text{ref}}$ is now the only intractable term. This formulation of the time-reversed SDE has already been largely used in the diffusion model literature, especially for conditional generative models; see e.g., \cite{dhariwal2021diffusion}. In this specific context, $g_t$ is called a guidance term. From a probabilistic perspective, $g_t$ can be interpreted as a Doob-h transform control, see \Cref{app:doob-h}. Exploiting the reference-based formulation given by \eqref{eq:SDE-gen_true}, \cite{richter2023improved} propose to estimate the guidance terms $(g_t)_{t\in\ccint{0,T}}$ rather than the target scores $(\nabla \log p_t^\star)_{t\in \ccint{0,T}}$. 

To this end, the authors consider the class of \emph{variational} path measures $(\Pbb^\theta)_{\theta\in \Theta}\subset \Pmeasure(\setFunConT)$ where for any $\theta \in \Theta$, $\Pbb^\theta$ is associated to the diffusion process
{\small
\begin{align} \label{eq:SDE-param}
    \rmd Y_t^\theta = -f(T-t)Y_t^\theta\rmd t +  \beta(T-t)\{s^{\text{ref}}_{T-t}(Y_t^\theta) + g^{\theta}_{T-t}(Y_t^\theta)\} \rmd t + \sqrt{\beta(T-t)} \rmd B_t \eqsp, Y_0^\theta \sim \piprior \eqsp,
\end{align}
}%
where $g^\theta:\ccint{0,T}\times \rset^d \to \rset^d$ is typically a neural network. In particular, choosing $\piref$ as a Gaussian distribution recovers exactly the previous VI methods \emph{Path Integral Sampler} (PIS) \citep{zhang2021path} and \emph{Denoising Diffusion Sampler} (DDS) \citep{vargas2023denoising}, see \Cref{table:DDS-PIS-comparison}. Furthermore, \cite{richter2023improved} propose to minimize the following LV-based objective
\begin{align}\label{eq:obj-cont}
\textstyle\mathcal{L}_\mathrm{LV}(\theta)=\Var\left[\log \{(\rmd \mathbb{P}^\theta /\rmd (\mathbb{P}^\star)^R)(Y^{\hat{\theta}}_{\ccint{0,T}})\}\right] \eqsp , \eqsp Y^{\hat{\theta}}_{\ccint{0,T}}\sim \mathbb{P}^{\hat{\theta}} \eqsp ,
\end{align}
where $\hat{\theta}$ is a detached version of $\theta$, meaning that gradients with respect to $\theta$ will not see $\hat{\theta}$. %
In contrast to the reverse KL divergence (previously used in PIS and DDS), which is notably known to suffer from mode collapse \citep{midgley2022flow}, the LV divergence \citep{nusken2021solving} has the benefits to improve mode exploration, to avoid costly computations of the loss gradients and to have zero variance at the optimal solution \cite[Section 2.3]{richter2023improved}. Finally, the LV loss \eqref{eq:obj-cont} can be further explicited, as detailed in the following proposition.
\begin{proposition}[] \label{prop:obj-cont} Assume that $\Pbb^\star_T=\Pbb^{\text{ref}}_T=\piprior$ and there exists $\thetas \in \Theta$ such that $g^\thetas_t=g_t$. Then, the loss defined in \eqref{eq:obj-cont} achieves optimal solution at $\thetas$ and, setting  $\varrho= \log (\gammaref/\gamma)$, it simplifies as
\small{
\begin{align} \label{eq:LV_formulation}
\mathcal{L}_{\mathrm{LV}}(\theta)& =\Var\left[\int_0^T \frac{\beta(T-t)}{2}\norm{g^{\theta}_{T-t}(Y_t^{\htheta})}^2 \rmd t + \int_0^T \sqrt{\beta(T-t)}g^{\theta}_{T-t}(Y_t^{\htheta})^\top\rmd B_t + \varrho (Y_T^{\htheta} )\right] \eqsp.
\end{align}
}%
\end{proposition}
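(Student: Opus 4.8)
The plan is to establish the two claims separately but with a shared device: factorize the Radon--Nikodym derivative through the time-reversal $(\Pbb^{\text{ref}})^R$ of the reference noising process, so that one factor isolates the $g^\theta$-dependent dynamics and the other isolates the change of initial law. Concretely, I would write
\begin{align*}
\log\frac{\rmd\Pbb^\theta}{\rmd(\Pbb^\star)^R}(Y) = \log\frac{\rmd\Pbb^\theta}{\rmd(\Pbb^{\text{ref}})^R}(Y) + \log\frac{\rmd(\Pbb^{\text{ref}})^R}{\rmd(\Pbb^\star)^R}(Y)\eqsp,
\end{align*}
and treat the first factor by Girsanov's theorem and the second by reduction to the initial laws of the two noising processes.

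For the first factor, the key observation is that $(\Pbb^{\text{ref}})^R$ solves exactly SDE \eqref{eq:SDE-param} with $g^\theta\equiv 0$ and initial law $\Pbb^{\text{ref}}_T=\piprior$. Hence $\Pbb^\theta$ and $(\Pbb^{\text{ref}})^R$ share the same diffusion coefficient $\sqrt{\beta(T-\cdot)}$, the same initial law $\piprior$, and drifts differing only by $\beta(T-t)g^\theta_{T-t}$. Applying Girsanov's theorem and then re-expressing the resulting stochastic integral in the Brownian motion $(B_t)$ that drives the sampling process $\Pbb^{\htheta}$ gives, with $u_t=\sqrt{\beta(T-t)}g^\theta_{T-t}(Y_t)$,
\begin{align*}
\log\frac{\rmd\Pbb^\theta}{\rmd(\Pbb^{\text{ref}})^R}(Y)=\int_0^T u_t^\top \rmd B_t + \tfrac{1}{2}\int_0^T \norm{u_t}^2\rmd t\eqsp,
\end{align*}
which is precisely the first two terms of \eqref{eq:LV_formulation}. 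Note the change of integrator from the reference to the sampling Brownian motion is what converts the usual $-\tfrac12\int\norm{u_t}^2$ into $+\tfrac12\int\norm{u_t}^2$.

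The second factor is where the time-reversal must be handled with care. Since $\Pbb^{\text{ref}}$ and $\Pbb^\star$ are both generated by the \emph{same} noising SDE \eqref{eq:SDE-noising} and differ only through their initial distributions $\piref$ and $\pi$, their path measures share identical Markov transition kernels; disintegrating along $X_0$ then yields $(\rmd\Pbb^{\text{ref}}/\rmd\Pbb^\star)(X)=(\rmd\piref/\rmd\pi)(X_0)=\gammaref(X_0)\normcte/(\gamma(X_0)\normcte^{\text{ref}})$. Passing to the time-reversal, where $X_0$ becomes the terminal state $Y_T$, gives $\log\{(\rmd(\Pbb^{\text{ref}})^R/\rmd(\Pbb^\star)^R)(Y)\}=\varrho(Y_T)+\log(\normcte/\normcte^{\text{ref}})$. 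Summing the two factors and observing that $\log(\normcte/\normcte^{\text{ref}})$ is a deterministic constant that the variance annihilates yields \eqref{eq:LV_formulation}. For optimality, the hypotheses $\Pbb^\star_T=\piprior$ and $g^{\thetas}=g$ make SDE \eqref{eq:SDE-param} at $\thetas$ coincide with SDE \eqref{eq:SDE-gen_true} in drift, diffusion, and initial law, so by strong uniqueness $\Pbb^{\thetas}=(\Pbb^\star)^R$; the log-density is then identically $0$ and $\mathcal{L}_{\mathrm{LV}}(\thetas)=\Var[0]=0$, which is the global minimum since $\mathcal{L}_{\mathrm{LV}}\geq 0$. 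The main obstacle is the rigor behind the second factor: justifying the equality of transition kernels, the disintegration along $X_0$, and the bookkeeping of how the initial-law ratio is carried to the endpoint under reversal; the Girsanov step for the first factor is comparatively routine, requiring only a Novikov-type integrability condition on $u_t$.
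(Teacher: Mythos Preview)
Your proposal is correct and follows essentially the same approach as the paper: factorize the log-ratio through $(\Pbb^{\text{ref}})^R$, apply Girsanov for the drift difference and use the relation $\Pbb^\star = (\rmd\pi/\rmd\piref)\cdot\Pbb^{\text{ref}}$ for the initial-law ratio (which becomes the terminal term $\varrho(Y_T)$ plus the constant $\log(\normcte/\normcte^{\text{ref}})$ under time-reversal), then drop the additive constant under the variance. The paper's proof is terser on the Girsanov sign conversion and on the time-reversal bookkeeping you flag as the ``main obstacle,'' but the structure and key ingredients are identical.
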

This result is an adaptation of \cite[Lemma 3.1]{richter2023improved}, which proof is restated in \Cref{app:proof-cont} for completeness. We also emphasize that the assumption made in \Cref{prop:obj-cont} is only needed for ergodic noising schemes. In practice, \cite{richter2023improved} only consider two specific versions of this continuous-time variational loss, where $\piref$ is determined by the PIS and DDS settings. In the following, we will refer to these extensions of PIS and DDS as LV-PIS and LV-DDS. In contrast, we keep a general perspective, by letting $\piref$ completely arbitrary, and describe our proposition of practical implementation of a Reference-based Diffusion Sampler (RDS) in this context.

\begin{table*}[t!]
\centering
\caption{Connection to prior works. Here, $E^\varphi:\ccint{0,T}\times \rset^d \to \rset$ refers to a neural network.}
\vspace{-0.1cm}
\label{table:DDS-PIS-comparison}
\resizebox{\textwidth}{!}{%
\begin{tabular}{l|ccccc}
\toprule
\bf Method  & Noising & $\gamma^{\text{ref}}(x)$ & $\normcte^{\text{ref}}$ &  $s^{\text{ref}}_t(x)$ & Reference parameters\\
\midrule
\\[-0.75em]
\textbf{PIS} \citep{zhang2021path} & PBM($\sigma$) & $\gauss(x; 0, \sigma^2 T \Idd)$ & 1 & $-x/\sigma^2(T-t)$ & $\sigma$ (\emph{tuned}) \\[0.5em]
\textbf{DDS} \citep{vargas2023denoising}& VP($\sigma$) & $\gauss(x; 0, \sigma^2 \Idd)$ & 1 & $-x/\sigma^2$ & $\sigma$ (\emph{tuned}) \\[0.5em]
\hline\\[-0.5em]
\textbf{GMM-LRDS} (\Cref{subsec:gmm-rds})& arbitrary& $\sum_{j=1}^J w_j \gauss(x; \mathbf{m}_j, \Sigma_j)$ & 1 & analytical & $\{w_j, \mathbf{m}_j, \Sigma_j\}_{j=1}^J$ (learned) \\[0.5em]
\textbf{EBM-LRDS} (\Cref{subsec:ebm-rds})& arbitrary& $\exp(-E^\varphi(t=0, x))$ & unknown & $-\nabla_x E^\varphi(t, x)$ & $\varphi$ (learned) \\[0.5em]
 \bottomrule
\end{tabular}}
\vspace{-0.3cm}
\end{table*}

\subsection{Reference-based Diffusion Sampling in practice}\label{subsec:practice}

\paragraph{Discrete-time setting.} In practice, the parameterized process $(Y_t^{\hat{\theta}})_{t\in \ccint{0,T}}$ cannot be simulated exactly and SDE \eqref{eq:SDE-param} can only be numerically solved  with a small step-size using, for example, 
the Euler-Maruyama (EM) or Exponential Integration (EI) \citep{durmus2015quantitative} discretization schemes, see \Cref{app:preli} for additional details. Consider a time discretization of the interval $\ccint{0,T}$ given by an increasing sequence of timesteps $\{t_k\}_{k=0}^K$ such that $t_0=0$, $t_{K}=T$ and $K\geq 1$. Given an \emph{arbitrary} reference distribution $\piref$, we propose to approximate the continuous-time objective defined in \eqref{eq:LV_formulation} by the following \emph{discrete time} RDS objective

\begin{equation}\label{eq:obj-disc}
\mathcal{L}_{\text{RDS}}(\theta)= \Var 
\medmath{\left[ \sum_{k=0}^{K-1} w_k g^{\theta}_{T-t_k}(Y_k)^\top \left\{g^{\hat{\theta}}_{T-t_k}(Y_k)- \frac{1}{2} g^{\theta}_{T-t_k}(Y_k)\right\} + \sum_{k=0}^{K-1} \sqrt{w_k} g^{\theta}_{T-t_k}(Y_k)^\top Z_k +  \varrho(Y_K)\right] \eqsp ,}
\end{equation}
where $\{Z_k\}_{k=0}^{K-1}$ are independently distributed according to $\gauss(0, \Idd)$, $\{Y_k\}_{k=0}^K$ is recursively defined by $Y_0\sim \piprior$ and for any $k\in \{0, \hdots, K-1\}$,
\begin{align}\label{eq:Y_k}
    Y_{k+1}= a_k Y_k + b_k \{s^{\text{ref}}_{T-t_k}( Y_k) + g^{\hat{\theta}}_{T-t_k}(Y_k)\} + \sqrt{c_k}Z_k \eqsp ,
\end{align}
with $\hat{\theta}$ being a detached version of $\theta$ and $\{w_k, a_k, b_k, c_k\}_{k=0}^{K-1}$ being \emph{tractable} coefficients
that depend on the choice of discretization and noising schemes. In particular, if $K$ is sufficiently large, $\{Y_k\}_{k=0}^K$ has approximately the same distribution as $\{Y^{\hat{\theta}}_{t_k}\}_{k=0}^K$. We refer to \Cref{app:proof-disc} for a detailed explanation on the computation of our loss and explicit values of the coefficients. We emphasize that this variational objective can be implemented with any reference distribution. In comparison, the practical loss functions\footnote{Although the authors do not provide any expression of a discrete-time version of \eqref{eq:LV_formulation} in their paper, a discrete-time loss is implemented for LV-PIS and LV-DDS in their official codebase available at \url{https://github.com/juliusberner/sde_sampler}.} exhibited by \cite{richter2023improved} are only designed for LV-PIS and LV-DDS, and may be seen as particular instances of \eqref{eq:obj-disc}.

\paragraph{Training and sampling with RDS.} The training stage of RDS consists in running $N$ iterations of Stochastic Gradient Descent (SGD) to minimize the discrete-time objective \eqref{eq:obj-disc}, for some $N\geq 1$. Like other variational diffusion-based objectives, the RDS loss is \emph{simulation-based}, 
meaning that, the whole process $\{Y_k\}_{k=0}^K$ defined in \eqref{eq:Y_k} must be simulated at each SGD step. The complete training procedure of RDS is summarized in \Cref{alg:rds}, stated in \Cref{app:alg}. After training, the final parameter $\theta^\star$ is fixed, and approximate samples from $\pi$ are obtained by simulating $\{Y_k\}_{k=0}^K$ from $g^{\theta^\star}$, using \eqref{eq:Y_k}, and taking $Y_K$.

\section{Learned RDS for multi-modal distributions}\label{sec:rds}
In this section, we highlight the crucial role played by the reference distribution $\piref$ in RDS when targeting multi-modal distributions. We first show in \Cref{subsec:discussion-gaussian} the limitations of previous variational reference-based methods and give intuition on how $\piref$ should be chosen. 
Based on this observation, we introduce \emph{Learned Reference-based Diffusion Sampler} (LRDS), a novel diffusion-based sampler defined as a practical instance of RDS, where $\piref$ is learned from prior knowledge on $\pi$, namely the location of their modes.
We present two approaches based on Gaussian Mixture Models (GMM-LRDS), see \Cref{subsec:gmm-rds}, and Energy-Based Models (EBM-LRDS), see \Cref{subsec:ebm-rds}.

\subsection{On the importance of the reference distribution}\label{subsec:discussion-gaussian}

Choosing a Gaussian distribution for $\piref$ ensures that the entire reference process is Gaussian. Therefore, the scores $(s_t^{\text{ref}})_{t\in \ccint{0,T}}$ can be analytically computed. We will refer to this case as \emph{Gaussian RDS} (G-RDS).  In this setting, previous variational reference-based methods (PIS, DDS and their log-variance version) focused on using isotropic centered Gaussian distributions as reference distributions (see \Cref{table:DDS-PIS-comparison}), \ie, $\piref = \gauss(0,\sigma^2 \Idd)$ where the hyperparameter is $\sigma\in (0, \infty)$. This parameter is fixed before running the  optimization over $\theta$ and a priori disconnected from the target distribution.
To assess the sensitivity of the algorithms with respect to the hyperparameter $\sigma$, we run LV-DDS and LV-PIS on a simple multi-modal distribution -- a $16$-dimensional bi-modal Gaussian mixture with two distinct weights -- and examine the accuracy of estimation of these weights (see \Cref{fig:sigma_sensi_and_weight_sensi}, left). We observe that the performance is highly dependent on the value of $\sigma$, with an optimal accuracy achieved when  the isotropic variance is directly related to statistics of the target distribution; see \Cref{app:gaussian-fitting} for details. However, it is hard to estimate this quantity without access to ground truth samples.

\vspace{-0.2cm}
\paragraph{Towards a multi-modal reference distribution.}The previous experiment suggests that
choosing a reference distribution that is close to the target distribution improves the performance of the VI sampler. To validate this intuition, we target the same Gaussian mixture as above using RDS with PBM and VP noising schemes, setting $\piref$ as a Gaussian mixture with the same components as the target with their \emph{relative weights} as hyperparameter (see \Cref{fig:sigma_sensi_and_weight_sensi}, right). 
We find that this design of $\piref$, with a well adjusted Gaussian mixture reference, enables RDS to robustly recover the information of the relative weight of the target modes.
These observations motivate us to adopt the following paradigm for RDS: given a multi-modal target distribution $\pi$, we aim to design $\piref$ close to $\pi$, sharing \emph{the same multi-modality characteristics}. 
Below, we present \emph{Learned Reference-based Diffusion Sampler} (LRDS), a complete sampling methodology relying on the RDS framework that leverages prior knowledge on the target modes to learn a well-suited reference distribution.

\begin{figure}[t!]
    \centering
    \includegraphics[width=0.69\linewidth]{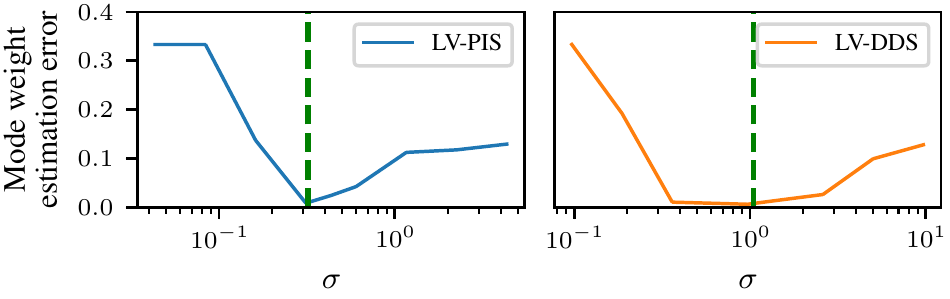}
    \hfill
    \includegraphics[width=0.282\linewidth]{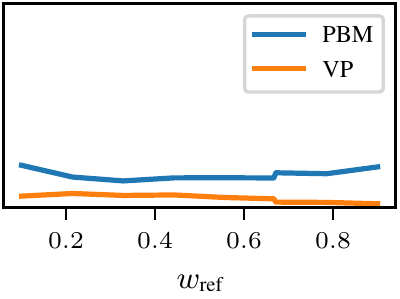}
    \caption{\textbf{Illustration of the decisive role of the reference distribution}. Here, we target a $16$-dimensional Gaussian mixture with two modes, that have respective weights $w=2/3$ and $1-w=1/3$, and display the estimation error of $w$ with different methods. \textbf{(Left)}: Results for LV-PIS and LV-DDS when varying the value of their hyperparameter $\sigma$ (which directly determines $\piref$ as shown in \Cref{table:DDS-PIS-comparison}). The green dotted line represents the optimal variance for Gaussian approximation of $\pi$, see \Cref{app:gaussian-fitting} for related computations.
    \textbf{(Right)} Results for RDS in PBM and VP settings when setting $\piref$ as a Gaussian mixture with the same modes as $\pi$, but $w$ is replaced by $w_{\text{ref}}$.
    Details on the design of this experiment are given in \Cref{app:details_target}.}
    \label{fig:sigma_sensi_and_weight_sensi}
\end{figure}

\paragraph{LRDS pipeline.}
Our algorithm LRDS proceeds in three main steps:
\begin{enumerate}[wide, labelindent=0pt, label=(\alph*)]
    \item \textbf{Obtain \emph{reference} samples}. Given a standard MCMC sampler which targets $\pi$, such as the \emph{Metropolis-Adjusted Langevin Algorithm} (MALA) \citep{roberts1996exponential}, we simulate multiple Markov chains that are initialized in the target mode locations. We refer to the obtained samples as the \emph{reference samples} and denote by $\hpiref$ the corresponding empirical distribution. As recalled in \Cref{app:failure-MCMC}, in presence of high energy barriers between the modes, $\hpiref$ fails at estimating the global energy landscape.  %
    Moreover, $\hpiref$ does not admit a density w.r.t. the Lebesgue measure.
    \item \label{item:ref} \textbf{Define the reference process.} In this stage, we aim at (i) learning $\piref$ as a continuous approximation of $\hpiref$, and (ii) computing the marginal scores at times $\{T-t_k\}_{k=0}^{K-1}$ of the induced reference process $\Pbb^{\text{ref}}$, \ie, the noising process defined by SDE \eqref{eq:SDE-noising}, where $X_0\sim\piref$. In practice, we set $\piref$ either as a Gaussian Mixture Model, which relies on a light parameterization, or as an Energy-Based Model, which is more expressive at the cost of higher computational budget.
    \item \textbf{Run the RDS variational optimization.} Once the reference distribution is learned and the reference scores are computed, we run the RDS procedure described in \Cref{subsec:practice}.
\end{enumerate}

In the next sections, we describe two practical implementations of stage \ref{item:ref} in LRDS using different parameterizations of the reference distribution.

\subsection{Gaussian Mixture Model LRDS}\label{subsec:gmm-rds}

We first propose to set $\piref$ as a \emph{Gaussian mixture} to integrate multi-modality in RDS.
In particular, we have $\normcte^{\text{ref}}=1$ and any reference density is parameterized as $\gamma^{\text{ref}}(\cdot)=\sum_{j=1}^J w_j \gauss(\cdot ; \mathbf{m}_j, \Sigma_j)$ where $J \geq 1$ is the number of reference modes, $\{w_j\}_{j=1}^J\in \Delta_J$ are the reference weights, $\mathbf{m}_j \in \rset^d$ and $\Sigma_j \in \rset^{d\times d}$ are respectively the mean and the covariance of the $j$-th reference mode. In this setting, 
each marginal of $\Pbb^{\text{ref}}$ is also a Gaussian mixture whose parameters can be simply deduced from $\gamma^{\text{ref}}$, making their score fully tractable, see \Cref{app:preli}. Therefore, the only difficulty at stage \ref{item:ref} lies in the estimation of the reference parameters $\varphi=\{w_j, \mathbf{m}_j, \Sigma_j\}_{j=1}^J$ for some $J\geq 1$ fixed in advance. If $J>1$, we refer to this version of the algorithm as \emph{Gaussian Mixture Model LRDS} (GMM-LRDS). In this case, we learn $\varphi$ by running the Expectation Minimization (EM) algorithm \citep{dempster1977maximum} with samples from $\hpiref$. If $J=1$, LRDS reduces to \emph{Gaussian LRDS} (G-LRDS), and we learn parameters $(\mathbf{m}_1,\Sigma_1)$ by Maximum Likelihood (ML) estimation. In practice, we observe that setting $J$ equal or larger to the number of target modes can lead to better performance. We summarize the whole GMM-LRDS methodology in \Cref{alg:gmm-rds}, provided in \Cref{app:alg}, and illustrate in \Cref{fig:gmm_rds_explain} the behaviours of GMM-LRDS and G-LRDS for a bi-modal target distribution. Our results show that choosing a Gaussian reference distribution leads to mode collapse, while defining $\piref$ as in GMM-LDRS enables RDS to recover the relative weights accurately.

\begin{figure}[t!]
    \centering
    \vspace{-0.2cm}
    \includegraphics[width=\linewidth]{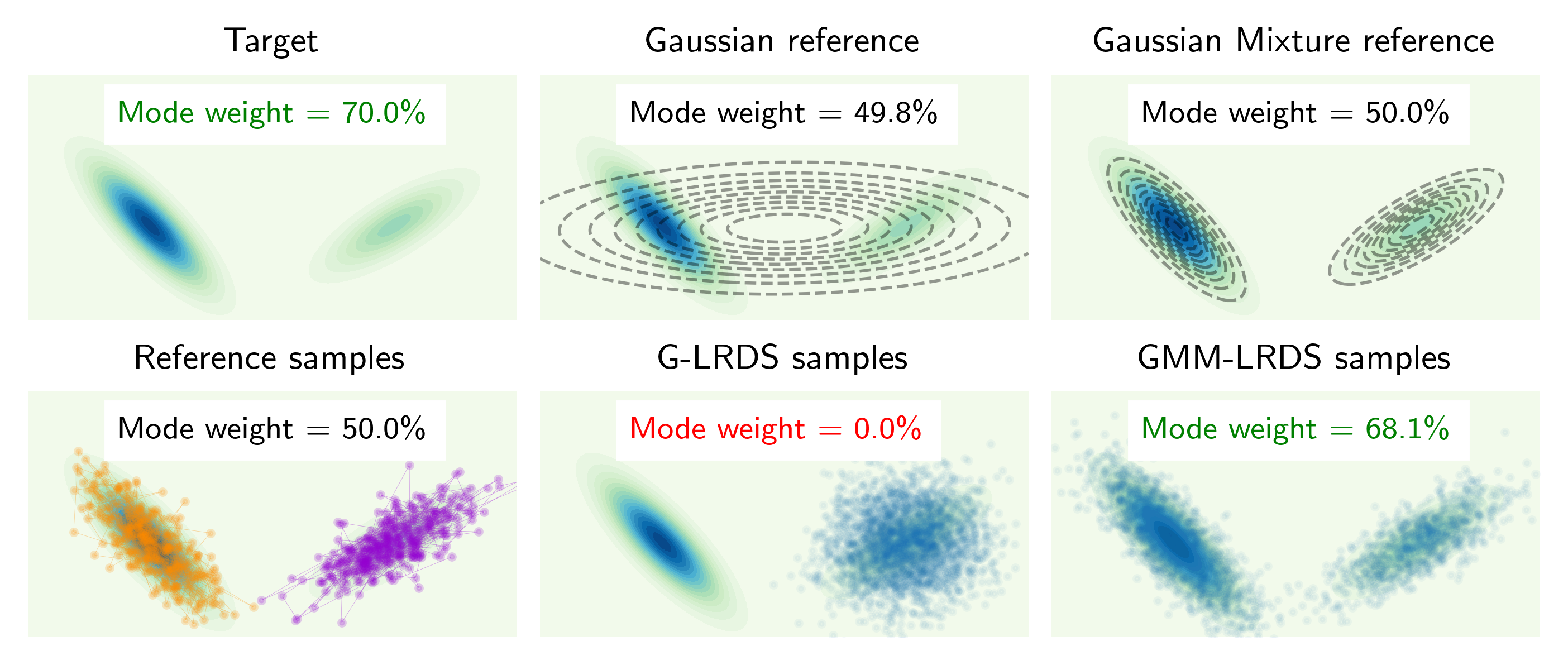}
    \caption{\textbf{Comparison between G-LRDS and GMM-LRDS.} Here, the target distribution is the same $16$-dimensional Gaussian mixture as in \Cref{fig:sigma_sensi_and_weight_sensi} (top left), see \Cref{app:details_target} for more details. For illustration purpose, projections along the first two coordinates are used. In each cell, the value of 'Mode weight' refers to the effective weight of the left mode. Reference samples (bottom left) are obtained by running MALA sampler initialized in both target modes: each color (orange/purple) depicts one MALA Markov chain. In particular, none of them mixes between the modes. Running G-LRDS with an ML-estimated Gaussian reference (top middle) leads to mode collapse (bottom middle). Conversely, GMM-LRDS with an EM-estimated Gaussian mixture reference (top right) appropriately recovers the target distribution and the true mode proportions (bottom right).}
    \label{fig:gmm_rds_explain}
    \vspace{-0.4cm}
\end{figure}

\vspace{-0.2cm}
\subsection{Energy-Based Model LRDS}\label{subsec:ebm-rds}

Although GMM-LRDS efficiently introduces multi-modality in the reference distribution while conserving the tractability of the reference scores, we anticipate that in some cases, a Gaussian mixture may not provide a correct approximation of $\hpiref$, regardless of the number of reference modes $J$, see \Cref{app:failure-GMM} for illustrations. To provide more flexibility, we propose a second version of LRDS where the reference distribution is parameterized by 
an \emph{Energy-Based Model} (EBM), \ie, $\gamma^{\text{ref}}(x) = \exp(-E^{\varphi}(x))$ where $E^{\varphi} : \rset^d \to \rset$ is a neural network with parameters $\varphi$ and the normalizing constant $\normcte^{\text{ref}} = \int_{\rset^d} \exp(-E^{\varphi}(x)) \rmd x$ is \emph{intractable}. In contrast to GMM-LRDS, the corresponding reference scores cannot be computed analytically anymore; hence, we suggest to \emph{estimate} them with neural networks. In the following, we will denote $\piref$ by $\pphi$ to insist on its parametric nature.

At first sight, it seems natural to learn the density $\gamma^{\text{ref}}$ and the scores $(s_t^{\text{ref}})_{t\in \ccint{0,T}}$ independently from each other. Indeed, following the diffusion model literature \citep{song2020score, karras2022elucidating}, the reference scores could be easily learned using a \emph{Score Matching} (SM) objective with samples from $\hpiref$ \citep{hyvarinen2005estimation,vincent2011connection}. On the other hand, one could learn $\pphi$ by maximizing the standard \emph{Maximum Likelihood} (ML) objective $\mathcal{L}^{\text{ML}}:\varphi \mapsto \mathbb{E}\left[\log \pphi(X)\right]$, where $X \sim \hat{\pi}^{\text{ref}}$,
with corresponding gradient given by $\varphi \mapsto  \mathbb{E}[\nabla_{\varphi} E^{\varphi}(X^{-})] - \mathbb{E}[\nabla_{\varphi} E^{\varphi}(X^{+})]$, where $X^{-} \sim\pphi$ and $X^{+} \sim \hat{\pi}^{\text{ref}}$.
Nevertheless, computing gradients of objective $\mathcal{L}^{\text{ML}}$ requires to sample from $\pphi$ (\emph{negative sampling}), which is a well known hurdle in EBM training since $\pphi$ is expected to be as multi-modal as $\hpiref$, and thus $\pi$. 

{Therefore, we rather suggest to learn a path of parametric distributions $(\pphi_t)_{t\in \ccint{0,T}}$ as the marginal distributions of $(\hat{X}_t^{\text{ref}})_{t\in \ccint{0,T}}$, defined as the noising process induced by SDE \eqref{eq:SDE-noising}, starting at $\hat{X}_0^{\text{ref}} \sim \hat{\pi}^{\text{ref}}$. To do so, we set $(\pphi_t)_{t\in \ccint{0,T}}$ as a \emph{multi-level} EBM, \ie, $p^{\varphi}_t(x) = \exp(-E^{\varphi}(t, x)) / \normcte^\varphi_t$, where $E^{\varphi} : \ccint{0,T} \times \rset^d \to \rset$ is now a \emph{time-dependent} neural network with parameters $\varphi$ and the normalizing constants $\normcte^{\varphi}_t = \int_{\rset^d} \exp(-E^{\varphi}(t,x)) \rmd x$ are still intractable. In the RDS framework, this amounts to consider $\gamma^{\text{ref}}(x)=\exp(-E^{\varphi}(0,x))$ and $s_t^\text{ref}(x)=-\nabla_x E^{\varphi}(t,x)$. To learn this multi-level EBM, we propose to maximize the integrated ML objective $\varphi \mapsto\int_0^T\mathbb{E}[\log \pphi_t(\hat{X}_t^{\text{ref}})] \rmd t$.}
In this case, we can leverage the correlations between the single-level EBMs to alleviate their individual issue of negative sampling, a strategy that has already been investigated in several works \citep{gao2020learning, zhu2024learningenergybasedmodelscooperative, zhang2023persistently}. 

\begin{wrapfigure}{r}{0.47\linewidth}
    \vspace{-0.3cm}
    \centering
    \includegraphics[width=\linewidth]{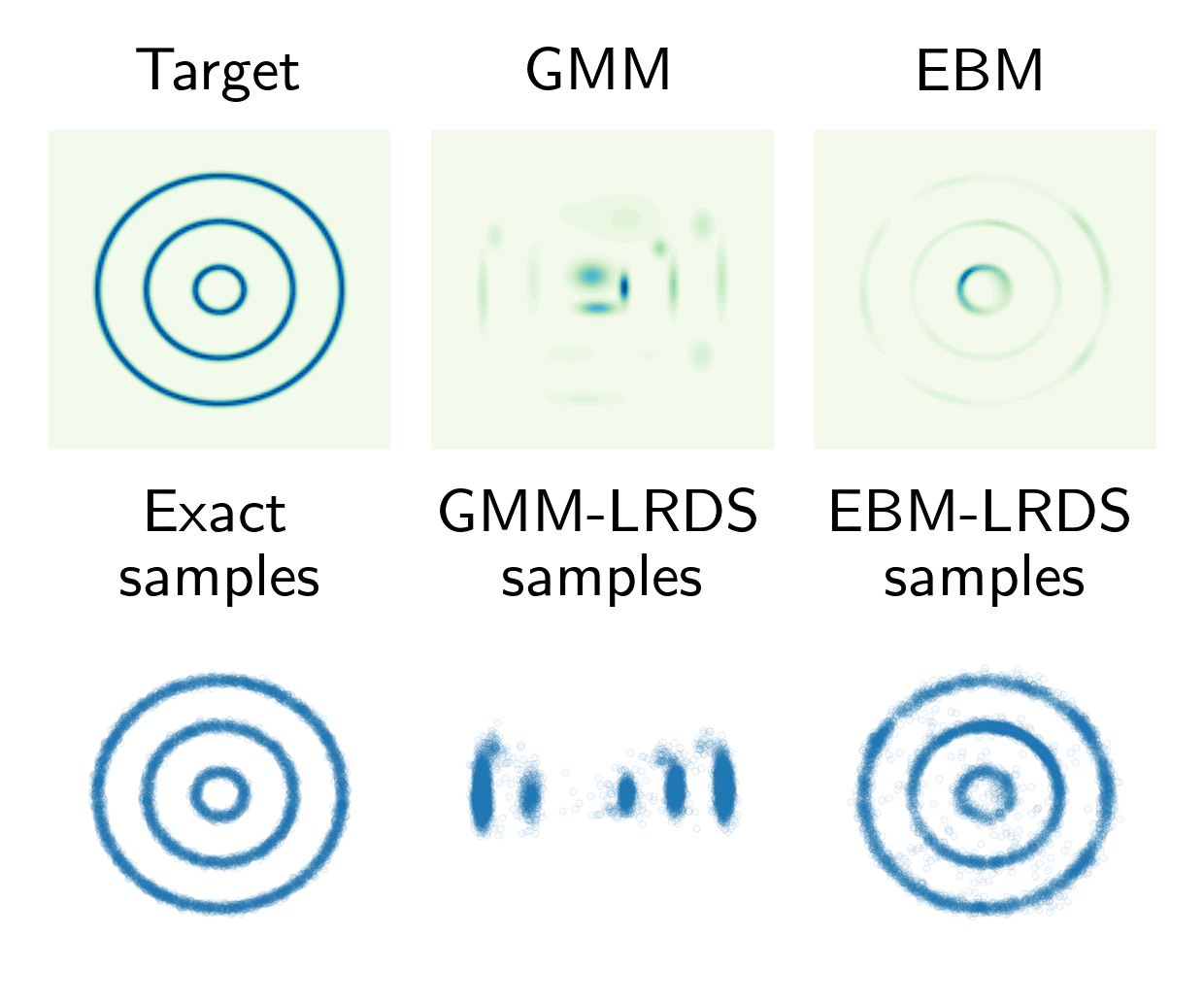}
    \vspace{-0.8cm}
    \caption{\textbf{Comparison between GMM-LRDS and EBM-LRDS in a multi-modal setting}. Here, we target the $2$-dimensional Rings distribution, which has 3 \emph{unbalanced} modes represented by the rings. \textbf{(Left)}: Target density (top) and exact samples (bottom). \textbf{(Middle)}: $16$-component GMM reference distribution (top) and resulting GMM-LRDS samples (bottom). \textbf{(Right)}: EBM reference distribution (top) and resulting EBM-LRDS samples (bottom).}
    \vspace{-1cm}
    \label{fig:ebm_rds_explain}
\end{wrapfigure}

Here, since $(\hat{X}_t^{\text{ref}})_{t\in \ccint{0,T}}$ defines a path of increasingly simpler distributions, \emph{annealed MCMC samplers} can be conveniently used to sample from the multi-level EBM densities. This allows us to kill two birds in one stone: (i) we obtain negative samples for each single-level EBM, which is needed to compute the gradient of the ML objective, and (ii) we overcome the individual sampling issues at every level thanks to annealing. This method is completely detailed in \Cref{alg:ebm_annealed} of \Cref{app:ebm} together with previous literature on multi-level EBMs. Morever, \Cref{app:annealed_mcmc} provides details on annealed MCMC samplers.

This version of LRDS called \emph{Energy-Based Model LRDS} (EBM-LRDS) is summarized in \Cref{alg:ebm-rds} of \Cref{app:alg}. In \Cref{fig:ebm_rds_explain}, we illustrate the superiority of EBM-LRDS over GMM-LRDS for a target distribution $\pi$ that exhibits complex geometry. While the GMM fails at capturing the local energy landscape of $\pi$, which results in a poor performance of GMM-LRDS, EBM-LRDS captures well the target distribution since the reference EBM recovers the complex geometry of the target. %

\vspace{-0.2cm}
\section{Related works}\label{sec:related_works}
\vspace{-0.2cm}
In the past years, numerous parametric methods have been proposed to approximate a diffusion process $(Y_t)_{t\in \ccint{0,T}}\sim \Pbb$ induced by an SDE with intractable drift, such as \eqref{eq:SDE-denoising}, bridging an easy-to-sample distribution $\piprior$ to a target distribution $\pi$. Based on deep learning techniques, they suggest to learn a path measure $\Pbb^\theta$, induced by a \emph{neural} SDE that admits a neural network (parameterized by $\theta \in \Theta$ to be optimized) as drift function. Below, we review two main classes of such algorithms. 

\vspace{-0.3cm}
\paragraph{Variational diffusion-based methods.} To optimize $\theta$, several approaches have adopted a variational formulation on path measure space, which consists in minimizing a divergence-based loss with samples from $\Pbb^\theta$. We adopt the same perspective in the present work. For instance, a recent line of works has defined $\Pbb$ as the time-reversal of a noising diffusion process, such as PIS \citep{zhang2021path}, DDS \citep{vargas2023denoising} and \emph{Time-reversed Diffusion Sampler} (DIS) \citep{berner2022optimal}. In these methods, the neural network used in $\Pbb^\theta$ is crucially required to be \emph{target-informed}, \ie, it is parameterized with the score of the target distribution. For a large number of variational training steps, this may result in a costly procedure, since each training step requires a full simulation of $\Pbb^\theta$. While those algorithms were originally implemented with the largely used reverse KL divergence, \cite{richter2023improved} demonstrate the benefits of using the Log-Variance (LV) divergence to avoid mode collapse in the variational optimization stage. However, even combined with a LV-based loss, these variational methods still require a target-informed parameterization. On the other hand, \cite{vargas2023transport} propose a different perspective by defining $\Pbb$ as a controlled version of an \emph{annealed Langevin} diffusion, that is expected to follow a prescribed path of tractable marginal densities. Here, both $\Pbb$ and $\Pbb^\theta$ are induced by neural SDEs. The authors present two versions of their algorithm, \emph{Controlled Monte Carlo Diffusion} (CMCD), using either reverse KL or LV divergence. For clear comparison with the RDS discrete-time setting presented in \Cref{subsec:practice}, we describe all of these approaches (PIS, DDS, DIS, CMCD) under the discrete time scope in \Cref{app:proof-disc}. 

\vspace{-0.2cm}
\paragraph{Adaptive diffusion-based approaches.}
To alleviate the computational difficulties of divergence-based losses, recent works have proposed to learn $\Pbb^{\theta}$ with an adaptive procedure \citep{phillips2024particle, akhound2024iterated}. These methods iterate two steps which consist of (a) obtaining approximate target samples by sampling from $\Pbb^{\theta}$ and (b) optimizing $\theta$ using those samples via learning techniques usually restricted to generative modeling.
In particular, \cite{akhound2024iterated} present \emph{Iterated Denoising Energy Matching} (iDEM), where $\Pbb$ corresponds to the time-reversal of a Variance-Exploding noising diffusion process \citep{song2020score}. In their setting, stage (a) is conducted by running the SDE induced by $\Pbb^{\theta}$ and stage (b) relies on a novel energy-matching loss which directly depends on $\pi$. 
On the other hand, the \emph{Particle Denoising Diffusion Sampler} (PDDS) \citep{phillips2024particle} (a) introduces a SMC-based scheme when sampling from $\Pbb^{\theta}$, see \Cref{app:annealed_mcmc} for more details, and (b) implements an extension of the Target Score Matching loss \citep{de2024target}.
When it comes to practice, we find that both of these algorithms face significant
limitations: while PDDS performance is highly sensitive to the choice of the initial $\Pbb^{\theta}$, the iDEM energy-matching loss suffers from very high variance.

\vspace{-0.1cm}
\section{Numerical evaluation of RDS}\label{sec:numerics}

\vspace{-0.2cm}
To validate our approach, we compare GMM-LRDS and EBM-LRDS on a variety of multi-modal distributions against the following annealed methods:
\begin{enumerate*}[label=(\alph*)]
    \item \textbf{annealed MCMC methods} -- \emph{Sequential Monte Carlo} (SMC) \citep{del2006sequential} and \emph{Replica Exchange} (RE) \citep{swendsenReplicaMonteCarlo1986};
    \item \textbf{variational diffusion-based methods}, implemented with the LV loss -- LV-PIS \citep{zhang2021path}, LV-DDS \citep{vargas2023denoising}, LV-DIS \citep{berner2022optimal} and LV-CMCD \citep{vargas2023transport}; 
    \item \textbf{adaptive diffusion-based approaches} -- iDEM \citep{akhound2024iterated} and PDDS \citep{phillips2024particle}.
\end{enumerate*}
To assess the performance of each sampler in multi-modal settings, we will evaluate how well the obtained samples are able to recover the weights of the target modes\footnote{In \Cref{app:beyond-elbos}, we give expressions of additional variational metrics proposed by \cite{blessing2024beyond} to quantify mode collapse in the specific case of RDS.}. The details of each target distribution can be found in \Cref{app:details_target}.

\vspace{-0.1cm}
\paragraph{General experimental setting.} {To ensure fair comparison with previous approaches, we make sure that all competing methods are as informed as LRDS of prior knowledge on the target distribution. More specifically: we set $\piprior$ as a Gaussian approximation of $\hpiref$ in SMC, RE or CMCD; we choose $\sigma$ based on a Gaussian isotropic approximation of $\hpiref$ for PIS, DDS or DIS (see \Cref{subsec:discussion-gaussian}); we standardize the target distributions using the empirical mean and variances of $\hpiref$ for iDEM and PDDS. Additionally, we pre-fill iDEM's training buffer with samples from $\hpiref$. Note that all competing methods use the score of the target distribution, either in their training procedure when computing the loss (iDEM, PDDS) or in the sampling procedure : through a target-informed neural network parameterization in PIS, DDS and DIS; through the target-informed base drift in CMCD; through the MCMC gradient steps in SMC or RE. In contrast, LRDS only requires evaluations of the target density, which makes it an interesting alternative in settings where the score of $\pi$ is expensive to compute. We refer to \Cref{app:hyperparam} for complete details of the implementation of these methods.}

\vspace{-0.1cm}
\paragraph{High-dimensional Gaussian mixtures.} We first consider a synthetic but challenging setting, where $\pi$ is a bi-modal Gaussian mixture whose modes are $\densityGaussian(-\mathbf{1}_d, \Sigma_1)$, with weight $w_1=2/3$, and $\densityGaussian(\mathbf{1}_d, \Sigma_2)$, with weight $w_2=1/3$, where $\mathbf{1}_d$ is the $d$-dimensional vector with all components equal to $1$, and for $i\in \{1,2\}$, $\Sigma_i\in \rset^{d\times d}$ is a diagonal positive matrix with conditioning number equal to $100$. For each method, the ground truth weight $w_1$ is estimated by a Monte Carlo estimator $\hat{w}_1$. We 
report
the estimation error $\abs{w_1-\hat{w}_1}$ for increasing values of $d$ in \Cref{table:res_gaussian_mixture_main}. Note that mode collapse occurs when $\hat{w}_1\in \{0, 1\}$. We observe that GMM-LRDS outperforms competing methods 
in all the considered dimensions. In \Cref{subsec:bi_modal}, {we complete these mode weight results with probability metrics (see \Cref{app:preli})}, present further experiments with bi-modal Gaussian mixtures with lower condition numbers and dimension, which show that competing algorithms are able to perform on par with LRDS in these simpler settings, and conduct ablation studies on this two-mode type of target (by lightly perturbing the reference distribution or increasing the distance between the modes). {To further assess the performance and robustness of LRDS, we also consider the extension to more than two modes in \Cref{subsec:many_modes}, where LRDS still outperforms its competitors.} %

\vspace{-0.1cm}
\paragraph{Field system $\phi^4$ from statistical mechanics.} Next, we sample from the 1D $\phi^4$ model, previously studied \citep{gabrie2019adaptive, grenioux2024stochastic}. At the chosen temperature, the distribution has two well distinct modes with respective weights $w_{-}$ and $w_{+}$ such that the relative weight $w_{-}/w_{+}$ can be adjusted through a `local-field' parameter $h$. We discretize this continuous model with a grid size of 32 (\ie, $d = 32$). For each method, we compute a Monte Carlo estimation of $w_{-}/w_{+}$ and compare the results with a Laplace approximation ($0$-th and $2$-nd orders), see \Cref{app:details_target} for the computations. In this setting, all competing approaches suffer from mode collapse while GMM-LRDS is close to the ground truth as shown by \Cref{fig:phifour}. Complete results showing the failure of the competing methods are provided in \Cref{subsec:phi_four}. Given the satisfying results obtained with the lightweight GMM-LRDS sampler, we did not run EBM-LRDS for this target distribution.

\begin{minipage}{0.6\textwidth}
	\centering
	\captionof{table}{\textbf{Absolute mode weight estimation error} for a bi-modal Gaussian mixture with growing $d$, averaged over $16$ sampling runs. Bold font indicates best result, \textcolor{orange!75}{orange} cells refer to settings with uninformative mode weight estimation (i.e., uniform mixture), \textcolor{red!75}{red} cells denote mode collapse. N/A denotes settings with numerical issues.}
	\label{table:res_gaussian_mixture_main}
	\resizebox{\linewidth}{!}{%
		\begin{tabular}{l|rrr}
			\toprule
			\bf Algorithm        & $d = 16$ $\downarrow$ & $d = 32$  $\downarrow$ & $d = 64$ $\downarrow$ \\
			\midrule
			\textbf{SMC}      & \cellcolor{orange!25}  $11.4\% \scriptstyle \pm 9.1\%$  & \cellcolor{orange!25} $15.8\% \scriptstyle \pm 8.5\%$  & \cellcolor{orange!25} $15.2\% \scriptstyle \pm 7.5\%$  \\
			\textbf{RE}       & \cellcolor{orange!25} $16.5\% \scriptstyle \pm 1.3\%$               & \cellcolor{orange!25} $15.9\% \scriptstyle \pm 1.4\%$               & \cellcolor{orange!25} $17.0\% \scriptstyle \pm 1.4\%$               \\
			\textbf{LV-PIS}   & $6.0\% \scriptstyle \pm 3.4\%$               & \cellcolor{red!25} $33.2\% \scriptstyle \pm 0.1\%$                    & \cellcolor{red!25} $33.0\% \scriptstyle \pm 0.1\%$                   \\
			\textbf{LV-DDS}   & \cellcolor{orange!25} $11.8\% \scriptstyle \pm 9.3\%$               & \cellcolor{red!25}  $31.5 \%\scriptstyle \pm 2.9\%$       & \cellcolor{red!25}  $33.1\% \scriptstyle \pm 0.1\%$                 \\
			\textbf{LV-DIS}   & \cellcolor{orange!25} $16.4\% \scriptstyle \pm 0.5\%$               & \cellcolor{orange!25} $16.5\% \scriptstyle \pm 0.4\%$               & \cellcolor{orange!25} $16.8\% \scriptstyle \pm 0.6\%$               \\
			\textbf{LV-CMCD}     & \cellcolor{red!25}  $32.2\% \scriptstyle \pm 15.4\%$                    & \cellcolor{red!25} $50.1\% \scriptstyle \pm 8.8\%$ & \cellcolor{red!25} $16.3\% \scriptstyle \pm 10.6\%$               \\
			\textbf{iDEM}     & \cellcolor{red!25} $33.3\% \scriptstyle \pm 0.0\%$   & \cellcolor{red!25} $66.7\% \scriptstyle \pm 0.0\%$   & \cellcolor{orange!25} $11.7\% \scriptstyle \pm 0.4\%$   \\
			\textbf{PDDS}     & $\mathbf{0.8\% \scriptstyle \pm 0.6\%}$   & \cellcolor{red!25} $66.7\% \scriptstyle \pm 0.0\%$  & N/A  \\
			\midrule
			\textbf{GMM-LRDS} & $1.7\% \scriptstyle \pm 0.6\%$       & $\mathbf{2.7\% \scriptstyle \pm 0.8\%}$       & $\mathbf{4.1\% \scriptstyle \pm 0.6\%}$       \\
			\bottomrule
		\end{tabular}}
	\vspace{-0.4cm}
\end{minipage}
\hfill
\begin{minipage}{0.37\textwidth}
	\centering
	\vspace{0.1cm}
	\includegraphics[width=\linewidth]{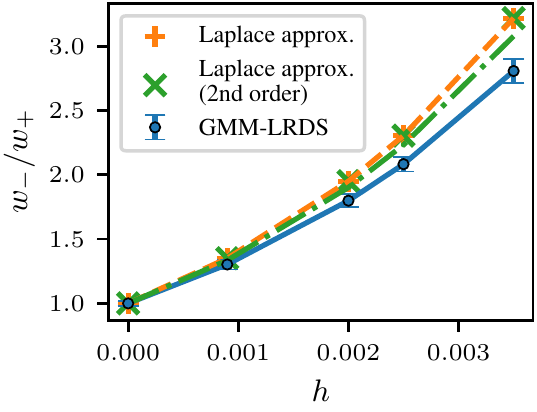}
	\vspace{-0.2cm}
	\captionof{figure}{\textbf{Estimation of the relative weight} of $\phi^4$ modes with increasing $h$, averaged over $16$ sampling runs.}
	\label{fig:phifour}
\end{minipage}

\begin{figure}[h!]
    \centering
    \includegraphics[width=\linewidth]{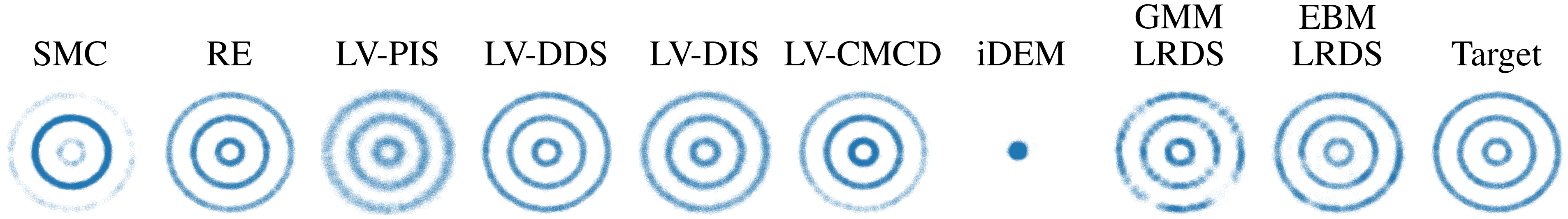}
    \caption{\textbf{Samples obtained for Rings distribution}. Reasonable results could not be obtained with PDDS due to numerical issues.}
    \label{fig:res_rings}
    \vspace{-0.4cm}
\end{figure}

\begin{wrapfigure}{r}{0.3\linewidth}
    \vspace{-0.7cm}
    \centering
    \includegraphics[width=\linewidth]{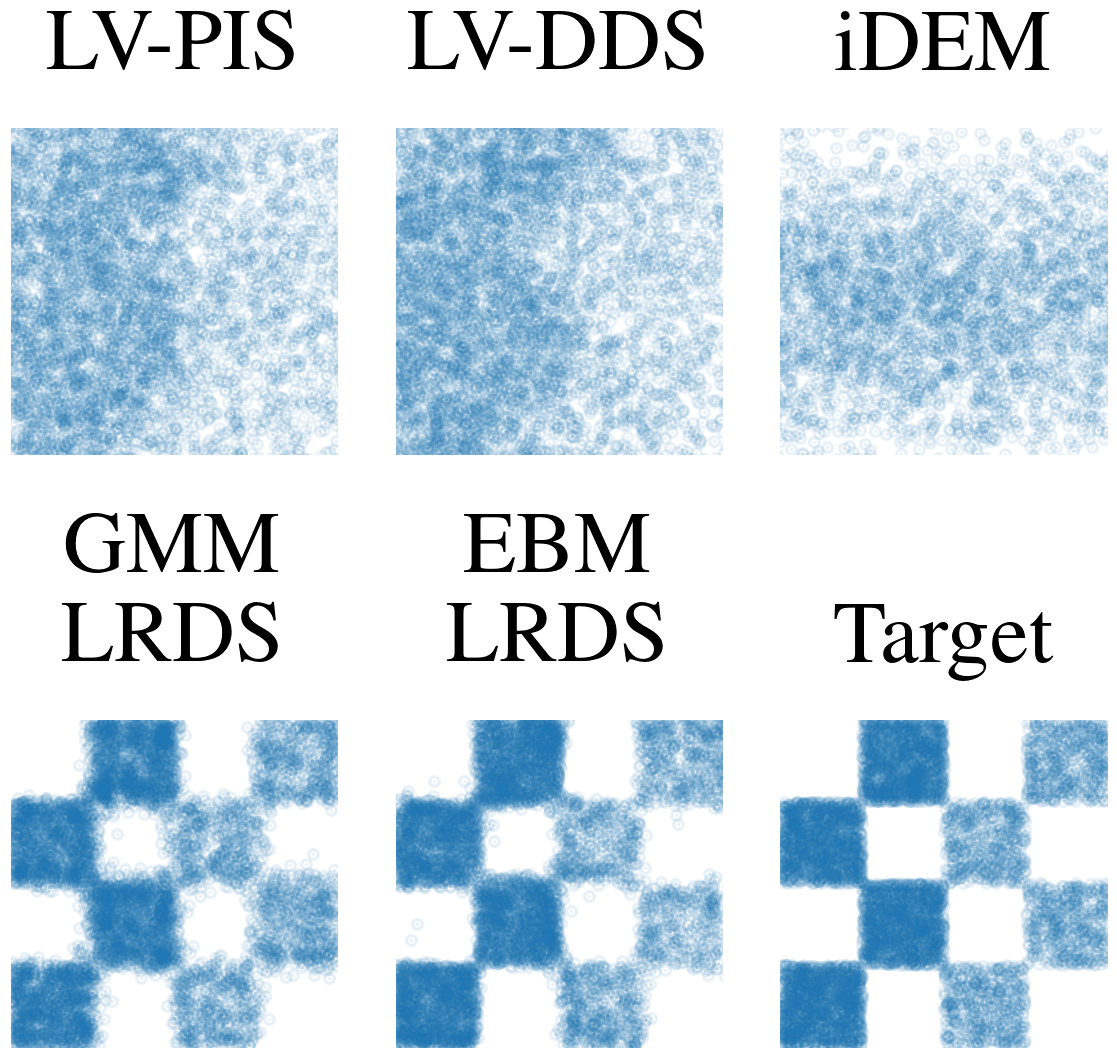}
    \vspace{-0.4cm}
    \caption{\textbf{Samples obtained for Checkerboard distribution.} All sampling methods except LRDS fail to provide accurate samples.}
    \vspace{-0.6cm}
    \label{fig:checkerboard}
\end{wrapfigure}

\paragraph{Compactly supported multi-modal distributions.} Then, we aim to sample from 2-dimensional multi-modal distributions with complex geometries.
We consider (a) \emph{Rings distribution}, which has 3 ring-shaped modes, and (b) \emph{Checkerboard distribution}, which has 8 square-shaped modes. In both cases, the modes are not evenly weighted. In this setting, we consider $J=64$ components for GMM-LRDS and leverage the Replica Exchange algorithm as backbone annealed MCMC sampler in the EBM-LRDS training algorithm. Apart from adaptive methods, the structure of Rings modes are well recovered by all approaches, see \Cref{fig:res_rings}. However, we observe that non diffusion-based approaches fail to recover the ground truth mode weights. On the other hand, LRDS is the only sampling method to obtain samples that are close to exact for the Checkerboard distribution, see \Cref{fig:checkerboard}, while being able to correctly estimate the ground truth mode weights.

{\paragraph{Bayesian logistic regression models.} Lastly, we provide in \Cref{subsec:bayesian} sampling results on standard Bayesian posterior distributions, which are not however explicitly multi-modal.}

\vspace{-0.2cm}
\section{Discussion}

\vspace{-0.1cm}
Building on the class of recently developed annealed VI methods, this paper presents the \emph{Learned Reference-based Diffusion Sampler} (LRDS), which specifically addresses the challenging case of multi-modal target distributions.
In essence, LRDS aims at learning a reference process -- based on GMMs or multi-level EBMs -- adapted to the target distribution by using samples obtained via local MCMC samplers initialized in the modes.
Our numerical experiments show that GMM-LRDS accurately recovers the global information of the relative weights of the modes in several tens of dimensions, unlike competing methods, and that EBM-LRDS can help tackling distributions with non-Gaussian properties such as sharp supports. 
However this advantage of LRDS comes at the computational cost of the necessary pre-training of the reference process model.
Concerning future work, we note in particular that EBM-LRDS is a promising tool for real-world sampling tasks on non-euclidean spaces which may benefit from the flexible definition of the reference process as an EBM. An interesting test case of this kind that shall be considered is the sampling of the Boltzmann distribution of proteins in internal coordinates.

\newpage

\section*{Acknowledgements}
{We would like to thank Lorenz Richter, Julius Berner, Denis Blessing, Junhua Chen, Francisco Vargas and Yazid Janati for many stimulating discussions. LG and MG acknowledge funding from Hi! Paris. AD and MN would like to thank the Isaac Newton Institute for Mathematical Sciences and the Alan Turing Institute for support and hospitality during the programme \textit{Diffusions in machine learning: Foundations, generative models and non-convex optimisation} when work on this paper was undertaken. Part of the work of AD is funded by the European Union (ERC, Ocean, 101071601). Views and opinions expressed here are however those of the authors only and do not necessarily reflect those of the European Union or the European Research Council Executive Agency. Neither the European Union nor the granting authority can be held responsible for them. This work was partially performed using HPC resources from GENCI–IDRIS (AD011015234).}
\bibliography{main}

\begin{thebibliography}{77}
\providecommand{\natexlab}[1]{#1}
\providecommand{\url}[1]{\texttt{#1}}
\expandafter\ifx\csname urlstyle\endcsname\relax
  \providecommand{\doi}[1]{doi: #1}\else
  \providecommand{\doi}{doi: \begingroup \urlstyle{rm}\Url}\fi

\bibitem[Akhound-Sadegh et~al.(2024)Akhound-Sadegh, Rector-Brooks, Bose,
  Mittal, Lemos, Liu, Sendera, Ravanbakhsh, Gidel, Bengio, Malkin, and
  Tong]{akhound2024iterated}
Tara Akhound-Sadegh, Jarrid Rector-Brooks, Joey Bose, Sarthak Mittal, Pablo
  Lemos, Cheng-Hao Liu, Marcin Sendera, Siamak Ravanbakhsh, Gauthier Gidel,
  Yoshua Bengio, Nikolay Malkin, and Alexander Tong.
\newblock Iterated denoising energy matching for sampling from boltzmann
  densities.
\newblock In \emph{Proceedings of the 41st International Conference on Machine
  Learning}, volume 235 of \emph{Proceedings of Machine Learning Research},
  pp.\  760--786. PMLR, 21--27 Jul 2024.
\newblock URL \url{https://proceedings.mlr.press/v235/akhound-sadegh24a.html}.

\bibitem[Arbel et~al.(2021)Arbel, Matthews, and Doucet]{arbel2021annealed}
Michael Arbel, Alex Matthews, and Arnaud Doucet.
\newblock Annealed flow transport {Monte Carlo}.
\newblock In \emph{International Conference on Machine Learning}, pp.\
  318--330. PMLR, 2021.

\bibitem[Arenz et~al.(2023)Arenz, Dahlinger, Ye, Volpp, and
  Neumann]{arenz2022unified}
Oleg Arenz, Philipp Dahlinger, Zihan Ye, Michael Volpp, and Gerhard Neumann.
\newblock A unified perspective on natural gradient variational inference with
  gaussian mixture models.
\newblock \emph{Transactions on Machine Learning Research}, 2023.
\newblock ISSN 2835-8856.
\newblock URL \url{https://openreview.net/forum?id=tLBjsX4tjs}.

\bibitem[Berner et~al.(2023)Berner, Richter, and Ullrich]{berner2022optimal}
Julius Berner, Lorenz Richter, and Karen Ullrich.
\newblock An optimal control perspective on diffusion-based generative
  modeling.
\newblock \emph{Transactions on Machine Learning Research}, 2023.
\newblock URL \url{https://openreview.net/forum?id=oYIjw37pTP}.

\bibitem[Blessing et~al.(2024)Blessing, Jia, Esslinger, Vargas, and
  Neumann]{blessing2024beyond}
Denis Blessing, Xiaogang Jia, Johannes Esslinger, Francisco Vargas, and Gerhard
  Neumann.
\newblock Beyond {ELBO}s: A large-scale evaluation of variational methods for
  sampling.
\newblock In \emph{Proceedings of the 41st International Conference on Machine
  Learning}, volume 235 of \emph{Proceedings of Machine Learning Research},
  pp.\  4205--4229. PMLR, 21--27 Jul 2024.
\newblock URL \url{https://proceedings.mlr.press/v235/blessing24a.html}.

\bibitem[Brooks et~al.(2011)Brooks, Gelman, Jones, and
  Meng]{brooks2011handbook}
Steve Brooks, Andrew Gelman, Galin Jones, and Xiao-Li Meng.
\newblock \emph{Handbook of Markov Chain Monte Carlo}.
\newblock Chapman and Hall/CRC, May 2011.
\newblock ISBN 9780429138508.
\newblock \doi{10.1201/b10905}.
\newblock URL \url{http://dx.doi.org/10.1201/b10905}.

\bibitem[Cattiaux et~al.(2023)Cattiaux, Conforti, Gentil, and
  L{\'e}onard]{cattiaux2021time}
Patrick Cattiaux, Giovanni Conforti, Ivan Gentil, and Christian L{\'e}onard.
\newblock {Time reversal of diffusion processes under a finite entropy
  condition}.
\newblock \emph{Annales de l'Institut Henri Poincaré, Probabilités et
  Statistiques}, 59\penalty0 (4):\penalty0 1844 -- 1881, 2023.
\newblock \doi{10.1214/22-AIHP1320}.
\newblock URL \url{https://doi.org/10.1214/22-AIHP1320}.

\bibitem[Cuturi(2013)]{cuturi2013sinkhorn}
Marco Cuturi.
\newblock Sinkhorn distances: Lightspeed computation of optimal transport.
\newblock 26, 2013.
\newblock URL
  \url{https://proceedings.neurips.cc/paper_files/paper/2013/file/af21d0c97db2e27e13572cbf59eb343d-Paper.pdf}.

\bibitem[Dai et~al.(2023)Dai, Gao, Huang, Jiao, Kang, and
  Liu]{dai2023lipschitz}
Yin Dai, Yuan Gao, Jian Huang, Yuling Jiao, Lican Kang, and Jin Liu.
\newblock Lipschitz transport maps via the {F{\"o}llmer} flow.
\newblock \emph{arXiv preprint arXiv:2309.03490}, 2023.

\bibitem[Dalalyan(2017)]{dalalyan2017theoretical}
Arnak~S Dalalyan.
\newblock Theoretical guarantees for approximate sampling from smooth and
  log-concave densities.
\newblock \emph{Journal of the Royal Statistical Society Series B: Statistical
  Methodology}, 79\penalty0 (3):\penalty0 651--676, 2017.

\bibitem[De~Bortoli et~al.(2024)De~Bortoli, Hutchinson, Wirnsberger, and
  Doucet]{de2024target}
Valentin De~Bortoli, Michael Hutchinson, Peter Wirnsberger, and Arnaud Doucet.
\newblock Target score matching.
\newblock \emph{arXiv preprint arXiv:2402.08667}, 2024.

\bibitem[Del~Moral et~al.(2006)Del~Moral, Doucet, and Jasra]{del2006sequential}
Pierre Del~Moral, Arnaud Doucet, and Ajay Jasra.
\newblock Sequential {M}onte {C}arlo samplers.
\newblock \emph{Journal of the Royal Statistical Society Series B: Statistical
  Methodology}, 68\penalty0 (3):\penalty0 411--436, 2006.

\bibitem[Dempster et~al.(1977)Dempster, Laird, and Rubin]{dempster1977maximum}
Arthur~P Dempster, Nan~M Laird, and Donald~B Rubin.
\newblock Maximum likelihood from incomplete data via the em algorithm.
\newblock \emph{Journal of the royal statistical society: series B
  (methodological)}, 39\penalty0 (1):\penalty0 1--22, 1977.

\bibitem[Dhariwal \& Nichol(2021)Dhariwal and Nichol]{dhariwal2021diffusion}
Prafulla Dhariwal and Alexander~Quinn Nichol.
\newblock Diffusion models beat {GAN}s on image synthesis.
\newblock In \emph{Advances in Neural Information Processing Systems}, 2021.
\newblock URL \url{https://openreview.net/forum?id=AAWuCvzaVt}.

\bibitem[Doucet et~al.(2022)Doucet, Grathwohl, Matthews, and
  Strathmann]{doucet2022score}
Arnaud Doucet, Will Grathwohl, Alexander~G Matthews, and Heiko Strathmann.
\newblock Score-based diffusion meets annealed importance sampling.
\newblock \emph{Advances in Neural Information Processing Systems},
  35:\penalty0 21482--21494, 2022.

\bibitem[Duane et~al.(1987)Duane, Kennedy, Pendleton, and
  Roweth]{duane1987hybrid}
Simon Duane, A.D. Kennedy, Brian~J. Pendleton, and Duncan Roweth.
\newblock Hybrid monte carlo.
\newblock \emph{Physics Letters B}, 195\penalty0 (2):\penalty0 216--222, 1987.
\newblock ISSN 0370-2693.
\newblock \doi{https://doi.org/10.1016/0370-2693(87)91197-X}.
\newblock URL
  \url{https://www.sciencedirect.com/science/article/pii/037026938791197X}.

\bibitem[Durmus \& Moulines(2015)Durmus and Moulines]{durmus2015quantitative}
Alain Durmus and {\'E}ric Moulines.
\newblock Quantitative bounds of convergence for geometrically ergodic {Markov}
  chain in the {Wasserstein} distance with application to the {Metropolis
  Adjusted Langevin Algorithm}.
\newblock \emph{Statistics and Computing}, 25:\penalty0 5--19, 2015.

\bibitem[Durmus \& Moulines(2017)Durmus and Moulines]{durmus2017nonasymptotic}
Alain Durmus and Eric Moulines.
\newblock Nonasymptotic convergence analysis for the unadjusted {Langevin}
  algorithm.
\newblock 2017.

\bibitem[Gabri{\'e} et~al.(2022)Gabri{\'e}, Rotskoff, and
  Vanden-Eijnden]{gabrie2022adaptive}
Marylou Gabri{\'e}, Grant~M Rotskoff, and Eric Vanden-Eijnden.
\newblock Adaptive {M}onte {C}arlo augmented with normalizing flows.
\newblock \emph{Proceedings of the National Academy of Sciences}, 119\penalty0
  (10):\penalty0 e2109420119, 2022.

\bibitem[Gabrié et~al.(2022)Gabrié, Rotskoff, and
  Vanden-Eijnden]{gabrie2019adaptive}
Marylou Gabrié, Grant~M. Rotskoff, and Eric Vanden-Eijnden.
\newblock Adaptive {Monte Carlo} augmented with normalizing flows.
\newblock \emph{Proceedings of the National Academy of Sciences}, 119\penalty0
  (10):\penalty0 e2109420119, 2022.
\newblock \doi{10.1073/pnas.2109420119}.
\newblock URL \url{https://www.pnas.org/doi/abs/10.1073/pnas.2109420119}.

\bibitem[Gao et~al.(2021)Gao, Song, Poole, Wu, and Kingma]{gao2020learning}
Ruiqi Gao, Yang Song, Ben Poole, Ying~Nian Wu, and Diederik~P Kingma.
\newblock Learning energy-based models by diffusion recovery likelihood.
\newblock In \emph{International Conference on Learning Representations}, 2021.
\newblock URL \url{https://openreview.net/forum?id=v_1Soh8QUNc}.

\bibitem[Geffner \& Domke(2021)Geffner and Domke]{geffner2021mcmc}
Tomas Geffner and Justin Domke.
\newblock Mcmc variational inference via uncorrected hamiltonian annealing.
\newblock \emph{Advances in Neural Information Processing Systems},
  34:\penalty0 639--651, 2021.

\bibitem[Geffner \& Domke(2023)Geffner and Domke]{geffner2023langevin}
Tomas Geffner and Justin Domke.
\newblock Langevin diffusion variational inference.
\newblock In \emph{International Conference on Artificial Intelligence and
  Statistics}, pp.\  576--593. PMLR, 2023.

\bibitem[Grenioux et~al.(2023)Grenioux, Oliviero~Durmus, Moulines, and
  Gabri\'{e}]{grenioux2023onsampling}
Louis Grenioux, Alain Oliviero~Durmus, Eric Moulines, and Marylou Gabri\'{e}.
\newblock On sampling with approximate transport maps.
\newblock In \emph{Proceedings of the 40th International Conference on Machine
  Learning}, volume 202 of \emph{Proceedings of Machine Learning Research},
  pp.\  11698--11733. PMLR, 23--29 Jul 2023.
\newblock URL \url{https://proceedings.mlr.press/v202/grenioux23a.html}.

\bibitem[Grenioux et~al.(2024)Grenioux, Noble, Gabri\'{e}, and
  Oliviero~Durmus]{grenioux2024stochastic}
Louis Grenioux, Maxence Noble, Marylou Gabri\'{e}, and Alain Oliviero~Durmus.
\newblock Stochastic localization via iterative posterior sampling.
\newblock In \emph{Proceedings of the 41st International Conference on Machine
  Learning}, volume 235 of \emph{Proceedings of Machine Learning Research},
  pp.\  16337--16376. PMLR, 21--27 Jul 2024.
\newblock URL \url{https://proceedings.mlr.press/v235/grenioux24a.html}.

\bibitem[Gretton et~al.(2012)Gretton, Borgwardt, Rasch, Sch{\"o}lkopf, and
  Smola]{gretton2012kernel}
Arthur Gretton, Karsten~M Borgwardt, Malte~J Rasch, Bernhard Sch{\"o}lkopf, and
  Alexander Smola.
\newblock A kernel two-sample test.
\newblock \emph{The Journal of Machine Learning Research}, 13\penalty0
  (1):\penalty0 723--773, 2012.

\bibitem[Ho et~al.(2020)Ho, Jain, and Abbeel]{ho2020denoising}
Jonathan Ho, Ajay Jain, and Pieter Abbeel.
\newblock Denoising diffusion probabilistic models.
\newblock \emph{Advances in neural information processing systems},
  33:\penalty0 6840--6851, 2020.

\bibitem[Hoffman \& Gelman(2014)Hoffman and Gelman]{hoffman2014nuts}
Matthew~D. Hoffman and Andrew Gelman.
\newblock The no-u-turn sampler: Adaptively setting path lengths in hamiltonian
  monte carlo.
\newblock \emph{Journal of Machine Learning Research}, 15\penalty0
  (47):\penalty0 1593--1623, 2014.
\newblock URL \url{http://jmlr.org/papers/v15/hoffman14a.html}.

\bibitem[Holdijk et~al.(2023)Holdijk, Du, Hooft, Jaini, Ensing, and
  Welling]{holdijk2022path}
Lars Holdijk, Yuanqi Du, Ferry Hooft, Priyank Jaini, Bernd Ensing, and Max
  Welling.
\newblock {PIPS}: Path integral stochastic optimal control for path sampling in
  molecular dynamics, 2023.
\newblock URL \url{https://openreview.net/forum?id=TnIZfXSFJAh}.

\bibitem[Huang et~al.(2024{\natexlab{a}})Huang, Dong, HAO, Ma, and
  Zhang]{huang2023monte}
Xunpeng Huang, Hanze Dong, Yifan HAO, Yian Ma, and Tong Zhang.
\newblock {Reverse Diffusion Monte Carlo}.
\newblock In \emph{The Twelfth International Conference on Learning
  Representations}, 2024{\natexlab{a}}.
\newblock URL \url{https://openreview.net/forum?id=kIPEyMSdFV}.

\bibitem[Huang et~al.(2024{\natexlab{b}})Huang, Zou, Dong, Ma, and
  Zhang]{huang2024faster}
Xunpeng Huang, Difan Zou, Hanze Dong, Yi-An Ma, and Tong Zhang.
\newblock Faster sampling without isoperimetry via diffusion-based monte carlo.
\newblock In \emph{Proceedings of Thirty Seventh Conference on Learning
  Theory}, volume 247 of \emph{Proceedings of Machine Learning Research}, pp.\
  2438--2493. PMLR, 30 Jun--03 Jul 2024{\natexlab{b}}.
\newblock URL \url{https://proceedings.mlr.press/v247/huang24a.html}.

\bibitem[Hyv{\"a}rinen \& Dayan(2005)Hyv{\"a}rinen and
  Dayan]{hyvarinen2005estimation}
Aapo Hyv{\"a}rinen and Peter Dayan.
\newblock Estimation of non-normalized statistical models by score matching.
\newblock \emph{Journal of Machine Learning Research}, 6\penalty0 (4), 2005.

\bibitem[Jerfel et~al.(2021)Jerfel, Wang, Wong-Fannjiang, Heller, Ma, and
  Jordan]{jerfel_variational_2021}
Ghassen Jerfel, Serena Wang, Clara Wong-Fannjiang, Katherine~A. Heller, Yian
  Ma, and Michael~I. Jordan.
\newblock Variational refinement for importance sampling using the forward
  {Kullback}-{Leibler} divergence.
\newblock In \emph{Proceedings of the {Thirty}-{Seventh} {Conference} on
  {Uncertainty} in {Artificial} {Intelligence}}, pp.\  1819--1829. PMLR,
  December 2021.
\newblock URL \url{https://proceedings.mlr.press/v161/jerfel21a.html}.
\newblock ISSN: 2640-3498.

\bibitem[Karras et~al.(2022)Karras, Aittala, Aila, and
  Laine]{karras2022elucidating}
Tero Karras, Miika Aittala, Timo Aila, and Samuli Laine.
\newblock Elucidating the design space of diffusion-based generative models.
\newblock \emph{Advances in Neural Information Processing Systems},
  35:\penalty0 26565--26577, 2022.

\bibitem[Kim \& Ye(2023)Kim and Ye]{kim2022denoising}
Beomsu Kim and Jong~Chul Ye.
\newblock Denoising {MCMC} for accelerating diffusion-based generative models.
\newblock In \emph{Proceedings of the 40th International Conference on Machine
  Learning}, volume 202 of \emph{Proceedings of Machine Learning Research},
  pp.\  16955--16977. PMLR, 23--29 Jul 2023.
\newblock URL \url{https://proceedings.mlr.press/v202/kim23z.html}.

\bibitem[Krauth(2006)]{Krauth2006}
Werner Krauth.
\newblock Statistical mechanics: algorithms and computations.
\newblock \emph{OUP Oxford}, 13, 2006.

\bibitem[Kroese et~al.(2011)Kroese, Taimre, and
  Botev]{kroeseHandbookMonteCarlo2011}
Dirk~P. Kroese, Thomas Taimre, and Zdravko~I. Botev.
\newblock \emph{Handbook of {Monte} {Carlo} methods}.
\newblock Number 706 in Wiley series in probability and statistics. Wiley,
  Hoboken, 2011.
\newblock ISBN 978-0-470-17793-8.

\bibitem[L{\'e}onard(2014)]{leonard2014some}
Christian L{\'e}onard.
\newblock Some properties of path measures.
\newblock \emph{S{\'e}minaire de Probabilit{\'e}s XLVI}, pp.\  207--230, 2014.

\bibitem[Liu \& Liu(2001)Liu and Liu]{liu2001monte}
Jun~S Liu and Jun~S Liu.
\newblock \emph{Monte Carlo strategies in scientific computing}, volume~10.
\newblock Springer, 2001.

\bibitem[Matthews et~al.(2022)Matthews, Arbel, Rezende, and
  Doucet]{matthews2022continual}
Alex Matthews, Michael Arbel, Danilo~Jimenez Rezende, and Arnaud Doucet.
\newblock Continual repeated annealed flow transport monte carlo.
\newblock In \emph{International Conference on Machine Learning}, pp.\
  15196--15219. PMLR, 2022.

\bibitem[Metropolis et~al.(1953)Metropolis, Rosenbluth, Rosenbluth, Teller, and
  Teller]{metropolis1953equation}
Nicholas Metropolis, Arianna~W. Rosenbluth, Marshall~N. Rosenbluth, Augusta~H.
  Teller, and Edward Teller.
\newblock {Equation of State Calculations by Fast Computing Machines}.
\newblock \emph{The Journal of Chemical Physics}, 21\penalty0 (6):\penalty0
  1087--1092, 06 1953.
\newblock ISSN 0021-9606.
\newblock \doi{10.1063/1.1699114}.
\newblock URL \url{https://doi.org/10.1063/1.1699114}.

\bibitem[Midgley et~al.(2023)Midgley, Stimper, Simm, Sch{\"o}lkopf, and
  Hern{\'a}ndez-Lobato]{midgley2022flow}
Laurence~Illing Midgley, Vincent Stimper, Gregor N.~C. Simm, Bernhard
  Sch{\"o}lkopf, and Jos{\'e}~Miguel Hern{\'a}ndez-Lobato.
\newblock Flow annealed importance sampling bootstrap.
\newblock In \emph{The Eleventh International Conference on Learning
  Representations}, 2023.
\newblock URL \url{https://openreview.net/forum?id=XCTVFJwS9LJ}.

\bibitem[Neal(2001)]{neal2001annealed}
Radford~M Neal.
\newblock Annealed importance sampling.
\newblock \emph{Statistics and computing}, 11:\penalty0 125--139, 2001.

\bibitem[Neal(2012)]{neal2012mcmc}
Radford~M Neal.
\newblock Mcmc using hamiltonian dynamics.
\newblock \emph{arXiv preprint arXiv:1206.1901}, 2012.

\bibitem[Nichol \& Dhariwal(2021)Nichol and Dhariwal]{nichol2021improved}
Alexander~Quinn Nichol and Prafulla Dhariwal.
\newblock Improved denoising diffusion probabilistic models.
\newblock In \emph{International Conference on Machine Learning}, pp.\
  8162--8171. PMLR, 2021.

\bibitem[Noé et~al.(2019)Noé, Olsson, Köhler, and Wu]{noe_boltzmann_2019}
Frank Noé, Simon Olsson, Jonas Köhler, and Hao Wu.
\newblock Boltzmann generators: {Sampling} equilibrium states of many-body
  systems with deep learning.
\newblock \emph{Science}, 365\penalty0 (6457):\penalty0 eaaw1147, September
  2019.
\newblock ISSN 0036-8075, 1095-9203.
\newblock \doi{10.1126/science.aaw1147}.
\newblock URL \url{https://www.science.org/doi/10.1126/science.aaw1147}.

\bibitem[N{\"u}sken \& Richter(2021)N{\"u}sken and Richter]{nusken2021solving}
Nikolas N{\"u}sken and Lorenz Richter.
\newblock Solving high-dimensional hamilton--jacobi--bellman pdes using neural
  networks: perspectives from the theory of controlled diffusions and measures
  on path space.
\newblock \emph{Partial differential equations and applications}, 2\penalty0
  (4):\penalty0 48, 2021.

\bibitem[Papamakarios et~al.(2021)Papamakarios, Nalisnick, Rezende, Mohamed,
  and Lakshminarayanan]{papamakarios2021normalizing}
George Papamakarios, Eric Nalisnick, Danilo~Jimenez Rezende, Shakir Mohamed,
  and Balaji Lakshminarayanan.
\newblock Normalizing flows for probabilistic modeling and inference.
\newblock \emph{Journal of Machine Learning Research}, 22\penalty0
  (57):\penalty0 1--64, 2021.

\bibitem[Pavon(2022)]{pavon2022local}
Michele Pavon.
\newblock On local entropy, stochastic control, and deep neural networks.
\newblock \emph{IEEE Control Systems Letters}, 7:\penalty0 437--441, 2022.

\bibitem[Pedregosa et~al.(2011)Pedregosa, Varoquaux, Gramfort, Michel, Thirion,
  Grisel, Blondel, Prettenhofer, Weiss, Dubourg, et~al.]{pedregosa2011scikit}
Fabian Pedregosa, Ga{\"e}l Varoquaux, Alexandre Gramfort, Vincent Michel,
  Bertrand Thirion, Olivier Grisel, Mathieu Blondel, Peter Prettenhofer, Ron
  Weiss, Vincent Dubourg, et~al.
\newblock Scikit-learn: Machine learning in python.
\newblock \emph{Journal of machine learning research}, 12\penalty0
  (Oct):\penalty0 2825--2830, 2011.

\bibitem[Peyr{\'e} et~al.(2019)Peyr{\'e}, Cuturi,
  et~al.]{peyre2019computational}
Gabriel Peyr{\'e}, Marco Cuturi, et~al.
\newblock Computational optimal transport: With applications to data science.
\newblock \emph{Foundations and Trends{\textregistered} in Machine Learning},
  11\penalty0 (5-6):\penalty0 355--607, 2019.

\bibitem[Phillips et~al.(2024)Phillips, Dau, Hutchinson, De~Bortoli,
  Deligiannidis, and Doucet]{phillips2024particle}
Angus Phillips, Hai-Dang Dau, Michael~John Hutchinson, Valentin De~Bortoli,
  George Deligiannidis, and Arnaud Doucet.
\newblock Particle denoising diffusion sampler.
\newblock In \emph{Proceedings of the 41st International Conference on Machine
  Learning}, volume 235 of \emph{Proceedings of Machine Learning Research},
  pp.\  40688--40724. PMLR, 21--27 Jul 2024.
\newblock URL \url{https://proceedings.mlr.press/v235/phillips24a.html}.

\bibitem[Pompe et~al.(2020)Pompe, Holmes, and
  Łatuszyński]{pompe_framework_2020}
Emilia Pompe, Chris Holmes, and Krzysztof Łatuszyński.
\newblock A framework for adaptive {MCMC} targeting multimodal distributions.
\newblock \emph{The Annals of Statistics}, 48\penalty0 (5):\penalty0
  2930--2952, October 2020.
\newblock ISSN 0090-5364, 2168-8966.
\newblock \doi{10.1214/19-AOS1916}.
\newblock URL
  \url{https://projecteuclid.org/journals/annals-of-statistics/volume-48/issue-5/A-framework-for-adaptive-MCMC-targeting-multimodal-distributions/10.1214/19-AOS1916.full}.
\newblock Publisher: Institute of Mathematical Statistics.

\bibitem[Reu et~al.(2024)Reu, Vargas, Kerekes, and Bronstein]{reusmooth}
Teodora Reu, Francisco Vargas, Anna Kerekes, and Michael~M. Bronstein.
\newblock To smooth a cloud or to pin it down: Expressiveness guarantees and
  insights on score matching in denoising diffusion models.
\newblock In \emph{The 40th Conference on Uncertainty in Artificial
  Intelligence}, 2024.
\newblock URL \url{https://openreview.net/forum?id=BV2STKHKE6}.

\bibitem[Rezende et~al.(2014)Rezende, Mohamed, and Wierstra]{Rezende2014}
D~J Rezende, S~Mohamed, and D~Wierstra.
\newblock Stochastic backpropagation and approximate inference in deep
  generative models.
\newblock \emph{Proceedings of The 31st International Conference in Machine
  Learning, Beijing China}, 32:\penalty0 1278--, 2014.
\newblock URL
  \url{http://jmlr.org/proceedings/papers/v32/rezende14.html%5Cnpapers3://publication/uuid/F2747569-7719-4EAC-A5A7-9ECA9D6A8FE6}.
\newblock arXiv: 1401.4082v3 ISBN: 9781634393973.

\bibitem[Richter et~al.(2023)Richter, Berner, and Liu]{richter2023improved}
Lorenz Richter, Julius Berner, and Guan-Horng Liu.
\newblock Improved sampling via learned diffusions.
\newblock In \emph{ICML Workshop on New Frontiers in Learning, Control, and
  Dynamical Systems}, 2023.
\newblock URL \url{https://openreview.net/forum?id=uLgYD7ie0O}.

\bibitem[Roberts \& Tweedie(1996)Roberts and Tweedie]{roberts1996exponential}
Gareth~O. Roberts and Richard~L. Tweedie.
\newblock Exponential convergence of {Langevin} distributions and their
  discrete approximations.
\newblock \emph{Bernoulli}, 2\penalty0 (4):\penalty0 341--363, 1996.
\newblock ISSN 13507265.
\newblock URL \url{http://www.jstor.org/stable/3318418}.

\bibitem[Salimans \& Ho(2021)Salimans and Ho]{salimans2021should}
Tim Salimans and Jonathan Ho.
\newblock Should {EBM}s model the energy or the score?, 2021.
\newblock URL \url{https://openreview.net/forum?id=9AS-TF2jRNb}.

\bibitem[Sohl-Dickstein et~al.(2015)Sohl-Dickstein, Weiss, Maheswaranathan, and
  Ganguli]{sohl2015deep}
Jascha Sohl-Dickstein, Eric Weiss, Niru Maheswaranathan, and Surya Ganguli.
\newblock Deep unsupervised learning using nonequilibrium thermodynamics.
\newblock In \emph{International conference on machine learning}, pp.\
  2256--2265. PMLR, 2015.

\bibitem[Song \& Ermon(2019)Song and
  Ermon]{song2020generativemodelingestimatinggradients}
Yang Song and Stefano Ermon.
\newblock Generative modeling by estimating gradients of the data distribution.
\newblock In \emph{Advances in Neural Information Processing Systems},
  volume~32. Curran Associates, Inc., 2019.
\newblock URL
  \url{https://proceedings.neurips.cc/paper_files/paper/2019/file/3001ef257407d5a371a96dcd947c7d93-Paper.pdf}.

\bibitem[Song \& Ermon(2020)Song and
  Ermon]{song2020improvedtechniquestrainingscorebased}
Yang Song and Stefano Ermon.
\newblock Improved techniques for training score-based generative models.
\newblock In \emph{Advances in Neural Information Processing Systems},
  volume~33, pp.\  12438--12448. Curran Associates, Inc., 2020.
\newblock URL
  \url{https://proceedings.neurips.cc/paper_files/paper/2020/file/92c3b916311a5517d9290576e3ea37ad-Paper.pdf}.

\bibitem[Song et~al.(2021)Song, Sohl-Dickstein, Kingma, Kumar, Ermon, and
  Poole]{song2020score}
Yang Song, Jascha Sohl-Dickstein, Diederik~P Kingma, Abhishek Kumar, Stefano
  Ermon, and Ben Poole.
\newblock Score-based generative modeling through stochastic differential
  equations.
\newblock In \emph{The Ninth International Conference on Learning
  Representations}, 2021.
\newblock URL \url{https://openreview.net/forum?id=PxTIG12RRHS}.

\bibitem[Stoltz et~al.(2010)Stoltz, Rousset, et~al.]{stoltz2010free}
Gabriel Stoltz, Mathias Rousset, et~al.
\newblock \emph{Free energy computations: A mathematical perspective}.
\newblock World Scientific, 2010.

\bibitem[Swendsen \& Wang(1986)Swendsen and
  Wang]{swendsenReplicaMonteCarlo1986}
Robert~H. Swendsen and Jian-Sheng Wang.
\newblock Replica {Monte} {Carlo} {Simulation} of {Spin}-{Glasses}.
\newblock \emph{Physical Review Letters}, 57\penalty0 (21):\penalty0
  2607--2609, November 1986.
\newblock \doi{10.1103/PhysRevLett.57.2607}.
\newblock URL \url{https://link.aps.org/doi/10.1103/PhysRevLett.57.2607}.
\newblock Publisher: American Physical Society.

\bibitem[Syed et~al.(2022)Syed, Bouchard-C{\^o}t{\'e}, Deligiannidis, and
  Doucet]{syed2022non}
Saifuddin Syed, Alexandre Bouchard-C{\^o}t{\'e}, George Deligiannidis, and
  Arnaud Doucet.
\newblock Non-reversible parallel tempering: a scalable highly parallel mcmc
  scheme.
\newblock \emph{Journal of the Royal Statistical Society Series B: Statistical
  Methodology}, 84\penalty0 (2):\penalty0 321--350, 2022.

\bibitem[Tawn et~al.(2020)Tawn, Roberts, and Rosenthal]{tawn2020weight}
Nicholas~G Tawn, Gareth~O Roberts, and Jeffrey~S Rosenthal.
\newblock Weight-preserving simulated tempering.
\newblock \emph{Statistics and Computing}, 30\penalty0 (1):\penalty0 27--41,
  2020.

\bibitem[Tzen \& Raginsky(2019)Tzen and Raginsky]{tzen2019theoretical}
Belinda Tzen and Maxim Raginsky.
\newblock Theoretical guarantees for sampling and inference in generative
  models with latent diffusions.
\newblock In \emph{Conference on Learning Theory}, pp.\  3084--3114. PMLR,
  2019.

\bibitem[Vargas et~al.(2023{\natexlab{a}})Vargas, Grathwohl, and
  Doucet]{vargas2023denoising}
Francisco Vargas, Will~Sussman Grathwohl, and Arnaud Doucet.
\newblock Denoising diffusion samplers.
\newblock In \emph{The Eleventh International Conference on Learning
  Representations}, 2023{\natexlab{a}}.
\newblock URL \url{https://openreview.net/forum?id=8pvnfTAbu1f}.

\bibitem[Vargas et~al.(2023{\natexlab{b}})Vargas, Ovsianas, Fernandes,
  Girolami, Lawrence, and N{\"u}sken]{vargas2023bayesian}
Francisco Vargas, Andrius Ovsianas, David Fernandes, Mark Girolami, Neil~D
  Lawrence, and Nikolas N{\"u}sken.
\newblock Bayesian learning via neural {Schr{\"o}dinger--F{\"o}llmer} flows.
\newblock \emph{Statistics and Computing}, 33\penalty0 (1):\penalty0 3,
  2023{\natexlab{b}}.

\bibitem[Vargas et~al.(2024)Vargas, Padhy, Blessing, and
  N{\"u}sken]{vargas2023transport}
Francisco Vargas, Shreyas Padhy, Denis Blessing, and Nikolas N{\"u}sken.
\newblock Transport meets variational inference: Controlled monte carlo
  diffusions.
\newblock In \emph{The Twelfth International Conference on Learning
  Representations}, 2024.
\newblock URL \url{https://openreview.net/forum?id=PP1rudnxiW}.

\bibitem[Vincent(2011)]{vincent2011connection}
Pascal Vincent.
\newblock A connection between score matching and denoising autoencoders.
\newblock \emph{Neural computation}, 23\penalty0 (7):\penalty0 1661--1674,
  2011.

\bibitem[Wainwright \& Jordan(2008)Wainwright and Jordan]{Wainwright2008}
Martin~J. Wainwright and Michael~I. Jordan.
\newblock Graphical {Models}, {Exponential} {Families}, and {Variational}
  {Inference}.
\newblock \emph{Foundations and Trends® in Machine Learning}, 1:\penalty0
  1--305, 2008.
\newblock \doi{10.1561/2200000001}.
\newblock URL \url{http://discovery.ucl.ac.uk/185880/}.

\bibitem[Woodard et~al.(2009)Woodard, Schmidler, and
  Huber]{woodard2009sufficient}
Dawn Woodard, Scott Schmidler, and Mark Huber.
\newblock Sufficient conditions for torpid mixing of parallel and simulated
  tempering.
\newblock 2009.

\bibitem[Wu et~al.(2020)Wu, K{\"o}hler, and No{\'e}]{wu2020stochastic}
Hao Wu, Jonas K{\"o}hler, and Frank No{\'e}.
\newblock Stochastic normalizing flows.
\newblock \emph{Advances in Neural Information Processing Systems},
  33:\penalty0 5933--5944, 2020.

\bibitem[Zhang \& Chen(2022)Zhang and Chen]{zhang2021path}
Qinsheng Zhang and Yongxin Chen.
\newblock {Path Integral Sampler}: a stochastic control approach for sampling.
\newblock In \emph{The Tenth International Conference on Learning
  Representations}, 2022.

\bibitem[Zhang et~al.(2023)Zhang, Tan, and Ou]{zhang2023persistently}
Xinwei Zhang, Zhiqiang Tan, and Zhijian Ou.
\newblock Persistently trained, diffusion-assisted energy-based models.
\newblock \emph{Stat}, 12\penalty0 (1):\penalty0 e625, 2023.
\newblock \doi{https://doi.org/10.1002/sta4.625}.
\newblock URL \url{https://onlinelibrary.wiley.com/doi/abs/10.1002/sta4.625}.

\bibitem[Zhu et~al.(2024)Zhu, Xie, Wu, and
  Gao]{zhu2024learningenergybasedmodelscooperative}
Yaxuan Zhu, Jianwen Xie, Ying~Nian Wu, and Ruiqi Gao.
\newblock Learning energy-based models by cooperative diffusion recovery
  likelihood.
\newblock In \emph{The Twelfth International Conference on Learning
  Representations}, 2024.
\newblock URL \url{https://openreview.net/forum?id=AyzkDpuqcl}.

\end{thebibliography}
\bibliographystyle{iclr2025_conference}

\newpage
\appendix

\section*{Organization of the supplementary}

The appendix is organized as follows. In \Cref{app:alg}, we provide the pseudo-codes of every RDS instance presented in the main part of this paper (standard RDS, GMM-LRDS \& EBM-LRDS). \Cref{app:preli} summarizes general facts that will be useful for proofs and corresponding computations. In \Cref{app:diffusion_details}, we describe the noising diffusion schemes considered in this work, namely the Pinned Brownian Motion and the Variance-Preserving setting, and propose novel schemes. For RDS and related variational diffusion-based methods, we dispense details on their continuous-time and corresponding discrete-time variational objectives in \Cref{app:var-objectives}. We respectively provide an overview of existing annealed MCMC methods and multi-level Energy-Based models in \Cref{app:annealed_mcmc} and \Cref{app:ebm}. In \Cref{app:beyond-elbos}, we provide expressions of additional variational metrics proposed by \cite{blessing2024beyond} in the RDS setting. Details on our experimental settings, our implementation of RDS and other competing methods are given in  \Cref{app:implem}. Finally, we present additional theoretical and experimental results in \Cref{app:further-results}.

Our codebase is available at \url{https://github.com/h2o64/sde_sampler_lrds}.

\paragraph{Notation.} Let $\Pbb\in \Pmeasure(\setFunConT)$ and $t\in \ccint{0,T}$. For any $x_t\in \rset^d$, we denote by $\Pbb^{x_t}_{|t} \in \Pmeasure(\setFunConT)$ the path measure associated to the stochastic process $(X_{t})_{t\in \ccint{0,T}}\sim \Pbb$ conditioned on $X_t=x_t$. Furthermore, for any $\pi\in \Pmeasure(\rset^d)$, $\pi \otimes \Pbb^{x_t}_{|t}\in \setFunConT$ stands for the path measure $\int_{\rset^d}\Pbb^{x_t}_{|t}\rmd \pi(x_t)$. For any $K\geq 1$, we denote the set of joint distributions $\Pmeasure((\rset^d)^{K+1})$ by $\Pmeasure^{(K+1)}$. Finally, we adopt the following notation to design sample batches: for any $\bar{L}\geq \underline{L}\geq 0$ and $\bar{K}\geq \underline{K}\geq 0$, we denote by $X_{\underline{K}:\bar{K}}^{\underline{L}:\bar{L}}$ the sequence of samples $\{X_{k}^\ell\}_{k=\underline{K}, \ell=\underline{L}}^{\bar{K},\bar{L}}$.
\vspace{-0.2cm}
\section{Pseudo-codes of RDS-based algorithms}\label{app:alg}
\vspace{-0.3cm}
We respectively give sampling procedures and training procedures of a general version of RDS in \Cref{alg:rds-sampling} and \Cref{alg:rds}. Relying on this, we derive the complete training schemes for GMM-LRDS (\Cref{alg:gmm-rds}) and EBM-LRDS (\Cref{alg:ebm-rds}).

\begin{algorithm2e}[h!]
  \caption{Reference-based Diffusion Sampling (RDS) : sampling stage}
  \label{alg:rds-sampling}
  \small
  \Input{Time discretization $\{t_k\}_{k=0}^K$ of $\ccint{0,T}$, reference scores $\{s_{T-t_k}^{\text{ref}}\}_{k=0}^{K-1}$, neural network $g^\theta: \ccint{0,T}\times \rset^d \to \rset^d$, variational coefficients $\{a_k, b_k, c_k\}_{k=0}^{K-1}$ defined in \Cref{app:proof-disc}}
  \LeftComment{Initialization}\\
  $Y_{0}\sim\piprior$\\ $(Z_k)_{k=0}^{K-1}\stackrel{\mathrm{i.i.d.}}{\sim} \densityGaussian(0, \Idd)$\\
  \For{$k=0, \hdots, K-1$}{
  \LeftComment{Compute the $k$-th diffusion step}\\
  $Y_{k+1}= a_k Y_k + b_k \{s^{\text{ref}}_{T-t_k}( Y_k) + g^{\hat{\theta}}_{T-t_k}(Y_k)\} + \sqrt{c_k}Z_k$
  }
  \Output{Discrete time process $Y_{0:K}$ approximating $(Y^{\hat{\theta}}_{t_k})_{k=0}^K$}
\end{algorithm2e}

\begin{algorithm2e}[h!]
  \caption{Reference-based Diffusion Sampling (RDS) : training stage}
  \label{alg:rds}
  \small
  \Input{Time discretization $\{t_k\}_{k=0}^K$ of $\ccint{0,T}$, target density $\gamma$, reference density $\gamma^{\text{ref}}$ and scores $\{s_{T-t_k}^{\text{ref}}\}_{k=0}^{K-1}$, neural network $g^\theta: \ccint{0,T}\times \rset^d \to \rset^d$ initialized such that $g^{\theta_0}=0$, number of training iterations $N$, batch size $B$, variational coefficients $\{w_k, a_k, b_k, c_k\}_{k=0}^{K-1}$ defined in \Cref{app:proof-disc}}
  \For{$n=0,\hdots,N-1$}{
  \LeftComment{Simulate $B$ trajectories of the process $(Y^{\hat{\theta}}_t)_{t\in \ccint{0,T}}$}\\
  $Y^{1:B}_{0:K} \stackrel{\mathrm{i.i.d.}}{\sim} \text{SamplingRDS}(g^{\theta_n})$, see \Cref{alg:rds-sampling}\\
  \LeftComment{Apply a stochastic gradient descent step on $\theta_n$}\\
  Compute a MC estimator $\hat{\mathcal{L}}_{\text{RDS}}(\theta_n)$ of the loss $\mathcal{L}_{\text{RDS}}(\theta_n)$ defined in \eqref{eq:obj-disc} with samples $Y^{1:B}_{0:K}$\\
  Compute the gradient $\nabla_\theta \hat{\mathcal{L}}_{\text{RDS}}(\theta_n)$ and update $\theta_n$ to $\theta_{n+1}$ with Adam optimizer
  }
  \Output{Learned guidance $g^{\theta_N}$}
\end{algorithm2e}

\begin{algorithm2e}[h!]
  \caption{Gaussian Mixture Model LRDS (GMM-LRDS) : training stage}
  \label{alg:gmm-rds}
  \small
  \Input{Time discretization $\{t_k\}_{k=0}^K$ of $\ccint{0,T}$, target density $\gamma$, location of the target modes $\{\mathbf{x}_i\}_{i=1}^I$, number of Markov chains per target mode $M\geq 1$, size of Markov chains $N_{\mathrm{tot}}\geq 1$, effective size of Markov chains $N_{\mathrm{eff}}\in \llbracket 1, N_{\mathrm{tot}} \rrbracket$, number of reference modes $J\geq I$ }
  \LeftComment{(a) Obtain \emph{reference samples}}\\
  For each $i \in \{1, \hdots, I\}$, build via MALA $M$ Markov chains $\{X_{0:N_{\mathrm{tot}}}^{i, m}\}_{m=1}^{M}$ of size $N_{\mathrm{tot}}$ starting at $\mathbf{x}_i$  \\
  Keep the last $N_{\mathrm{eff}}$ samples of each Markov chain to define $\hpiref\cong \{X_{N_{\mathrm{tot}}-N_{\mathrm{eff}}+1:N_{\mathrm{tot}}}^{i, m}\}_{i=1, m=1}^{I, M}$\\
  \LeftComment{(b) Define the reference process}\\
  \eIf{$J=1$}{
  Fit on $\hpiref$ a Gaussian model with parameterized density $\gamma^{\varphi}$ (Maximum Likelihood estimation)\\
  }{
  Fit on $\hpiref$ a $J$-component Gaussian Mixture Model with parameterized density $\gamma^{\varphi}$ (EM algorithm)
  }
  Set $\gamma^{\text{ref}}= \gamma^{\varphi}$ and compute $\{s_{T-t_k}^{\text{ref}}\}_{k=0}^{K-1}$ analytically, see \Cref{app:preli}\\
  \LeftComment{(c) Run the RDS variational optimization}\\
  Run \Cref{alg:rds}\\
  \Output{Trained GMM-LRDS sampler}
\end{algorithm2e}

\begin{algorithm2e}[h!]
  \caption{Energy-Based Model LRDS (EBM-LRDS) : training stage}
  \label{alg:ebm-rds}
  \small
  \Input{Time discretization $\{t_k\}_{k=0}^K$ of $\ccint{0,T}$, target density $\gamma$, location of the target modes $\{\mathbf{x}_i\}_{i=1}^I$, number of Markov chains per target mode $M\geq 1$, size of Markov chains $N_{\mathrm{tot}}\geq 1$, effective size of Markov chains $N_{\mathrm{eff}}\in \llbracket 1, N_{\mathrm{tot}} \rrbracket$}
  \LeftComment{(a) Obtain \emph{reference samples}}\\
  For each $i \in \{1, \hdots, I\}$, build via MALA $M$ Markov chains $\{X_{0:N_{\mathrm{tot}}}^{i, m}\}_{m=1}^{M}$ of size $N_{\mathrm{tot}}$ starting at $\mathbf{x}_i$  \\
  Keep the last $N_{\mathrm{eff}}$ samples of each Markov chain to define $\hpiref\cong \{X_{N_{\mathrm{tot}}-N_{\mathrm{eff}}+1:N_{\mathrm{tot}}}^{i, m}\}_{i=1, m=1}^{I, M}$\\
  \LeftComment{(b) Define the reference process}\\
  Based on $\hpiref$, fit a multi-level EBM $E^\varphi$ using \Cref{alg:ebm_annealed} \\
  Set $\gamma^{\text{ref}}(x)= \exp(-E^\varphi (0, x))$ and $s_{T-t_k}^{\text{ref}}(x)=-\nabla_x E^\varphi (T-t_k, x)$ for any $k\in \{0, \hdots, K-1\}$\\
  \LeftComment{(c) Run the RDS variational optimization}\\
  Run \Cref{alg:rds}\\
  \Output{Trained EBM-RDS sampler}
\end{algorithm2e}

\section{Preliminaries} \label{app:preli}

\paragraph{Linear SDE integration.}We first dispense a useful lemma to compute exact integration in SDEs with linear drift. 

\begin{assumption}[Integrability conditions on $f$ and $\beta$]\label{ass:coeff_SDE}Coefficients $f:[0,T]\to \rset$ and $\beta: [0,T]\to (0,\infty)$ are such that (a) $f$ is integrable on $(0,T)$ and (b) $\beta$ is integrable on $(0,T)$.
\end{assumption}

\begin{lemma} \label{lemma:ito} Let $T>0$ and $b\in \rset^d$. Consider the SDE defined on $[0,T]$ by $\rmd Y_t= f(t) (Y_t +b) \rmd t +\sqrt{\beta(t)} \rmd B_t$, where coefficients $f$ and $\beta$ verify \Cref{ass:coeff_SDE}. Then, for any pair of time-steps $(s,t)$ such that $T\geq t>s\geq 0$, the conditional distribution of $Y_t$ given $Y_s=y_s\in \rset^d$, denoted by $p_{t|s}(\cdot|y_s)$, verifies
\begin{align}
    \textstyle{p_{t|s}(\cdot|y_s)= \densityGaussian\left(\exp(\int_{s}^{t}f(u) \rmd u) y_s + \left(\exp(\int_{s}^{t}f(u)\rmd u)-1\right)b , \int_s^{t}\beta(u) \exp(2 \int_{u}^{t}f(r) \rmd r)\rmd u \,\Idd\right)} \eqsp .
\end{align}
\end{lemma}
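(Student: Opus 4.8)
The plan is to solve the linear SDE explicitly via a deterministic integrating factor and then read off the conditional law from the resulting closed-form expression. First I would introduce the factor $\Phi_{s,r}=\exp(\int_s^r f(u)\,\rmd u)$, which is well defined, strictly positive, and absolutely continuous in $r$ on $[s,t]$ thanks to the integrability of $f$ in \Cref{ass:coeff_SDE}. Since $\Phi_{s,\cdot}^{-1}$ is deterministic and of finite variation, the product rule applies with no cross-variation term, giving $\rmd(\Phi_{s,r}^{-1}Y_r) = -f(r)\Phi_{s,r}^{-1}Y_r\,\rmd r + \Phi_{s,r}^{-1}\{f(r)(Y_r+b)\,\rmd r + \sqrt{\beta(r)}\,\rmd B_r\} = \Phi_{s,r}^{-1}\{f(r)b\,\rmd r + \sqrt{\beta(r)}\,\rmd B_r\}$, so the linear drift cancels. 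Integrating from $s$ to $t$ and using $\Phi_{s,s}=1$ together with $Y_s=y_s$ yields $Y_t = \Phi_{s,t}\,y_s + \Phi_{s,t}\int_s^t \Phi_{s,r}^{-1}f(r)b\,\rmd r + \Phi_{s,t}\int_s^t \Phi_{s,r}^{-1}\sqrt{\beta(r)}\,\rmd B_r$.

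Next I would compute the two moments from this representation. For the mean, the Itô integral has zero expectation, so $\PE[Y_t \mid Y_s=y_s] = \Phi_{s,t}\,y_s + \Phi_{s,t}\,b\int_s^t \Phi_{s,r}^{-1}f(r)\,\rmd r$. Recognizing that $\partial_r \Phi_{s,r}^{-1} = -f(r)\Phi_{s,r}^{-1}$, the fundamental theorem of calculus turns the remaining integral into $1-\Phi_{s,t}^{-1}$, and multiplying by $\Phi_{s,t}$ produces the mean $\Phi_{s,t}\,y_s + (\Phi_{s,t}-1)b$, matching the statement. For the covariance, only the stochastic integral contributes, and by the Itô isometry its variance equals $\Phi_{s,t}^2\int_s^t \Phi_{s,r}^{-2}\beta(r)\,\rmd r\,\Idd$; collapsing the exponentials via $\Phi_{s,t}^2\Phi_{s,r}^{-2} = \exp(2\int_r^t f(u)\,\rmd u)$ gives exactly $\int_s^t \beta(r)\exp(2\int_r^t f(u)\,\rmd u)\,\rmd r\,\Idd$, which is the claimed covariance after renaming the dummy variable.

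Finally I would establish Gaussianity: conditionally on $Y_s=y_s$, the first two terms of the closed form are deterministic and the last is an Itô integral of a deterministic integrand against Brownian motion, hence a centered Gaussian vector with independent, identically distributed coordinates; an affine image of a Gaussian is Gaussian, so the conditional law is precisely the stated $\densityGaussian$. The one step I would treat most carefully — the only place where genuine justification is needed rather than routine algebra — is checking that these manipulations are licit. Here \Cref{ass:coeff_SDE} does exactly the required work: integrability of $f$ makes $\Phi_{s,\cdot}^{\pm 1}$ bounded on the compact $[s,t]$, and integrability of $\beta$ then forces the integrand $\Phi_{s,r}^{-1}\sqrt{\beta(r)}$ to lie in $L^2([s,t])$, so the Itô integral and the isometry are well defined, while the deterministic drift integral is finite for the same reason. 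With these integrability facts secured, the integrating-factor derivation and the moment identities follow without further difficulty.
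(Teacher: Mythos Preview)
Your proposal is correct and follows essentially the same integrating-factor approach as the paper: the paper defines $\zeta(t)=\exp(-\int_0^t f(u)\,\rmd u)$ and applies It\^o's formula to $Z_t=\zeta(t)Y_t$, which is exactly your computation with $\Phi_{s,r}^{-1}$ up to the harmless constant factor $\zeta(s)$. Your write-up is in fact a bit more careful than the paper's (explicitly evaluating the drift integral via the fundamental theorem of calculus, arguing Gaussianity, and checking the $L^2$ condition for the It\^o isometry from \Cref{ass:coeff_SDE}), but the underlying idea is identical.
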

\begin{proof} Assume \Cref{ass:coeff_SDE}. Define the function $\zeta: t\in \ccint{0,T} \to \exp(-\int_{0}^{t}f(u) \rmd u)$ and consider the stochastic process $(Z_t)_{t\in \ccint{0,T}}$ defined by $Z_t= \zeta(t) Y_t$ for any $t\in \ccint{0,T}$. By Îto's formula, we have $\rmd Z_t=f(t)\zeta(t) b \rmd t +\zeta(t) \sqrt{\beta(t)}\rmd B_t= -\dot{\zeta}(t) b \rmd t + \zeta(t) \sqrt{\beta(t)} \rmd B_t$. Therefore, for any time-steps $(s,t)$ such that  $T\geq t>s\geq 0$, we have
\begin{align}
   \textstyle \zeta(t) Y_{t}-\zeta(s) Y_{s}= \{\zeta(s) -\zeta(t)\}b+  \int_{s}^{t}\zeta(u) \sqrt{\beta(u)} \rmd B_u \eqsp ,
\end{align}
and then
\begin{align}
     \textstyle Y_{t}= \exp(\int_{s}^{t}f(u) \rmd u) Y_s + \left( \exp(\int_{s}^{t}f(u)\rmd u)-1\right)b + \int_{s}^{t}\sqrt{\beta(u)}\exp( \int_{u}^{t}f(r) \rmd r)\rmd B_u \eqsp,
\end{align}
which gives the result using Îto's isometry and the fact that $Y_s$ is independent from $(B_t-B_s)_{t\in [s,T]}$.
\end{proof}

\paragraph{General SDE integration.} Consider a general SDE of the form $\rmd Y_t= f(t) (Y_t +g_t(Y_t)) \rmd t +\sqrt{\beta(t)} \rmd B_t$ defined on $\ccint{0,T}$, where $f$, $\beta$ satisfy \Cref{ass:coeff_SDE} and $g:\ccint{0,T}\times \rset^d \to \rset^d$ is a black-box function. Let $(s,t)$ be a pair of time-steps such that $T\geq t>s\geq 0$. We aim to compute the conditional distribution of $Y_t$ given $Y_s=y_s \in \rset^d$. While this distribution is intractable for general function $g$, it can be approximated by a tractable Gaussian kernel if $t$ is close enough to $s$. Below, we restate two common first-order methods that produce such approximation following the result of \Cref{lemma:ito}. 
\begin{enumerate}[wide, labelindent=0pt, label=(\alph*)]
    \item \emph{Euler-Maruyama (EM) scheme}, which consists of exactly integrating on time interval $[s,t]$ the SDE defined by $\rmd Y_u= f(s) (y_s +g_s(y_s)) \rmd u +\sqrt{\beta(s)} \rmd B_u$ with constant drift and constant volatility.
    \item \emph{Exponential Integration (EI) scheme} \citep{durmus2015quantitative}, more precise than the EM scheme, which consists of exactly integrating on time interval $[s,t]$ the SDE defined by $\rmd Y_u= f(u) (Y_u +g_s(y_s)) \rmd u +\sqrt{\beta(u)} \rmd B_u$ with linear drift, where the intractable term is frozen at $g_s(y_s)$.
\end{enumerate}

\paragraph{Computation of scores for Gaussian mixtures.} Below, we provide a useful lemma to compute the marginal scores of a noising diffusion process applied to a general Gaussian mixture.

\begin{lemma}
    Let $\pi \in \Pmeasure(\rset^d)$. Consider the noising process $(X_t)_{t\in \ccint{0,T}}$ defined by SDE \eqref{eq:SDE-denoising} where $X_0\sim \pi$. Then, for any $t\in (0,T)$, the conditional distribution of $X_t$ given $X_0=x_0\in \rset^d$ is defined by the Gaussian kernel
    \begin{align}\label{eq:noising_kernel_from_zero}
        q_{t|0}(\cdot|x_0)=\densityGaussian(S(t) x_0,S(t)^2 \sigma^2(t) \, \Idd)\eqsp, 
    \end{align}
    where $S(t) = \exp(\int_0^t f(u) \rmd u)$ and $\sigma^2(t) = \int_0^t \beta(u) / S(u)^2 \rmd u$.
    
    In particular, if $\pi$ is a Gaussian distribution \ie, $\pi = \densityGaussian(\mathbf{m}, \Sigma)$ with mean $\mathbf{m} \in \rset^d$ and covariance $\Sigma \in \rset^{d \times d}$, then for any $t\in \ccint{0,T}$, the marginal distribution of $X_t$ is Gaussian, defined by
    \begin{align}\label{eq:marginal_gaussian}
        p_t = \densityGaussian(S(t) \mathbf{m}, S(t)^2 \Sigma + S(t)^2 \sigma^2(t) \, \Idd)\eqsp.
    \end{align}
    
    On the other hand, if $\pi$ is a Gaussian mixture \ie, $\pi = \sum_{j=1}^J w_j \densityGaussian(\mathbf{m}_j, \Sigma_j)$ where $J \geq 1$ is the number of components, $\{w_j\}_{j=1}^J\in \Delta_J$ are the component weights, $\mathbf{m}_j \in \rset^d$ and $\Sigma_j \in \rset^{d\times d}$ are respectively the mean and the covariance of the $j$-th component, for any $t\in \ccint{0,T}$, the marginal distribution of $X_t$ is a Gaussian mixture, defined by
    \begin{align}\label{eq:marginal_mog}
        p_t = \sum_{j=1}^J w_j \densityGaussian(S(t) \mathbf{m}_j, S(t)^2 \Sigma_j + S(t)^2 \sigma^2(t) \, \Idd)\eqsp.
    \end{align}
    Note that differentiating \eqref{eq:marginal_mog} provides closed-form solutions of the score of the stochastic process. 
\end{lemma}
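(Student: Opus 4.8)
The plan is to obtain the conditional kernel \eqref{eq:noising_kernel_from_zero} as a direct specialization of \Cref{lemma:ito}, and then to derive the two marginal formulas \eqref{eq:marginal_gaussian} and \eqref{eq:marginal_mog} by standard Gaussian marginalization and by linearity, respectively. First I would apply \Cref{lemma:ito} to the noising SDE \eqref{eq:SDE-noising} with $b=0$ and $s=0$. The lemma immediately yields that $q_{t|0}(\cdot|x_0)$ is Gaussian with mean $\exp(\int_0^t f(u)\rmd u)\,x_0 = S(t) x_0$ and isotropic covariance $(\int_0^t \beta(u)\exp(2\int_u^t f(r)\rmd r)\rmd u)\,\Idd$. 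To bring this into the claimed form, the one computation to carry out is the reconciliation of the two variance expressions: using the additivity $\int_0^t f - \int_0^u f = \int_u^t f$ in the definition of $S$, one has $\exp(\int_u^t f(r)\rmd r) = S(t)/S(u)$, hence $\exp(2\int_u^t f(r)\rmd r) = S(t)^2/S(u)^2$. Substituting and pulling the constant $S(t)^2$ out of the integral turns the variance into $S(t)^2 \int_0^t \beta(u)/S(u)^2\,\rmd u = S(t)^2\sigma^2(t)$, as required.

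For the marginal under a Gaussian initial law \eqref{eq:marginal_gaussian}, I would use the representation $X_t = S(t) X_0 + \varepsilon_t$, where, conditionally on $X_0$, the increment $\varepsilon_t$ is distributed as $\densityGaussian(0, S(t)^2\sigma^2(t)\,\Idd)$ and independent of $X_0$; this is exactly the content of the conditional kernel just established. Taking $X_0 \sim \densityGaussian(\mathbf{m},\Sigma)$, the law of $X_t$ is the law of an affine image of a Gaussian vector perturbed by independent Gaussian noise, hence Gaussian. Its mean is $S(t)\mathbf{m}$ and, the two contributions adding by independence, its covariance is $S(t)^2\Sigma + S(t)^2\sigma^2(t)\,\Idd$, which is \eqref{eq:marginal_gaussian}. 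Equivalently, one may evaluate the convolution integral $p_t(y) = \int_{\rset^d} q_{t|0}(y|x_0)\,\densityGaussian(x_0;\mathbf{m},\Sigma)\,\rmd x_0$ directly by completing the square.

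Finally, for the Gaussian-mixture case \eqref{eq:marginal_mog}, I would invoke the linearity of the marginalization map $\pi \mapsto p_t = \int_{\rset^d} q_{t|0}(\cdot|x_0)\,\rmd\pi(x_0)$ in the initial law $\pi$. Writing $\pi = \sum_{j=1}^J w_j \densityGaussian(\mathbf{m}_j,\Sigma_j)$ and interchanging the finite sum with the integral gives $p_t = \sum_{j=1}^J w_j p_t^{(j)}$, where $p_t^{(j)}$ denotes the time-$t$ marginal started from the $j$-th component; applying the previous paragraph to each component yields \eqref{eq:marginal_mog}. I do not anticipate a genuine obstacle: the whole statement is essentially a bookkeeping exercise once \Cref{lemma:ito} is available, the only step requiring care being the variance identity $\int_0^t \beta(u)\exp(2\int_u^t f(r)\rmd r)\,\rmd u = S(t)^2\sigma^2(t)$ established in the first paragraph.
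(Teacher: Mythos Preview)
Your proposal is correct and follows the same approach as the paper: the paper's proof simply says that \eqref{eq:noising_kernel_from_zero} follows directly from \Cref{lemma:ito}, and that \eqref{eq:marginal_gaussian} and \eqref{eq:marginal_mog} are obtained by applying the affine transition kernel to $\pi$. You have just made these steps explicit, including the variance identity $\int_0^t \beta(u)\exp(2\int_u^t f(r)\,\rmd r)\,\rmd u = S(t)^2\sigma^2(t)$ and the linearity-in-$\pi$ argument, which the paper leaves implicit.
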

\begin{proof}
    The proof of \eqref{eq:noising_kernel_from_zero} directly follows from \Cref{lemma:ito}. Then, \eqref{eq:marginal_gaussian} and \eqref{eq:marginal_mog} are obtained by applying the affine mapping defined by the transition kernel \eqref{eq:noising_kernel_from_zero} to the target distribution $\pi$.
\end{proof}

In particular, \eqref{eq:marginal_mog} demonstrates that any noising diffusion scheme applied to a Gaussian mixture does not show any mode switching.

\paragraph{Integral probability Metrics.} Finally, we dispense the definition of standard probability metrics, that will be used to assess the performance of the considered samplers, \emph{when having access to ground truth samples}. Consider two distributions $\mu\in \Pmeasure(\rset^d)$ and $\nu\in \Pmeasure(\rset^d)$ that we aim to compare.

We recall that the 2-Wasserstein distance between $\mu$ and $\nu$ is given by 
\begin{align}  
    \textstyle{ W_{2}(\mu, \nu) = \inf \{\int_{\rset^d \times \rset^d}\|x_1-x_0\|^2\rmd \pi(x_0,x_1): \pi \in  \Pmeasure(\rset^d\times \rset^d), \pi_0=\mu, \pi_1=\nu\}^{1/2} \eqsp ,}
\end{align}
where $\pi_i$ denotes the $i$-th marginal of $\pi$ for $i\in \{0,1\}$. In practice, we rather turn to its entropy regularized version \citep{peyre2019computational}
\begin{align}
    \textstyle W_{2,\varepsilon}(\mu, \nu) = \inf\{\int_{\rset^d \times \rset^d}\|x_1-x_0\|^2\rmd \pi(x_0,x_1) - \mathscr{H}(\pi): \pi \in  \Pmeasure(\rset^d\times \rset^d), \pi_0=\mu, \pi_1=\nu\}^{1/2} \eqsp ,
\end{align}
where $\varepsilon > 0$ is a regularization hyper-parameter and $\mathscr{H}(\pi) = -\int_{\rset^d \times \rset^d} \log \pi(x_0,x_1) \rmd \pi( x_0, x_1)$ refers to as the entropy of $\pi$. When having access to samples from $\mu$ and $\nu$, we compute a statistical estimation of $W_{2,\varepsilon}(\mu, \nu)$ via Sinkhorn algorithm \citep{cuturi2013sinkhorn} with $\varepsilon=10^{-3}$, based on the GPU-oriented implementation from \url{https://github.com/fwilliams/scalable-pytorch-sinkhorn}. 

We also consider the Maximum Mean Discrepancy (MMD) \citep{gretton2012kernel}, which quantifies the dissimilarity between $\mu$ and $\nu$ by comparing their mean embeddings in a reproducing kernel Hilbert space $\mathcal{H}_k$ with kernel $k$, as followed
\begin{align}
    \text{MMD}(\mu,\nu)=\sup_{f\in \mathcal{H}_K: \norm{f}_{\mathcal{H}_K}\leq 1}\mathbb{E}[f(X)-f(Y)], \eqsp X \sim \mu, Y \sim \nu \eqsp.
\end{align}
In particular, $\text{MMD}(\mu,\nu)=0$ if and only if $\mu=\nu$ almost surely.  When having access to samples $\{x_i\}_{i=1}^n$ from $\mu$ and $\{y_i\}_{i=1}^m$ from $\nu$, we compute an unbiased statistical estimation of $\text{MMD}(\mu,\nu)$ given by
\begin{align}
    \textstyle\sqrt{\frac{2}{n(n-1)}\sum_{1 \leq i < j \leq n}k(x_i,x_j) + \frac{2}{m(m-1)}\sum_{1 \leq i < j \leq m}k(y_i,y_j) -\frac{2}{mn}\sum_{i=1}^n \sum_{j=1}^m k(x_i,y_j) } \eqsp,
\end{align}
where $k(x,y)=\exp\left(-\norm{x-y}^2/(2\alpha)\right)$ is the squared exponential kernel whose positive bandwidth is fixed as the median of the squared pairwise distances computed on the joint set $\{x_i\}_{i=1}^n \cup \{y_j\}_{j=1}^m$ \citep{gretton2012kernel}.

Finally, we recall that the Kolmogorov-Smirnov (KS) distance between $\mu$ and $\nu$ is defined by 
\begin{align}
   \operatorname{KS}(\mu,\nu) = \sup_{x \in \rset^d} \abs{F_{\mu}(x) - F_{\nu}(x)}\eqsp,
\end{align}
where $F_{\mu}$ (respectively $F_{\nu}$) denotes the cumulative distribution function of $\mu$ (respectively $\nu$). When having access to samples from $\mu$ and $\nu$, we implement a statistical estimation of the sliced KS distance \citep{grenioux2023onsampling} using 128 random projections.

\section{Details on noising diffusion processes}\label{app:diffusion_details}

In this section, we consider a noising diffusion process $(X_t)_{t\in \ccint{0,T}}$ induced by SDE \eqref{eq:SDE-noising} for an arbitrary target distribution $\pi\in \Pmeasure(\rset^d)$ and $T>0$, and denote by $\Pbb^\star \in \Pmeasure(\setFunConT)$ the corresponding path measure. Below, we provide detailed results on this diffusion process, depending on the choice of $f$ and $\beta$: we first provide details on the general setting in \Cref{subsec:noising-general}; then, we fully describe the Pinned Brownian Motion and the Variance-Preserving scheme in \Cref{subsec:general_PBM} and \Cref{subsec:general_VP}, that are used in our experiments; finally, we present novel noising schemes in \Cref{sec:other-denoising}.

\subsection{General noising scheme}\label{subsec:noising-general}

Here, we consider the most general form of SDE \eqref{eq:SDE-noising}.

\begin{lemma}
    Let $\pi \in \Pmeasure(\rset^d)$. Assume that $f$ and $\beta$ both verify \Cref{ass:coeff_SDE}. Then, for any pair of time-steps $(s,t)$ such that $T\geq t>s \geq0$, the conditional distribution of $X_t$ given $X_s=x_s\in \rset^d$ is defined by the Gaussian kernel
    \begin{align}\label{eq:noising_kernel_general}
        q_{t|s}(\cdot|x_s)=\densityGaussian\left(\exp(\log S(t)-\log S(s)) x_s,S(t)^2 \{\sigma^2(t)-\sigma^2(s)\} \, \Idd\right)\eqsp, 
    \end{align}
    where $S(t) = \exp(\int_0^t f(u) \rmd u)$ and $\sigma^2(t) = \int_0^t \beta(u) / S(u)^2 \rmd u$.
\end{lemma}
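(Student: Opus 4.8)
The plan is to observe that the noising SDE \eqref{eq:SDE-noising} driving $(X_t)_{t\in\ccint{0,T}}$ is exactly the linear SDE of \Cref{lemma:ito} in the special case $b=0$, so the conditional law is already available and only needs to be recast in the $S,\sigma^2$ parameterization.

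First I would apply \Cref{lemma:ito} with $b=0$ to the pair $(s,t)$ with $T\geq t>s\geq 0$. Under \Cref{ass:coeff_SDE}, this yields directly that $q_{t|s}(\cdot|x_s)$ is Gaussian with mean $\exp(\int_s^t f(u)\rmd u)\,x_s$ and isotropic covariance $(\int_s^t \beta(u)\exp(2\int_u^t f(r)\rmd r)\rmd u)\,\Idd$.

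It then remains to match these two expressions to the claimed form. For the mean, since $\log S(t)=\int_0^t f(u)\rmd u$ by definition of $S$, I would write $\int_s^t f(u)\rmd u=\log S(t)-\log S(s)$, recovering the stated drift coefficient $\exp(\log S(t)-\log S(s))$. For the covariance, the single key identity is $\exp(2\int_u^t f(r)\rmd r)=S(t)^2/S(u)^2$; substituting it and factoring $S(t)^2$ out of the integral leaves $S(t)^2\int_s^t \beta(u)/S(u)^2\rmd u$, and I would recognize the remaining integral as $\sigma^2(t)-\sigma^2(s)$ by additivity of the integral defining $\sigma^2$, namely $\int_s^t \beta(u)/S(u)^2\rmd u=\int_0^t \beta(u)/S(u)^2\rmd u-\int_0^s \beta(u)/S(u)^2\rmd u$.

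There is no genuine obstacle here: \Cref{ass:coeff_SDE} guarantees finiteness of all the integrals involved so that \Cref{lemma:ito} applies, and the remainder of the argument is purely a change of notation from the raw Îto integrals to the compact $(S,\sigma^2)$ parameterization used throughout the paper.
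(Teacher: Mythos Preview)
Your proposal is correct and matches the paper's approach exactly: the paper's proof is simply the one-line remark that the result follows directly from \Cref{lemma:ito}, and you have spelled out precisely this application (with $b=0$) together with the elementary rewriting in terms of $S$ and $\sigma^2$.
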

\begin{proof}
The proof of \eqref{eq:noising_kernel_general} directly follows from \Cref{lemma:ito}.
\end{proof}

\subsection{Pinned Brownian motion}\label{subsec:general_PBM}

For any $t\in \ccint{0,T}$, define $\alpha(t)=\int_0^t \beta(t)\rmd t$ and consider the case where $f(t)= -\frac{\beta(t)}{\alpha(T) -\alpha(t)}$. Then, SDE \eqref{eq:SDE-noising} can be rewritten as
\begin{align}
    \rmd X_t =-\frac{\beta(t) X_t}{\alpha(T) -\alpha(t)}\rmd t + \sqrt{\beta(t)} \rmd W_t \eqsp, \eqsp  X_0 \sim \pi \eqsp .
\end{align}

This noising scheme, known as the \emph{Pinned Brownian Motion} (PBM) and previously considered by \cite{tzen2019theoretical, zhang2021path, vargas2023bayesian}, can be obtained by applying a Doob's h-transform on the uncontrolled SDE $\rmd X_t =\sqrt{\beta(t)} \rmd W_t$, associated to path measure $\Qbb$, to hit $0$ at time $T$. Interestingly, it can be shown that the resulting path measure $(\Pbb^\star)^R$ is solution to the following stochastic optimal control problem
\begin{align}
    \arg \min \ensembleLigne{\KL(\Pbb\mid \Qbb)}{\Pbb\in \Pmeasure(\setFunConT), \Pbb_T=\pi} \eqsp ,
\end{align}
which is often referred to as a half Schrödinger Bridge problem.

\begin{lemma}\label{lemma:general_pbm}
    Let $\pi \in \Pmeasure(\rset^d)$. Assume that $f(t)= -\frac{\beta(t)}{\alpha(T) -\alpha(t)}$ and $\beta$ verifies \Cref{ass:coeff_SDE}. Then, for any pair of time-steps $(s,t)$ such that $T\geq t>s \geq0$, the conditional distribution of $X_t$ given $X_s=x_s\in \rset^d$ is defined by the Gaussian kernel
    \begin{align}\label{eq:noising_kernel_pbm}
        q_{t|s}(\cdot|x_s)=\densityGaussian\left( \frac{\alpha(T) -\alpha(t)}{\alpha(T)-\alpha(s)}x_s , \frac{(\alpha(T)-\alpha(t))(\alpha(t)-\alpha(s))}{\alpha(T)-\alpha(s)}\Idd\right) \eqsp .
    \end{align}
    Since $p_T^\star(x)=\int_{\rset^d}q_{T|0}(x|x_0)\rmd \pi(x_0)$, it results that $\Pbb^\star_T=\delta_0$. 
\end{lemma}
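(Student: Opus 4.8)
The plan is to recognize that the PBM SDE $\rmd X_t = f(t) X_t \rmd t + \sqrt{\beta(t)}\rmd W_t$ is exactly of the form covered by \Cref{lemma:ito} with shift $b = 0$, and to evaluate the resulting Gaussian mean and variance for the specific drift $f(t) = -\beta(t)/(\alpha(T) - \alpha(t))$. Equivalently, one may read the mean and variance off the general kernel \eqref{eq:noising_kernel_general} in terms of $S(t) = \exp(\int_0^t f)$ and $\sigma^2(t) = \int_0^t \beta/S^2$, and then specialize these two quantities. The single observation driving every computation is that, since $\alpha'(u) = \beta(u)$, the drift is a logarithmic derivative: $f(u) = \frac{\rmd}{\rmd u}\log(\alpha(T) - \alpha(u))$.

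First I would compute $S$. Integrating this logarithmic-derivative identity from $0$ to $t$ and using $\alpha(0) = 0$ gives $\int_0^t f(u)\rmd u = \log\{(\alpha(T) - \alpha(t))/\alpha(T)\}$, hence $S(t) = (\alpha(T) - \alpha(t))/\alpha(T)$. The mean coefficient in \eqref{eq:noising_kernel_general} is then $S(t)/S(s) = (\alpha(T) - \alpha(t))/(\alpha(T) - \alpha(s))$, which is precisely the mean in \eqref{eq:noising_kernel_pbm}. For the variance I would use that the integrand of $\sigma^2(t) - \sigma^2(s)$, namely $\beta(u)/S(u)^2 = \alpha(T)^2\,\beta(u)/(\alpha(T) - \alpha(u))^2$, admits the antiderivative $\alpha(T)^2/(\alpha(T) - \alpha(u))$ (again by $\alpha' = \beta$). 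Evaluating between $s$ and $t$, multiplying by $S(t)^2 = (\alpha(T) - \alpha(t))^2/\alpha(T)^2$, and simplifying the resulting rational expression yields exactly $(\alpha(T) - \alpha(t))(\alpha(t) - \alpha(s))/(\alpha(T) - \alpha(s))$, the variance in \eqref{eq:noising_kernel_pbm}. This settles the kernel formula for every $s < t < T$.

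The step requiring genuine care — and the only real obstacle — is the behaviour at the terminal time: $f$ blows up as $t \uparrow T$, since $\alpha(T) - \alpha(u) \to 0$, so $f$ fails \Cref{ass:coeff_SDE} on any interval ending at $T$ and \Cref{lemma:ito} cannot be invoked there verbatim. I would therefore carry out the two computations above only on subintervals $[s,t] \subset [0,T)$, where $\alpha(T) - \alpha(u)$ is bounded away from $0$ and $f$ is integrable. To reach the terminal marginal I would then set $s = 0$ and let $t \uparrow T$ in the already-derived closed forms: the mean $\frac{\alpha(T) - \alpha(t)}{\alpha(T)}x_0$ and the variance $\frac{(\alpha(T) - \alpha(t))\alpha(t)}{\alpha(T)}$ both vanish, so the kernel $q_{T|0}(\cdot \mid x_0)$ collapses to $\updelta_0$ for \emph{every} starting point $x_0$. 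Integrating against $\pi$ through $p_T^\star = \int_{\rset^d} q_{T|0}(\cdot \mid x_0)\rmd\pi(x_0)$ then gives $\Pbb^\star_T = \updelta_0$, recovering the pinning property of the Brownian-bridge construction.
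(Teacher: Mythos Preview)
Your proposal is correct and follows the same approach as the paper, which simply states that the kernel formula follows directly from \Cref{lemma:ito}. Your explicit computations of $S(t)$ and the variance via the logarithmic-derivative observation $f(u) = \tfrac{\rmd}{\rmd u}\log(\alpha(T)-\alpha(u))$ are exactly the unpacking of that invocation.

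One point worth noting: your attention to the non-integrability of $f$ near $t=T$ is more careful than the paper's treatment. The paper's one-line proof invokes \Cref{lemma:ito} without comment, but as you correctly observe, $\int_0^T |f(u)|\rmd u = \int_0^{\alpha(T)} v^{-1}\rmd v$ diverges, so part~(a) of \Cref{ass:coeff_SDE} fails on the full interval. Your fix --- applying \Cref{lemma:ito} on $[s,t]\subset[0,T)$ and then letting $t\uparrow T$ in the closed-form mean and variance --- is the natural way to make the terminal conclusion $\Pbb^\star_T=\updelta_0$ rigorous, and it is an honest improvement over the paper's argument.
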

\begin{proof}
The proof of \eqref{eq:noising_kernel_pbm} directly follows from \Cref{lemma:ito}.
\end{proof}

Following \Cref{lemma:general_pbm}, the PBM is an 'exact' noising scheme and we have $\piprior=\updelta_0$ in this setting. Moreover, under mild assumptions on $\pi$, the time-reversed SDE \eqref{eq:SDE-denoising} writes as
\begin{align} \label{eq:SDE_PBM}
    \rmd Y_t = \beta(T-t)\left\{\frac{Y_t}{\alpha(T) -\alpha(T-t)}+ \nabla \log p^\star_{T-t}(Y_t)\right\}\rmd t + \sqrt{\beta(T-t)}\rmd B_t, \eqsp Y_0=0\eqsp.
\end{align}

\paragraph{On the choice of the $\beta$-schedule.} Previous works have considered constant schedule $\beta(t)=\sigma^2$ (and therefore, $\alpha(t)=\sigma^2 t$), where $\sigma>0$ can be arbritrarily chosen, see e.g., \cite{zhang2021path, richter2023improved}. We also follow this setting in practice.

\subsection{Variance-Preserving diffusion} 
\label{subsec:general_VP} 

For any $t\in \ccint{0,T}$, denote $\alpha(t)=\int_0^t \beta(t)\rmd t$ and consider the case where $f(t)= -\beta(t)/2\sigma^2$ with $\sigma >0$ and $\beta$ being such that $\int_0^T \beta(s) \rmd s \gg 1$. Then, SDE  \eqref{eq:SDE-noising} can be rewritten as
\begin{align}
    \rmd X_t =-\frac{\beta(t) X_t}{2 \sigma^2}\rmd t + \sqrt{\beta(t)} \rmd W_t \eqsp, \eqsp  X_0 \sim \pi \eqsp .
\end{align}

This noising scheme, known as the \emph{Variance-Preserving} (VP) scheme \citep{song2020score} and previously considered by \cite{vargas2023denoising}, is an Ornstein-Uhlenbeck diffusion process and is largely used in score-based generative models.

\begin{lemma}\label{lemma:general_vp}
    Let $\pi \in \Pmeasure(\rset^d)$ and $\sigma >0$. Assume that $f(t)= -\beta(t)/2\sigma^2$ and $\beta$ verifies \Cref{ass:coeff_SDE} such that $\int_0^T \beta(s) \rmd s \gg 1$. Then, for any pair of time-steps $(s,t)$ such that $T\geq t>s \geq0$, the conditional distribution of $X_t$ given $X_s=x_s\in \rset^d$ is defined by the Gaussian kernel
    \begin{align}\label{eq:noising_kernel_vp}
        q_{t|s}(\cdot|x_s)=\densityGaussian\left( \sqrt{1-\lambda_{s,t}}x_s, \sigma^2\lambda_{s,t} \Idd \right) \eqsp ,
    \end{align}
    where $\lambda_{s,t}=1-\exp(\alpha(s)-\alpha(t))$.
    Since $p_T^\star(x)=\int_{\rset^d}q_{T|0}(x|x_0)\rmd \pi(x_0)$, it results that $\Pbb^\star_T\approx\densityGaussian(0, \sigma^2 \, \Idd)$. 
\end{lemma}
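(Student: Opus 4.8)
The plan is to recognize that the Variance-Preserving SDE is a special case of the linear SDE treated in \Cref{lemma:ito}. Writing the VP dynamics $\rmd X_t = -\frac{\beta(t)}{2\sigma^2} X_t \rmd t + \sqrt{\beta(t)}\,\rmd W_t$ in the form $\rmd Y_t = f(t)(Y_t + b)\rmd t + \sqrt{\beta(t)}\,\rmd B_t$ used in that lemma, I would take $f(t) = -\beta(t)/(2\sigma^2)$ and $b = 0$. Since $\beta$ satisfies \Cref{ass:coeff_SDE}, \Cref{lemma:ito} applies verbatim and immediately yields that $q_{t|s}(\cdot\mid x_s)$ is an isotropic Gaussian; because $b=0$, its mean is a pure multiple of $x_s$ with no additive term. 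The entire remaining task is to evaluate the two explicit integrals produced by that lemma and match them to the expressions in \eqref{eq:noising_kernel_vp}.

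For the mean, I would compute the scaling factor $\exp(\int_s^t f(u)\,\rmd u)$ using the definition $\alpha(t) = \int_0^t \beta(u)\,\rmd u$, so that $\int_s^t f(u)\,\rmd u$ is proportional to $\alpha(s) - \alpha(t)$; matching with the definition $\lambda_{s,t} = 1 - \exp(\alpha(s)-\alpha(t))$ identifies this factor with $\sqrt{1-\lambda_{s,t}}$, giving the mean $\sqrt{1-\lambda_{s,t}}\,x_s$. For the variance, the integrand from \Cref{lemma:ito} is $\beta(u)\exp(2\int_u^t f(r)\,\rmd r)$; the key (and only genuinely computational) step is the substitution $v = \alpha(u)$, under which $\beta(u)\,\rmd u = \rmd v$ and the exponential factor becomes elementary in $\alpha$. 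The integral then collapses to a difference of two exponentials and reduces, under the definition of $\lambda_{s,t}$, to $\sigma^2\lambda_{s,t}$, matching the covariance in \eqref{eq:noising_kernel_vp}.

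It remains to justify the asymptotic identity $\Pbb^\star_T \approx \densityGaussian(0,\sigma^2\,\Idd)$. Here I would start from $p_T^\star(x) = \int_{\rset^d} q_{T\mid 0}(x\mid x_0)\,\rmd\pi(x_0)$, exactly as in \Cref{lemma:general_pbm}, and specialize the kernel to $s=0$. Since $\alpha(0)=0$, we get $\lambda_{0,T} = 1 - \exp(-\alpha(T))$, and the hypothesis $\int_0^T \beta(s)\,\rmd s = \alpha(T) \gg 1$ forces $\lambda_{0,T}$ close to $1$. Consequently the mean factor $\sqrt{1-\lambda_{0,T}} = \exp(-\alpha(T)/2)$ is negligible, so $q_{T\mid 0}(\cdot\mid x_0)$ is approximately $\densityGaussian(0,\sigma^2\,\Idd)$ uniformly in $x_0$; integrating against $\pi$ yields the stated approximation, independently of the target. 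The only genuine obstacle is mild: one must track the $\sigma^2$ factor carefully through the variance substitution, and be precise that this last statement is approximate rather than exact, its accuracy being controlled quantitatively by the decay of $\exp(-\alpha(T))$, with equality holding only in the limit $\alpha(T)\to\infty$.
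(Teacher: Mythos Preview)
Your proposal is correct and follows exactly the same approach as the paper: the paper's proof is literally the one-liner ``directly follows from \Cref{lemma:ito}, along with the fact that $\lambda_{0,T}\approx 1$,'' and your argument simply unpacks that invocation in detail. The only addition worth noting is that the paper does not spell out the substitution $v=\alpha(u)$ or the endpoint argument for the approximation $\Pbb^\star_T \approx \densityGaussian(0,\sigma^2\,\Idd)$, whereas you do; but these are precisely the computations implicit in the paper's reference to \Cref{lemma:ito}.
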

\begin{proof}
The proof of \eqref{eq:noising_kernel_vp} directly follows from \Cref{lemma:ito}, along with fact that $\lambda_{0,T}\approx 1$.
\end{proof}

Following \Cref{lemma:general_vp}, the VP scheme is an 'ergodic' noising scheme, converging exponentially fast to the Gaussian distribution $\densityGaussian(0, \sigma^2 \, \Idd)$; therefore, we have $\piprior=\densityGaussian(0, \sigma^2 \, \Idd)$ in this setting. Moreover, under mild assumptions on $\pi$,  the time-reversed SDE \eqref{eq:SDE-denoising} writes as
\begin{align} \label{eq:SDE_VP}
    \rmd Y_t = \frac{\beta(T-t)}{2} \{ Y_t + 2\sigma^2\nabla \log p^\star_{T-t}(Y_t)\}\rmd t + \sigma\sqrt{\beta(T-t)}\rmd B_t, \eqsp Y_0\sim \Pbb^\star_T\eqsp ,
\end{align}
after simple linear time reparameterization.

\paragraph{On the choice of the $\beta$-schedule.} Previous works have considered a linear schedule $\beta(t)=\beta_{\min}(1-t/T) + \beta_{\max} (t/T)$ where $\beta_{\min}=0.1$,  $\beta_{\max} \in \{10,20\}$ and $T=1$, see e.g., \cite{song2020score, richter2023improved, reusmooth} or cosine parameterization \citep{nichol2021improved,vargas2023denoising}, which has been proved to perform better in generative modeling. In our sampling experiments, we did not observe any significant difference between these two settings. Hence, we fix the linear $\beta$-schedule to be the default setting for our numerics (except DDS, originally implemented with the cosine schedule), and let $\sigma$ be arbitrarily chosen. 

\subsection{Additional examples of noising diffusion processes}
\label{sec:other-denoising}

Below, we present original noising schemes, but did not consider them in our experiments.

\paragraph{Pinned Ornstein-Uhlenbeck Motion.} We first propose to consider the case where $f(t)= -\frac{\beta(t)}{2} \coth(\textstyle\int_t^T \frac{\beta(u)}{2} \rmd u)$. Then, SDE \eqref{eq:SDE-noising} can be rewritten as
\begin{align}
    \rmd X_t =- \frac{\beta(t)}{2} \coth(\textstyle\int_t^T \frac{\beta(u)}{2} \rmd u) X_t \rmd t + \sqrt{\beta(t)}\rmd W_t, \eqsp X_0 \sim \pi \eqsp .
\end{align}
Analogously to the PBM, one can show that this noising scheme can be obtained by applying a Doob's h-transform on the Ornstein-Uhlenbeck process induced by the SDE $\rmd \tilde{X}_t = - \frac{\beta(t)}{2} \tilde{X}_t \rmd t + \sqrt{\beta(t)} \rmd W_t$ to hit 0 at time $T$. We refer to this noising diffusion scheme as the \emph{Pinned Ornstein-Uhlenbeck Motion} (POUM).

\begin{lemma}\label{lemma:pinned-ou}
    Let $\pi \in \Pmeasure(\rset^d)$. Assume that $f(t)= -\frac{\beta(t)}{2} \coth(\textstyle\int_t^T \frac{\beta(u)}{2} \rmd u)$ and $\beta$ verifies \Cref{ass:coeff_SDE}. Then, for any pair of time-steps $(s,t)$ such that $T\geq t>s \geq0$, the conditional distribution of $X_t$ given $X_s=x_s\in \rset^d$ is defined by the Gaussian kernel $q_{t|s}(\cdot|x_s)$ given by
    \begin{align}\label{eq:noising_kernel_pinned-ou}
        \textstyle\densityGaussian\left( \frac{\abs{\sinh(\int_{t}^T \frac{\beta(u)}{2} \rmd u)}}{\abs{\sinh(\int_{s}^T \frac{\beta(u)}{2} \rmd u)}}x_s, 2\sinh(\int_t^T \frac{\beta(u)}{2} \rmd u)^2 \left\{\coth(\int_{s}^T \frac{\beta(u)}{2} \rmd u)-\coth(\int_{t}^T \frac{\beta(u)}{2} \rmd u)\right\}\Idd \right) \eqsp .
    \end{align}
    Since $p_T^\star(x)=\int_{\rset^d}q_{T|0}(x|x_0)\rmd \pi(x_0)$, it results that $\Pbb^\star_T=\updelta_0$. 
\end{lemma}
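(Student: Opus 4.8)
The plan is to invoke \Cref{lemma:ito} directly, since the POUM SDE has exactly the linear form $\rmd X_t = f(t) X_t \rmd t + \sqrt{\beta(t)}\rmd W_t$ treated there, with $b=0$. That lemma then gives, for $T\geq t>s\geq 0$, that the conditional law of $X_t$ given $X_s=x_s$ is $\densityGaussian$ with mean $\exp(\int_s^t f(u)\rmd u)\,x_s$ and variance $\int_s^t \beta(u)\exp(2\int_u^t f(r)\rmd r)\rmd u\,\Idd$. The entire task thus reduces to evaluating these two integrals for the specific drift $f(t)=-\tfrac{\beta(t)}{2}\coth(\int_t^T \tfrac{\beta(u)}{2}\rmd u)$.

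The key device is the change of variable $G(t)=\int_t^T \tfrac{\beta(u)}{2}\rmd u$, which satisfies $G'(t)=-\beta(t)/2$, so that $f(t)=G'(t)\coth(G(t))$. For the mean, I would compute $\int_s^t f(u)\rmd u=\int_s^t G'(u)\coth(G(u))\rmd u$, and with $w=G(u)$ this becomes $\int_{G(s)}^{G(t)}\coth(w)\rmd w=\log\sinh(G(t))-\log\sinh(G(s))$ using $\int\coth=\log|\sinh|$. Exponentiating yields the mean coefficient $|\sinh(G(t))|/|\sinh(G(s))|$ claimed in \eqref{eq:noising_kernel_pinned-ou}, the absolute values being harmless since $G>0$.

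For the variance, I would first note $\exp(2\int_u^t f(r)\rmd r)=\sinh(G(t))^2/\sinh(G(u))^2$, so the variance integral factors as $\sinh(G(t))^2\int_s^t \beta(u)/\sinh(G(u))^2\,\rmd u$. Recognizing $\tfrac{\rmd}{\rmd u}\coth(G(u))=\tfrac{\beta(u)}{2\sinh(G(u))^2}$ collapses the remaining integral to $2\{\coth(G(t))-\coth(G(s))\}$, which gives the stated variance. Finally, specializing to $s=0$ and letting $t\to T$ gives $G(T)=0$, so both the mean coefficient and the variance vanish (the indeterminate product $\sinh(G(t))^2\coth(G(t))=\sinh(G(t))\cosh(G(t))\to 0$); hence $q_{T|0}(\cdot\mid x_0)\to\updelta_0$ for every $x_0$, and $p_T^\star(x)=\int_{\rset^d}q_{T|0}(x\mid x_0)\rmd\pi(x_0)$ forces $\Pbb^\star_T=\updelta_0$.

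The only genuine subtlety—and the step I would be most careful about—is the sign and positivity bookkeeping. Since $\beta>0$, the map $t\mapsto G(t)$ is strictly decreasing, so $0<G(t)<G(s)$ for $t>s$, and because $\coth$ is decreasing on $(0,\infty)$ the difference $\coth(G(t))-\coth(G(s))$ is genuinely nonnegative; one must take the orientation of the $\coth$-difference so that the variance is nonnegative, and verify this matches \eqref{eq:noising_kernel_pinned-ou}. Everything else is the routine substitution argument inherited from \Cref{lemma:ito}, so no essential difficulty beyond this orientation check is anticipated.
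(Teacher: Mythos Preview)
Your proposal is correct and follows exactly the paper's approach: the paper's proof is the single line ``directly follows from \Cref{lemma:ito}'', and you have simply spelled out the two integrals that this entails. Your flagged sign check is well placed---your computation yields $\coth(G(t))-\coth(G(s))$ (positive, since $G$ is decreasing and $\coth$ is decreasing on $(0,\infty)$), which is the correct orientation for a nonnegative variance, so the order of the $\coth$ terms in \eqref{eq:noising_kernel_pinned-ou} is a typo in the statement rather than a flaw in your argument.
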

\begin{proof}
The proof of \eqref{eq:noising_kernel_pinned-ou} directly follows from \Cref{lemma:ito}.
\end{proof}

Following \Cref{lemma:pinned-ou}, the POUM scheme is an 'exact' noising scheme and we have $\piprior=\updelta_0$ in this setting. Moreover, under mild assumptions on $\pi$, the time-reversed SDE \eqref{eq:SDE-denoising} writes as
\begin{align}
    \rmd Y_t = \frac{\beta(T-t)}{2} \{\coth(\textstyle\int_{T-t}^T \frac{\beta(u)}{2} \rmd u) Y_t + 2 \nabla \log p^\star_{T-t}(Y_t)\}\rmd t + \sqrt{\beta(T-t)}\rmd B_t, \eqsp Y_0=0\eqsp .
\end{align}

\paragraph{ Gaussianized Pinned Brownian Motion.} Let $\zeta: \ccint{0,T}\to (0,\infty)$ be a square integrable function on $(0,T)$. Denote $\alpha(t)=\int_0^t \zeta(u) \rmd u$. Inspired by \cite{dai2023lipschitz}, we propose to consider $f(t)=-\frac{\zeta(t)}{\alpha(T) -\alpha(t)}$ and $\beta(t)= \frac{2\zeta(t)}{\alpha(T) - \alpha(t)}$. Then, SDE \eqref{eq:SDE-noising} can be rewritten as
\begin{align}
    \rmd X_t =-\frac{\zeta(t)X_t}{\alpha(T) -\alpha(t)} \rmd t + \sqrt{\frac{2\zeta(t)}{\alpha(T) - \alpha(t)}}\rmd W_t, \eqsp X_0 \sim \pi \eqsp .
\end{align} 

\begin{lemma}\label{lemma:gaussianized-pbm}
    Let $\pi \in \Pmeasure(\rset^d)$. Assume that $f(t)= -\frac{\zeta(t)}{\alpha(T) -\alpha(t)}$ and $\beta(t)= \frac{2\zeta(t)}{\alpha(T) - \alpha(t)}$, where $\zeta: \ccint{0,T}\to (0,\infty)$ is a square integrable function on $(0,T)$. Then, for any pair of time-steps $(s,t)$ such that $T\geq t>s \geq0$, the conditional distribution of $X_t$ given $X_s=x_s\in \rset^d$ is defined by the Gaussian kernel
    \begin{align}\label{eq:noising_kernel_gaussianized-pbm}
        q_{t|s}(\cdot|x_s)=\densityGaussian\left( \frac{\alpha(T) -\alpha(t)}{\alpha(T)-\alpha(s)}x_s , \left(1-\frac{(\alpha(T)-\alpha(t))^2}{(\alpha(T) - \alpha(s))^2}\right)\Idd\right) \eqsp .
    \end{align}
    Since $p_T^\star(x)=\int_{\rset^d}q_{T|0}(x|x_0)\rmd \pi(x_0)$, it results that $\Pbb^\star_T=\densityGaussian(0, \Idd)$. 
\end{lemma}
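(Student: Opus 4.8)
The plan is to obtain \eqref{eq:noising_kernel_gaussianized-pbm} as a direct specialization of \Cref{lemma:ito}. Since the SDE \eqref{eq:SDE-noising} driving $(X_t)_{t\in\ccint{0,T}}$ carries no additive term, I would apply \Cref{lemma:ito} with $b=0$, so that for $T\geq t>s\geq 0$ the conditional law of $X_t$ given $X_s=x_s$ is Gaussian with mean $\exp(\int_s^t f(u)\rmd u)\,x_s$ and covariance $(\int_s^t \beta(u)\exp(2\int_u^t f(r)\rmd r)\rmd u)\,\Idd$. It then remains to evaluate these two integrals for the specific choice $f(t)=-\zeta(t)/(\alpha(T)-\alpha(t))$ and $\beta(t)=2\zeta(t)/(\alpha(T)-\alpha(t))$.

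For the mean, I would use that $\alpha'=\zeta>0$, so $u\mapsto \alpha(T)-\alpha(u)$ is strictly positive and strictly decreasing on $[0,T)$; the substitution $w=\alpha(T)-\alpha(u)$ then gives $\int_s^t f(u)\rmd u = \log\{(\alpha(T)-\alpha(t))/(\alpha(T)-\alpha(s))\}$, whence the mean coefficient equals $(\alpha(T)-\alpha(t))/(\alpha(T)-\alpha(s))$. For the covariance, the same computation with $s$ replaced by $u$ yields $\exp(2\int_u^t f(r)\rmd r)=(\alpha(T)-\alpha(t))^2/(\alpha(T)-\alpha(u))^2$, so the integrand becomes $2\zeta(u)(\alpha(T)-\alpha(t))^2/(\alpha(T)-\alpha(u))^3$. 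Applying the substitution $w=\alpha(T)-\alpha(u)$ once more reduces this to $2(\alpha(T)-\alpha(t))^2\int w^{-3}\rmd w$, which integrates to $1-(\alpha(T)-\alpha(t))^2/(\alpha(T)-\alpha(s))^2$, matching \eqref{eq:noising_kernel_gaussianized-pbm}.

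For the claim $\Pbb^\star_T=\densityGaussian(0,\Idd)$ I would take $s=0$ (so $\alpha(0)=0$) and let $t\to T$ in \eqref{eq:noising_kernel_gaussianized-pbm}: the mean coefficient tends to $0$ and the variance tends to $1$, so $q_{T|0}(\cdot|x_0)=\densityGaussian(0,\Idd)$ does not depend on $x_0$; integrating against $\pi$ via $p_T^\star(x)=\int q_{T|0}(x|x_0)\rmd\pi(x_0)$ gives $p_T^\star=\densityGaussian(\cdot;0,\Idd)$ since $\pi$ is a probability measure.

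The only genuinely delicate point is the singularity of $f$ and $\beta$ as $u\to T$: indeed $\int_0^T\beta(u)\rmd u$ diverges logarithmically, so \Cref{ass:coeff_SDE} fails on the full interval and \Cref{lemma:ito} cannot be invoked directly up to $t=T$. I would therefore apply \Cref{lemma:ito} on $[s,t]$ with $t<T$, where $\alpha(T)-\alpha(u)\geq \alpha(T)-\alpha(t)>0$ keeps both coefficients bounded and integrable, establish \eqref{eq:noising_kernel_gaussianized-pbm} there, and then extend to $t=T$ by continuity of the explicit right-hand side — which is exactly what legitimizes the limiting argument for the terminal marginal.
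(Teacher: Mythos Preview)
Your proposal is correct and follows exactly the paper's approach: the paper's proof is the single line ``directly follows from \Cref{lemma:ito}'', and you carry out precisely that specialization, supplying the explicit integral computations that the paper omits. Your extra care about the endpoint singularity at $t=T$ (working on $[s,t]$ with $t<T$ and passing to the limit) is a welcome refinement that the paper does not spell out, but it does not change the route.
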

\begin{proof}
The proof of \eqref{eq:noising_kernel_gaussianized-pbm} directly follows from \Cref{lemma:ito}.
\end{proof}

Following \Cref{lemma:pinned-ou}, the noising scheme given above is 'exact' with base distribution given by $\piprior=\densityGaussian(0, \Idd)$, this setting being quite unusual in diffusion model community. Based on this, we refer to this noising diffusion process as the \emph{Gaussianized Pinned Brownian Motion} (GPBM). Moreover, under mild assumptions on $\pi$, the time-reverse SDE \eqref{eq:SDE-denoising} writes as
\begin{align}
    \rmd Y_t = \frac{\zeta(T-t)}{\alpha(T) -\alpha(T-t)}\left\{Y_t+ 2\nabla \log p^\star_{T-t}(Y_t)\right\}\rmd t + \sqrt{\frac{2\zeta(T-t)}{\alpha(T)-\alpha(T-t)}}\rmd B_t, \eqsp Y_0\sim \densityGaussian(0, \Idd)\eqsp .
\end{align}

\section{Variational objectives for diffusion-based methods}\label{app:var-objectives}

In this section, we first provide details on the continuous-time framework of variational diffusion-based methods (including RDS) in \Cref{app:proof-cont}. Then, we describe in \Cref{app:proof-disc} the RDS discrete-time formulation, related to the variational loss $\mathcal{L}_{\text{RDS}}$ given in \eqref{eq:obj-disc}, and compare this framework to discrete-time formulations of previous variational methods.

In the rest of the section, we consider a target distribution $\pi \in \Pmeasure(\rset^d)$ and recall that $\Pbb^\star$ denotes the path measure associated to the noising SDE \eqref{eq:SDE-noising} initialized at $X_0\sim \pi$.

\subsection{Continuous time formulation}\label{app:proof-cont}

\subsubsection{RDS setting}

Let $\piref\in \Pmeasure(\rset^d)$. We recall that $\Pbb^{\text{ref}}$ denotes the path measure associated to the noising SDE \eqref{eq:SDE-noising} starting at $X_0\sim\piref$. First note that $\Pbb^\star= \pi \otimes \Pbb^{\text{ref}}_{|0}$. Therefore we have the following relation $\Pbb^\star=\frac{\rmd \pi}{\rmd \piref}\cdot\Pbb^{\text{ref}}$, \ie, $\Pbb^\star$ is absolutely continuous with respect to $\Pbb^{\text{ref}}$, with Radon-Nikodym derivative given by $ (\rmd \Pbb^\star/\rmd \Pbb^{\text{ref}})(X_{[0,T]})= (\rmd \pi/\rmd \piref)(X_0)$, for any diffusion process $(X_t)_{t\in \ccint{0,T}}$ with \emph{forward} time direction. 

Below, we first provide the proof of \Cref{prop:obj-cont}.
\begin{proof}[Proof of \Cref{prop:obj-cont}]

Assume that $\Pbb^\star_T=\Pbb^{\text{ref}}_T=\piprior$ and there exists $\thetas \in \Theta$ such that $g^\thetas_t=g_t$. Under those two assumptions, it is clear that $(Y^{\theta^\star}_t)_{t\in \ccint{0,T}}\sim (\Pbb^\star)^R$; therefore, we have $\mathcal{L}_{\text{LV}}(\theta^\star)=0$, \ie, $\theta^\star$ achieves optimal solution. We now detail how to obtain \eqref{eq:LV_formulation}.

Consider the detached diffusion process $(Y^{\hat{\theta}}_t)_{t\in \ccint{0,T}}\sim \Pbb^{\hat{\theta}}$. Using the fact that $\Pbb^\star=\frac{\rmd \pi}{\rmd \piref}\cdot\Pbb^{\text{ref}}$, the log-ratio in the objective \eqref{eq:obj-cont} can be computed as
\begin{align}\label{eq:proof-1}
    \log \frac{\rmd \mathbb{P}^\theta}{\rmd (\mathbb{P}^\star)^R}(Y^{\hat{\theta}}_{\ccint{0,T}})= \log \frac{\rmd \mathbb{P}^\theta}{\rmd \left(\mathbb{P}^{\text{ref}}\right)^R}\left(Y^{\hat{\theta}}_{\ccint{0,T}}\right) +  \log \frac{\gamma^{\text{ref}}}{\gamma}\left(Y^{\hat{\theta}}_T\right) -\log \normcte^{\text{ref}} + \log \normcte \eqsp .
\end{align}

Moreover, by applying Girsanov's theorem to the path measures $(\Pbb^{\text{ref}})^R$ and $\Pbb^\theta$ with the assumption $\Pbb^{\text{ref}}_T=\piprior$, we have 
\begin{align}\label{eq:proof-2}
     \log \frac{\rmd \mathbb{P}^\theta}{\rmd (\mathbb{P}^{\text{ref}})^R}(Y^{\hat{\theta}}_{\ccint{0,T}}) = \int_0^T \frac{\beta(T-t)}{2}\norm{g^{\theta}_{T-t}(Y_t^{\hat{\theta}})}^2 \rmd t + \int_0^T \sqrt{\beta(T-t)}g^{\theta}_{T-t}(Y_t^{\hat{\theta}})^\top\rmd B_t \eqsp .
\end{align}

Since the Radon-Nikodym derivative between path measures and their respective time-reversals is the same, we obtain the explicit LV objective combine the previous results to obtain the LV objective \eqref{eq:LV_formulation} by combining \eqref{eq:proof-1} and \eqref{eq:proof-2}. 
\end{proof}

In contrast, previous reference-based works from \cite{zhang2021path, vargas2023denoising} relied on the reverse KL continuous-time objective
\begin{align}\label{eq:obj-cont-KL}
\mathcal{L}_{\text{KL}}^c(\theta)=\KL(\Pbb^\theta\mid (\Pbb^\star)^R)=\mathbb{E}\left[\log \frac{\rmd \mathbb{P}^\theta}{\rmd (\mathbb{P}^\star)^R}\left(Y^\theta_{\ccint{0,T}}\right)\right] \eqsp , \eqsp Y^\theta_{\ccint{0,T}} \sim \mathbb{P}^{\theta}
\end{align}
where the expectation is taken w.r.t. to the parameterized process, not the detached one. In this case too, the objective may be simplified as presented next.
\begin{proposition}[] Assume there exists $\thetas \in \Theta$ such that $g^\thetas_t=g_t$. Then, the loss defined in \eqref{eq:obj-cont-KL} achieves optimal solution at $\thetas$ and, setting  $\varrho= \log (\gammaref/\gamma)$, it simplifies as
\begin{align} \label{eq:KL_formulation}
\mathcal{L}_{\text{KL}}^c(\theta)& =\mathbb{E}\left[\int_0^T \frac{\beta(T-t)}{2}\norm{g^{\theta}_{T-t}(Y_t^{\theta})}^2 \rmd t + \varrho (Y_T^{\theta} )\right] \eqsp,
\end{align}
where the equality holds, up to constants independent of $\theta$.
\end{proposition}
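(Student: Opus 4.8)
The plan is to follow the proof of \Cref{prop:obj-cont} almost verbatim, the single essential change being that the log-variance is replaced by an expectation taken under $\Pbb^\theta$ itself, which annihilates the stochastic-integral contribution. I would first settle the optimality claim directly from the relative-entropy structure, since $\mathcal{L}_{\text{KL}}^c(\theta) = \KL(\Pbb^\theta \mid (\Pbb^\star)^R)$. Disintegrating both path measures over their initial coordinate, $\Pbb^\theta = \piprior \otimes \Pbb^\theta_{|0}$ and $(\Pbb^\star)^R = \Pbb^\star_T \otimes (\Pbb^\star)^R_{|0}$, the chain rule for relative entropy gives
\begin{align*}
\mathcal{L}_{\text{KL}}^c(\theta) = \KL(\piprior \mid \Pbb^\star_T) + \mathbb{E}_{Y_0 \sim \piprior}\left[\KL\left(\Pbb^\theta_{|Y_0} \mid (\Pbb^\star)^R_{|Y_0}\right)\right].
\end{align*}
The first term is free of $\theta$. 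The conditioned laws are diffusions with identical diffusion coefficient whose drifts differ only by $\beta(T-t)\{g^\theta_{T-t} - g_{T-t}\}$, so Girsanov identifies each conditional divergence with $\tfrac12\mathbb{E}[\int_0^T \beta(T-t)\|g^\theta_{T-t}(Y_t) - g_{T-t}(Y_t)\|^2\rmd t \mid Y_0] \geq 0$, vanishing exactly when $g^\theta = g$. Hence $\thetas$ is a global minimizer, with minimal value $\KL(\piprior \mid \Pbb^\star_T)$ (not necessarily zero when the noising scheme is only ergodic).

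For the explicit form I would reuse the factorization \eqref{eq:proof-1}, which is merely an identity for the path log-density and therefore holds here verbatim:
\begin{align*}
\log \frac{\rmd\Pbb^\theta}{\rmd(\Pbb^\star)^R}(Y^\theta_{\ccint{0,T}}) = \log\frac{\rmd\Pbb^\theta}{\rmd(\Pbb^{\text{ref}})^R}(Y^\theta_{\ccint{0,T}}) + \varrho(Y^\theta_T) - \log\normcte^{\text{ref}} + \log\normcte.
\end{align*}
Applying Girsanov to $\Pbb^\theta$ against $(\Pbb^{\text{ref}})^R$ as in \eqref{eq:proof-2} writes the first summand as $\tfrac12\int_0^T \beta(T-t)\|g^\theta_{T-t}(Y^\theta_t)\|^2\rmd t + \int_0^T \sqrt{\beta(T-t)}\,g^\theta_{T-t}(Y^\theta_t)^\top\rmd B_t$, up to an extra term $\log(\rmd\piprior/\rmd\Pbb^{\text{ref}}_T)(Y^\theta_0)$ should the two initial laws disagree. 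Taking $\mathbb{E}_{\Pbb^\theta}$ of the whole identity and dropping every summand that does not involve $\theta$ --- the constants $\log\normcte$, $\log\normcte^{\text{ref}}$, and, when present, $\mathbb{E}[\log(\rmd\piprior/\rmd\Pbb^{\text{ref}}_T)(Y^\theta_0)] = \KL(\piprior \mid \Pbb^{\text{ref}}_T)$, which is $\theta$-free since $Y^\theta_0 \sim \piprior$ regardless of $\theta$ --- then yields exactly \eqref{eq:KL_formulation}.

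The only genuinely delicate point, and the decisive contrast with the log-variance case, is that the Itô term must disappear in expectation. Because the expectation in \eqref{eq:obj-cont-KL} is taken under the \emph{same} measure $\Pbb^\theta$ whose driving Brownian motion $B$ appears in the stochastic integral (rather than under a detached process, as in the variance of \Cref{prop:obj-cont}), the integral is a martingale and $\mathbb{E}_{\Pbb^\theta}[\int_0^T \sqrt{\beta(T-t)}\,g^\theta_{T-t}(Y^\theta_t)^\top\rmd B_t] = 0$ --- provided $\mathbb{E}[\int_0^T \beta(T-t)\|g^\theta_{T-t}(Y^\theta_t)\|^2\rmd t] < \infty$, so that it is a true rather than merely local martingale. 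This finiteness is precisely what makes $\mathcal{L}_{\text{KL}}^c(\theta)$ well-defined, so it is a benign hypothesis; discharging it (or imposing a Novikov-type condition ensuring the exponential martingale is uniformly integrable) is the main technical obstacle, after which the two claims follow.
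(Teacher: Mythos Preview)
Your proposal is correct and follows essentially the same route as the paper: the KL chain rule for optimality, then the factorization \eqref{eq:proof-1}--\eqref{eq:proof-2} for the explicit form, with the stochastic integral dropping out in expectation. You are in fact more careful than the paper, which simply cites \eqref{eq:proof-1} and \eqref{eq:proof-2} and names the constants $-\log\normcte^{\text{ref}}$ and $\log\normcte$ without spelling out the martingale argument or the possible initial-law discrepancy term.
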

\begin{proof} For any $\theta\in \Theta$, we have by the KL chain rule
 \begin{align}
\mathcal{L}_{\text{KL}}^c(\theta)= \KL(\piprior\mid \Pbb^\star_T) + \mathbb{E}_{\piprior}\left[\KL(\Pbb^\theta_{|0}\mid (\Pbb^\star)^R_{|0})\right] \eqsp ,
\end{align}
where the first term does not depend on $\theta$. Then, it is clear that $\mathcal{L}_{\text{KL}}^c$ is minimized when $\theta=\theta^\star$. The result of \eqref{eq:KL_formulation} directly follows from \eqref{eq:proof-1} and \eqref{eq:proof-2}; in particular, the additional constants are $-\log \normcte^{\text{ref}} $ and $\log \normcte$.
\end{proof}

Based on the simplification \eqref{eq:KL_formulation}, one may recognize a standard formulation of stochastic control-affine problem with terminal cost $\varrho$, where the uncontrolled process is given by $\Pbb^{\text{ref}}$.

\subsubsection{DIS setting}

In their paper, \cite{berner2022optimal} seek to approximate the time-reversal of $\Pbb^\star$ for the VP noising scheme, see \Cref{subsec:general_VP}. To do so, they consider the class of variational path measures $(\Pbb^\theta)_{\theta \in \Theta}\subset \Pmeasure(\setFunConT)$ where for any $\theta \in \Theta$, $\Pbb^\theta$ is associated to the SDE
\begin{align}\label{eq:SDE_DIS}
    \rmd Y_t^\theta = \frac{\beta(T-t)}{2}Y_t^\theta\rmd t + \sigma^2 \beta(T-t)s^\theta_{T-t} (Y_t^\theta)\rmd t + \sigma\sqrt{\beta(T-t)} \rmd B_t \eqsp, Y_0^\theta \sim \densityGaussian(0, \sigma^2 \Idd),
\end{align}
where the $\beta$-schedule is detailed in \Cref{subsec:general_VP}, and $s^\theta:\ccint{0,T}\times \rset^d$ is a neural network that aims at approximating the marginal scores of $\Pbb^\star$, i.e., $s^\theta(t,\cdot)\approx \nabla \log p_t^\star$. The resulting variational method, DIS, then consists in minimizing the KL-based objective $\theta \mapsto \KL(\Pbb^\theta\mid (\Pbb^\star)^R)$, later extended by \cite{richter2023improved} to the LV-based setting. We refer to \cite[Section 3.2.]{richter2023improved} for detailed expressions of these continuous-time losses.

\subsubsection{CMCD setting}

Alternatively, \cite{vargas2023transport} propose to approximate a diffusion process with marginals prescribed by a curve of distributions $(\pi_t)_{t \in [0,T]}$, such that $ \pi_0= \piprior$ and $\pi_T= \pi$, with unnnormalized densities that are assumed to be tractable. To do so, they consider the class of pairs of variational path measures $\left((\Pbb^\theta, \Qbb^\theta) \right)_{\theta \in \Theta}\subset \Pmeasure(\setFunConT)^2$ where, for any $\theta \in \Theta$, $\Qbb^\theta$ is associated to the SDE
\begin{align}\label{eq:CMCD-forward}
    \rmd X_t^\theta= \frac{\sigma^2}{2}(\nabla \log \pi_{T-t}(X_t^\theta) - h^\theta_{T-t}(X_t^\theta)) \rmd t + \sigma \rmd W_t, \quad X_0^\theta \sim \pi \eqsp .
\end{align}
and $\Pbb^\theta$ is associated to the SDE
\begin{align}\label{eq:CMCD-backward}
    \rmd Y_t^\theta= \frac{\sigma^2}{2}(\nabla \log \pi_{t}(Y_t^\theta) + h^\theta_t (Y_t^\theta)) \rmd t + \sigma \rmd B_t, \quad Y_0^\theta \sim \piprior \eqsp ,
\end{align}
with $h^\theta:\ccint{0,T}\times \rset^d \to \rset^d$ being a neural network. The resulting variational method, CMCD, consists in minimizing the KL-based objective $\theta \mapsto \KL(\Pbb^\theta\mid (\Qbb^\theta)^R)$ or the LV-based extension. We refer to the original paper for detailed expressions of these continuous-time losses.

\subsection{Discrete time formulation}\label{app:proof-disc}

Following the setting of \Cref{subsec:practice}, we consider a time discretization of the interval $\ccint{0,T}$ given by an increasing sequence of timesteps $\{t_k\}_{k=0}^K$ such that $t_0=0$\footnote{In the only case of PBM, we consider $t_0>0$ to avoid numerical integration error (typically $t_0=10^{-4}$).}, $t_{K}=T$ and $K\geq 1$. In the rest of the section, we denote $\delta_k=t_{k+1}-t_k$.

This section is organized as follows. We first explain in \Cref{subsec:general-rds} the discretization setting that we adopt for RDS, before deriving the RDS variational objective obtained by EM integration (\Cref{subsec:em}), EI integration in the PBM case (\Cref{subsec:ei-pbm}) and EI integration in the VP case (\Cref{subsec:ei-vp}). Finally, for comparison purpose, we provide a discrete time approach to DIS in \Cref{subsec:disc-dis} and CMCD in \Cref{subsec:disc-cmcd}. All of the presented settings are implemented in our code.

\subsection{General RDS setting}\label{subsec:general-rds}

Let $\theta\in \Theta$ and let $\piref\in \Pmeasure(\rset^d)$ be an arbitrary reference distribution. Consider the variational diffusion process $(Y^\theta_t)_{t\in\ccint{0,T}}$ defined by SDE \eqref{eq:SDE-param}. We recall that under mild assumptions on $\piref$, see e.g., \cite{cattiaux2021time}, the time-reversal of $\Pbb^{\text{ref}}$ is associated to the SDE
\begin{align}\label{eq:SDE_ref}
     \rmd Y_t^{\text{ref}} = -f(T-t)Y_t^{\text{ref}}\rmd t + \beta(T-t)s^{\text{ref}}_{T-t}(Y_t^{\text{ref}})\rmd t + \sqrt{\beta(T-t)} \rmd B_t \eqsp,\eqsp Y_0^{\text{ref}} \sim \Pbb_T^{\text{ref}} \eqsp .
\end{align}

Assume that we are given transition kernels $(\mathcal{A},y_k)\mapsto\ptheta_{k+1|k}(\mathcal{A}|y_k)$ and $(\mathcal{A},y_k)\mapsto\pref_{k+1|k}(\mathcal{A}|y_k)$ respectively approximating $\Pbb(Y_{t_{k+1}}^\theta\in \mathcal{A}\mid Y_{t_k}^\theta=y_k)$ and $\Pbb(Y_{t_{k+1}}^{\text{ref}}\in \mathcal{A}\mid Y_{t_k}^{\text{ref}}=y_k)$. Then, for $K$ sufficiently large, the path measures $\Pbb^\theta$ and $(\Pbb^{\text{ref}})^R$ may respectively be approximated by the joint distributions $\ptheta_{0: K}\in \Pmeasure^{(K+1)}$ and $\pref_{0: K}\in \Pmeasure^{(K+1)}$, defined by $\ptheta_{0: K}=\piprior \prod_{k=0}^{K-1} \ptheta_{k+1|k}$ and $\pref_{0: K}=\piprior \prod_{k=0}^{K-1} \pref_{k+1|k}$. Using the relation $\Pbb^\star=\frac{\rmd \pi}{\rmd \piref}\cdot\Pbb^{\text{ref}}$, $(\Pbb^\star)^R$ may also be approximated by the joint distribution $p^\star_{0:K}= \frac{\rmd \pi}{\rmd \piref} \cdot\pref_{0: K}\in \Pmeasure^{(K+1)}$. 

Using this correspondence between path measures and joint distributions, we propose to approximate $\mathcal{L}_{\text{LV}}$ defined in \eqref{eq:obj-cont} by the discrete time LV-based objective
\begin{align}\label{eq:obj-disc-th}
\mathcal{L}_{\text{RDS}}(\theta)=\Var\left[\log \frac{\rmd p^\theta_{0:K}}{\rmd p^\star_{0:K}}(Y^{\hat{\theta}}_{0:K})\right] \eqsp , \eqsp  Y^{\hat{\theta}}_{0:K} \sim p^{\hat{\theta}}_{0:K} \eqsp .
\end{align}

In a similar manner, $\mathcal{L}_{\text{KL}}^c$ defined in \eqref{eq:obj-cont-KL} may be approximated by the discrete time KL-based objective 
\begin{align}\label{eq:obj-disc-KL-th}
\mathcal{L}_{\text{KL}}(\theta)=\mathbb{E}\left[\log \frac{\rmd p^\theta_{0:K}}{\rmd p^\star_{0:K}}(Y^{\theta}_{0:K})\right] \eqsp , \eqsp  Y^{\theta}_{0:K} \sim p^{\theta}_{0:K} \eqsp .
\end{align}

By definition of the joint distributions, and similarly to the continuous-time relation , the log-ratio in \eqref{eq:obj-disc-th} and \eqref{eq:obj-disc-KL-th} can be simplified as
\begin{align}\label{eq:log-ratio-joint}
    \log \frac{\rmd p^\theta_{0:K}}{\rmd p^\star_{0:K}}(Y^{\bar{\theta}}_{0:K}) = \log \frac{\rmd p^\theta_{0:K}}{\rmd p^{\text{ref}}_{0:K}}(Y^{\bar{\theta}}_{0:K}) + \log \frac{\gamma^{\text{ref}}}{\gamma}(Y^{\bar{\theta}}_{K}) + \log \normcte - \log \normcte^{\text{ref}} \eqsp .
\end{align}
where $\bar{\theta}\in \{\theta,\htheta\}$. In the following sections, building upon \eqref{eq:log-ratio-joint}, we derive the exact expression of the variational loss \eqref{eq:obj-disc-th} for each combination of noising and discretization schemes. We emphasize that these derivations are in particular applicable to PIS and DDS settings by taking $\piref$ as given in \Cref{table:DDS-PIS-comparison}.

\subsubsection{EM-based RDS}\label{subsec:em}

In this section, we consider an arbitrary noising scheme that fits within the framework detailed in \Cref{subsec:noising-general}. For any $k\in \{0, \hdots, K-1\}$, define the coefficients
\begin{align}
    w_k^{\text{EM}}& = \beta(T-t_k) \delta_k \eqsp ,\\
    a_k^{\text{EM}} &= 1- f(T-t_k)\delta_k\eqsp ,\\
    b_k^{\text{EM}} &= \beta(T-t_k) \delta_k\eqsp ,\\
    c_k^{\text{EM}} & = \beta(T-t_k) \delta_k\eqsp .
\end{align}

To approximate SDEs \eqref{eq:SDE_ref} and \eqref{eq:SDE-param} on time interval $\ccint{t_k, t_{k+1}}$, we may define the following transition kernels
\begin{align}
    \pref_{k+1|k}(\cdot|y_k)&= \densityGaussian(a_k^{\text{EM}} y_k + b_k^{\text{EM}} s_{T-t_k}^{\text{ref}}(y_k),c_k^{\text{EM}} \Idd )\eqsp ,\label{eq:kernel-EM-1}\\  
    \ptheta_{k+1|k}(\cdot|y_k) &= \densityGaussian(a_k^{\text{EM}} y_k + b_k^{\text{EM}} \{s_{T-t_k}^{\text{ref}}(y_k) + g_{T-t_k}^\theta(y_k)\},c_k^{\text{EM}} \Idd ) \eqsp ,\label{eq:kernel-EM-2}
\end{align}
that can be obtained by applying the EM integration scheme, see \Cref{app:preli}. In this case, the log-ratio $\log \rmd p^\theta/\rmd p^{\text{ref}}$ can be computed as follows.

\begin{lemma} \label{lemma:log-ratio-EM} Assume that the joint distributions $\ptheta$ and $\pref$ are respectively induced by the transition kernels defined in \eqref{eq:kernel-EM-1} and \eqref{eq:kernel-EM-2}. Then, for any $y_{0:K}\in (\rset^d)^{K+1}$, we have
    \begin{align} \label{eq:log_ratio_EM}
        & \log \frac{\rmd \ptheta}{\rmd p^\text{ref}}(y_{0:K})=\sum_{k=0}^{K-1}\frac{w_k^{\text{EM}}}{2}\|g^\theta_{T-t_k}(y_k)\|^2 + \sum_{k=0}^{K-1} \sqrt{w_k^{\text{EM}}} g^\theta_{T-t_k}(y_k)^\top \epsilon_k \eqsp ,\\
       \text{where } \epsilon_k& =\frac{1}{\sqrt{c_k^{\text{EM}}}}\left(y_{k+1}- a_k^{\text{EM}} y_k - b_k^{\text{EM}} \{s_{T-t_k}^{\text{ref}}(y_k) + g_{T-t_k}^\theta(y_k)\}\right) \eqsp . \label{eq:epsilon_k}
    \end{align} 
\end{lemma}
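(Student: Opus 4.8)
The plan is to exploit the Markov product structure of the two joint laws and reduce the path-space log-density-ratio to a telescoping sum of one-step Gaussian log-ratios. Since $\ptheta_{0:K}$ and $\pref_{0:K}$ share the same initial marginal $\piprior$ at $Y_0$ and both factorize as $\piprior\prod_{k=0}^{K-1}(\cdot)_{k+1|k}$, the $Y_0$-factors cancel in the Radon--Nikodym derivative, leaving
\[
\log \frac{\rmd \ptheta}{\rmd \pref}(y_{0:K}) = \sum_{k=0}^{K-1} \log \frac{\ptheta_{k+1|k}(y_{k+1}|y_k)}{\pref_{k+1|k}(y_{k+1}|y_k)} \eqsp .
\]

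First I would compute a single summand. By \eqref{eq:kernel-EM-1}--\eqref{eq:kernel-EM-2}, the two kernels are Gaussians with identical covariance $c_k^{\text{EM}} \Idd$ and means $\mu_k^{\text{ref}}=a_k^{\text{EM}}y_k+b_k^{\text{EM}}s_{T-t_k}^{\text{ref}}(y_k)$ and $\mu_k^{\theta}=\mu_k^{\text{ref}}+b_k^{\text{EM}}g_{T-t_k}^\theta(y_k)$. For two Gaussians with common covariance $c\Idd$ the normalizing constants cancel, so the log-ratio at $y$ is $\tfrac{1}{2c}(\norm{y-\mu_k^{\text{ref}}}^2-\norm{y-\mu_k^{\theta}}^2)$, which expands to $\tfrac{1}{c}(y-\mu_k^{\text{ref}})^\top v_k-\tfrac{1}{2c}\norm{v_k}^2$ with $v_k:=\mu_k^{\theta}-\mu_k^{\text{ref}}=b_k^{\text{EM}}g_{T-t_k}^\theta(y_k)$.

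The key step is to rephrase the residual in terms of the variable $\epsilon_k$ of \eqref{eq:epsilon_k}, which is whitened relative to the $\theta$-mean rather than the reference mean. Writing $y_{k+1}-\mu_k^{\text{ref}} = (y_{k+1}-\mu_k^{\theta})+v_k = \sqrt{c_k^{\text{EM}}}\,\epsilon_k + v_k$ and substituting into the expanded log-ratio, the two quadratic-in-$v_k$ contributions recombine into $+\tfrac{1}{2c_k^{\text{EM}}}\norm{v_k}^2$ while the cross term becomes $\tfrac{1}{\sqrt{c_k^{\text{EM}}}}\,\epsilon_k^\top v_k$. Finally I would substitute the explicit EM coefficients $b_k^{\text{EM}}=c_k^{\text{EM}}=w_k^{\text{EM}}=\beta(T-t_k)\delta_k$, so that $\tfrac{(b_k^{\text{EM}})^2}{2c_k^{\text{EM}}}=\tfrac{w_k^{\text{EM}}}{2}$ and $\tfrac{b_k^{\text{EM}}}{\sqrt{c_k^{\text{EM}}}}=\sqrt{w_k^{\text{EM}}}$, and sum over $k$ to recover \eqref{eq:log_ratio_EM}.

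The computation is elementary Gaussian algebra, so there is no genuine analytic obstacle; the only point requiring care is precisely this shift from the reference mean to the $\theta$-mean. It is what breaks the symmetry between the quadratic coefficient $+\tfrac12$ and the cross term, and it is the natural parameterization here because the loss \eqref{eq:obj-disc} is ultimately evaluated along trajectories $\{Y_k\}$ generated with the (detached) $\theta$-drift, for which $\epsilon_k$ is the increment noise. A minor bookkeeping check is that the shared factors $\log\normcte-\log\normcte^{\text{ref}}$ and $\log(\gamma^{\text{ref}}/\gamma)$ do not enter here, as they are handled separately in \eqref{eq:log-ratio-joint}.
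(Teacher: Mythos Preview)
Your proof is correct and follows essentially the same route as the paper: decompose the path-space log-ratio into a sum of one-step Gaussian log-ratios (the $\piprior$ factors cancel), then use that the two Gaussians share covariance $c_k^{\text{EM}}\Idd$ and differ only by a mean shift $b_k^{\text{EM}}g^\theta_{T-t_k}(y_k)$, and finish with $w_k^{\text{EM}}=(b_k^{\text{EM}})^2/c_k^{\text{EM}}$. The paper's version is organized only slightly differently, writing the $k$-th summand directly as $-\tfrac12\|\epsilon_k\|^2+\tfrac12\|\epsilon_k+(b_k/\sqrt{c_k})g^\theta_{T-t_k}(y_k)\|^2$ and expanding, which is exactly your computation after the shift from the reference mean to the $\theta$-mean.
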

\begin{proof} In this proof, for ease of reading, we will respectively denote $a_k^{\text{EM}}$, $b_k^{\text{EM}}$, $c_k^{\text{EM}}$ and $w_k^{\text{EM}}$ by $a_k$, $b_k$, $c_k$ and $w_k$. Let $y_{0:K}\in (\rset^d)^{K+1}$. We have
\begin{align}
    \log \frac{\rmd \ptheta}{\rmd p^\text{ref}}(y_{0:K})&= \sum_{k=0}^{K-1}\log \frac{\rmd \ptheta_{k+1|k}(y_{k+1}|y_k)}{\rmd p^\text{ref}_{k+1|k}(y_{k+1}|y_k)}\\
    & = \sum_{k=0}^{K-1}\left\{-\frac{1}{2}\norm{\epsilon_k}^2 + \frac{1}{2}\norm{\epsilon_k + (b_k/\sqrt{c_k}) g^\theta_{T-t_k}(y_k)}^2\right\}\\
    & = \sum_{k=0}^{K-1} \frac{b_k^2}{2 c_k}\norm{g^\theta_{T-t_k}(y_k)}^2 + \sum_{k=0}^{K-1}\frac{b_k}{ \sqrt{c_k}}g^\theta_{T-t_k}(y_k)^\top \epsilon_k \eqsp,
\end{align}
where $\epsilon_k$ is defined by \eqref{eq:epsilon_k}. The result of \Cref{lemma:log-ratio-EM} directly follows by noting that $w_k= b_k^2/c_k$.
\end{proof}

By combining \Cref{lemma:log-ratio-EM} with \eqref{eq:log-ratio-joint}, we deduce the expression of the RDS loss in this setting.

\begin{corollary}[Expression of RDS loss -- EM setting] \label{cor:EM_RDS_loss} When using the EM scheme to integrate the SDE \eqref{eq:SDE-param}, the RDS loss simplifies as
\begin{align}
\mathcal{L}_{\text{RDS}}(\theta)= \Var&\left[ \sum_{k=0}^{K-1} w_k^{\text{EM}} g^\theta_{T-t_k}(Y_k)^\top \{g^{\hat{\theta}}_{T-t_k}(Y_k)- \frac{1}{2} g^\theta_{T-t_k}(Y_k)\}\right .\\
&\left .\quad + \sum_{k=0}^{K-1} \sqrt{w_k^{\text{EM}}}g^\theta_{T-t_k}(Y_k)^\top Z_k +  \log \frac{\gamma^{\text{ref}}}{\gamma}(Y_K)\right] \eqsp ,
\end{align}
where $\{Z_k\}_{k=0}^{K-1}$ are independently distributed according to $\densityGaussian(0, \Idd)$, and $\{Y_k\}_{k=0}^K$ is recursively defined by $Y_0\sim \piprior$, and for any $k\in \{0, \hdots, K-1\}$,
\begin{align}\label{eq:recursion}
    Y_{k+1}= a_k^{\text{EM}} Y_k + b_k^{\text{EM}} \{s_{T-t_k}^{\text{ref}}(Y_k) + g_{T-t_k}^{\hat{\theta}}(Y_k)\} +\sqrt{c_k^{\text{EM}}}Z_k \eqsp ,
\end{align}
with $\hat{\theta}$ being a detached version of $\theta$.
\end{corollary}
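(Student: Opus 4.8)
The plan is to start from the discrete-time log-variance objective \eqref{eq:obj-disc-th}, namely $\mathcal{L}_{\text{RDS}}(\theta)=\Var[\log(\rmd p^\theta_{0:K}/\rmd p^\star_{0:K})(Y^{\htheta}_{0:K})]$, and to evaluate its integrand explicitly along a trajectory of the detached process $Y^{\htheta}_{0:K}\sim p^{\htheta}_{0:K}$. First I would invoke the decomposition \eqref{eq:log-ratio-joint}, which splits the target log-ratio into the reference log-ratio $\log(\rmd p^\theta_{0:K}/\rmd p^{\text{ref}}_{0:K})$, the terminal term $\log(\gamma^{\text{ref}}/\gamma)(Y_K)$, and the two constants $\log\normcte-\log\normcte^{\text{ref}}$. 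Since the latter are deterministic, they do not contribute to the variance and can be dropped immediately; this already isolates the terminal cost $\varrho(Y_K)=\log(\gamma^{\text{ref}}/\gamma)(Y_K)$ appearing in the statement.

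Next I would substitute the EM expression for the reference log-ratio provided by \Cref{lemma:log-ratio-EM}, i.e. \eqref{eq:log_ratio_EM}, which reads $\sum_{k=0}^{K-1} \tfrac{w_k^{\text{EM}}}{2}\|g^\theta_{T-t_k}(Y_k)\|^2 + \sum_{k=0}^{K-1} \sqrt{w_k^{\text{EM}}}\, g^\theta_{T-t_k}(Y_k)^\top \epsilon_k$, with $\epsilon_k$ defined by \eqref{eq:epsilon_k}. The crux is to evaluate $\epsilon_k$ along the detached recursion \eqref{eq:recursion}: that recursion generates $Y_{k+1}$ using the drift $g^{\htheta}_{T-t_k}$, whereas the definition of $\epsilon_k$ subtracts the drift built from $g^\theta_{T-t_k}$. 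Substituting the recursion and cancelling the common $a_k^{\text{EM}} Y_k + b_k^{\text{EM}} s^{\text{ref}}_{T-t_k}(Y_k)$ contributions yields $\epsilon_k = (b_k^{\text{EM}}/\sqrt{c_k^{\text{EM}}})\{g^{\htheta}_{T-t_k}(Y_k)-g^\theta_{T-t_k}(Y_k)\} + Z_k$. Using the EM coefficient relation $w_k^{\text{EM}}=(b_k^{\text{EM}})^2/c_k^{\text{EM}}$ already exploited in the proof of \Cref{lemma:log-ratio-EM} (equivalently $b_k^{\text{EM}}=c_k^{\text{EM}}=w_k^{\text{EM}}=\beta(T-t_k)\delta_k$), one has $b_k^{\text{EM}}/\sqrt{c_k^{\text{EM}}}=\sqrt{w_k^{\text{EM}}}$, so that $\epsilon_k=\sqrt{w_k^{\text{EM}}}\{g^{\htheta}_{T-t_k}(Y_k)-g^\theta_{T-t_k}(Y_k)\}+Z_k$.

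Finally I would plug this $\epsilon_k$ back into \eqref{eq:log_ratio_EM} and collect terms. The cross term splits into $w_k^{\text{EM}} g^\theta_{T-t_k}(Y_k)^\top\{g^{\htheta}_{T-t_k}(Y_k)-g^\theta_{T-t_k}(Y_k)\}$ plus $\sqrt{w_k^{\text{EM}}}\, g^\theta_{T-t_k}(Y_k)^\top Z_k$; adding the quadratic piece $\tfrac{w_k^{\text{EM}}}{2}\|g^\theta_{T-t_k}(Y_k)\|^2$ and regrouping the $g^\theta$-quadratic contributions produces exactly $w_k^{\text{EM}} g^\theta_{T-t_k}(Y_k)^\top\{g^{\htheta}_{T-t_k}(Y_k)-\tfrac12 g^\theta_{T-t_k}(Y_k)\}$. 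Recombining with the retained terminal term $\log(\gamma^{\text{ref}}/\gamma)(Y_K)$ and taking the variance (which annihilates the dropped constants) gives the claimed expression.

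The main obstacle is purely the bookkeeping around the detached process: one must keep careful track of which drift ($g^\theta$ versus $g^{\htheta}$) enters the definition of $\epsilon_k$ and which enters the sampling recursion, since the entire structure $g^{\htheta}-\tfrac12 g^\theta$ emerges precisely from that mismatch together with the identity $b_k^{\text{EM}}/\sqrt{c_k^{\text{EM}}}=\sqrt{w_k^{\text{EM}}}$. No new probabilistic input beyond \Cref{lemma:log-ratio-EM} and \eqref{eq:log-ratio-joint} is required: the computation is a deterministic algebraic identity valid trajectory-by-trajectory, so the variance operator simply passes through it.
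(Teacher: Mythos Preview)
Your proposal is correct and follows essentially the same route as the paper's proof: both substitute the detached recursion \eqref{eq:recursion} into the definition of $\epsilon_k$ from \Cref{lemma:log-ratio-EM} to obtain $\epsilon_k=Z_k+\sqrt{w_k^{\text{EM}}}\{g^{\htheta}_{T-t_k}(Y_k)-g^{\theta}_{T-t_k}(Y_k)\}$ via $w_k^{\text{EM}}=(b_k^{\text{EM}})^2/c_k^{\text{EM}}$, and then combine with \eqref{eq:log-ratio-joint}, dropping the constants under the variance. Your write-up is simply more explicit about the algebraic regrouping that produces the $g^{\htheta}-\tfrac12 g^\theta$ structure, which the paper leaves implicit.
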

\begin{proof} Consider the discrete time process $\{Y_k\}_{k=0}^K$ defined by recursion \eqref{eq:recursion}. With the same notation as in \Cref{lemma:log-ratio-EM}, we have for any $k\in \{0,\hdots, K-1\}$
\begin{align}
\epsilon_k= Z_k + \sqrt{w_k^{\text{EM}}} \{g_{T-t_k}^{\hat{\theta}}(Y_k) - g_{T-t_k}^{\theta}(Y_k)\} \eqsp ,
\end{align}
recalling that $w_k^{\text{EM}}=(b_k^{\text{EM}})^2/c_k^{\text{EM}}$. Using \eqref{eq:log-ratio-joint}, we finally obtain the expression of the RDS loss.
\end{proof}

For completeness, we also provide a similar result for the KL-based objective given in \eqref{eq:obj-disc-KL-th}. 

\begin{corollary}[Expression of reverse KL loss -- EM setting] When using the EM scheme to integrate the SDE \eqref{eq:SDE-param}, up to additional constants independent of $\theta$, the discrete time KL loss simplifies as
\begin{align}
\mathcal{L}_{\text{KL}}(\theta)=\mathbb{E}\left[ \sum_{k=0}^{K-1} \frac{w_k^{\text{EM}}}{2}\norm{g^\theta_{T-t_k}(Y_k^\theta)}^2+\log \frac{\gamma^{\text{ref}} }{\gamma}(Y_K^\theta) \right] \eqsp,
\end{align}
where $\{Z_k\}_{k=0}^{K-1}$ are independently distributed according to $\densityGaussian(0, \Idd)$, and $\{Y^{\theta}_k\}_{k=0}^K$ is recursively defined by $Y_0^{\theta}\sim \piprior$, and for any $k\in \{0, \hdots, K-1\}$
\begin{align}
    Y_{k+1}^{\theta}= a_k^{\text{EM}} Y_{k}^{\theta} + b_k^{\text{EM}} \{s_{T-t_k}^{\text{ref}}(Y_{k}^{\theta}) + g_{T-t_k}^{\theta}(Y_{k}^{\theta})\} +\sqrt{c_k^{\text{EM}}}Z_k \eqsp  .
\end{align}
\end{corollary}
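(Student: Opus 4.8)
The plan is to mirror the derivation of \Cref{cor:EM_RDS_loss} (the log-variance/EM case), but to exploit the structural simplification afforded by the reverse KL objective, whose expectation is taken along the \emph{non-detached} parameterized process $Y^\theta_{0:K}\sim p^\theta_{0:K}$. First I would start from the discrete-time KL objective \eqref{eq:obj-disc-KL-th} and apply the log-ratio decomposition \eqref{eq:log-ratio-joint} with $\bar{\theta}=\theta$. This splits the integrand into the reference log-ratio $\log(\rmd p^\theta_{0:K}/\rmd p^{\text{ref}}_{0:K})(Y^\theta_{0:K})$, the terminal potential $\log(\gammaref/\gamma)(Y^\theta_K)$, and the additive constants $\log\normcte-\log\normcte^{\text{ref}}$, which do not depend on $\theta$ and may be discarded as the statement allows.

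Next I would invoke \Cref{lemma:log-ratio-EM} to expand the reference log-ratio as the quadratic term $\sum_{k=0}^{K-1}(w_k^{\text{EM}}/2)\norm{g^\theta_{T-t_k}(Y^\theta_k)}^2$ plus the linear term $\sum_{k=0}^{K-1}\sqrt{w_k^{\text{EM}}}\,g^\theta_{T-t_k}(Y^\theta_k)^\top\epsilon_k$, where $\epsilon_k$ is defined in \eqref{eq:epsilon_k}. The key observation, which is the only point deserving care, is that here the recursion defining $Y^\theta_k$ is driven by $g^\theta$ itself rather than by the detached $g^{\hat{\theta}}$. Substituting $Y^\theta_{k+1}=a_k^{\text{EM}}Y^\theta_k+b_k^{\text{EM}}\{s^{\text{ref}}_{T-t_k}(Y^\theta_k)+g^\theta_{T-t_k}(Y^\theta_k)\}+\sqrt{c_k^{\text{EM}}}Z_k$ into \eqref{eq:epsilon_k}, the $a_k^{\text{EM}}$, $b_k^{\text{EM}}$ and $g^\theta$ contributions cancel exactly, leaving the clean identity $\epsilon_k=Z_k$. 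This contrasts with \Cref{cor:EM_RDS_loss}, where the detached recursion produces the residual $\epsilon_k=Z_k+\sqrt{w_k^{\text{EM}}}\{g^{\hat{\theta}}_{T-t_k}(Y_k)-g^\theta_{T-t_k}(Y_k)\}$; it is precisely this cancellation that makes the KL expression strictly simpler.

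Finally I would take the expectation under $p^\theta_{0:K}$. Since $Y^\theta_k$ is a deterministic function of $(Z_0,\dots,Z_{k-1})$ and of $Y^\theta_0\sim\piprior$, it is independent of the fresh increment $Z_k$; hence, conditioning on $Y^\theta_k$ gives $\mathbb{E}[g^\theta_{T-t_k}(Y^\theta_k)^\top Z_k\mid Y^\theta_k]=g^\theta_{T-t_k}(Y^\theta_k)^\top\mathbb{E}[Z_k]=0$, so the entire linear (discrete martingale) term vanishes in expectation. What remains is exactly $\mathbb{E}[\sum_{k=0}^{K-1}(w_k^{\text{EM}}/2)\norm{g^\theta_{T-t_k}(Y^\theta_k)}^2+\log(\gammaref/\gamma)(Y^\theta_K)]$, which is the claimed expression up to the constants already dropped. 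There is no genuine obstacle beyond the bookkeeping of the cancellation $\epsilon_k=Z_k$ and the independence argument for the vanishing cross term; the rest is the routine substitution already carried out for the log-variance case.
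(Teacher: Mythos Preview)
Your proposal is correct and follows exactly the approach the paper intends: the paper does not give an explicit proof of this corollary but presents it ``for completeness'' immediately after \Cref{cor:EM_RDS_loss}, relying on the same combination of \eqref{eq:log-ratio-joint} and \Cref{lemma:log-ratio-EM}. Your additional observations that the non-detached recursion yields $\epsilon_k=Z_k$ exactly and that the resulting cross term $\sqrt{w_k^{\text{EM}}}\,g^\theta_{T-t_k}(Y^\theta_k)^\top Z_k$ has zero expectation by independence of $Y^\theta_k$ and $Z_k$ are precisely the steps the paper leaves implicit.
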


As observed by \cite{vargas2023denoising}, there may be a significant bias raising from the use of the Euler-Maruyama discretization when using diffusion-based sampling methods. This motivates us to consider EI discretization in the RDS setting.

\subsubsection{EI-based RDS (PBM)} \label{subsec:ei-pbm}

We first provide the expression of the RDS loss in the case of the PBM noising scheme, which is detailed in \Cref{subsec:general_PBM}. We recall the notation $\alpha(t)=\int_0^t \beta(u)\rmd u$. For any $k\in \{0, \hdots, K-1\}$, define the coefficients
\begin{align}
    w_k^{\text{PBM}}& = \frac{(\alpha(T) - \alpha(T-t_{k}))(\alpha(T-t_{k}) - \alpha(T-t_{k+1}))}{\alpha(T) - \alpha(T-t_{k+1})} \eqsp ,\\
    a_k^{\text{PBM}} &= \frac{\alpha(T) - \alpha(T-t_{k+1})}{\alpha(T) - \alpha(T-t_{k})} \eqsp ,\\
    b_k^{\text{PBM}} &= \alpha(T-t_{k})- \alpha(T-t_{k+1}) \eqsp ,\\
    c_k^{\text{PBM}} & = \frac{(\alpha(T) - \alpha(T-t_{k+1}))(\alpha(T-t_{k}) - \alpha(T-t_{k+1}))}{\alpha(T) - \alpha(T-t_{k})} \eqsp .
\end{align}

To approximate SDEs \eqref{eq:SDE_ref} and \eqref{eq:SDE-param} on time interval $\ccint{t_k, t_{k+1}}$, we may define the following transition kernels
\begin{align}
    \pref_{k+1|k}(\cdot|y_k)&= \densityGaussian(a_k^{\text{PBM}} y_k + b_k^{\text{PBM}} s_{T-t_k}^{\text{ref}}(y_k),c_k^{\text{PBM}} \Idd )\eqsp ,\label{eq:kernel-PBM-1}\\  
    \ptheta_{k+1|k}(\cdot|y_k) &= \densityGaussian(a_k^{\text{PBM}} y_k + b_k^{\text{PBM}} \{s_{T-t_k}^{\text{ref}}(y_k) + g_{T-t_k}^\theta(y_k)\},c_k^{\text{PBM}} \Idd ) \eqsp ,\label{eq:kernel-PBM-2}
\end{align}
that can be obtained by applying the EI scheme, see \Cref{app:preli}. In this case, the log ratio $\log \rmd p^\theta/\rmd p^{\text{ref}}$ can be computed as follows.

\begin{lemma} \label{lemma:log-ratio-PBM} Assume that the joint distributions $\ptheta$ and $\pref$ are respectively induced by the transition kernels defined in \eqref{eq:kernel-PBM-1} and \eqref{eq:kernel-PBM-2}. Then for any $y_{0:K}\in (\rset^d)^{K+1}$, we have
    \begin{align} \label{eq:log_ratio_PBM}
        & \log \frac{\rmd \ptheta}{\rmd p^\text{ref}}(y_{0:K})=\sum_{k=0}^{K-1}\frac{w_k^{\text{PBM}}}{2}\|g^\theta_{T-t_k}(y_k)\|^2 + \sum_{k=0}^{K-1} \sqrt{w_k^{\text{PBM}}} g^\theta_{T-t_k}(y_k)^\top \epsilon_k \eqsp ,\\
       \text{where } \epsilon_k& =\frac{1}{\sqrt{c_k^{\text{PBM}}}}\left(y_{k+1}- a_k^{\text{PBM}} y_k - b_k^{\text{PBM}} \{s_{T-t_k}^{\text{ref}}(y_k) + g_{T-t_k}^\theta(y_k)\}\right) \eqsp .
    \end{align} 
\end{lemma}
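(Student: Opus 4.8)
The plan is to mirror the proof of \Cref{lemma:log-ratio-EM} almost verbatim, because the two transition kernels \eqref{eq:kernel-PBM-1} and \eqref{eq:kernel-PBM-2} share exactly the same structure as their EM counterparts: both are isotropic Gaussians with the identical covariance $c_k^{\text{PBM}}\Idd$, and their means differ only by the additive term $b_k^{\text{PBM}} g_{T-t_k}^\theta(y_k)$. This shared-covariance, mean-shift configuration is precisely what produces the clean log-ratio formula, so the bulk of the argument carries over with only the coefficients relabelled.

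First I would factorize the Radon--Nikodym derivative along the Markov structure of the two joint distributions. Since both $\ptheta$ and $\pref$ are initialized at the same $\piprior$ and built as products of their respective transition kernels, the log-ratio telescopes into a sum over $k$ of the per-step terms $\log\{\rmd \ptheta_{k+1|k}(y_{k+1}|y_k)/\rmd \pref_{k+1|k}(y_{k+1}|y_k)\}$. For each fixed $k$, I would then evaluate this term as the ratio of two Gaussian densities sharing the covariance $c_k^{\text{PBM}}\Idd$. Writing $\epsilon_k$ for the normalized residual of $y_{k+1}$ against the $\ptheta$-mean, the normalizing constants cancel and the difference of the two quadratic exponents reduces to $\tfrac{1}{2}\norm{\epsilon_k + (b_k^{\text{PBM}}/\sqrt{c_k^{\text{PBM}}}) g^\theta_{T-t_k}(y_k)}^2 - \tfrac{1}{2}\norm{\epsilon_k}^2$, exactly as in the EM computation. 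Expanding this square yields $\tfrac{(b_k^{\text{PBM}})^2}{2 c_k^{\text{PBM}}}\norm{g^\theta_{T-t_k}(y_k)}^2 + \tfrac{b_k^{\text{PBM}}}{\sqrt{c_k^{\text{PBM}}}} g^\theta_{T-t_k}(y_k)^\top \epsilon_k$.

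The only genuinely PBM-specific step, and the one place the argument departs from pure copying, is verifying the algebraic identity $w_k^{\text{PBM}} = (b_k^{\text{PBM}})^2/c_k^{\text{PBM}}$, which collapses the coefficients above into the stated $w_k^{\text{PBM}}$. I would check this by direct substitution of the explicit definitions: forming $(b_k^{\text{PBM}})^2/c_k^{\text{PBM}}$ cancels one factor of $(\alpha(T-t_k)-\alpha(T-t_{k+1}))$, leaving $\frac{(\alpha(T)-\alpha(T-t_k))(\alpha(T-t_k)-\alpha(T-t_{k+1}))}{\alpha(T)-\alpha(T-t_{k+1})}$, which is exactly $w_k^{\text{PBM}}$. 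Summing the per-step contributions over $k$ then gives \eqref{eq:log_ratio_PBM}.

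I do not anticipate any real obstacle. The EI kernels were constructed via \Cref{lemma:ito} precisely so as to preserve the shared-covariance and mean-shift structure, so once the coefficient identity is in hand the entire computation is a routine Gaussian log-density ratio, identical in form to the EM case.
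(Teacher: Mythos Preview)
Your proposal is correct and matches the paper's own proof exactly: the paper simply states that the argument is identical to \Cref{lemma:log-ratio-EM} once one verifies $w_k^{\text{PBM}}=(b_k^{\text{PBM}})^2/c_k^{\text{PBM}}$, which is precisely the structure and the one PBM-specific check you outlined.
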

\begin{proof}The proof is exactly the same as \Cref{lemma:log-ratio-EM}, noting that we still have $w_k^{\text{PBM}}=(b_k^{\text{PBM}})^2/c_k^{\text{PBM}}$.
\end{proof}

By combining \Cref{lemma:log-ratio-PBM} with \eqref{eq:log-ratio-joint}, we deduce the expression of the RDS loss in this setting. The proof of the following is the same as in \Cref{cor:EM_RDS_loss}.

\begin{corollary}[Expression of RDS loss -- EI setting -- PBM scheme] When using the EI scheme to integrate SDE \eqref{eq:SDE-param} implemented with PBM, the RDS loss simplifies as
\begin{align}
\mathcal{L}_{\text{RDS}}(\theta)= \Var&\left[ \sum_{k=0}^{K-1} w_k^{\text{PBM}} g^\theta_{T-t_k}(Y_k)^\top \{g^{\hat{\theta}}_{T-t_k}(Y_k)- \frac{1}{2} g^\theta_{T-t_k}(Y_k)\}\right .\\
&\left .\quad + \sum_{k=0}^{K-1} \sqrt{w_k^{\text{PBM}}}g^\theta_{T-t_k}(Y_k)^\top Z_k +  \log \frac{\gamma^{\text{ref}}}{\gamma}(Y_K)\right ] \eqsp ,
\end{align}
where $\{Z_k\}_{k=0}^{K-1}$ are independently distributed according to $\densityGaussian(0, \Idd)$, and $\{Y_k\}_{k=0}^K$ is recursively defined by $Y_0\sim \piprior$, and for any $k\in \{0, \hdots, K-1\}$
\begin{align}
    Y_{k+1}= a_k^{\text{PBM}} Y_k + b_k^{\text{PBM}} \{s_{T-t_k}^{\text{ref}}(Y_k) + g_{T-t_k}^{\hat{\theta}}(Y_k)\} +\sqrt{c_k^{\text{PBM}}}Z_k \eqsp ,
\end{align}
with $\hat{\theta}$ being a detached version of $\theta$.
\end{corollary}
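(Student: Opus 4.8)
The plan is to mirror the proof of \Cref{cor:EM_RDS_loss} almost verbatim, since the Euler--Maruyama and Exponential Integration schemes differ only in the explicit values of the coefficients $\{w_k, a_k, b_k, c_k\}$, while the transition kernels \eqref{eq:kernel-PBM-1}--\eqref{eq:kernel-PBM-2} have exactly the same Gaussian form as \eqref{eq:kernel-EM-1}--\eqref{eq:kernel-EM-2}. First I would start from the discrete-time log-variance objective \eqref{eq:obj-disc-th} and invoke the decomposition \eqref{eq:log-ratio-joint}, which splits $\log(\rmd \ptheta_{0:K}/\rmd p^\star_{0:K})$ into the path-space log-ratio $\log(\rmd \ptheta_{0:K}/\rmd \pref_{0:K})$, the terminal cost $\log(\gammaref/\gamma)(Y_K)$, and the deterministic constant $\log \normcte - \log \normcte^{\text{ref}}$. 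The path-space log-ratio at an arbitrary trajectory $y_{0:K}$ is then supplied by \Cref{lemma:log-ratio-PBM} in terms of $w_k^{\text{PBM}}$, $g^{\theta}_{T-t_k}$ and the innovations $\epsilon_k$ defined there.

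The key step is to specialise $\epsilon_k$ to the detached recursion generating $\{Y_k\}_{k=0}^K$. Because this recursion is driven by the detached guidance $g^{\htheta}_{T-t_k}$ whereas $\epsilon_k$ is built from $g^{\theta}_{T-t_k}$, inserting the recursion into the definition of $\epsilon_k$ gives
\begin{align*}
    \epsilon_k = Z_k + \sqrt{w_k^{\text{PBM}}}\,\{g^{\htheta}_{T-t_k}(Y_k) - g^{\theta}_{T-t_k}(Y_k)\} \eqsp ,
\end{align*}
where I use the identity $w_k^{\text{PBM}} = (b_k^{\text{PBM}})^2/c_k^{\text{PBM}}$ to rewrite the ratio $b_k^{\text{PBM}}/\sqrt{c_k^{\text{PBM}}}$ as $\sqrt{w_k^{\text{PBM}}}$. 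Substituting this into \eqref{eq:log_ratio_PBM} and merging the resulting quadratic and cross terms in $g^{\theta}_{T-t_k}$ collapses them into $w_k^{\text{PBM}} g^{\theta}_{T-t_k}(Y_k)^\top \{g^{\htheta}_{T-t_k}(Y_k) - \tfrac12 g^{\theta}_{T-t_k}(Y_k)\}$, while leaving the stochastic term $\sqrt{w_k^{\text{PBM}}}\,g^{\theta}_{T-t_k}(Y_k)^\top Z_k$ untouched.

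Finally I would reassemble the three pieces of \eqref{eq:log-ratio-joint} inside the variance: the constant $\log \normcte - \log \normcte^{\text{ref}}$ is deterministic and therefore disappears under $\Var[\cdot]$, yielding exactly the claimed expression for $\mathcal{L}_{\text{RDS}}(\theta)$. I anticipate no genuine obstacle, as every step is a direct transcription of \Cref{cor:EM_RDS_loss}; the only item warranting an explicit check is the coefficient identity $w_k^{\text{PBM}} = (b_k^{\text{PBM}})^2/c_k^{\text{PBM}}$, which follows immediately by cancelling the common factor $\alpha(T-t_k) - \alpha(T-t_{k+1})$ in the definitions of $b_k^{\text{PBM}}$ and $c_k^{\text{PBM}}$, and is the exact analogue of the relation $w_k^{\text{EM}} = (b_k^{\text{EM}})^2/c_k^{\text{EM}}$ already used in the Euler--Maruyama case.
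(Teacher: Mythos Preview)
Your proposal is correct and follows precisely the approach the paper takes: the paper simply states that the proof is the same as \Cref{cor:EM_RDS_loss}, combining \Cref{lemma:log-ratio-PBM} with \eqref{eq:log-ratio-joint}, which is exactly what you have spelled out. The key substitution $\epsilon_k = Z_k + \sqrt{w_k^{\text{PBM}}}\{g^{\htheta}_{T-t_k}(Y_k) - g^{\theta}_{T-t_k}(Y_k)\}$ and the coefficient identity $w_k^{\text{PBM}} = (b_k^{\text{PBM}})^2/c_k^{\text{PBM}}$ are the only things to verify, and you have handled both correctly.
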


For completeness, we also provide a similar result for the KL-based objective given in \eqref{eq:obj-disc-KL-th}. 

\begin{corollary}[Expression of reverse KL loss -- EI setting -- PBM noising] When using the EI scheme to integrate SDE \eqref{eq:SDE-param} implemented with PBM, up to additional constants independent of $\theta$, the discrete time KL loss simplifies as
\begin{align}
\mathcal{L}_{\text{KL}}(\theta)=\mathbb{E}\left[ \sum_{k=0}^{K-1} \frac{w_k^{\text{PBM}}}{2}\norm{g^\theta_{T-t_k}(Y_k^\theta)}^2+\log \frac{\gamma^{\text{ref}} }{\gamma}(Y_K^\theta) \right] \eqsp,
\end{align}
where $\{Z_k\}_{k=0}^{K-1}$ are independently distributed according to $\densityGaussian(0, \Idd)$, and $\{Y^{\theta}_k\}_{k=0}^K$ is recursively defined by $Y_0^{\theta}\sim \piprior$, and for any $k\in \{0, \hdots, K-1\}$ 
\begin{align}
    Y_{k+1}^{\theta}= a_k^{\text{PBM}} Y_{k}^{\theta} + b_k^{\text{PBM}} \{s_{T-t_k}^{\text{ref}}(Y_{k}^{\theta}) + g_{T-t_k}^{\theta}(Y_{k}^{\theta})\} +\sqrt{c_k^{\text{PBM}}}Z_k \eqsp  .
\end{align}
\end{corollary}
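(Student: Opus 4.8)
The plan is to follow the same template as the RDS (log-variance) derivation in the PBM setting, but starting from the discrete-time KL objective \eqref{eq:obj-disc-KL-th}, and to exploit crucially that here the expectation is taken along the \emph{parameterized} process $Y^\theta_{0:K}\sim p^\theta_{0:K}$ rather than a detached copy. First I would invoke the generic log-ratio decomposition \eqref{eq:log-ratio-joint} with $\bar\theta=\theta$, splitting $\log(\rmd p^\theta_{0:K}/\rmd p^\star_{0:K})(Y^\theta_{0:K})$ into the path-measure ratio $\log(\rmd p^\theta_{0:K}/\rmd p^{\text{ref}}_{0:K})(Y^\theta_{0:K})$, the terminal cost $\log(\gamma^{\text{ref}}/\gamma)(Y^\theta_K)$, and the additive constants $\log\normcte-\log\normcte^{\text{ref}}$, which are independent of $\theta$ and may therefore be dropped.

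Next I would apply \Cref{lemma:log-ratio-PBM} to expand the path-measure ratio, obtaining the quadratic terms $\sum_{k=0}^{K-1}(w_k^{\text{PBM}}/2)\norm{g^\theta_{T-t_k}(Y_k^\theta)}^2$ together with the linear innovation terms $\sum_{k=0}^{K-1}\sqrt{w_k^{\text{PBM}}}\,g^\theta_{T-t_k}(Y_k^\theta)^\top\epsilon_k$, with $\epsilon_k$ the normalized residual of \Cref{lemma:log-ratio-PBM}. The central observation is that \emph{along trajectories drawn from $p^\theta$}, the increment $Y_{k+1}^\theta$ is generated by the very kernel \eqref{eq:kernel-PBM-2}; substituting the recursion defining $Y_{k+1}^\theta$ into the expression for $\epsilon_k$ makes the drift cancel exactly and reduces $\epsilon_k$ to the standard Gaussian increment $Z_k\sim\densityGaussian(0,\Idd)$ driving that step. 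This is precisely where the KL case departs from the RDS case: for the detached dynamics used in \eqref{eq:obj-disc}, the recursion carries $g^{\hat{\theta}}$ while $\epsilon_k$ carries $g^\theta$, leaving the extra term $\sqrt{w_k^{\text{PBM}}}(g^{\hat{\theta}}-g^\theta)$; here those vectors coincide and no such term survives.

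It then remains to take the expectation. Since $g^\theta_{T-t_k}(Y_k^\theta)$ is measurable with respect to the filtration $\mathcal{F}_k$ generated by $Y_{0:k}^\theta$, whereas $Z_k$ is centered and independent of $\mathcal{F}_k$, the tower property yields $\mathbb{E}[\sqrt{w_k^{\text{PBM}}}\,g^\theta_{T-t_k}(Y_k^\theta)^\top Z_k]=0$ for every $k$, so each linear term vanishes in expectation. What remains is exactly $\mathbb{E}[\sum_{k=0}^{K-1}(w_k^{\text{PBM}}/2)\norm{g^\theta_{T-t_k}(Y_k^\theta)}^2+\log(\gamma^{\text{ref}}/\gamma)(Y_K^\theta)]$, establishing the claim up to the discarded constants. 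I expect the only delicate point to be this martingale cancellation: one must evaluate $\epsilon_k$ along the non-detached process and justify the conditional-expectation vanishing by the tower property, rather than appealing to an Itô isometry as in the continuous-time argument.
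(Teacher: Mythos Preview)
Your proposal is correct and follows precisely the approach implicit in the paper: apply the log-ratio decomposition \eqref{eq:log-ratio-joint} and \Cref{lemma:log-ratio-PBM}, observe that along the non-detached process the residual $\epsilon_k$ collapses to $Z_k$ (since here $g^{\hat\theta}=g^\theta$, in contrast to the RDS case of \Cref{cor:EM_RDS_loss}), and then use the tower property with the independence of $Z_k$ from $\mathcal{F}_k$ to kill the linear term in expectation. The paper does not spell out a separate proof for this corollary, but your argument is exactly the intended one and matches the EM analogue.
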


\subsubsection{EI-based RDS (VP)} \label{subsec:ei-vp}

Here, we provide the expression of the RDS loss in the case of the VP noising scheme, which is detailed in \Cref{subsec:general_VP}. We recall the notation $\alpha(t)=\int_0^t \beta(u)\rmd u$. For any $k\in \{0, \hdots, K-1\}$, define $\lambda_k= \exp\left((\alpha(T-t_{k})-\alpha(T-t_{k+1})\right)-1$ and the coefficients
\begin{align}
    w_k^{\text{VP}}& = \frac{4\sigma^2 (\sqrt{1+\lambda_k}-1)^2}{\lambda_k}= 4\sigma^2\tanh\left(\frac{\alpha(T-t_{k})-\alpha(T-t_{k+1})}{4}\right)\\
    a_k^{\text{VP}} &= \sqrt{1+\lambda_k}\\
    b_k^{\text{VP}} &= 2\sigma^2 \{\sqrt{1+\lambda_k }-1\}\\
    c_k^{\text{VP}} & = \sigma^2\lambda_k\\
\end{align}

To approximate SDEs \eqref{eq:SDE_ref} and \eqref{eq:SDE-param} on time interval $\ccint{t_k, t_{k+1}}$, we may define the following transition kernels
\begin{align}
    \pref_{k+1|k}(\cdot|y_k)&= \densityGaussian(a_k^{\text{VP}} y_k + b_k^{\text{VP}} s_{T-t_k}^{\text{ref}}(y_k),c_k^{\text{VP}} \Idd )\eqsp ,\label{eq:kernel-VP-1}\\  
    \ptheta_{k+1|k}(\cdot|y_k) &= \densityGaussian(a_k^{\text{VP}} y_k + b_k^{\text{VP}} \{s_{T-t_k}^{\text{ref}}(y_k) + g_{T-t_k}^\theta(y_k)\},c_k^{\text{VP}} \Idd ) \eqsp , \label{eq:kernel-VP-2}
\end{align}
that can be obtained by applying the EI scheme, see \Cref{app:preli}. In this case, the log ratio $\log \rmd p^\theta/\rmd p^{\text{ref}}$ can be computed as follows.

\begin{lemma} \label{lemma:log-ratio-VP} Assume that the joint distributions $\ptheta$ and $\pref$ are respectively induced by the transition kernels defined in \eqref{eq:kernel-VP-1} and \eqref{eq:kernel-VP-1}. Then, for any $y_{0:K}\in (\rset^d)^{K+1}$, we have
    \begin{align} \label{eq:log_ratio_VP}
        & \log \frac{\rmd \ptheta}{\rmd p^\text{ref}}(y_{0:K})=\sum_{k=0}^{K-1}\frac{w_k^{\text{VP}}}{2}\|g^\theta_{T-t_k}(y_k)\|^2 + \sum_{k=0}^{K-1} \sqrt{w_k^{\text{VP}}} g^\theta_{T-t_k}(y_k)^\top \epsilon_k\\
       \text{where } \epsilon_k& =\frac{1}{\sqrt{c_k^{\text{VP}}}}\left(y_{k+1}- a_k^{\text{VP}} y_k - b_k^{\text{VP}} \{s_{T-t_k}^{\text{ref}}(y_k) + g_{T-t_k}^\theta(y_k)\}\right)
    \end{align} 
\end{lemma}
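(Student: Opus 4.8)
The plan is to reproduce verbatim the argument of \Cref{lemma:log-ratio-EM}, since the kernels \eqref{eq:kernel-VP-1} and \eqref{eq:kernel-VP-2} have exactly the same Gaussian form as in the EM setting, differing only through the VP-specific coefficients. First I would use that both joint laws share the initial distribution $\piprior$ and factorize over the discretization, so that the log-ratio telescopes into a sum of per-step contributions,
\begin{align}
    \log \frac{\rmd \ptheta}{\rmd p^\text{ref}}(y_{0:K}) = \sum_{k=0}^{K-1}\log \frac{\rmd \ptheta_{k+1|k}(y_{k+1}|y_k)}{\rmd p^\text{ref}_{k+1|k}(y_{k+1}|y_k)} \eqsp .
\end{align}

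For each fixed $k$, the two transition kernels are Gaussian with the \emph{identical} covariance $c_k^{\text{VP}} \Idd$, so their normalizing constants cancel, and their means differ precisely by the additive term $b_k^{\text{VP}} g^\theta_{T-t_k}(y_k)$. With $\epsilon_k$ as defined in the statement (the normalized residual of the parameterized kernel), the residual of the reference kernel at $y_{k+1}$ is $\epsilon_k + (b_k^{\text{VP}}/\sqrt{c_k^{\text{VP}}})\, g^\theta_{T-t_k}(y_k)$; hence each per-step term equals $-\tfrac12\norm{\epsilon_k}^2 + \tfrac12\norm{\epsilon_k + (b_k^{\text{VP}}/\sqrt{c_k^{\text{VP}}}) g^\theta_{T-t_k}(y_k)}^2$. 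Expanding the square collapses this to $(b_k^{\text{VP}})^2/(2 c_k^{\text{VP}})\,\norm{g^\theta_{T-t_k}(y_k)}^2 + (b_k^{\text{VP}}/\sqrt{c_k^{\text{VP}}})\, g^\theta_{T-t_k}(y_k)^\top \epsilon_k$.

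The only VP-specific input is the algebraic identity $w_k^{\text{VP}} = (b_k^{\text{VP}})^2/c_k^{\text{VP}}$, which I would verify directly: substituting $b_k^{\text{VP}} = 2\sigma^2(\sqrt{1+\lambda_k}-1)$ and $c_k^{\text{VP}} = \sigma^2 \lambda_k$ yields $(b_k^{\text{VP}})^2/c_k^{\text{VP}} = 4\sigma^2(\sqrt{1+\lambda_k}-1)^2/\lambda_k$, which is exactly the stated $w_k^{\text{VP}}$. Since $t_k < t_{k+1}$ together with $\beta > 0$ force $\lambda_k > 0$, the coefficients $b_k^{\text{VP}}$ and $c_k^{\text{VP}}$ are positive, so $\sqrt{w_k^{\text{VP}}} = b_k^{\text{VP}}/\sqrt{c_k^{\text{VP}}}$ with the correct sign. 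Summing the per-step contributions over $k$ and replacing $(b_k^{\text{VP}})^2/c_k^{\text{VP}}$ by $w_k^{\text{VP}}$ then gives \eqref{eq:log_ratio_VP}. I expect no genuine obstacle here: the computation is the standard log-ratio of two same-covariance Gaussians already carried out in \Cref{lemma:log-ratio-EM}, and the VP coefficients are engineered precisely so that the ratio $b_k^2/c_k$ collapses to the single weight $w_k^{\text{VP}}$.
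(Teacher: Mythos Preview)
Your proposal is correct and follows exactly the same approach as the paper: the paper simply states that the proof is identical to that of \Cref{lemma:log-ratio-EM}, noting that $w_k^{\text{VP}}=(b_k^{\text{VP}})^2/c_k^{\text{VP}}$. Your explicit verification of this identity and the sign check are a bit more detailed than what the paper spells out, but the argument is the same.
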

\begin{proof}The proof is exactly the same as \Cref{lemma:log-ratio-EM}, noting that we still have $w_k^{\text{VP}}=(b_k^{\text{VP}})^2/c_k^{\text{VP}}$.
\end{proof}

By combining \Cref{lemma:log-ratio-VP} with \eqref{eq:log-ratio-joint}, we deduce the expression of the RDS loss in this setting. The proof of the following is the same as in \Cref{cor:EM_RDS_loss}.

\begin{corollary}[Expression of RDS loss -- EI setting - VP scheme] When using the EI scheme to integrate SDE \eqref{eq:SDE-param} implemented with VP
, the RDS loss simplifies as
\begin{align}
\mathcal{L}_{\text{RDS}}(\theta)= \Var&\left[ \sum_{k=0}^{K-1} w_k^{\text{VP}} g^\theta_{T-t_k}(Y_k)^\top \{g^{\hat{\theta}}_{T-t_k}(Y_k)- \frac{1}{2} g^\theta_{T-t_k}(Y_k)\}\right .\\
&\left .\quad + \sum_{k=0}^{K-1} \sqrt{w_k^{\text{VP}}}g^\theta_{T-t_k}(Y_k)^\top Z_k +  \log \frac{\gamma^{\text{ref}}}{\gamma}(Y_K)\right] \eqsp ,
\end{align}
where $\{Z_k\}_{k=0}^{K-1}$ are independently distributed according to $\densityGaussian(0, \Idd)$, and $\{Y_k\}_{k=0}^K$ is recursively defined by $Y_0\sim \piprior$, and for any $k\in \{0, \hdots, K-1\}$
\begin{align}
    Y_{k+1}= a_k^{\text{VP}} Y_k + b_k^{\text{VP}} \{s_{T-t_k}^{\text{ref}}(Y_k) + g_{T-t_k}^{\hat{\theta}}(Y_k)\} +\sqrt{c_k^{\text{VP}}}Z_k \eqsp ,
\end{align}
with $\hat{\theta}$ being a detached version of $\theta$.
\end{corollary}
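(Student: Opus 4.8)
The plan is to follow verbatim the argument used in the proof of \Cref{cor:EM_RDS_loss}, since the only thing that changes between the EM, EI--PBM and EI--VP settings is the explicit form of the coefficients $\{w_k, a_k, b_k, c_k\}$, while the structure of the derivation is identical. I would start from the general discrete-time objective \eqref{eq:obj-disc-th} together with the log-ratio decomposition \eqref{eq:log-ratio-joint}, which reduces the problem to evaluating $\log(\rmd \ptheta_{0:K}/\rmd \pref_{0:K})$ along a trajectory sampled from $p^{\htheta}_{0:K}$. For this I would invoke \Cref{lemma:log-ratio-VP}, which already expresses this log-ratio in terms of the VP coefficients and the residuals $\epsilon_k$, provided the transition kernels \eqref{eq:kernel-VP-1}--\eqref{eq:kernel-VP-2} are in force.

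The key computation is to specialize $\epsilon_k$ to the sampled process $\{Y_k\}_{k=0}^K$, which is generated by the recursion driven by the detached network $g^{\htheta}$. Substituting the recursion for $Y_{k+1}$ into the definition of $\epsilon_k$ from \Cref{lemma:log-ratio-VP}, the terms $a_k^{\text{VP}} Y_k$ and $b_k^{\text{VP}} s^{\text{ref}}_{T-t_k}(Y_k)$ cancel, leaving
\begin{align*}
\epsilon_k = Z_k + \sqrt{w_k^{\text{VP}}}\,\{g^{\htheta}_{T-t_k}(Y_k) - g^{\theta}_{T-t_k}(Y_k)\} \eqsp,
\end{align*}
where I use the identity $w_k^{\text{VP}}=(b_k^{\text{VP}})^2/c_k^{\text{VP}}$ (already noted in the proof of \Cref{lemma:log-ratio-VP}) to turn $b_k^{\text{VP}}/\sqrt{c_k^{\text{VP}}}$ into $\sqrt{w_k^{\text{VP}}}$.

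Next I would plug this $\epsilon_k$ back into the log-ratio of \Cref{lemma:log-ratio-VP} and collect the contributions that are quadratic in $g^\theta$: the $\tfrac12 w_k^{\text{VP}}\|g^\theta_{T-t_k}(Y_k)\|^2$ term from the first sum combines with the $-w_k^{\text{VP}}\|g^\theta_{T-t_k}(Y_k)\|^2$ cross term coming from $\sqrt{w_k^{\text{VP}}}\,g^\theta_{T-t_k}(Y_k)^\top \epsilon_k$ to yield the compact factor $g^\theta_{T-t_k}(Y_k)^\top\{g^{\htheta}_{T-t_k}(Y_k)-\tfrac12 g^\theta_{T-t_k}(Y_k)\}$, while the $\sqrt{w_k^{\text{VP}}}\,g^\theta_{T-t_k}(Y_k)^\top Z_k$ term survives untouched. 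Finally, applying \eqref{eq:log-ratio-joint} adds the terminal contribution $\log(\gamma^{\text{ref}}/\gamma)(Y_K)$ and the constants $\log\normcte-\log\normcte^{\text{ref}}$; the latter are deterministic and therefore drop out of the variance, producing exactly the claimed expression. I do not anticipate any genuine obstacle here: the whole argument is the algebraic manipulation above, and the only point that requires care is the bookkeeping of the coefficient identity $w_k^{\text{VP}}=(b_k^{\text{VP}})^2/c_k^{\text{VP}}$, which is precisely what makes the EI--VP case structurally identical to the EM and EI--PBM cases.
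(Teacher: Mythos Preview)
Your proposal is correct and follows essentially the same approach as the paper, which explicitly states that the proof is identical to that of \Cref{cor:EM_RDS_loss}. The key step you identify---substituting the recursion into the definition of $\epsilon_k$ to obtain $\epsilon_k = Z_k + \sqrt{w_k^{\text{VP}}}\{g^{\htheta}_{T-t_k}(Y_k) - g^{\theta}_{T-t_k}(Y_k)\}$ via the identity $w_k^{\text{VP}} = (b_k^{\text{VP}})^2/c_k^{\text{VP}}$, then invoking \eqref{eq:log-ratio-joint}---is exactly what the paper's proof of \Cref{cor:EM_RDS_loss} does (with the EM coefficients replaced by their VP counterparts).
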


For completeness, we also provide a similar result for the KL-based objective given in \eqref{eq:obj-disc-KL-th}.

\begin{corollary}[Expression of reverse KL loss -- EI setting - VP noising] When using the EI scheme to integrate SDE \eqref{eq:SDE-param} implemented with VP, up to additional constants independent of $\theta$, the discrete time KL loss simplifies as
\begin{align}
\mathcal{L}_{\text{KL}}(\theta)=\mathbb{E}\left[ \sum_{k=0}^{K-1} \frac{w_k^{\text{VP}}}{2}\norm{g^\theta_{T-t_k}(Y_k^\theta)}^2+\log \frac{\gamma^{\text{ref}} }{\gamma}(Y_K^\theta) \right] \eqsp,
\end{align}
where $\{Z_k\}_{k=0}^{K-1}$ are independently distributed according to $\densityGaussian(0, \Idd)$, and $\{Y^{\theta}_k\}_{k=0}^K$ is recursively defined by $Y_0^{\theta}\sim \piprior$, and for any $k\in \{0, \hdots, K-1\}$,
\begin{align}
    Y_{k+1}^{\theta}= a_k^{\text{VP}} Y_{k}^{\theta} + b_k^{\text{VP}} \{s_{T-t_k}^{\text{ref}}(Y_{k}^{\theta}) + g_{T-t_k}^{\theta}(Y_{k}^{\theta})\} +\sqrt{c_k^{\text{VP}}}Z_k \eqsp  .
\end{align}
\end{corollary}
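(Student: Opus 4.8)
The plan is to mirror the argument used for $\mathcal{L}_{\mathrm{RDS}}$ in \Cref{cor:EM_RDS_loss}, but adapted to the KL objective \eqref{eq:obj-disc-KL-th}, where the expectation is taken along the \emph{parameterized} trajectory $Y^\theta_{0:K}\sim p^\theta_{0:K}$ rather than along a detached one. First I would insert the decomposition \eqref{eq:log-ratio-joint} with $\bar{\theta}=\theta$, which splits the log-ratio into the term $\log(\rmd p^\theta/\rmd p^{\text{ref}})(Y^\theta_{0:K})$, a terminal term $\log(\gamma^{\text{ref}}/\gamma)(Y^\theta_K)$, and the two constants $\log\normcte-\log\normcte^{\text{ref}}$; these last two are independent of $\theta$ and are exactly what is absorbed into the ``up to additional constants'' clause of the statement.

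Next I would expand the first term using \Cref{lemma:log-ratio-VP}, which yields $\sum_{k=0}^{K-1}(w_k^{\text{VP}}/2)\|g^\theta_{T-t_k}(Y_k^\theta)\|^2 + \sum_{k=0}^{K-1}\sqrt{w_k^{\text{VP}}}\,g^\theta_{T-t_k}(Y_k^\theta)^\top\epsilon_k$. The key observation is that here the trajectory is generated by the \emph{same} drift $g^\theta$ that enters the definition of $\epsilon_k$ in \eqref{eq:log_ratio_VP}; substituting the recursion $Y_{k+1}^\theta = a_k^{\text{VP}}Y_k^\theta + b_k^{\text{VP}}\{s_{T-t_k}^{\text{ref}}(Y_k^\theta)+g_{T-t_k}^\theta(Y_k^\theta)\}+\sqrt{c_k^{\text{VP}}}Z_k$ into the formula for $\epsilon_k$ collapses it to $\epsilon_k = Z_k$ exactly. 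This is in contrast with the LV case treated in \Cref{cor:EM_RDS_loss}, where the detached drift $g^{\hat{\theta}}$ left behind a residual $\sqrt{w_k^{\text{VP}}}\{g^{\hat{\theta}}_{T-t_k}(Y_k)-g^\theta_{T-t_k}(Y_k)\}$ responsible for the cross term in the log-variance loss.

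Finally I would take the expectation and argue that the stochastic contribution $\mathbb{E}[\sum_{k=0}^{K-1}\sqrt{w_k^{\text{VP}}}\,g^\theta_{T-t_k}(Y_k^\theta)^\top Z_k]$ vanishes termwise. For each $k$, the iterate $Y_k^\theta$ is a measurable function of $Z_0,\dots,Z_{k-1}$ and is therefore independent of $Z_k$; conditioning on $Y_k^\theta$ and using $\mathbb{E}[Z_k]=0$ then kills the summand by the tower property. What survives is the quadratic term together with the terminal cost $\log(\gamma^{\text{ref}}/\gamma)(Y_K^\theta)$, which is precisely the claimed expression once the $\theta$-independent constants are discarded. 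I expect the only point requiring genuine care to be this last vanishing step: one must verify the adaptedness of $\{Y_k^\theta\}_{k=0}^K$ to the filtration generated by $\{Z_j\}_{j<k}$, so that $g^\theta_{T-t_k}(Y_k^\theta)$ is independent of $Z_k$. This is immediate from the recursion but is the sole probabilistic ingredient distinguishing the computation from a purely algebraic one, the remainder being identical to the derivations of the preceding corollaries.
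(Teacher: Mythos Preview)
Your proposal is correct and follows exactly the approach implicit in the paper. The paper does not spell out a proof for this corollary (nor for its EM and PBM analogues), but the intended argument is precisely the one you give: combine \eqref{eq:log-ratio-joint} with \Cref{lemma:log-ratio-VP}, observe that along the non-detached trajectory $\epsilon_k$ collapses to $Z_k$, and use the independence of $Y_k^\theta$ from $Z_k$ to kill the cross term in expectation.
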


\subsection{DIS setting} \label{subsec:disc-dis}

Here, we propose a discrete time alternative to the continuous time objective proposed by \cite{berner2022optimal} and \cite{richter2023improved}. Let $\theta\in \Theta$. Consider the variational diffusion process $(Y^\theta_t)_{t\in \ccint{0,T}}$ defined by SDE \eqref{eq:SDE_DIS}.

Assume that we are given transition kernels  $(\mathcal{A},y_k)\mapsto\ptheta_{k+1|k}(\mathcal{A}|y_k)$ and $(\mathcal{A},y_{k+1})\mapsto p^\star_{k|k+1}(\mathcal{A}|y_{k+1})$ respectively approximating $\Pbb(Y_{t_{k+1}}^\theta\in \mathcal{A}\mid Y_{t_k}^\theta=y_k)$ and $\Pbb(X_{t_{k}}\in \mathcal{A}\mid X_{t_{k+1}}=y_{k+1})$. Then, for $K$ sufficiently large, the path measures $\Pbb^\theta$ and $(\Pbb^\star)^R$ may be approximated by the joint distributions $\ptheta_{0: K}\in \Pmeasure^{(K+1)}$ and $p^\star_{0: K}\in \Pmeasure^{(K+1)}$ defined by $\ptheta_{0: K}=\piprior \prod_{k=0}^{K-1} \ptheta_{k+1|k}$ and $p^\star_{0: K}=\pi \prod_{k=0}^{K-1} p^\star_{k|k+1}$.

Following the RDS methodology, we propose to approximate the KL-based continuous-time DIS objective by
\begin{align}\label{eq:KL-DIS}
\mathcal{L}_{\text{KL}}^{\text{DIS}}(\theta)=\mathbb{E}\left[\log \frac{\rmd p^\theta_{0:K}}{\rmd p^\star_{0:K}}(Y^{\theta}_{0:K})\right] \eqsp , \eqsp  Y^{\theta}_{0:K} \sim p^{\theta}_{0:K} \eqsp ,
\end{align}
and the LV-based continuous-time DIS objective by
\begin{align}\label{eq:LV-DIS}
\mathcal{L}_{\text{LV}}^{\text{DIS}}(\theta)=\Var\left[\log \frac{\rmd p^\theta_{0:K}}{\rmd p^\star_{0:K}}(Y^{\hat{\theta}}_{0:K})\right] \eqsp ,\eqsp  Y^{\hat{\theta}}_{0:K} \sim p^{\hat{\theta}}_{0:K} .
\end{align}

{To approximate SDEs \eqref{eq:SDE_DIS} and \eqref{eq:SDE-noising} on time interval $\ccint{t_k, t_{k+1}}$ in the case of the VP noising scheme, we may define the following transition kernels
\begin{align}
    p^\star_{k|k+1}(\cdot|y_{k+1})& = \densityGaussian(\sqrt{1-\lambda_{T-t_{k+1}, T-t_k}}y_{k+1}, \sigma^2 \lambda_{T-t_{k+1}, T-t_k} \, \Idd ) \eqsp , \label{eq:DIS-1}\\
    \ptheta_{k+1|k}(\cdot|y_k) &= \densityGaussian(a_k^{\text{VP}} y_k + b_k^{\text{VP}} + s_{T-t_k}^\theta(y_k),c_k^{\text{VP}} \Idd ) \eqsp , \label{eq:DIS-2}
\end{align}
where $\lambda_{s,t}$ is defined in \Cref{lemma:general_vp} with $s=T-t_{k+1}$ and $t=T-t_{k}$, and $\{a_k^{\text{VP}}, b_k^{\text{VP}}, c_k^{\text{VP}}\}$ are defined via the VP-EI integration scheme detailed in \cref{subsec:ei-vp}. In particular, the following equalities hold:
\begin{align}
    \sqrt{1-\lambda_{T-t_{k+1}, T-t_k}} = \exp\left(\frac{1}{2}[\alpha(T-t_{k+1})-\alpha(T-t_{k})]\right)=\frac{1}{a_k^{\text{VP}}} \eqsp ,\\
    \sigma^2 \lambda_{T-t_{k+1}, T-t_k} = \sigma^2 [1-\exp\left(\alpha(T-t_{k+1})-\alpha(T-t_{k})\right)] = \frac{c_k^{\text{VP}}}{(a_k^{\text{VP}})^2} \eqsp .
\end{align}

Therefore, we have $p^\star_{k|k+1}(y_k|y_{k+1}) = \densityGaussian(y_{k+1}; a_k^{\text{VP}} y_k, c_k^{\text{VP}} \, \Idd)$. In this case, the log-ratio $\log \rmd p^\theta/\rmd p^\star$ can be computed as follows.

\begin{lemma} \label{lemma:log-ratio-DIS}Assume that the joint distributions $p^\star$ and $p^\theta$ are respectively induced by the transition kernels defined in \eqref{eq:DIS-1} and \eqref{eq:DIS-2}. Then, for any $y_{0:K}\in (\rset^d)^{K+1}$, we have
\begin{align}
    \log \frac{\rmd p^\theta}{\rmd p^\star}(y_{0:K})& =\log Z + \log \frac{\piprior(y_0)}{\gamma(y_K)} + \sum_{k=0}^{K-1} \log \frac{\rmd p^\theta_{k+1|k}(y_{k+1}|y_k)}{\rmd p^\star_{k|k+1}(y_k|y_{k+1})} \\
    & = \log Z + \log \frac{\piprior(y_0)}{\gamma(y_K)} + \sum_{k=0}^{K-1} \frac{w_k^{\text{VP}}}{2}\|s_{T-t_k}^{\theta}(y_k)\|^2 + \sum_{k=0}^{K-1}\sqrt{w_k^{\text{VP}}} s_{T-t_k}^{\theta}(y_k)^\top \epsilon_k\\
    \text{where }\epsilon_k &= \frac{1}{\sqrt{c_k^{\text{VP}}}}\left(y_{k+1} - a_k^{\text{VP}}y_k- b_k^{\text{VP}} s^{\theta}_{T-t_k}(y_k)\right)
\end{align}
\end{lemma}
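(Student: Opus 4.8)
The plan is to mimic the argument of \Cref{lemma:log-ratio-EM}, the only genuine novelty being that the target joint $p^\star_{0:K}$ is built from the \emph{backward} kernels $p^\star_{k|k+1}$ of the noising process rather than from a reference process, so the two factorizations run in opposite time directions and must be reconciled before the per-step computation can proceed.

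First I would expand the Radon--Nikodym derivative directly from the two factorizations $p^\theta_{0:K}=\piprior\prod_{k=0}^{K-1}\ptheta_{k+1|k}$ and $p^\star_{0:K}=\pi\prod_{k=0}^{K-1}p^\star_{k|k+1}$. Writing $\pi(y_K)=\gamma(y_K)/Z$ and collecting the two boundary densities gives
\begin{align*}
\log\frac{\rmd p^\theta}{\rmd p^\star}(y_{0:K})=\log Z+\log\frac{\piprior(y_0)}{\gamma(y_K)}+\sum_{k=0}^{K-1}\log\frac{\ptheta_{k+1|k}(y_{k+1}|y_k)}{p^\star_{k|k+1}(y_k|y_{k+1})},
\end{align*}
which is exactly the first displayed line; this step is purely bookkeeping.

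The crux is the per-step ratio. Here I would invoke the reparametrization identity established just before the lemma, namely $p^\star_{k|k+1}(y_k|y_{k+1})=\densityGaussian(y_{k+1};a_k^{\text{VP}}y_k,c_k^{\text{VP}}\Idd)$ (up to the $\theta$-independent Jacobian factor $(a_k^{\text{VP}})^{d}$, which only shifts the log-ratio by a constant and is therefore irrelevant for both the LV and KL objectives). With this, both $\ptheta_{k+1|k}(y_{k+1}|y_k)$ and $p^\star_{k|k+1}(y_k|y_{k+1})$ become Gaussian densities in $y_{k+1}$ sharing the covariance $c_k^{\text{VP}}\Idd$ and differing only in their means by $b_k^{\text{VP}}s^\theta_{T-t_k}(y_k)$. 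The log-ratio of two equal-covariance Gaussians then reads $-\tfrac12\|\epsilon_k\|^2+\tfrac12\|\epsilon_k+(b_k^{\text{VP}}/\sqrt{c_k^{\text{VP}}})s^\theta_{T-t_k}(y_k)\|^2$ with $\epsilon_k$ as defined in the statement; expanding the square and using $w_k^{\text{VP}}=(b_k^{\text{VP}})^2/c_k^{\text{VP}}$ together with $b_k^{\text{VP}}\geq 0$ (so that $b_k^{\text{VP}}/\sqrt{c_k^{\text{VP}}}=\sqrt{w_k^{\text{VP}}}$) yields $\tfrac{w_k^{\text{VP}}}{2}\|s^\theta_{T-t_k}(y_k)\|^2+\sqrt{w_k^{\text{VP}}}\,s^\theta_{T-t_k}(y_k)^\top\epsilon_k$. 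Summing over $k$ gives the second displayed line.

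The main obstacle is the direction mismatch noted above: unlike the RDS derivation, there is no reference process, so I must first turn the backward kernel $p^\star_{k|k+1}$ into a forward-looking Gaussian in $y_{k+1}$ using the VP-specific relations $\sqrt{1-\lambda_{T-t_{k+1},T-t_k}}=1/a_k^{\text{VP}}$ and $\sigma^2\lambda_{T-t_{k+1},T-t_k}=c_k^{\text{VP}}/(a_k^{\text{VP}})^2$. Once this identity is in place, the computation is identical to that of \Cref{lemma:log-ratio-EM}, and the only remaining care is to track the normalizing-constant/Jacobian factor and confirm that it is absorbed into $\theta$-independent constants.
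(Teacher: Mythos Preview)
Your approach is essentially identical to the paper's: decompose the log-ratio via the two opposite-direction factorizations, rewrite $p^\star_{k|k+1}(y_k|y_{k+1})$ as a Gaussian density in $y_{k+1}$ with mean $a_k^{\text{VP}}y_k$ and covariance $c_k^{\text{VP}}\Idd$ using the VP relations recalled just before the lemma, and then expand the log-ratio of two equal-covariance Gaussians exactly as in \Cref{lemma:log-ratio-EM}. You are in fact slightly more careful than the paper, which silently drops the Jacobian factor $(a_k^{\text{VP}})^{d}$ when writing $p^\star_{k|k+1}(y_k|y_{k+1})=\densityGaussian(y_{k+1};a_k^{\text{VP}}y_k,c_k^{\text{VP}}\Idd)$; your remark that this only contributes a $\theta$-independent additive constant (hence irrelevant for the LV and KL losses derived in the subsequent corollaries) is the right way to handle it.
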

\begin{proof}
This result lies on the following identity
\begin{align}
    \frac{\rmd p^\theta_{k+1|k}(y_{k+1}|y_k)}{\rmd p^\star_{k|k+1}(y_k|y_{k+1})} = -\frac{1}{2}\norm{\epsilon_k}^2 +\frac{1}{2}\norm{\epsilon_k +\sqrt{w_k^{\text{VP}}}s^{\theta}_{T-t_k}(y_k)}^2
\end{align}
\end{proof}
Based on \Cref{lemma:log-ratio-DIS}, we simply deduce the expression of the discrete time versions of LV-based and KL-based DIS losses.

\begin{corollary}[Expression of LV-DIS loss] When using the EI scheme to integrate SDE \eqref{eq:SDE_DIS}, the LV-based discrete time DIS loss \eqref{eq:LV-DIS} simplifies as
\begin{align}
\mathcal{L}_{\text{LV}}^{\text{DIS}}(\theta)&= \Var\left[ \sum_{k=0}^{K-1} w_k^{\text{VP}} s^\theta_{T-t_k}(Y_k)^\top \{s^{\hat{\theta}}_{T-t_k}(Y_k)- \frac{1}{2} s^\theta_{T-t_k}(Y_k)\}\right .\\
&\left .\quad + \sum_{k=0}^{K-1} \sqrt{w_k^{\text{VP}}}s^\theta_{T-t_k}(Y_k)^\top Z_k +  \log \frac{\piprior(Y_0)}{\gamma(Y_K)}\right]\eqsp ,
\end{align}
where $\{Z_k\}_{k=0}^{K-1}$ are independently distributed according to $\densityGaussian(0, \Idd)$, and $\{Y_k\}_{k=0}^K$ is recursively defined by $Y_0\sim \piprior$, and for any $k\in \{0, \hdots, K-1\}$,
\begin{align}\label{eq:Y_k-DIS}
    Y_{k+1}= a_k^{\text{VP}} Y_k + b_k^{\text{VP}}  s^{\hat{\theta}}_{T-t_k}(Y_k) + \sqrt{c_k^{\text{VP}}}Z_k \eqsp ,
\end{align}
with $\hat{\theta}$ being a detached version of $\theta$.
\end{corollary}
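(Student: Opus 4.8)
The plan is to mirror, line for line, the computation in the proof of \Cref{cor:EM_RDS_loss}, substituting the DIS-specific transition kernels and recursion. First I would invoke \Cref{lemma:log-ratio-DIS}, which already expresses $\log (\rmd p^\theta / \rmd p^\star)(y_{0:K})$ as the sum of the constant $\log \normcte$, the boundary term $\log(\piprior(y_0)/\gamma(y_K))$, the quadratic sum $\sum_k \tfrac{w_k^{\text{VP}}}{2}\|s^\theta_{T-t_k}(y_k)\|^2$, and the cross term $\sum_k \sqrt{w_k^{\text{VP}}}\, s^\theta_{T-t_k}(y_k)^\top \epsilon_k$. It then remains only to evaluate this log-ratio along trajectories drawn from $p^{\hat{\theta}}_{0:K}$, that is, along the recursion \eqref{eq:Y_k-DIS}.

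The key step is to compute $\epsilon_k$ when $y_{0:K}$ is the simulated process. Inserting \eqref{eq:Y_k-DIS}, in which $Y_{k+1}$ is produced with the detached drift $s^{\hat{\theta}}$, into the definition of $\epsilon_k$ from \Cref{lemma:log-ratio-DIS}, the $a_k^{\text{VP}} Y_k$ terms cancel and the factor $\sqrt{c_k^{\text{VP}}}$ divides out, leaving
\[
\epsilon_k = Z_k + \sqrt{w_k^{\text{VP}}}\,\{s^{\hat{\theta}}_{T-t_k}(Y_k) - s^{\theta}_{T-t_k}(Y_k)\},
\]
where I use the identity $w_k^{\text{VP}} = (b_k^{\text{VP}})^2/c_k^{\text{VP}}$ already recorded in \Cref{subsec:ei-vp}. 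This is precisely the DIS analogue of the cancellation exploited in \Cref{cor:EM_RDS_loss}, with the reference-plus-guidance drift $s^{\text{ref}}+g^{\hat\theta}$ replaced by the single score drift $s^{\hat\theta}$.

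Next I would substitute this expression for $\epsilon_k$ into the cross term and merge it with the quadratic term. The cross term expands as $\sqrt{w_k^{\text{VP}}}\,s^\theta_{T-t_k}(Y_k)^\top Z_k + w_k^{\text{VP}}\, s^\theta_{T-t_k}(Y_k)^\top\{s^{\hat\theta}_{T-t_k}(Y_k)-s^\theta_{T-t_k}(Y_k)\}$; adding the quadratic contribution $\tfrac{w_k^{\text{VP}}}{2}\|s^\theta_{T-t_k}(Y_k)\|^2$ collapses the score-dependent part to $w_k^{\text{VP}}\,s^\theta_{T-t_k}(Y_k)^\top\{s^{\hat\theta}_{T-t_k}(Y_k)-\tfrac12 s^\theta_{T-t_k}(Y_k)\}$, reproducing the first sum in the claimed loss and isolating the pure-noise sum $\sum_k \sqrt{w_k^{\text{VP}}}\,s^\theta_{T-t_k}(Y_k)^\top Z_k$. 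Finally, since $\log \normcte$ is a deterministic constant, it contributes nothing to the variance and is dropped, leaving only $\log(\piprior(Y_0)/\gamma(Y_K))$ as the boundary term inside $\Var[\cdot]$.

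There is no genuine obstacle here: the argument is a purely algebraic rearrangement identical in structure to \Cref{cor:EM_RDS_loss}. The only point demanding mild care is the bookkeeping of the detached parameter $\hat{\theta}$: one must keep in mind that the trajectory $\{Y_k\}$ and the Gaussian increments $\{Z_k\}$ are generated under $s^{\hat\theta}$ and are treated as fixed data, while differentiation with respect to $\theta$ acts only on the explicit $s^\theta$ factors. This is what makes the cancellation producing $\epsilon_k$ legitimate and guarantees that the resulting estimator is unbiased for the gradient of the log-variance loss.
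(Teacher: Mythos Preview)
Your proposal is correct and follows exactly the same approach as the paper: invoke \Cref{lemma:log-ratio-DIS}, compute $\epsilon_k = Z_k + \sqrt{w_k^{\text{VP}}}\{s^{\hat{\theta}}_{T-t_k}(Y_k) - s^{\theta}_{T-t_k}(Y_k)\}$ along the detached recursion, and simplify. The paper's own proof is even terser, stating only this identity for $\epsilon_k$ and leaving the algebraic regrouping implicit.
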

\begin{proof}
This result is an application from \Cref{lemma:log-ratio-DIS}, where we have $\epsilon_k= Z_k + \sqrt{w_k^{\text{VP}}} \{s_{T-t_k}^{\hat{\theta}}(Y_k) - s_{T-t_k}^{\theta}(Y_k)\}$ for any $k \in \{0, \hdots, K-1\}$.
\end{proof}

\begin{corollary}[Expression of KL-DIS loss] When using the EM scheme to integrate SDE \eqref{eq:SDE_DIS}, up to additional constants independent of $\theta$, the KL-based discrete time DIS loss \eqref{eq:KL-DIS} simplifies as
\begin{align}
\mathcal{L}_{\text{KL}}^{\text{DIS}}(\theta)& =\mathbb{E}\left[\sum_{k=0}^{K-1} \frac{w_k^{\text{VP}}}{2}\norm{s^\theta_{T-t_k}(Y_k^\theta)}^2 + \log \frac{\piprior(Y^\theta_0)}{\gamma(Y^\theta_K)} \right] \eqsp ,
\end{align}
where $\{Z_k\}_{k=0}^{K-1}$ are independently distributed according to $\densityGaussian(0, \Idd)$, and $\{Y^{\theta}_k\}_{k=0}^K$ is recursively defined by $Y_0^{\theta}\sim \piprior$, and for any $k\in \{0, \hdots, K-1\}$,
\begin{align}
     Y_{k+1}^\theta= a_k^{\text{VP}} Y_k^\theta +  b_k^{\text{VP}}  s^{\theta}_{T-t_k}(Y_k^\theta) + \sqrt{c_k^{\text{VP}}}Z_k \eqsp .
\end{align}
\end{corollary}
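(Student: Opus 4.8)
The plan is to mirror the derivation of the preceding LV-DIS corollary, with the essential difference that the expectation is now taken under the parameterized process $p^\theta_{0:K}$ itself (driven by the non-detached drift $s^\theta$) rather than under its detached counterpart. First I would invoke Lemma~\ref{lemma:log-ratio-DIS}, which already decomposes the log-ratio $\log(\rmd p^\theta/\rmd p^\star)(y_{0:K})$ into the constant $\log Z$, a boundary term $\log(\piprior(y_0)/\gamma(y_K))$, a quadratic term $\sum_{k=0}^{K-1} (w_k^{\text{VP}}/2)\|s^\theta_{T-t_k}(y_k)\|^2$, and a cross term $\sum_{k=0}^{K-1}\sqrt{w_k^{\text{VP}}}\, s^\theta_{T-t_k}(y_k)^\top \epsilon_k$, with $\epsilon_k$ as defined there.

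The key simplification, and the only step that genuinely differs from the LV case, is to evaluate this log-ratio along a trajectory $\{Y^\theta_k\}_{k=0}^K$ produced by the stated recursion. Substituting $Y_{k+1}^\theta = a_k^{\text{VP}} Y_k^\theta + b_k^{\text{VP}} s^\theta_{T-t_k}(Y_k^\theta) + \sqrt{c_k^{\text{VP}}}Z_k$ into the definition of $\epsilon_k$, the $a_k^{\text{VP}}$ and $b_k^{\text{VP}}$ contributions cancel exactly, leaving $\epsilon_k = Z_k$. This is precisely where the KL case is cleaner than the LV-DIS corollary: there the process is driven by the detached drift $s^{\hat\theta}$, so an additional $\sqrt{w_k^{\text{VP}}}\{s^{\hat\theta}_{T-t_k}-s^\theta_{T-t_k}\}$ term survives inside $\epsilon_k$, whereas here it is absent.

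Finally I would take the expectation over $Y^\theta_{0:K}\sim p^\theta_{0:K}$. The constant $\log Z$ is independent of $\theta$ and is therefore absorbed into the claimed ``up to additional constants'' remark. The cross term vanishes in expectation: each $Z_k\sim\densityGaussian(0,\Idd)$ is independent of $Y_k^\theta$, since the latter is a measurable function of $Z_0,\dots,Z_{k-1}$ only, and has zero mean, so conditioning on $Y_k^\theta$ gives $\mathbb{E}[s^\theta_{T-t_k}(Y_k^\theta)^\top Z_k]=0$. What remains is exactly the quadratic term together with the boundary term, which is the stated expression for $\mathcal{L}_{\text{KL}}^{\text{DIS}}(\theta)$.

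I anticipate no real obstacle: the result is an immediate consequence of Lemma~\ref{lemma:log-ratio-DIS}, the identity $\epsilon_k=Z_k$, and the independence structure of the driving noise. The only point requiring mild care is the filtration argument showing that the cross term is a sum of martingale increments with vanishing expectation, which is entirely standard.
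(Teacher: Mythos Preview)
Your proposal is correct and follows precisely the pattern the paper uses for the analogous results (the RDS KL corollaries and the LV-DIS corollary): invoke Lemma~\ref{lemma:log-ratio-DIS}, substitute the non-detached recursion to obtain $\epsilon_k=Z_k$, drop the $\theta$-independent constant $\log\mathcal{Z}$, and observe that the cross term $\sqrt{w_k^{\text{VP}}}\,s^\theta_{T-t_k}(Y_k^\theta)^\top Z_k$ has zero expectation by the independence of $Z_k$ from $Y_k^\theta$. The paper does not spell out a separate proof for this corollary, but your argument is exactly the intended one.
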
}

\subsection{CMCD setting}\label{subsec:disc-cmcd}

Here, we propose a discrete time alternative to the continuous time objective proposed by \cite{vargas2023transport}. Let $\theta\in \Theta$. Consider the variational diffusion processes $(Y^\theta_t)_{t\in \ccint{0,T}}$ defined by SDE \eqref{eq:CMCD-backward}, and $(X^\theta_t)_{t\in \ccint{0,T}}$, defined by \eqref{eq:CMCD-forward}.

Assume that we are given transition kernels  $(\mathcal{A},y_k)\mapsto\ptheta_{k+1|k}(\mathcal{A}|y_k)$ and $(\mathcal{A},y_{k+1})\mapsto q^\theta_{k|k+1}(\mathcal{A}|y_{k+1})$ respectively approximating $\Pbb(Y_{t_{k+1}}^\theta\in \mathcal{A}\mid Y_{t_k}^\theta=y_k)$ and $\Qbb^\theta(X_{t_{k}}^\theta\in \mathcal{A}\mid X^\theta_{t_{k+1}}=y_{k+1})$. Then, for $K$ sufficiently large, the path measures $\Pbb^\theta$ and $(\Qbb^\theta)^R$ may be approximated by the joint distributions $\ptheta_{0: K}\in \Pmeasure^{(K+1)}$ and $q^\theta_{0: K}\in \Pmeasure^{(K+1)}$ defined by $\ptheta_{0: K}=\piprior \prod_{k=0}^{K-1} \ptheta_{k+1|k}$ and $q^\theta_{0: K}=\pi \prod_{k=0}^{K-1} q^\theta_{k|k+1}$.

Following the RDS methodology, we propose to approximate the KL-based continuous-time CMCD objective by
\begin{align}\label{eq:KL-CMCD}
\mathcal{L}_{\text{KL}}^{\text{CMCD}}(\theta)=\mathbb{E}\left[\log \frac{\rmd p^\theta_{0:K}}{\rmd q^\theta_{0:K}}(Y^{\theta}_{0:K})\right] \eqsp , \eqsp Y^{\theta}_{0:K} \sim p^{\theta}_{0:K} \eqsp ,
\end{align}
and the LV-based continuous-time CMCD objective by
\begin{align}\label{eq:LV-CMCD}
\mathcal{L}_{\text{LV}}^{\text{CMCD}}(\theta)=\Var\left[\log \frac{\rmd p^\theta_{0:K}}{\rmd q^\theta_{0:K}}(Y^{\hat{\theta}}_{0:K})\right] \eqsp , \eqsp  Y^{\hat{\theta}}_{0:K} \sim p^{\hat{\theta}}_{0:K} \eqsp .
\end{align}

For any $k\in \{0, \hdots, K-1\}$, let us define $w_k^{\text{CMCD}}=\sigma^2 \delta_k/4$.

To approximate SDEs \eqref{eq:CMCD-forward} and \eqref{eq:CMCD-backward} on time interval $\ccint{t_k, t_{k+1}}$, we may define the following transition kernels
\begin{align}
    q^\theta_{k|k+1}(\cdot|y_{k+1})& = \densityGaussian(y_{k+1} + \frac{\sigma^2\delta_k}{2}(\nabla \log \pi_{t_{k+1}}(y_{k+1}) - h^\theta_{t_{k+1}}(y_{k+1})), \sigma^2\delta_k \, \Idd )\label{eq:CMCD-noising} \eqsp ,\\
    p^\theta_{k+1|k}(\cdot|y_k)& = \densityGaussian(y_k + \frac{\sigma^2\delta_k}{2}(\nabla \log \pi_{t_k}(y_k) + h^\theta_{t_k}(y_k)), \sigma^2\delta_k \, \Idd )\label{eq:CMCD-denoising} \eqsp ,
\end{align}
that can be obtained by applying the EM integration scheme, see \Cref{app:preli}. In this case, the log-ratio $\log \rmd p^\theta/\rmd q^\theta$ can be computed as follows.

\begin{lemma} \label{lemma:log-ratio-CMCD}Assume that the joint distributions $q^\theta$ and $p^\theta$ are respectively induced by the transition kernels defined in \eqref{eq:CMCD-noising} and \eqref{eq:CMCD-denoising}. Then, for any $y_{0:K}\in (\rset^d)^{K+1}$, we have
\begin{align}
    \log \frac{\rmd p^\theta}{\rmd q^\theta}(y_{0:K})& =\log Z + \log \frac{\piprior(y_0)}{\gamma(y_K)} + \sum_{k=0}^{K-1} \log \frac{\rmd p^\theta_{k+1|k}(y_{k+1}|y_k)}{\rmd q^\theta_{k|k+1}(y_k|y_{k+1})} \\
    & = \log Z + \log \frac{\piprior(y_0)}{\gamma(y_K)} + \frac{1}{2}\sum_{k=0}^{K-1}w_k^{\text{CMCD}} \norm{u_k}^2 + \sum_{k=0}^{K-1}\sqrt{w_k^{\text{CMCD}}} u_k^\top \epsilon_k \eqsp ,
\end{align}
where
\begin{align}
    u_k & = \nabla \log \pi_{t_k}(y_k) +\nabla \log \pi_{t_{k+1}}(y_{k+1}) + h^\theta_{t_k}(y_k) -h^\theta_{t_{k+1}}(y_{k+1}) \eqsp ,\\
    \epsilon_k &= \frac{1}{\sigma \sqrt{\delta_k}}\left(y_{k+1}-y_k -\frac{\sigma^2\delta_k}{2}\{\nabla \log \pi_{t_k}(y_k) + h^\theta_{t_k}(y_k)\}\right)\eqsp .
\end{align}
\end{lemma}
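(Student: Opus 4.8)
The plan is to mirror the two-step recipe already used for \Cref{lemma:log-ratio-EM} and \Cref{lemma:log-ratio-DIS}: split the log-ratio into a boundary contribution plus a sum of per-step terms, then evaluate each per-step term by completing the square. For the first step I would write the joint densities explicitly, $p^\theta_{0:K}(y_{0:K})=\piprior(y_0)\prod_{k=0}^{K-1}p^\theta_{k+1|k}(y_{k+1}|y_k)$ and $q^\theta_{0:K}(y_{0:K})=\pi(y_K)\prod_{k=0}^{K-1}q^\theta_{k|k+1}(y_k|y_{k+1})$, take the logarithm of their ratio, and substitute $\pi(y_K)=\gamma(y_K)/\normcte$. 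The initial factor $\piprior(y_0)$ and the terminal factor $1/\pi(y_K)=\normcte/\gamma(y_K)$ produce the boundary term $\log\normcte+\log\{\piprior(y_0)/\gamma(y_K)\}$, while the two products leave $\sum_{k=0}^{K-1}\log\{p^\theta_{k+1|k}(y_{k+1}|y_k)/q^\theta_{k|k+1}(y_k|y_{k+1})\}$, which is exactly the first displayed equality.

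Next I would evaluate each summand. Both kernels are Gaussian with the same isotropic covariance $\sigma^2\delta_k\Idd$, so their log-normalizing constants $-\tfrac{d}{2}\log(2\pi\sigma^2\delta_k)$ cancel in the ratio and only the difference of quadratic exponents survives:
\begin{align}
\log\frac{p^\theta_{k+1|k}(y_{k+1}|y_k)}{q^\theta_{k|k+1}(y_k|y_{k+1})}=\frac{1}{2\sigma^2\delta_k}\left(\norm{y_k-m^q_k}^2-\norm{y_{k+1}-m^p_k}^2\right)\eqsp,
\end{align}
where $m^p_k=y_k+\tfrac{\sigma^2\delta_k}{2}\{\nabla\log\pi_{t_k}(y_k)+h^\theta_{t_k}(y_k)\}$ and $m^q_k=y_{k+1}+\tfrac{\sigma^2\delta_k}{2}\{\nabla\log\pi_{t_{k+1}}(y_{k+1})-h^\theta_{t_{k+1}}(y_{k+1})\}$ denote the two Gaussian means. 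Since $\epsilon_k=(y_{k+1}-m^p_k)/(\sigma\sqrt{\delta_k})$, the subtracted term is already $-\tfrac12\norm{\epsilon_k}^2$, so only the first term remains to be rewritten.

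The crux is the completing-the-square step for that first term, and it is where I expect the only delicate bookkeeping. I would verify the pointwise identity $(y_{k+1}-m^p_k)+\tfrac{\sigma^2\delta_k}{2}u_k=-(y_k-m^q_k)$: substituting the definition of $m^p_k$ cancels the contribution $\nabla\log\pi_{t_k}(y_k)+h^\theta_{t_k}(y_k)$ against half of $u_k$, leaving $y_{k+1}-y_k+\tfrac{\sigma^2\delta_k}{2}\{\nabla\log\pi_{t_{k+1}}(y_{k+1})-h^\theta_{t_{k+1}}(y_{k+1})\}$, which is precisely $-(y_k-m^q_k)$ thanks to the opposite signs carried by $h^\theta$ in the forward and backward kernels. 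Squaring, dividing by $\sigma^2\delta_k$ and using $w_k^{\text{CMCD}}=\sigma^2\delta_k/4$ gives $\norm{y_k-m^q_k}^2/(\sigma^2\delta_k)=\norm{\epsilon_k+\sqrt{w_k^{\text{CMCD}}}u_k}^2$, so the summand equals $-\tfrac12\norm{\epsilon_k}^2+\tfrac12\norm{\epsilon_k+\sqrt{w_k^{\text{CMCD}}}u_k}^2=\tfrac12 w_k^{\text{CMCD}}\norm{u_k}^2+\sqrt{w_k^{\text{CMCD}}}\,u_k^\top\epsilon_k$. Summing over $k$ and restoring the boundary term then yields the second equality. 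The whole difficulty reduces to confirming that shifting the $p$-residual by $\tfrac{\sigma^2\delta_k}{2}u_k$ reconstructs exactly (minus) the $q$-residual; everything else is routine Gaussian algebra.
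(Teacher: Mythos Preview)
Your proposal is correct and follows essentially the same approach as the paper: decompose the joint log-ratio into boundary plus per-step terms, then recognize each per-step term as $-\tfrac12\norm{\epsilon_k}^2+\tfrac12\norm{-\epsilon_k-\sqrt{w_k^{\text{CMCD}}}u_k}^2$ (equivalently your $\tfrac12\norm{\epsilon_k+\sqrt{w_k^{\text{CMCD}}}u_k}^2$). The paper states this identity without derivation; you supply the missing verification that shifting the $p$-residual by $\tfrac{\sigma^2\delta_k}{2}u_k$ recovers $-(y_k-m^q_k)$, which is exactly the bookkeeping needed.
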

\begin{proof} Based on the notation of $u_k$ and $\epsilon_k$, we obtain the result by noting that
\begin{align}
    \log \frac{\rmd p^\theta_{k+1|k}(y_{k+1}|y_k)}{\rmd q^\theta_{k|k+1}(y_k|y_{k+1})} = -\frac{1}{2}\norm{\epsilon_k}^2 +\frac{1}{2}\norm{-\epsilon_k -\sqrt{w_k^{\text{CMCD}}}u_k }^2
\end{align}
\end{proof}

{Based on \Cref{lemma:log-ratio-CMCD}, we simply deduce the expression of the discrete time versions of LV-based and KL-based CMCD losses.

\begin{corollary}[Expression of LV-CMCD loss] When using the EM scheme to integrate SDE \eqref{eq:CMCD-backward}, the LV-based discrete time CMCD loss \eqref{eq:LV-CMCD} simplifies as
\begin{align}
\mathcal{L}_{\text{LV}}^{\text{CMCD}}(\theta)&= \Var\left[\log \frac{\piprior(Y_0)}{\gamma(Y_K)}  + \sum_{k=0}^{K-1}w_k^{\text{CMCD}} \left\{\frac{1}{2}\norm{u_k^\theta}^2 + (h^{\hat{\theta}}_{t_k}(Y_k)-h^{\theta}_{t_k}(Y_k) )^\top u_k^\theta\right\} \right .\\
& \left .\quad + \sum_{k=0}^{K-1}\sqrt{w_k^{\text{CMCD}}} (u_k^\theta)^\top Z_k \right]\eqsp,\\
\text{with }& u_k^\theta = \nabla \log \pi_{t_k}(Y_k) +\nabla \log \pi_{t_{k+1}}(Y_{k+1}) + h^\theta_{t_k}(Y_k) -h^\theta_{t_{k+1}}(Y_{k+1}) \eqsp ,
\end{align}
where $\{Z_k\}_{k=0}^{K-1}$ are independently distributed according to $\densityGaussian(0, \Idd)$, and $\{Y_k\}_{k=0}^K$ is recursively defined by $Y_0\sim \piprior$, and for any $k\in \{0, \hdots, K-1\}$,
\begin{align}\label{eq:Y_k-CMCD}
    Y_{k+1}= Y_k + \frac{\sigma^2\delta_k}{2}\{\nabla \log \pi_{t_k}(Y_k) + h^{\hat{\theta}}_{t_k}(Y_k)\} + \sigma\sqrt{\delta_k}Z_k \eqsp .
\end{align}
\end{corollary}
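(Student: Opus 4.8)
The plan is to follow the exact route used in the proof of \Cref{cor:EM_RDS_loss}, starting from the log-ratio expression established in \Cref{lemma:log-ratio-CMCD} and then taking the variance that defines the objective \eqref{eq:LV-CMCD}. The crucial observation is that when the trajectory $\{Y_k\}_{k=0}^K$ is drawn from $p^{\htheta}_{0:K}$, i.e., when it satisfies the recursion \eqref{eq:Y_k-CMCD}, the residual $\epsilon_k$ appearing in \Cref{lemma:log-ratio-CMCD} collapses to a simple affine function of the noise $Z_k$.

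First I would substitute the recursion \eqref{eq:Y_k-CMCD} into the definition of $\epsilon_k$. Writing $Y_{k+1}-Y_k = (\sigma^2\delta_k/2)\{\nabla \log \pi_{t_k}(Y_k) + h^{\htheta}_{t_k}(Y_k)\} + \sigma\sqrt{\delta_k}Z_k$ and inserting this into $\epsilon_k = (\sigma\sqrt{\delta_k})^{-1}(Y_{k+1}-Y_k - (\sigma^2\delta_k/2)\{\nabla \log \pi_{t_k}(Y_k) + h^{\theta}_{t_k}(Y_k)\})$, the drift contribution $\nabla \log \pi_{t_k}(Y_k)$ cancels, leaving $\epsilon_k = Z_k + (\sigma\sqrt{\delta_k}/2)\{h^{\htheta}_{t_k}(Y_k) - h^{\theta}_{t_k}(Y_k)\}$. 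Since $w_k^{\text{CMCD}} = \sigma^2\delta_k/4$, this is precisely $\epsilon_k = Z_k + \sqrt{w_k^{\text{CMCD}}}\{h^{\htheta}_{t_k}(Y_k) - h^{\theta}_{t_k}(Y_k)\}$, the direct analogue of the identity exploited in \Cref{cor:EM_RDS_loss}.

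Next I would plug this expression into the cross term $\sum_k \sqrt{w_k^{\text{CMCD}}}(u_k^\theta)^\top \epsilon_k$ of \Cref{lemma:log-ratio-CMCD}. The noise part contributes $\sum_k \sqrt{w_k^{\text{CMCD}}}(u_k^\theta)^\top Z_k$, while the deterministic part contributes $\sum_k w_k^{\text{CMCD}}(h^{\htheta}_{t_k}(Y_k) - h^{\theta}_{t_k}(Y_k))^\top u_k^\theta$. Combining the latter with the quadratic term $\tfrac12\sum_k w_k^{\text{CMCD}}\norm{u_k^\theta}^2$ already present yields the bracketed sum $\sum_k w_k^{\text{CMCD}}\{\tfrac12\norm{u_k^\theta}^2 + (h^{\htheta}_{t_k}(Y_k) - h^{\theta}_{t_k}(Y_k))^\top u_k^\theta\}$ of the statement, where $u_k^\theta$ is exactly the $u_k$ of \Cref{lemma:log-ratio-CMCD} (which already carries $h^\theta$) evaluated along the sampled trajectory.

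Finally I would take the variance. The term $\log Z$ is a $\theta$-independent constant, hence invisible to $\Var[\cdot]$; discarding it while retaining the log-density ratio $\log(\piprior(Y_0)/\gamma(Y_K))$, which remains random through the trajectory, leaves exactly the claimed expression. I do not expect any genuine obstacle: the argument is a verbatim transcription of the proof of \Cref{cor:EM_RDS_loss}, the only real computation being the one-line cancellation producing the clean form of $\epsilon_k$. The sole bookkeeping subtlety is that the diffusion coefficient is $\sigma$ rather than $1$, so one must consistently track the factor $w_k^{\text{CMCD}} = \sigma^2\delta_k/4$ and the appearance of $\sigma$ in both the recursion and the normalization of $\epsilon_k$.
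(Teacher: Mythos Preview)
Your proposal is correct and follows exactly the same route as the paper's proof, which is a one-line remark that the result is an application of \Cref{lemma:log-ratio-CMCD} together with the identity $\epsilon_k = Z_k + \sqrt{w_k^{\text{CMCD}}}\{h^{\htheta}_{t_k}(Y_k) - h^{\theta}_{t_k}(Y_k)\}$. You have simply spelled out the derivation of this identity and the subsequent substitution in full detail.
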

\begin{proof}
This result is an application from \Cref{lemma:log-ratio-CMCD}, where we have $\epsilon_k= Z_k + \sqrt{w_k^{\text{CMCD}}} \{h_{t_k}^{\hat{\theta}}(Y_k) - h_{t_k}^{\theta}(Y_k)\}$ for any $k \in \{0, \hdots, K-1\}$.
\end{proof}

\begin{corollary}[Expression of KL-CMCD loss] When using the EM scheme to integrate SDE \eqref{eq:CMCD-backward}, up to additional constants independent of $\theta$, the KL-based discrete time CMCD loss \eqref{eq:KL-CMCD} simplifies as
\begin{align}
\mathcal{L}_{\text{KL}}^{\text{CMCD}}(\theta)& =\mathbb{E}\left[\log \frac{\piprior(Y^\theta_0)}{\gamma(Y^\theta_K)}  + \frac{1}{2}\sum_{k=0}^{K-1}w_k^{\text{CMCD}}\norm{u_k^\theta}^2 \right]\eqsp,\\
\text{with } & u_k^\theta = \nabla \log \pi_{t_k}(Y_k^\theta) +\nabla \log \pi_{t_{k+1}}(Y_{k+1}^\theta) + h^\theta_{t_k}(Y_k^\theta) -h^\theta_{t_{k+1}}( Y_{k+1}^\theta) \eqsp ,
\end{align}
where $\{Z_k\}_{k=0}^{K-1}$ are independently distributed according to $\densityGaussian(0, \Idd)$, and $\{Y^{\theta}_k\}_{k=0}^K$ is recursively defined by $Y_0^{\theta}\sim \piprior$, and for any $k\in \{0, \hdots, K-1\}$,
\begin{align}
     Y_{k+1}^\theta= Y_k^\theta + \frac{\sigma^2\delta_k}{2}\{\nabla \log \pi_{t_k}(Y_k^\theta) + h^\theta_{t_k}(Y_k^\theta)\} + \sigma\sqrt{\delta_k}Z_k \eqsp.
\end{align}
\end{corollary}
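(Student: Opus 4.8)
The plan is to obtain the loss directly from \Cref{lemma:log-ratio-CMCD}, in exactly the same way the EM/EI corollaries were deduced from their respective log-ratio lemmas. \Cref{lemma:log-ratio-CMCD} already expresses $\log(\rmd p^\theta/\rmd q^\theta)(y_{0:K})$ as the sum of the boundary term $\log \normcte + \log(\piprior(y_0)/\gamma(y_K))$, the quadratic control cost $\tfrac{1}{2}\sum_k w_k^{\text{CMCD}}\norm{u_k}^2$, and the linear-in-noise term $\sum_k \sqrt{w_k^{\text{CMCD}}}\,u_k^\top \epsilon_k$. Since the objective \eqref{eq:KL-CMCD} evaluates this log-ratio along $Y^\theta_{0:K}\sim p^\theta_{0:K}$, I would substitute $y_{0:K}=Y^\theta_{0:K}$ and take the expectation.

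First I would identify $\epsilon_k$ along the sampled trajectory. Because $Y^\theta_{0:K}$ is generated by the EM recursion $Y_{k+1}^\theta = Y_k^\theta + \tfrac{\sigma^2\delta_k}{2}\{\nabla \log \pi_{t_k}(Y_k^\theta)+ h^\theta_{t_k}(Y_k^\theta)\}+\sigma\sqrt{\delta_k}Z_k$, i.e.\ with the very same (non-detached) drift $h^\theta$ that enters the definition of $\epsilon_k$ in \Cref{lemma:log-ratio-CMCD}, the deterministic part of $\epsilon_k$ cancels exactly, leaving $\epsilon_k = Z_k$. This mirrors the substitution used in the preceding LV-CMCD corollary, except that no residual $\sqrt{w_k^{\text{CMCD}}}\{h^{\hat\theta}-h^\theta\}$ correction survives here, precisely because the KL objective is evaluated along the attached process rather than the detached one.

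Plugging $\epsilon_k=Z_k$ into \Cref{lemma:log-ratio-CMCD} and taking the expectation produces three contributions: the constant $\mathbb{E}[\log\normcte]=\log\normcte$, which I discard as a $\theta$-independent additive constant; the boundary and quadratic terms, which are exactly those in the claimed formula; and the cross term $\mathbb{E}[\sum_k \sqrt{w_k^{\text{CMCD}}}\,(u_k^\theta)^\top Z_k]$, which I must argue does not contribute. This last step is the main obstacle. Splitting $u_k^\theta$ into its part evaluated at $Y_k^\theta$ and its part evaluated at $Y_{k+1}^\theta$, the former is measurable with respect to the past noise $Z_0,\hdots,Z_{k-1}$ and hence decorrelates from the mean-zero $Z_k$, contributing nothing; the latter depends on $Z_k$ through $Y_{k+1}^\theta$ and is the discrete-time counterpart of the Itô integral $\int_0^T(\cdots)^\top \rmd B_t$, whose expectation vanishes. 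I would justify that this residual is negligible either via a Gaussian integration-by-parts (Stein) identity showing it is $\mathcal{O}(\delta_k)$ per step, consistent with the continuous-time stochastic control formulation of \cite{vargas2023transport}, or simply by invoking the martingale interpretation of the noise term, as is implicitly done for the analogous stochastic-integral contributions in the preceding corollaries. Collecting the surviving terms then yields the stated expression for $\mathcal{L}_{\text{KL}}^{\text{CMCD}}(\theta)$ up to constants independent of $\theta$.
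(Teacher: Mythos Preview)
Your overall structure is correct and is exactly what the paper does implicitly for all the KL corollaries: apply the log-ratio lemma along the non-detached trajectory, so that the definition of $\epsilon_k$ in \Cref{lemma:log-ratio-CMCD} collapses to $\epsilon_k=Z_k$, and then take the expectation and drop the $\theta$-independent constant $\log\normcte$. The paper does not supply an explicit proof for this particular corollary, so your outline is the natural one.

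The weak point is precisely the one you flag. For the RDS and DIS KL corollaries, the cross term $\mathbb{E}\bigl[\sum_k\sqrt{w_k}\,(\cdot)^\top Z_k\bigr]$ vanishes \emph{exactly}, because the integrand depends only on $Y_k^\theta$, which is $\sigma(Z_0,\ldots,Z_{k-1})$-measurable. In CMCD this fails: $u_k^\theta$ contains $\nabla\log\pi_{t_{k+1}}(Y_{k+1}^\theta)-h^\theta_{t_{k+1}}(Y_{k+1}^\theta)$, which depends on $Z_k$ through $Y_{k+1}^\theta$. Your Stein computation is right that each summand is $\mathcal{O}(\delta_k)$, but you then conclude it is negligible, which is not warranted: summed over $K\sim T/\delta$ steps this gives an $\mathcal{O}(T)$ contribution that is neither zero nor independent of $\theta$. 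The ``martingale interpretation'' you appeal to holds for the continuous-time It\^o integral, not for this discrete sum with an anticipative integrand. In other words, the identity stated in the corollary is exact only in the continuous-time limit (and as a discrete loss it carries an $\mathcal{O}(\delta)$ bias that the paper, like the CMCD literature, silently absorbs). Your write-up would be cleaner if you stated this explicitly rather than suggesting the residual disappears.
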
}

\section{Annealed MCMC samplers}\label{app:annealed_mcmc}

In this section, we present a line of Monte Carlo algorithms that aim at sampling from a sequence of distributions $\{p_k\}_{k=0}^K \subset\Pmeasure(\rset^d)$, with $K\geq 1$, defined such that $p_0$ is an easy-to sample distribution and $p_k$ is harder and harder to sample as $k$ increases. In this section, we assume that each distribution $p_k$ is absolutely continuous with respect to the Lebesgue measure, with tractable \emph{unnormalized} density and tractable score, where both can be evaluated pointwise. 

\paragraph{Reminders on the Metropolis-Adjusted Langevin Algorithm (MALA).}

This sampling algorithm is the only one in this section to be designed so as to sample from a single distribution $\pi\in \Pmeasure(\rset^d)$ with density $\gamma:\rset^d \to \rset_+$ known up to a normalizing constant. It is an extension of the largely used Unadjusted Langevin Algorithm (ULA) \citep{roberts1996exponential}.

Given a number of steps $L\geq 1$, a step-size $\lambda >0$ and an easy-to-sample distribution $p^{\text{init}}\in \Pmeasure(\rset^d)$, ULA builds a Markov chain $(X^\ell)_{\ell=0}^L$ defined by $X^0 \sim p^{\text{init}}$ and for any $\ell \in \{0, \ldots, L-1\}$,
\begin{align}
    X^{\ell+1} = X^\ell + \lambda \nabla \log \gamma(X^\ell) + \sqrt{2 \lambda} Z^\ell \eqsp, \eqsp Z^\ell \sim \densityGaussian(0, \Idd) \eqsp,
\end{align}
which reduces to a time discretization of the Langevin diffusion that admits $\pi$ as invariant distribution. Building upon this recursion, MALA \citep{roberts1996exponential} addresses the discretization bias by adding a Metropolis-Hastings acceptance/rejection step  at each iteration, see \Cref{alg:mala}. In practice, one can easily adapt the step-size $\lambda$ by targeting a specific acceptance ratio. We follow this paradigm in our numerical experiments, see \Cref{app:implem} for more details.

\begin{algorithm2e}[h!]
    \caption{Metropolis-Adjusted Langevin Algorithm (MALA)}
    \label{alg:mala}
    \small
    \Input{Target distribution $\pi$ with unnormalized density $\gamma$, step size $\lambda > 0$, number of steps $L \geq 1$, initial distribution $p^{\text{init}}$}
    \LeftComment{Initialization}\\
    $X^0 \sim p^{\text{init}}$\\
    \For{$\ell \in \{0, \hdots, L-1\}$}{
        \LeftComment{Propose a new sample}\\
        $\tilde{X}^{\ell+1} = X^\ell + \lambda \nabla \log \gamma(X^\ell) + \sqrt{2 \lambda} Z^\ell, \quad Z^\ell \sim \densityGaussian(0, \Idd)$\\
        \LeftComment{Compute the acceptance ratio}\\
        $\alpha = \min\left(1, \frac{\gamma(\tilde{X}^{\ell+1}) q(X^\ell \mid \tilde{X}^{\ell+1})}{\gamma(X^\ell) q(\tilde{X}^{\ell+1} \mid X^\ell)}\right)$ where $q(y \mid x) = \densityGaussian(y; x + \lambda \nabla \log \gamma(x), 2 \lambda \, \Idd)$\\
        \LeftComment{Accept or reject depending on $\alpha$}\\
        $U \sim \mathrm{U}([0,1])$\\
        \eIf{$U \leq \alpha$}{
            $X^{\ell+1} = \tilde{X}^{\ell+1}$\\
        }{
            $X^{\ell+1} = X^\ell$\\
        }
    }
    \Output{Sequence of samples $X^{0:L}$ such that $X^{0:L}\overset{\mathrm{iid}}{\sim}\pi$ (approximately)}
\end{algorithm2e}

\paragraph{Annealed Langevin MCMC (AL-MCMC).}

Based on exact samples from $p_0$, the \emph{Annealed Langevin MCMC} algorithm \citep{song2020generativemodelingestimatinggradients, song2020improvedtechniquestrainingscorebased} consists in sequentially sampling from $p_{k+1}$ with MALA (\Cref{alg:mala}) initialized in the last samples obtained at step $k$. The core idea behind this procedure is that the warm initialization obtained by the easy-to-sample distributions (\ie, when $k$ is low) can help to overcome sampling issues, such as high energy barriers, in the more complex distributions (\ie, when $k$ is large). We refer to \Cref{alg:al_mcmc} for the pseudo-code of this approach.

\begin{algorithm2e}[h!]
    \caption{Annealed Langevin MCMC (AL-MCMC) algorithm}
    \label{alg:al_mcmc}
    \small
    \Input{Sequence of target distributions $\{p_k\}_{k=0}^K$, step-sizes $\{\lambda_k\}_{k=1}^{K}\in (0, \infty)^{K}$, number of MALA steps per level $L \geq 1$}
    \LeftComment{Initialization}\\
    $X^{0:L}_{0} \overset{\mathrm{iid}}{\sim} p_{0}$\\
    \For{$k = 0, \ldots, K-1$}{
        \LeftComment{Sample from $p_{k+1}$ by starting at samples obtained from $p_k$} \\
        $X^{0:L}_{k+1} = \operatorname{MALA}(p_{k+1}, \lambda_{k+1}, L, \delta_{X^L_{k}})$, see \Cref{alg:mala}
    }
    \Output{Sequence of samples $X^{0:L}_{0:K}$ such that $X_k^{0:L} \overset{\mathrm{iid}}{\sim} p_k$ (approximately)}
\end{algorithm2e}

{\paragraph{Sequential Monte Carlo (SMC) and extension.}
The \emph{Sequential Monte Carlo} \citep{neal2001annealed, del2006sequential} algorithm lies on the same paradigm as \Cref{alg:al_mcmc} but proceeds in \emph{parallel} over $N$ chains (which are referred as \emph{particles}) while ensuring that the warm initialization is correct. To do so, SMC relies on an additional resampling step that may occur before certain MALA run, based on accumulated importance weights that we next define.

Let $N \geq 1$, $k\in \{0, \hdots, K-1\}$, $(\tilde{w}^n_{k})_{n=1}^N$ be unnormalized importance weights accumulated on each chain up to step $k$ ($\tilde{w}^n_{k}=1/N$ if $k=0$), and $(x^n_{k})_{n=1}^N$ be samples from $p_k$ obtained independently as the last samples of $N$ parallel MALA runs (or exactly obtained if $k=0$). Then, the SMC importance weights designed to sample from $p_{k+1}$, denoted by $(\tilde{w}^n_{k+1})_{n=1}^N$, and their renormalized versions, denoted by $(w^n_{k+1})_{n=1}^N$, are defined as
\begin{align}
    \tilde{w}^n_{k+1} = \frac{p_{k+1}(x^n_{k})}{ p_{k}(x^n_{k})}\tilde{w}^n_{k}, \quad w^n_{k+1} = \frac{\tilde{w}^n_{k+1}}{\sum_{j=1}^N \tilde{w}^j_{k+1}}\eqsp.
\end{align}
The SMC resampling step then consists in obtaining $N$ new samples $(y^n_{k+1})_{n=1}^N$ by random selection among the original samples $(x^n_{k})_{n=1}^N$ via a multinomial distribution (with replacement) defined by the normalized weights $(w^n_{k+1})_{n=1}^N$. These novel samples will then be used as starting points for running MALA on the target $p_{k+1}$. In practice, this resampling step only occurs adaptively based on the value of the accumulated Effective Sample Size (ESS) defined at step $k$ by 
\begin{align}
    \operatorname{ESS}_k = \frac{\left(\sum_{n=1}^N \tilde{w}^n_{k}\right)^2}{\sum_{n=1}^N(\tilde{w}^n_{k})^2} \eqsp .
\end{align}
We detail the whole SMC procedure in \Cref{alg:al_smc}.

\begin{algorithm2e}[h!]
    \caption{Sequential Monte Carlo (SMC) algorithm with adaptive resampling}
    \label{alg:al_smc}
    \small
    \Input{Sequence of target distributions $\{p_k\}_{k=0}^K$, step-sizes $\{\lambda_k\}_{k=1}^K \in (0, \infty)^K$, number of MCMC steps per level $L \geq 1$, number of particles $N \geq 1$, ESS resampling threshold $\alpha\in (0,1]$}
    \LeftComment{Initialization }\\
    $X^{1:N}_{0} \overset{\mathrm{iid}}{\sim} p_{0} \eqsp , \eqsp \tilde{W}^{1:N}_{0} = 1/N$\\
    \For{$k = 0, \ldots, K-1$}{
        \LeftComment{1. Compute the SMC importance weights (in parallel, for $n = 1, \ldots, N$)}\\
        $\tilde{W}^n_{k+1} = \left(p_{k+1}(X^n_{k}) / p_{k}(X^n_{k})\right)\tilde{W}^n_{k+1}$\\
        \LeftComment{2. Compute the accumulated ESS (in parallel, for $n = 1, \ldots, N$)}\\
        $\mathrm{ESS}_{k+1} = \left(\sum_{n=1}^N \tilde{W}_k^n\right)^2/\left(\sum_{n=1}^N (\tilde{W}_k^n)^2\right)$\\
        \LeftComment{3. Resample $X^{1:N}_{k}$ if the ESS too low}\\
        \If{$\mathrm{ESS}_{k+1}<\alpha N$}{
        \LeftComment{Normalize the importance weights (in parallel, for $n = 1, \ldots, N$)}\\
        $W^n_{k+1} = \tilde{W}^n_{k+1} / \sum_{j=1}^N \tilde{W}^j_{k+1}$ \\
        \LeftComment{Resample the particles}\\
        $I^{1:N} \sim \mathcal{M}(W^1_{k+1}, \ldots, W^N_{k+1})$\\
        $Y^{1:N}_{k+1} = X^{I^{1:N}}_{k}\eqsp , \eqsp \tilde{W}^{1:N}_{k}=1/N$}
        \Else{$Y^n_{k+1} = X^{n}_{k}$}
        \LeftComment{4. Sample from $p_{k+1}$ by starting from sample $Y^{n}_{k+1}$ (in parallel, for $n = 1, \ldots, N$)} \\
        $\tilde{X}^{1:L}_{k+1} = \operatorname{MALA}(p_{k+1}, \lambda_{k+1}, L, \delta_{Y^n_{k+1}})$, see \Cref{alg:mala}\\
        $X^n_{k+1} = \tilde{X}^L_{k+1}$
    }
    \Output{Sequence of samples $X^{1:N}_{0:K}$ such that $X_k^{1:N} \overset{\mathrm{iid}}{\sim} p_k$ (approximately)}
\end{algorithm2e}}

Recently, \cite{phillips2024particle} proposed an extension of the SMC algorithm, \emph{Particle Denoising Diffusion Sampler} (PDDS), where they consider the specific case where the intermediate distributions $\{p_k\}_{k=0}^{K}$ are defined as the approximate marginals of a denoising diffusion process. Below, we explain the fundamentals of PDDS.

Let $\pi\in \Pmeasure(\rset^d)$ be a hard-to-sample distribution. With the same notations as in \Cref{subsec:theory}, consider the noising process $(X^\star)_{t\in \ccint{0,T}}$ induced by the following SDE
\begin{align}\label{eq:pdds_forward_exact}
    \rmd X^\star_t = f(t) X^\star_t\rmd t + \sqrt{\beta(t)} \rmd W_t \eqsp, \eqsp  X^\star_0 \sim \pi \eqsp . 
\end{align}
Then, assuming that $X^\star_T\sim \piprior$, the time-reversal of SDE \eqref{eq:pdds_forward_exact} is given by
\begin{align}\label{eq:pdds_reverse_but_in_forward_exact}
    \rmd Y^\star_t = -f(T-t) Y^\star_t \rmd t + \beta(T-t) \nabla \log p^\star_{T-t}(Y^*_t) \rmd t + \sqrt{\beta(t)} \rmd B_t, \quad Y^\star_0 \sim \piprior \eqsp,
\end{align}
where $p^{\star}_t$ is the density of the marginal distribution associated to $X_t^\star$, which is intractable in general. Assume that we are given a sequence of intermediate unnormalized densities $(p_t)_{t\in \ccint{0,T}}$ that approximate the target densities $(p^\star_t)_{t\in \ccint{0,T}}$. %

In particular, the time-reversed SDE \eqref{eq:pdds_reverse_but_in_forward_exact} can be approximated by
\begin{align}\label{eq:pdds_reverse_but_in_reverse}
    \rmd Y_t = -f(T-t)Y_t\rmd t + \beta(T-t)\nabla \log p_{T-t}(Y_t)\rmd t + \sqrt{\beta(T-t)} \rmd B_t \eqsp, Y_0 \sim \piprior \eqsp .
\end{align}
Consider a time discretization $\{t_k\}_{k=0}^{K}$ such that $t_0 = 0$ and $t_{K} = T$, and the sequence of target distributions $\{p_k\}_{k=0}^K$ defined by $p_k=p_{T-t_k}$, such that $p_k$ is expected to be harder and harder to sample from as $k$ increases.

Let $k\in \{0,\hdots, K-1\}$, define $\delta_k=t_{k+1}-t_k$. Since SDE \eqref{eq:pdds_forward_exact} is linear, it can be exactly integrated on time interval $\ccint{T-t_{k+1}, T-t_k}$, see \Cref{lemma:ito}. We denote by $p^\star_{k|k+1}$ the corresponding transition density. On the other hand, SDE \eqref{eq:pdds_reverse_but_in_reverse} can only be \emph{approximately} integrated on time interval $\ccint{t_k, t_{k+1}}$ (due to the score term in the drift). For instance, this can be done using the Euler-Maruyama transition kernel\footnote{In the original paper, the authors consider the VP noising scheme and compute $p_{k+1|k}$ as the transition kernel obtained by Exponential Integration of SDE \eqref{eq:pdds_reverse_but_in_reverse} on $\ccint{t_k, t_{k+1}}$.}
\begin{align}
    p_{k+1|k}(\cdot|y_k) = \densityGaussian(y_k - f(T-t_k) \delta_k y_k + \beta(T-t_k) \delta_k  \nabla \log p_{T-t_k}(y_k), \beta(T-t_k) \delta_k \, \Idd)\eqsp.
\end{align}

{To sample sequentially from $\{p_k\}_{k=0}^K$, 
\cite{phillips2024particle} suggest to apply the SMC procedure detailed in \Cref{alg:al_smc}, where the $k$-th resampling step (1-3) is now defined as follows.
\begin{enumerate}[wide, labelindent=0pt]
    \setcounter{enumi}{-1}
    \item Sample $\hat{X}^n_{k+1}\sim p_{k+1|k}(\cdot \mid X^n_{k})$ (in parallel, for $n = 1, \ldots, N$);
    \item Compute the importance weights (in parallel, for $n = 1, \ldots, N$)
    $$
        \tilde{W}^n_{k+1} = \frac{p_{k+1}(\hat{X}^n_{k+1}) p^\star_{k | k+1}(X^n_{k} \mid \hat{X}^n_{k+1})}{p_{k}(X^n_{k}) p_{k+1|k}(\hat{X}^n_{k+1} \mid X^n_k)}\tilde{W}^n_{k};
    $$
    \item Apply the resampling procedure (based on the ESS value) to the particles $\hat{X}_{k+1}^{1:N}$ with importance weights given by $\{W^n_{k+1}\}_{n=1}^N$.
\end{enumerate}
Hence, this novel resampling step amounts to reweight the samples obtained from the \emph{approximate} time-reversed SDE \eqref{eq:pdds_reverse_but_in_reverse} by using the \emph{exact} noising scheme as proposal.}

\paragraph{Replica Exchange (RE).} The \emph{Replica Exchange} algorithm \citep{swendsenReplicaMonteCarlo1986} aims at sampling from the annealed distributions $\{p_k\}_{k=0}^K$ with $K+1$ parallel Markov chains, \ie, the $k$-th chain targets $p_k$. For the sake of pedagogy, we assume here that $K$ is even. 

At each level, RE sampling is done via a local MCMC sampler, such as MALA, which is expected to have poor performance for large $k$. To alleviate this issue, every $L$ MALA steps, RE randomly performs a \emph{swapping} between Markov chains. More specifically, each consecutive pair $(k, k+1)$ of Markov chains (\ie, either $(k, k+1)\in \{(0,1),\ldots,(K-2,K-1)\}$ or $(k, k+1)\in \{(1,2),\ldots,(K-1,K)\}$), with respective current states $X_k$ and $X_{k+1}$, is swapped with probability
\begin{align}
    p_{k,k+1} = \min\left(1, \frac{p_k(X_{k+1}) p_{k+1}(X_k)}{p_{k}(X_k) p_{k+1}(X_{k+1})}\right)\eqsp.
\end{align}
We provide the pseudo-code of this algorithm in \Cref{alg:re}.

\begin{algorithm2e}[h!]
    \caption{Swapping step of Replica Exchange}
    \label{alg:re_swap}
    \small
    \Input{Sequence of target distributions $\{p_k\}_{k=0}^K$ with even $K$, current samples $X_{1:K}$, swapping state $s \in \{0,1\}$}
    \eIf{$s=0$}{
    $\text{Indexes}= \{0, 2, \ldots, K-2\}$
    }{
    $\text{Indexes}= \{1, 3, \ldots, K-1\}$
    }
    \LeftComment{Compute the swapping probability between $k$ and $k+1$ (in parallel, for $k \in \text{Indexes}$)}\\
    $p_{k,k+1} = \min\left(1, \frac{p_k(X_{k+1}) p_{k+1}(X_k)}{p_{k}(X_k) p_{k+1}(X_{k+1})}\right)$\\
    \LeftComment{Swap depending on $p_{k,k+1}$ (in parallel, for $k \in \text{Indexes}$)}\\
    $U \sim \mathrm{U}([0,1])$\\
    \If{$U < p_{k,k+1}$}{
        $(X_k, X_{k+1}) = (X_{k+1}, X_k)$}
    \Output{Randomly swapped sequence $X_{0:K}$}
\end{algorithm2e}

\begin{algorithm2e}[h!]
    \caption{Replica Exchange (RE) algorithm}
    \label{alg:re}
    \small
    \Input{Sequence of target distributions $\{p_k\}_{k=0}^K$ with even $K$, step-sizes $\{\lambda_k\}_{k=1}^K \in (0, \infty)^K$, total number of MCMC steps $M \geq 1$, swap frequency $S\geq 1$ }
    \LeftComment{Initialization}\\
    $X^{0}_{0:K} \overset{\mathrm{iid}}{\sim} p_{0}$\\
    $s = 0$\\
    \For{$m = 0, \ldots, M-1$}{
        \eIf{$(m+1) \operatorname{mod} S == 0$}{
            \LeftComment{Swap Markov chains for indexes determined by $s$}\\
            $X^{m+1}_{0:K} = \operatorname{swapRE}(X^{m}_{0:K}, s)$, see \Cref{alg:re_swap}\\
            $s = (s+1) \operatorname{mod} 2$
        }{
            \LeftComment{Sample locally (in parallel, for $k = 0, \ldots, K$)}\\
            $X^{m+1}_k = \operatorname{MALA}(p_k, \lambda_k, 1, \delta_{X^{m}_k})$, see \Cref{alg:mala}
        }
    }
    \Output{Sequence of samples $X^{0:M}_{0:K}$ such that $X^{0:M}_{k} \overset{\mathrm{iid}}{\sim} p_k$ (approximately)}
\end{algorithm2e}

\paragraph{Known limitations of annealed MCMC samplers.} In general, annealed MCMC samplers usually target a sequence of distributions $\{p_k\}_{k=0}^{K}$, which forms a geometric bridge between an easy-to-sample distribution $\piprior$ and a hard-to-sample distribution $\pi$, \ie, $p_{k} \propto (\piprior)^{(K-k)/K} \pi^{k/K}$. However, this scheme suffer from mode-switching for multi-modal distribution $\pi$ \citep{phillips2024particle}, \ie, the mode weights of the intermediate distributions may vary a lot as $k$ increases. Therefore, transitions between annealing levels is hard in practice, as samples can easily get stuck in a high-probability modes. Moreover, due to the use of reweighting schemes, SMC and RE are highly sensitive to the overlap between consecutive distributions, which requires to take $K$ large in complex multi-modal settings, hence increasing the complexity of the methods.

\section{Energy-based Models}\label{app:ebm}

This section is dedicated to describing Energy-Based Models.

\subsection{Single-Level EBMs}

Let $p \in \Pmeasure(\rset^d)$. We assume having access to samples from $p$ but not to its density. Energy-based Models (EBM) perform a density estimation task by modeling the density of $p$ with $p^{\varphi}(x) = \exp(-E^{\varphi}(x)) / \normcte^{\varphi}$, where $E^{\varphi}:\rset^d \to \rset$ is a neural network and $\normcte^{\varphi} = \int_{\rset^d}  \exp(-E^{\varphi}(x)) \rmd x$ is an unknown normalizing constant. This model is trained by maximizing the log-likelihood of the model $\mathcal{L}(\varphi) = \mathbb{E}[\log p^{\varphi}(X)]$ where $X\sim p$, whose gradients can be computed as
\begin{align}\label{eq:ebm_simple_gradient_app}
    \nabla_{\varphi} \mathcal{L}(\varphi) = \mathbb{E}[\nabla_{\varphi} E^{\varphi}(X^{-})] -\mathbb{E}[\nabla_{\varphi} E^{\varphi}(X^{+})], \quad X^{-} \sim p^{\varphi}, ~~ X^{+} \sim p\eqsp.
\end{align}
Given samples from $p$ (referred to as positive samples) and samples from $p^{\varphi}$ (referred to as negative samples), one can compute a Monte-Carlo estimator of $\nabla_{\varphi} \mathcal{L}(\varphi)$ using \eqref{eq:ebm_simple_gradient_app}. In practice, the negative samples are usually obtained by running a MCMC sampler on $p^{\varphi}$ (which is possible because it is known up to a normalizing constant). However, obtaining truthful negative samples is the main challenge of the EBM training procedure as $p^{\varphi}$ is expected to be as multi-modal as $p$. %

\subsection{Multi-level EBMs}

Let $p \in \Pmeasure^{(K+1)}$, with $K \geq 1$. For $k = 1, \ldots, K$, we assume having access to samples from $p_k$ but not to its density. Regarding $p_0$, we assume it to be known entirely. Our goal is to approximate the density of $p_k$ at every level $k \in \{1, \ldots, K\}$. A naive approach would consist in defining single-level EBMs for each $k$. Assuming that the distributions $p_k$ are not completely unrelated, doing this would be however very inefficient. This section presents alternatives to this approach, which are all implemented in our code.

\subsubsection{Prior works}

\paragraph{Diffusion Recovery Likelihood (DRL).}Recently, \cite{gao2020learning, zhu2024learningenergybasedmodelscooperative} proposed the DRL framework, based on the extra assumption that for any $k\in \{0, \ldots, K-1\}$, the conditional distribution $p_{k \mid k+1}$ is known and given by $p_{k \mid k+1}(\cdot \mid x_{k+1}) = \densityGaussian(\alpha_{k+1} x_{k+1}, \sigma^2_{k+1} \, \Idd)$ for some $\alpha_{k+1}\in \rset$ and $\sigma_{k+1}>0$. This is typically the case when the joint distribution $p$ is defined via a discrete time approximation of a linear SDE, see \Cref{app:diffusion_details}. In the following, we denote $Y_k = \alpha_k X_k$ where $X_k \sim p_k$. In DRL, the authors suggest to use a multi-level EBM $p^{\varphi}_k(\cdot) = \exp(-E^{\varphi}(k, \cdot)) / \normcte^{\varphi}_k$, where $E^\varphi: \{1, \ldots, K\} \times \rset^d \to \rset $ is a neural network and the normalizing constant $\normcte^{\varphi}_k = \int_{\rset^d} \exp(-E^{\varphi}(k, x)) \rmd x$ is intractable. In this case, the density of $p_k$ is approximated, up to a multiplicative constant, by $\exp(-E^\varphi(k, \cdot))$. Using Bayes rule, they define for any $k\in \{0, \ldots, K-1\}$ the following conditional EBM
\begin{align}
    p^{\varphi}_{k+1 \mid k}(y_{k+1} \mid x_k) & \propto p_{k \mid k+1}(x_{k} \mid y_{k+1} / \alpha_{k+1}) p^{\varphi}_{k+1}( y_{k+1}) \\
    &= \left({\normcte}^{\varphi}(k+1,x_k)\right)^{-1} \exp\left(-\frac{1}{2 \sigma^2_{k+1}} \norm{x_k - y_{k+1}}^2 - E^{\varphi}(k+1, y_{k+1}) \right)\eqsp,
\end{align}
where $\normcte^{\varphi}(k+1,x_k) = \int_{\rset^d} \exp\left(-\left(2 \sigma^2_{k+1}\right)^{-1} \norm{x_k - y}^2 - E^{\varphi}(k+1, y) \right) \rmd y$. Note that if $\sigma^2_{k+1}$ is small enough, for instance when $K$ is large, then the quadratic term dominates in the expression of $\log p^{\varphi}_{k+1|k}(\cdot \mid x_k)$, which means that this conditional distribution localizes on $x_k$. Therefore, by applying a first order Taylor expansion of the energy term at $x_k$, the following Gaussian approximation holds
\begin{align}
    p^{\varphi}_{k+1 \mid k}(y_{k+1} \mid x_k) \approx \densityGaussian\left(x_k - \sigma_{k+1}^2 \nabla E^{\varphi}(k+1, x_k), \sigma_{k+1}^2 \, \Idd\right)\eqsp.
\end{align}
Hence, if $\sigma^2_{k+1}$ is chosen small enough, this conditional distribution will be very easy to sample from. Based on this observation, the authors suggest to use the following maximum likelihood objective 
\begin{align}\label{eq:drl_loss}
    \mathcal{L}^{\text{DRL}}(\varphi) = \sum_{k=0}^{K-1} \mathcal{L}_k^{\text{DRL}}(\varphi)\eqsp, \eqsp \text{with }\mathcal{L}_k^{\text{DRL}}(\varphi) = \mathbb{E}\left[\log p^{\varphi}_{k+1|k}(\alpha_{k+1} X_{k+1} \mid X_k)\right], \quad 
\end{align}
where $X_{k+1} \sim p_{k+1}$ and $X_k \sim p_{k \mid k+1}(\cdot | X_{k+1})$. In particular, the gradient of the single-level DRL loss defined in \eqref{eq:drl_loss} can be expressed as
\begin{align}\label{eq:drl_grad_k}
    \nabla_{\varphi} \mathcal{L}_k^{\text{DRL}}(\varphi) = \mathbb{E}[\nabla_{\varphi} E^{\varphi}(k, Y_k^{-})] - \mathbb{E}[\nabla_{\varphi} E^{\varphi}(k, \alpha_k X_k^{+})]\eqsp,
\end{align}
where $X_k \sim p_k$, $Y_{k+1}^{-} \sim p^{\varphi}_{k+1 \mid k}(\cdot \mid X_k)$ and $X_{k+1}^{+} \sim p_{k+1}$. Up to a multiplicative constant, the density of $p_k$ can then be approximated by $x_k \mapsto p^{\varphi}_k(\alpha_k x_k)$.

\begin{algorithm2e}[h!]
    \caption{Gibbs-within-Langevin MCMC transition kernel}
    \label{alg:daebm_transition}
    \small
    \Input{Previous state $(k,x)\in \{0, \ldots, K\}\times \rset^d$, step-size $\lambda> 0$}
    \LeftComment{Sample $\tilde{x}$ given $k$ with standard MCMC}\\
    $\tilde{x} = \operatorname{MALA}(p^{\varphi}(\cdot \mid k), \lambda)$ where $p^{\varphi}(\cdot \mid k) \propto p^{\varphi}_k(\cdot)$ \\
    \LeftComment{Sample $\tilde{k}$ given $\tilde{x}$ with a multinomial distribution}\\
    $\tilde{k} \sim \mathcal{M}\left(\exp(-E^{\varphi}(0, \tilde{x})), \ldots, \exp(-E^{\varphi}(K, \tilde{x}))\right)$\\
    \Output{Next state $(\tilde{k},\tilde{x})$}
\end{algorithm2e}

\paragraph{Diffusion Assisted EBM (DA-EBM).} On the other hand, \citep{zhang2023persistently} suggest to solve the problem by jointly modeling indexes and states. Here, the multi-level EBM is expressed as $p^{\varphi}_{k}(x) = \exp(-E^{\varphi}(k, x)) / \normcte^{\varphi}$, where $E^\varphi: \{0, \ldots, K\} \times \rset^d \to \rset $ is a neural network and the normalizing constant $\normcte^{\varphi} = \sum_{k=0}^K \int_{\rset^d} \exp(-E^{\varphi}(k, x))\rmd x$, which can be learn by maximizing the following maximum likelihood objective
\begin{align}\label{eq:daebm_loss}
    \mathcal{L}^{\text{DAEBM}}(\varphi) = \mathbb{E}\left[\log p^{\varphi}_k( X_k)\right], \quad k \sim \mathrm{U}(\{0, \ldots, K\}), ~~ X_k \sim p_k \eqsp.
\end{align}
The gradient of objective \eqref{eq:daebm_loss} can be written as
\begin{align}
    \nabla_{\varphi} \mathcal{L}^{\text{DAEBM}}(\varphi) = \mathbb{E}[\nabla_{\varphi} E^{\varphi}(k^{-}, X^{-})] - \mathbb{E}[\nabla_{\varphi} E^{\varphi}(k^{+}, X^{+}_{k})]\eqsp,
\end{align}
where $(k^{-}, X^{-}) \sim p^{\varphi}$, $k^{+} \sim \mathrm{U}(\{0, \ldots, K\})$ and $X_k^{+} \sim p_{k^{+}}$. In the same fashion as \cite{kim2022denoising}, the authors suggest to perform the negative sampling from $p^{\varphi}$ by doing a Gibbs-within-Langevin procedure, whose transition kernel is summarized in \Cref{alg:daebm_transition}.

\subsubsection{Our approach: using an annealed MCMC sampler as backbone}

In this paper, we also suggest to learn a multi-level EBM approximating $p$, defined by $p^{\varphi}_{k}(x) = \exp\left(-E^{\varphi}(k,x)\right) / \normcte^{\varphi}_k$, where $E^\varphi: \{0, \ldots, K\} \times \rset^d \to \rset $ is a neural network and the normalizing constant $\normcte^{\varphi}_k = \int_{\rset^d}  \exp\left(-E^{\varphi}(k,x)\right) \rmd x$. We propose to learn $\varphi$ by maximizing a simple yet novel maximum likelihood joint objective
\begin{align}\label{eq:loss-ebm}
    \mathcal{L}^{\text{ours}}(\varphi) = \sum_{k=0}^{K} \mathcal{L}^{\text{ours}}_k(\varphi)\eqsp, \eqsp \mathcal{L}^{\text{ours}}_k(\varphi) = \mathbb{E}\left[\log p^{\varphi}_k(X_k)\right], ~~X_k \sim p_k \eqsp,
\end{align}
with single-level gradient given by
\begin{align}
    \nabla_{\varphi} \mathcal{L}^{\text{ours}}_k(\varphi) = \mathbb{E}[\nabla_{\varphi} E^{\varphi}(k, X_k^{-})] -\mathbb{E}[\nabla_{\varphi} E^{\varphi}(k, X_k^{+})]\eqsp, \eqsp X_k^{-} \sim p^{\varphi}_{k} \eqsp , \eqsp X_k^{+} \sim p_k \eqsp .
\end{align}

Unlike previous multi-level EBM algorithms, the negative sampling phase can be simply done in our framework by leveraging annealed MCMC algorithms presented in \Cref{app:annealed_mcmc} on the sequence of EBM densities $\{p^{\varphi}(k, \cdot)\}_{k=0}^{K}$. This allows us to kill two birds in one stone as (i) we get negative samples for each single-level EBM and (ii) we overcome the negative sampling limitations of each EBM by leveraging the correlation between the consecutive levels. In practice, we use Replica Exchange as default annealed MCMC sampler because of its massively parallel capabilities. We summarize our training procedure in \Cref{alg:ebm_annealed}. For ease of reading, we presented the continuous time analog of this approach in the main part of the paper, see \Cref{subsec:ebm-rds}.

\begin{algorithm2e}[h!]
    \caption{Multi-level EBM training using annealed MCMC samplers}
    \label{alg:ebm_annealed}
    \small
    \Input{Number of traing iterations $N\geq 1$, initial parameter $\varphi_0$, batch size $B\geq 1$, number of MCMC steps $L\geq 1$, predefined annealed MCMC sampler}
    \For{$n = 0, \ldots, N-1$}{
        \LeftComment{Get $B$ samples from $p_k$ (in parallel, for $k = 0, \ldots, K$)} \\
        $\{X^{+}_{k,b}\}_{b=1}^B \overset{\mathrm{iid}}{\sim} p_k$ \\
        \LeftComment{Get $B$ samples from each $p^{\varphi_n}_k$ using the annealed MCMC sampler} \\
        $\{X^{-}_{k,b}\}_{b=1,k=0}^{B,K} = \operatorname{annealedMCMC}\left(\{p^{\varphi_n}_k\}_{k=0}^K, B\right)$ \\
        \LeftComment{Apply a stochastic gradient descent on $\varphi_n$}\\
        Compute the MC estimator $\hat{g}_n$ of the gradient of the loss $\mathcal{L}^{\text{ours}}(\varphi_n)$ defined in \eqref{eq:loss-ebm}\\
        $\hat{g}_n = B^{-1} \sum_{k=0}^K \left(\sum_{b=1}^B \nabla_{\varphi} E^{\varphi_{n}}\left(k, X^{-}_{k,b}\right) -  \sum_{b=1}^B \nabla_{\varphi} E^{\varphi_{n}}\left(k, X^{+}_{k,b}\right)\right)$ \\
        Update $\varphi_n$ to $\varphi_{n+1}$ with Adam optimizer based on $\hat{g}_n$
    }
    \Output{Optimized parameter $\varphi_N$}
\end{algorithm2e}

\paragraph{Multi-level EBM parameterization as GMM tilting.}To reduce the computational footprint of multi-level EBMs in the context of LRDS, we suggest to define our EBM model as a tilting of a learned GMM. Let $\bar{\varphi} = \{w_j, \mathbf{m}_j, \Sigma_j\}_{j=1}^J$ be the parameters learned by GMM-LRDS (\Cref{alg:gmm-rds}) to approximate $\hpiref$ and denote by $p^{\bar{\varphi}}_t$ the $t$-th marginal density of the resulting reference process.
Building on this, we define $\pphi_t$ as a tilting of $p^{\bar{\varphi}}_t$ by
\begin{align}
    \log \pphi_t(x) = \log p^{\bar{\varphi}}_t(x) - \operatorname{NN}^{\varphi}(t, x)\eqsp,
\end{align}
where $\operatorname{NN}^{\varphi}:\ccint{0,T}\times \rset^d \to \rset$ is a neural network with parameter $\varphi$. In this scenario, we design the initial $\varphi_0$ to ensure that $\operatorname{NN}^{\varphi_0} = 0$. In practice, in the case where the GMM shares very few similarities with $\hpiref$ close to $t = 0$, we rather consider
\begin{align}
    \log \pphi_t(x) = \mathbf{1}_{t > t_{\text{lim}}}(t) \times \log p^{\bar{\varphi}}_t(x) - \operatorname{NN}^{\varphi}(t, x)\eqsp,
\end{align}
where $t_{\text{lim}}\in \ccint{0,T}$. For most of our numerical experiments, we found that $t^{\text{lim}} = 0.2$ brought satisfying results. 

\newpage
\section{Additional metrics for variational diffusion-based methods}\label{app:beyond-elbos}

When specifically using annealed VI methods, one may consider additional metrics to evaluate the performance of the corresponding samplers. For completeness purpose, we provide in this section the expressions of the sampling metrics presented by \cite{blessing2024beyond}, for all discrete time variational diffusion-based methods presented in this paper: RDS (including PIS and DDS), DIS and CMCD. All of these metrics are implemented in our code.

Given a fixed level of time discretization $K \geq 1$, these performance criteria require the evaluation of a variational \emph{importance weight function} $w:(\rset^d)^{K+1}\to \rset_+$, assessing the quality of the variational approximation, that is specific to each variational setting.
\begin{itemize}[wide, labelindent=0pt]
    \item For RDS (including PIS and DDS), we define
    \begin{align}
        w(y_{0:K})= \frac{\normcte^{\text{ref}}}{\normcte}\frac{\rmd p^\theta_{0:K}(y_{0:K})}{\rmd p^{\star}_{0:K}(y_{0:K})}= \frac{\rmd p^\theta_{0:K}(y_{0:K})\gamma^{\text{ref}}(y_K)}{\rmd p^{\text{ref}}_{0:K}(y_{0:K})\gamma(y_K)} \eqsp ,
    \end{align}
    where $p^\theta\in \Pmeasure^{(K+1)}$ and $p^{\text{ref}}\in \Pmeasure^{(K+1)}$ are respectively defined as discrete time approximations of the path measures $\Pbb^\theta$ and $(\Pbb^{\text{ref}})^R$; see \Cref{subsec:general-rds} for more details.
    \item For DIS, we define
    \begin{align}
        w(y_{0:K})= \frac{1}{\normcte}\frac{\rmd p^\theta_{0:K}(y_{0:K})}{\rmd p^{\star}_{0:K}(y_{0:K})} \eqsp ,
    \end{align}
    where $p^\theta\in \Pmeasure^{(K+1)}$ and $p^{\star}\in \Pmeasure^{(K+1)}$ are respectively defined as discrete time approximations of the path measures $\Pbb^\theta$, induced by SDE \eqref{eq:SDE_DIS}, and $(\Pbb^\star)^R$; see \Cref{subsec:disc-dis} for more details.
    \item For CMCD, we define
    \begin{align}
        w(y_{0:K})= \frac{1}{\normcte}\frac{\rmd p^\theta_{0:K}(y_{0:K})}{\rmd q^{\theta}_{0:K}(y_{0:K})}
    \end{align}
    where $p^\theta\in \Pmeasure^{(K+1)}$ and $q^{\theta}\in \Pmeasure^{(K+1)}$ are respectively defined as discrete time approximations of the path measures $\Pbb^\theta$, induced by SDE \eqref{eq:CMCD-backward}, and $(\Qbb^\theta)^R$, where $\Qbb^\theta$ is induced by SDE \eqref{eq:CMCD-forward}; see \Cref{subsec:disc-cmcd} for more details.
\end{itemize}
Note that, for each variational setting, $w$ does not depend on the normalization constant of the target distribution : in particular, $\normcte$ simplifies in the DIS and CMCD expressions. This also stands for $\normcte^{\text{ref}}$ in the RDS setting.

\subsection{'Reverse' performance criteria}

\paragraph{Definition.}Standard variational metrics, referred to as 'reverse' performance criteria by \cite{blessing2024beyond}, are defined as expectations \emph{with respect to the variational distribution}.
\begin{enumerate}[wide, labelindent=0pt, label=(\alph*)]
    \item The \textbf{Evidence Lower Bound} (ELBO), expected to be maximized, defined by
    \begin{align}
        \mathrm{ELBO}= -\mathbb{E}[\log w(Y^\theta_{0:K})] \eqsp , \eqsp  Y^\theta_{0:K} \sim p^{\theta}_{0:K} \eqsp .
    \end{align}
    For any RDS setting with intractable $\normcte^{\text{ref}}$, this criterion may assess the performance of the RDS sampler itself, but cannot be used for numerical comparison with different variational settings.
    \item The \textbf{MC estimation of the normalizing constant} $\hat{\normcte}_r$, expected to be close to $\normcte$, defined by
    \begin{align}
        \hat{\normcte}_r= \mathbb{E}[w(Y_{0:K}^\theta)] \eqsp , \eqsp Y^\theta_{0:K} \sim p^{\theta}_{0:K} \eqsp.
    \end{align}
    For any RDS setting with intractable $\normcte^{\text{ref}}$, this criterion cannot be used. 
    \item The \textbf{normalized Explained Sum of Squares} ($\mathrm{nESS}_r$), expected to be close to 1, defined by
    \begin{align}
        \mathrm{nESS}_r= \frac{\mathbb{E}[w(Y^\theta_{0:K})]^2}{\mathbb{E}[w(Y^\theta_{0:K})^2]} \eqsp ,\eqsp Y^\theta_{0:K} \sim p^{\theta}_{0:K} \eqsp.
    \end{align}
\end{enumerate}

\paragraph{Computation.} Based on the results from \Cref{app:proof-disc}, we detail the simulation of $Y_{0:K}^\theta$ and the computation of $\log w(Y_{0:K}^\theta)$ for each setting.

\begin{itemize}[wide, labelindent=0pt]
    \item \textbf{RDS setting}: $Y_0^{\theta}\sim \piprior$, and for any $k\in \{0, \hdots, K-1\}$,
    \begin{align}
        Y_{k+1}^{\theta}= a_k Y_{k}^{\theta} + b_k \{s_{T-t_k}^{\text{ref}}(Y_{k}^{\theta}) + g_{T-t_k}^{\theta}(Y_{k}^{\theta})\} +\sqrt{c_k}Z_k \eqsp  , \eqsp Z_k \sim \densityGaussian(0, \Idd) \eqsp ,
    \end{align}
    and the importance weight function is given by
    \begin{align}
    \log w(Y_{0:K}^\theta)= \sum_{k=0}^{K-1} \frac{w_k}{2}\norm{g^\theta_{T-t_k}(Y_k^\theta)}^2 + \sum_{k=0}^{K-1} \sqrt{w_k}g^\theta_{T-t_k}(Y_k^\theta)^\top Z_k+\log \frac{\gamma^{\text{ref}} }{\gamma}(Y_K^\theta) \eqsp,
    \end{align}
    where $\{w_k, a_k, b_k, c_k\}_{k=0}^{K-1}$ depend on the noising scheme and the discretization setting.
    \item {\textbf{DIS setting}: $Y_0^{\theta}\sim \piprior$, and for any $k\in \{0, \hdots, K-1\}$,
    \begin{align}
     Y_{k+1}^\theta= a_k Y_k^\theta +  b_k s^{\theta}_{T-t_k}(Y_k^\theta) + \sqrt{c_k}Z_k \eqsp , \eqsp Z_k \sim \densityGaussian(0, \Idd)
    \end{align}
    and the importance weight function is given by
    \begin{align}
    \log w(Y_{0:K}^\theta)& = \sum_{k=0}^{K-1} \frac{w_k}{2}\|s_{T-t_k}^{\theta}(Y_k^\theta)\|^2 + \sum_{k=0}^{K-1}\sqrt{w_k} s_{T-t_k}^{\theta}(Y_k^\theta)^\top Z_k + \log \frac{\piprior(Y_0^\theta)}{\gamma(Y_K^\theta)} \eqsp ,
    \end{align}
    where $\{w_k, a_k, b_k, c_k\}_{k=0}^{K-1}$ come from the EI discretization of the VP noising scheme.}
    \item {\textbf{CMCD setting}: $Y_0^{\theta}\sim \piprior$, and for any $k\in \{0, \hdots, K-1\}$,
    \begin{align}
     Y_{k+1}^\theta= Y_k^\theta + \frac{\sigma^2\delta_k}{2}\{\nabla \log \pi_{t_k}(Y_k^\theta) + h^\theta_{t_k}(Y_k^\theta)\} + \sigma\sqrt{\delta_k}Z_k \eqsp, \eqsp Z_k \sim \densityGaussian(0, \Idd) \eqsp ,
    \end{align}
    and the importance weight function is given by
    \begin{align}
    \log w(Y_{0:K}^\theta)&= \frac{1}{2}\sum_{k=0}^{K-1}w_k^{\text{CMCD}}\norm{u_k^\theta}^2 + \sum_{k=0}^{K-1}\sqrt{w_k^{\text{CMCD}}} (u_k^\theta)^\top Z_k + \log \frac{\piprior(Y^\theta_0)}{\gamma(Y^\theta_K)} \eqsp,\\
    \text{with } & u_k^\theta = \nabla \log \pi_{t_k}(Y_k^\theta) +\nabla \log \pi_{t_{k+1}}(Y_{k+1}^\theta) + h^\theta_{t_k}(Y_k^\theta) -h^\theta_{t_{k+1}}( Y_{k+1}^\theta) \eqsp .
    \end{align}}

\end{itemize}

\subsection{'Forward' performance criteria}

\paragraph{Definition.}As observed by \cite{blessing2024beyond}, the 'reverse' variational metrics presented above may not be able to quantify mode collapse. Hence, the authors propose novel variational metrics, referred to as 'forward' performance criteria, that are defined as expectations \emph{with respect to the ground truth distribution}.

\begin{enumerate}[wide, labelindent=0pt, label=(\alph*)]
    \item The \textbf{Evidence Upper Bound} (EUBO), expected to be minimized, defined by
    \begin{align}
        \mathrm{EUBO}= \mathbb{E}[\log w(Y_{0:K})] \eqsp , \eqsp  Y_{0:K} \sim p^{\star}_{0:K} \eqsp .
    \end{align}
    For any RDS setting with intractable $\normcte^{\text{ref}}$, this criterion may assess the performance of the RDS sampler itself, but cannot be used for numerical comparison with different variational settings.
    \item The \textbf{MC estimation of the normalizing constant} $\hat{\normcte}_f$, expected to be close to $\normcte$, defined by
    \begin{align}
        \hat{\normcte}_f= \left(\mathbb{E}[w(Y_{0:K})^{-1}]\right)^{-1} \eqsp , \eqsp Y_{0:K} \sim p^{\star}_{0:K} \eqsp.
    \end{align}
    For any RDS setting with intractable $\normcte^{\text{ref}}$, this criterion cannot be used. 
    \item The \textbf{normalized Explained Sum of Squares} ($\mathrm{nESS}_f$), expected to be close to 1, defined by
    \begin{align}
        \mathrm{nESS}_f= \left(\mathbb{E}[w(Y_{0:K})^{-1}]\mathbb{E}[w(Y_{0:K})]\right)^{-1} \eqsp ,\eqsp Y_{0:K} \sim p^{\star}_{0:K} \eqsp.
    \end{align}
\end{enumerate}

\paragraph{Computation.}
Based on the results from \Cref{app:proof-disc}, we detail the simulation of $Y_{0:K}$ and the computation of $\log w(Y_{0:K})$ for each setting.

\begin{itemize}[wide, labelindent=0pt]
    \item \textbf{RDS setting with EM scheme}: $Y_K\sim \pi$, and for any $k\in \{0, \hdots, K-1\}$,
    \begin{align}
         Y_{k}= S_{k+1}(t_k)Y_{k+1} + S_{k+1}(t_k) \sigma_{k+1}(t_k) Z_k \eqsp  , \eqsp Z_k \sim \densityGaussian(0, \Idd) \eqsp ,
    \end{align}
    where $S_k(t) = \exp\left(\int_{T-t_k}^{T-t} f(u) \rmd u\right)$, $\sigma^2_k(t)= \int_{T-t_k}^{T-t}\frac{\beta(u)}{S_k(u)^2}\rmd u$,
    and the importance weight function is given by
    \begin{align}
    \log w(Y_{0:K})=-\sum_{k=0}^{K-1} w_k^{\text{EM}}g^{\theta}_{T-t_k}(Y_k)^\top \left\{s^{\text{ref}}_{T-t_k}( Y_k) + \frac{1}{2}g^{\theta}_{T-t_k}(Y_k)\right\} \\
      - \sum_{k=0}^{K-1} \sigma_{k+1}(t_k) g^\theta_{T-t_k}( Y_k)^\top Z_k  + \sum_{k=0}^{K-1}\left(\frac{1}{S_{k+1}(t_k)}-1 + f(T-t_k)\delta_k\right)g^\theta_{T-t_k}(Y_k)^\top Y_k\\
    +\log \frac{\gamma^{\text{ref}}}{\gamma}(Y_K) \eqsp,
    \end{align}
    as a consequence of \Cref{lemma:log-ratio-EM}.
    
    \item {\textbf{RDS setting with EI scheme (PBM)}: $Y_K\sim \pi$, and for any $k\in \{0, \hdots, K-1\}$,
    \begin{align}
         Y_{k}= \frac{Y_{k+1}}{a_k^{\text{PBM}}} + \frac{\sqrt{c_k^{\text{PBM}}}}{a_k^{\text{PBM}}} Z_k \eqsp  , \eqsp Z_k \sim \densityGaussian(0, \Idd) \eqsp ,
    \end{align}
    and the importance weight function is given by
    \begin{align}
    \log w(Y_{0:K})& =-\sum_{k=0}^{K-1} w_k^{\text{PBM}}g^{\theta}_{T-t_k}(Y_k)^\top\{s^{\text{ref}}_{T-t_k} (Y_k) + \frac{1}{2}g^{\theta}_{T-t_k}(Y_k)\}\\
   &  - \sum_{k=0}^{K-1} \sqrt{w_k^{\text{PBM}}} g^\theta_{T-t_k}(Y_k)^\top Z_k
    +\log \frac{\gamma^{\text{ref}}}{\gamma}(Y_K) \eqsp ,
    \end{align}
    as a consequence of \Cref{lemma:log-ratio-PBM}.}
    
    \item {\textbf{RDS setting with EI scheme (VP)}: $Y_K\sim \pi$, and for any $k\in \{0, \hdots, K-1\}$,
    \begin{align}
         Y_{k}= \frac{Y_{k+1}}{a_k^{\text{VP}}} + \frac{\sqrt{c_k^{\text{VP}}}}{a_k^{\text{VP}}} Z_k \eqsp  , \eqsp Z_k \sim \densityGaussian(0, \Idd) \eqsp ,
    \end{align}
    and the importance weight function is given by
    \begin{align}
    \log w(Y_{0:K})& =-\sum_{k=0}^{K-1} w_k^{\text{VP}}g^{\theta}_{T-t_k}(Y_k)^\top\{s^{\text{ref}}_{T-t_k}( Y_k) + \frac{1}{2}g^{\theta}_{T-t_k}(Y_k)\}\\
   &  - \sum_{k=0}^{K-1} \sqrt{w_k^{\text{VP}}} g^\theta_{T-t_k}( Y_k)^\top Z_k
    +\log \frac{\gamma^{\text{ref}} }{\gamma}(Y_K) \eqsp ,
    \end{align}
    as a consequence of \Cref{lemma:log-ratio-VP}.
    \item \textbf{DIS setting}: $Y_K\sim \pi$, and for any $k\in \{0, \hdots, K-1\}$,
    \begin{align}
    Y_{k}= \frac{Y_{k+1}}{a_k^{\text{VP}}} + \frac{\sqrt{c_k^{\text{VP}}}}{a_k^{\text{VP}}} Z_k \eqsp  , \eqsp Z_k \sim \densityGaussian(0, \Idd) \eqsp ,
    \end{align}
    and the importance weight function is given by
    \begin{align}
    \log w(Y_{0:K})& =-\frac{1}{2}\sum_{k=0}^{K-1} w_k^{\text{VP}}\norm{s^{\theta}_{T-t_k}(Y_k)}^2 - \sum_{k=0}^{K-1} \sqrt{w_k^{\text{VP}}} s^\theta_{T-t_k}( Y_k)^\top Z_k
    +\log \frac{\piprior(Y_0)}{\gamma(Y_K)} \eqsp ,
\end{align}
 as a consequence of \Cref{lemma:log-ratio-DIS}.}
    \item {\textbf{CMCD setting}: $Y_K\sim \pi$, and for any $k\in \{0, \hdots, K-1\}$,
    \begin{align}
    Y_{k}= Y_{k+1} + \frac{\sigma^2 \delta_k}{2}\{\nabla\log\pi_{t_{k+1}}(Y_{k+1}) - h^\theta_{t_{k+1}}( Y_{k+1})\} + \sigma\sqrt{\delta_k} Z_k \eqsp  , \eqsp Z_k \sim \densityGaussian(0, \Idd) \eqsp ,
    \end{align}
    and the importance weight function is given by
    \begin{align}
    \log w(Y_{0:K}) &= -\frac{1}{2} \sum_{k=0}^{K-1} w_k^{\text{CMCD}}\norm{u_k^\theta}^2 -\sum_{k=0}^{K-1} \sqrt{w_k^{\text{CMCD}}}(u_k^\theta)^\top Z_k + \log \frac{\piprior(Y_0)}{\gamma(Y_K)}\\
    \text{with } & u_k^\theta = \nabla \log \pi_{t_k}(Y_k) +\nabla \log \pi_{t_{k+1}}(Y_{k+1}) + h^\theta_{t_k}(Y_k) -h^\theta_{t_{k+1}}(Y_{k+1}) \eqsp ,
\end{align}
as a consequence of \Cref{lemma:log-ratio-CMCD}.}
\end{itemize}

\section{Implementation details}\label{app:implem}

\subsection{Details on target distributions}\label{app:details_target}

\paragraph{Gaussian mixture from \Cref{fig:gmm_rds_explain}.}

The distribution from \Cref{fig:gmm_rds_explain} is a $16$-dimensional mixture between $\densityGaussian(\mathbf{m}_1, \Sigma_1)$ and $\densityGaussian(\mathbf{m}_2, \Sigma_2)$ with weights $0.7$ and $0.3$ respectively. Let $\mathbf{1}_d$ be the $d$-dimensional vector with all components equal to $1$, then we have
\begin{align}
    \mathbf{m}_1 = -0.6 \times \mathbf{1}_{16}, ~~ \mathbf{m}_2 = +0.6  \times \mathbf{1}_{16}\eqsp.
\end{align}
Moreover, denote $R_{\theta}$ a rotation matrix with angle $\theta$ between the first and last axes, then
\begin{align}
    \Sigma_1 = R_{\pi/4} \operatorname{diag}(10^{-2}, \ldots, 10^{-2}, 10^{-1}) R^T_{\pi/4}, ~~ \Sigma_2 = R_{\pi/6} \operatorname{diag}(10^{-1}, 10^{-2}, \ldots, 10^{-2}) R^T_{\pi/6}\eqsp.
\end{align}
In practice, we estimate the weight of the first mode $w_1$ by computing a Monte Carlo estimator of
\begin{align}\label{eq:formula_bi_modal_weight_eq}
    \int \mathbf{1}_{\densityGaussian(x; \mathbf{m}_1, \Sigma_1) > \densityGaussian(x; \mathbf{m}_2, \Sigma_2)}(x) \pi(x) \rmd x\eqsp.
\end{align}

\paragraph{Bi-modal Gaussian mixture.}

This Gaussian mixture is used in \Cref{fig:sigma_sensi_and_weight_sensi} and also in \Cref{sec:numerics}. It is a $d$-dimensional mixture between $\densityGaussian(-\mathbf{1}_d, \Sigma)$ and $\densityGaussian(\mathbf{1}_d, \Sigma)$ with weights $2/3$ and $1/3$ respectively. The different values of $\Sigma$ are recapped in \Cref{tbl:setting_of_two_modes}. Just like the previous experiment, we also use the formula \eqref{eq:formula_bi_modal_weight_eq} to estimate the strongest mode weight with Monte Carlo approach.

{\paragraph{Multi-modal Gaussian mixtures.} Given a number of mixture components $L>2$, we define a $d$-dimensional Gaussian mixture $x \in \rset^d \mapsto \sum_{\ell=1}^L w_\ell \densityGaussian(x;\mathbf{m}_\ell, 0.5 \, \Idd)$, where the mean locations $\{\mathbf{m}_\ell\}_{\ell=1}^L$ are independently distributed according to $\mathrm{U}\left([-L,L]^d\right)$ and the mixture weights $\{w_\ell\}_{\ell=1}^L$ are strictly increasing, defined with a constant geometrical increment such that $w_L/w_1=3$. To assess the mode weight recovery in this multi-modal setting, we compute the total variation distance between the exact mode weight histogram and the mode weight histogram computed from Monte Carlo approximation.}

\paragraph{Rings distribution.}

Let $T : \rset^+ \times [0, 2\pi] \to \rset^2$ be the transformation from polar to cartesian coordinates. Let $\theta \sim \mathrm{U}([0, 2\pi])$ and $R \sim \densityGaussian(r, \sigma^2)$ with $r, \sigma > 0$, we say that $T(R,\theta)$ is distributed as a ring of radius $r$ and width $\sigma$. The Rings distribution is a mixture of rings with radiuses $r=1,3,5$ and width $\sigma = 0.1$. The weights of each ring within the mixture are $1/9$, $3/9$ and $5/9$. In this specific case, we use more MCMC chains and GMM components than the number of modes. The initial points of the MCMC samplers are fetched by drawing the same amount of uniformly distributed points on each ring. 

\paragraph{Checkerboard distribution.}

We divide the square $[-4,4]^2$ into $16$ squares of size $2 \times 2$. This distribution is defined as a mixture of uniform distributions on those squares in an interleaving fashion. We assign a weight of $18.75\%$ to each $4$ squares on the left and $6.25\%$ to each $4$ squares on the right. The initial points of the MCMC samplers are taken as the middle of each square.

\begin{table}[t]
    \caption{\textbf{Covariance settings for the bi-modal Gaussian mixture}. We denote by $\operatorname{logdiag}(a,b)$ with $0 < a < b$ the diagonal matrix in $\rset^{d \times d}$ whose diagonal is a log-linear interpolation between $a$ and $b$. Moreover, $Q_d \in \rset^{d \times d}$ is a random orthogonal matrix built by doing the QR decomposition of a random matrix distributed according to $\mathrm{U}([0,5]^{d \times d})$. The seed used to generate this last matrix is fixed at $42$.}
    \label{tbl:setting_of_two_modes}
    \begin{center}
    \begin{tabular}{c c c c}
    \toprule
    Type & Difficulty & Cond. number & Covariance \\
    \midrule
    Diagonal & Isotropic & $1$ & $(5 \times 10^{-2})^2 \times \Idd$ \\
    Diagonal & Medium & $10^2$ & $(5 \times 10^{-2})^2 \times \operatorname{logdiag}(10^{-2}, 1)$ \\
    Diagonal & Hard & $10^4$ & $(5 \times 10^{-2})^2 \times \operatorname{logdiag}(10^{-4}, 1)$ \\
    Full & Medium & $10^2$ & $(5 \times 10^{-2})^2 \times Q_d \times  \operatorname{logdiag}(10^{-2}, 1)\times  Q_d^T$ \\
    Full & Hard & $10^4$ & $(5 \times 10^{-2})^2\times  Q_d \times  \operatorname{logdiag}(10^{-4}, 1) \times Q_d^T$ \\
    \bottomrule
    \end{tabular}
    \end{center}
\end{table}

\paragraph{The $\phi^4$ field system.}

The $\phi^4$ model is a simplified framework often used as a continuous version of the Ising model, aiding in the exploration of phase transitions within statistical mechanics. As per \cite{gabrie2022adaptive}, we focus on a discretized version of this model, set on a one-dimensional grid with size $d=32$. Each configuration is represented as a $d$-dimensional vector $(\phi_i)_{i=1}^d$. To further constrain the system, we fix the field to zero at both ends by setting $\phi_0 = \phi_{d+1} = 0$. 

The negative log-density of the distribution is then expressed as follows
\begin{align}\label{app:eq:phi4}
    \ln \pi_h(\phi) = - \beta \left( \frac{ad}{2}  \sum_{i=1}^{d+1} (\phi_i-\phi_{i-1})^2 + \frac{1}{4ad} \sum_{i=1}^d (1-\phi_i^2)^2 + h\phi_i \right) \eqsp .
\end{align}
We selected parameter values that make the system bimodal, setting $a=0.1$ and the inverse temperature $\beta=20$, while adjusting the value of $h$. We define $w_{+}$ as the statistical frequency of configurations where $\phi_{d/2} > 0$, and $w_{-}$ as the frequency where $\phi_{d/2} < 0$. When $h=0$, the system is symmetric under the transformation $\phi \rightarrow -\phi$, so we anticipate $w_{+} = w_{-}$. However, for $h>0$, the negative mode becomes more prevalent.

For large values of $d$, the relative likelihood of the modes can be approximated using Laplace expansions at the 0th and 2nd orders. Letting $\phi_+^h$ and $\phi_-^h$ represent the local maxima of \eqref{app:eq:phi4}, these approximations provide the following results respectively
\begin{align} \label{eq:laplace}
    \frac{w_{-}}{w_{+}} \approx \frac{\pi_h(\phi_{-}^h)}{\pi_h(\phi_{+}^h)}\eqsp , \quad \frac{w_{-}}{w_{+}} \approx \frac{\pi_h(\phi_{-}^h)\times |\det H_h(\phi_{-}^h)|^{-1/2}}{\pi_h(\phi_{+}^h)\times |\det H_h(\phi_{+}^h)|^{-1/2}} \eqsp ,
\end{align}
where $H_h$ is the Hessian of the function $\phi \rightarrow \ln \pi_h(\phi)$. We compute the modes $\phi_+^h$ and $\phi_-^h$ using a gradient descent on the distribution's potential. Since we do not have access to ground truth samples in practice, we compare the Laplace approximations defined in \eqref{eq:laplace}, considered as our ground truth, to a Monte Carlo estimation of $w_{-}/w_{+}$. 

{\paragraph{Bayesian logistic regression models.} Finally, we evaluate the performance of a Bayesian logistic model, defined for any pair $(x,y)\in \rset^{\text{dim}} \times \{0,1\}$ by the likelihood $p(y | x; w, b) = \operatorname{Bernoulli}(y; \sigma(w^T x + b))$, where $w \in \Rset^{\text{dim}}$ is a weight vector, $b \in \Rset$ is an intercept and $\sigma$ is the sigmoid function. Given a dataset $\mathcal{D} = \{(x_j, y_j)\}_{j=1}^M \subset \Rset^{\text{dim}} \times \{0, 1\}$ of size $M$, we aim to sample from the posterior distribution $p(w, b | \mathcal{D}) \propto p(w,b) \prod_{j=1}^M p(y_j | x_j; w, b) $ where $p(w,b)=p(w)p(b)$ is a fixed prior distribution. Following \cite{blessing2024beyond}, we consider four real-world settings of binary classification problem: \textbf{Ionosphere} ($\text{dim}=35$, $M=351$), \textbf{Sonar} ($\text{dim}=61$, $M=208$), \textbf{German Credit} ($\text{dim}=25, M=1000$), \textbf{Breast Cancer} ($\text{dim}=31$, $M=569$). Each of these datasets is randomly split into a training subset $\mathcal{D}_{\text{train}}$ of size $0.8M$ and a test subset $\mathcal{D}_{\text{test}}$ of size $0.2M$. In this setting, we define the target distribution $\pi=p(w,b|\mathcal{D}_{\text{train}})$ and evaluate the sampling quality by computing the (unnormalized) average predictive log-likelihood $\log p(w, b | \mathcal{D}_{\text{test}})$, which is expected to be maximized.

\newpage
In each case, the prior is carefully chosen as a Gaussian distribution with the following parameters:
\begin{itemize}
    \item \textbf{Ionosphere}: $p(w)=\densityGaussian(0,5.25 \, \mathrm{I}_{34})$ and $p(b)=\densityGaussian(4.25,0.25^2)$,
    \item \textbf{Sonar}: $p(w)=\densityGaussian(0,4.5  \, \mathrm{I}_{60})$ and $p(b)=\densityGaussian(-2.5,0.5^2)$,
    \item \textbf{German Credit}: $p(w)=\densityGaussian(0,1.25  \, \mathrm{I}_{24})$ and $p(b)=\densityGaussian(3.25, 0.5^5)$,
    \item \textbf{Breast Cancer}: $p(w)=\densityGaussian(0,3.75  \, \mathrm{I}_{30})$ and $p(b)=\densityGaussian(31, 2^2)$.
\end{itemize}
We draw the reader's attention to the fact that these posterior distributions don't exhibit explicit multi-modality features. Therefore, for EBM-LRDS, the tilting EBM parameterization is rather based on a Gaussian approximation of the target distribution than a GMM approximation. The initial point of the MCMC samplers are sampled from the prior distribution.}

\vspace{-0.1cm}
\subsection{Computational comparison of the presented methods}

\vspace{-0.1cm}
This section aims at comparing the computational budget of each algorithm. In particular, we track the number of neural network evaluations and the number of target evaluations. For each algorithm, we take the following notations.
\begin{itemize}[wide, labelindent=0pt]
    \item For SMC : $K$ (number of annealing steps), $M$ (number of MCMC steps per level)
    \item For RE : $M$ (total number of MCMC steps)
    \item For RDS (PIS/DDS/LRDS), DIS and CMCD : $N$ (number of training steps), $K$ (number of discretization steps)
    \item For PDDS :  $N$ (number of training steps), $M$ (number of MCMC steps per level), $K$ (number of discretization steps), $A$ (number of adaptation steps)
    \item For iDEM : $N$ (number of training steps), $K$ (number of discretization steps), $A$ (number of adaptation steps)
\end{itemize}
We detail the complexities of each algorithm in \Cref{tbl:app_complexities}. Note that, in LRDS, the complexity of MCMC sampling (to compute $\hpiref$) or the complexity of learning the reference process were ignored as they are negligeable compared to the training and sampling budgets; see \Cref{app:hyperparam} for details.

\begin{table}[t!]
\caption{\textbf{Complexity of each algorithm during the overall sampling procedure}. We track the number of target evaluations and neural network evaluations. We consider that computing $\pi$ is as expensive as computing $\nabla \log \pi$. We assume an infinite parallel computational capabilities. Note that the cost of the training procedure can be amortized to sample multiple times.}
\label{tbl:app_complexities}
\begin{center}
\begin{tabular}{l|cc}
\toprule
    \bf Method & Number of target evaluations & Number of neural network evaluations \\
    \midrule
    \textbf{SMC} & $\mathcal{O}(K M)$ & $0$ \\
    \textbf{RE} & $\mathcal{O}(M)$ & $0$ \\
    \textbf{PIS/DDS/DIS/CMCD} & $\mathcal{O}(N K)$ & $\mathcal{O}(N K)$ \\
    \textbf{PDDS} & $\mathcal{O}(A (N + KM))$ & $\mathcal{O}(A (N + KM))$ \\
    \textbf{iDEM} & $\mathcal{O}(AN)$ & $\mathcal{O}(ANK)$ \\
    \midrule
    \textbf{LRDS} & $\mathcal{O}(N)$ & $\mathcal{O}(NK)$\\
    \bottomrule
\end{tabular}
\end{center}
\end{table}

\vspace{-0.1cm}
\subsection{Hyper-parameters of each sampling algorithm}\label{app:hyperparam}

\vspace{-0.1cm}
In this section, we detail every hyper-parameter for each algorithm involved in the computations of the results in \Cref{sec:numerics}. The computationally-related parameters were chosen to ensure a comparable execution clock-time between the different algorithms on the same hardware.

\paragraph{Construction of the MCMC dataset for $\hpiref$.}

We build the dataset of reference samples by running MCMC samplers initialized in the modes of the target distribution. We run 4 chains per mode. We use the Metropolis-Adjusted Langevin algorithm (MALA), see \Cref{alg:mala}, as default sampler, except for the Checkerboard distribution where we use the Random Walk Metropolis Hastings (RWMH) algorithm as the score is constant. In both cases, we adapt the step size automatically for $8192$ warmup steps to aim at a $70\%$ acceptance rate. The datasets are 60000 samples long. 

\paragraph{Annealed MCMC methods (SMC, RE).}

For both algorithms, we incorporate prior knowledge into the base distribution by taking $\piprior = \densityGaussian(\mathbf{m}, \Sigma)$ where $\mathbf{m}$ and $\Sigma$ are computed using maximum likelihood on samples from $\hpiref$. In each case, the annealed density path is a linearly-spaced geometric interpolation, where $\pi_{k} \propto (\piprior)^{(K-k)/K} \pi^{k/K}$. For SMC (see \Cref{alg:al_smc}), we use $K=128$ annealing levels with $N=1024$ particles and $L=64$ MCMC steps per level (with a $4096$ steps warmup) and set the ESS resampling threshold as $\alpha=1$. For RE (see \Cref{alg:re}), we use $K=128$ annealing levels with $256$ independent chains per level and a swapping frequency every $S=8$ steps for a total of $M=32$ steps (with a $8192$ steps warmup). Both algorithms use MALA as MCMC backbone with a step-size automatically tuned to achieve $70\%$ acceptance rate.

\paragraph{Variational diffusion-based methods (PIS, DDS, DIS, CMCD, LRDS).}

{For PIS and DDS we use the implementation provided by \cite{richter2023improved}, while we use the implementation provided by \cite{blessing2024beyond} for DIS and CMCD. For PIS, DDS and DIS, we set the hyper-parameter $\sigma$ as advised by \Cref{app:gaussian-fitting} by computing the mean and the variance of the samples from $\hpiref$ to add prior knowledge. For CMCD, we sample a version of $\pi$ which is shifted by $\mathbf{m}$, defined as the mean computed on samples from $\hpiref$, and we set $\sigma$ as the square root of the average variance estimated on those samples. For PIS, DIS and DDS, we use $K=100$ linearly spaced time discretization steps while CMCD learns its own time discretization grid of size $K=128$ with a final time set at $T=10$.}

For all of them, we perform $4096$ optimization steps with a batch of size $2048$.
The neural network at stake is a Fourier MLP, as in \cite{zhang2021path}, with $4$ layers of width $64$. In the case of PIS, DDS and DIS, we use a target-informed parameterization by adding $\operatorname{NN}(t) \times \nabla \log \pi(x)$ (where $\operatorname{NN}$ is a time-dependent scalar neural network) to the Fourier MLP, as suggested by the respective authors. As recommended by \cite{vargas2023transport}, we do not consider this extra-parameterization in CMCD, since the drift of the generative process is already informed by $\pi$. We highlight that, \underline{by default, we do not use this target-informed parameterization in LRDS}, since the reference process is specifically designed to be it-self target-informed, hence avoiding useless evaluations of the target score. Additionally, as recommended by \cite{zhang2021path}, we design the LRDS guidance network such that $g^{\theta_0} = 0$. This ensures that the very first sampling phase is driven solely using the reference process.

For all diffusion-based algorithms and target distributions, we ensured that the noising schedules were chosen to ensure that the target distribution gets successfully noised at time $T$.
For LRDS, we use an exponential integration of the respective time-reversed SDE as default transition kernel; see \Cref{subsec:ei-pbm} and \Cref{subsec:ei-vp}.

\paragraph{Adaptive diffusion-based methods (iDEM, PDDS).}

For iDEM, we use the implementation provided by the original paper \citep{akhound2024iterated}. In order to provide prior information leveraging $\hpiref$, (i) we standardize the target distribution by using the empirical mean and variance of $\hpiref$ (ii) we preload the buffer with samples from $\hpiref$. Following the design choices of the original paper, we use a Variance Exploding noising scheme with a geometric variance schedule $\sigma_t$. Because we standardized the target distribution, we decided to take $\sigma_T = 5$ for each target distribution to ensure that the distribution is properly noised. We used the same hyper-parameters as in their Gaussian mixture experiment with 400 epochs instead of 1000 to ensure computational fairness.
For PDDS, we also use the implementation provided by the original paper \citep{phillips2024particle}. In order to provide prior information leveraging $\hpiref$, we standardized the target distribution by leveraging the empirical mean and variance of $\hpiref$ thus bypassing the original VI step of their algorithm. We also significantly increased the default computational budget by using $128$ discretization steps, $2048$ particles and training for $100 000$ steps with a batch size of $512$. The rest of the hyper-parameters are the default ones. 

\paragraph{LRDS reference fitting details.}

For GMM-LRDS, the EM algorithm is taken from \cite{pedregosa2011scikit}. We use a diagonal covariance for Rings distribution, Checkerboard distribution, diagonal Gaussian mixtures and Bayesian posterior distributions, and use a full covariance otherwise. In the full covariance case, we regularize the covariance to ensure their positivity. For EBM-RDS, we fit multi-level EBMs with \Cref{alg:ebm_annealed} using the Replica Exchange annealed sampler (see \Cref{alg:re}) as backbone. This sampler has the advantage of being massively parallel and thus suited for GPU computations. We perform 200 epochs with a batch size of 32 for each noise level. To increase gradient accuracy and enhance efficiency, we accumulate the gradients over 10 steps where we keep the same negative samples but update the positive ones. We perform 32 MCMC steps to sample a batch of negative samples with a swap happening every 8 steps. We keep 16 MCMC steps out of 32 to compute the expectations. To compensate the short length of our MCMC chains, we leverage a persistent buffer to kickstart the chains at each noise level. Lastly, we utilise the GMM tilting EBM parameterization detailed at the end of \Cref{app:ebm} where the neural network parameterized as $x \mapsto \mathrm{NN}(t,x)^T x$ as suggested by \cite{salimans2021should}. We leverage the GMM tilting initialization of the EBM to perform exact MCMC sampling at the very first gradient step. Moreover, we preconditioned the network as advised by \cite{karras2022elucidating} by leveraging the Gaussian approximation of $\hpiref$.

\section{Additional results}\label{app:further-results}

\subsection{Link between the target and the reference processes}\label{app:doob-h}

Let $\piref \in \Pmeasure(\rset^d)$ be an arbitrary distribution. We recall that under mild assumptions on $\piref$, see e.g., \cite{cattiaux2021time}, the time-reversal of $\Pbb^{\text{ref}}$ is associated to the SDE
\begin{align}\label{eq:SDE_ref_}
     \rmd Y_t^{\text{ref}} = -f(T-t)Y_t^{\text{ref}}\rmd t + \beta(T-t)s^{\text{ref}}_{T-t}(Y_t^{\text{ref}})\rmd t + \sqrt{\beta(T-t)} \rmd B_t \eqsp,\eqsp Y_0^{\text{ref}} \sim \Pbb_T^{\text{ref}} \eqsp .
\end{align}

\paragraph{Link with Schrödinger Bridge.}Since $\Pbb^\star$ and $\Pbb^{\text{ref}}$ are based on the same noising diffusion process, these path measures are linked by the relation $\Pbb^\star= \pi \otimes \Pbb^{\text{ref}}_{|0}$. Hence, using the KL chain rule \citep{leonard2014some}, it is clear that $\Pbb^\star$ is solution to the following minimization problem over path measure space
\begin{align}
    \arg \min \ensembleLigne{\KL(\Pbb \mid \Pbb^{\text{ref}})}{\Pbb \in \Pmeasure(\setFunConT), \Pbb_0=\pi} \eqsp ,
\end{align}
which is often referred to as a half-Schrödinger bridge problem in stochastic control literature. 

\paragraph{Link with Doob's h-transform.}
This relation on path measure space can also be written as $\Pbb^\star= \frac{\rmd \pi}{\rmd \piref}\cdot \Pbb^{\text{ref}}$, see \Cref{app:proof-cont}. Therefore, solely based on the SDE \eqref{eq:SDE_ref_} describing $(\Pbb^{\text{ref}})^R$, $(\Pbb^\star)^R$ may be expressed as a Doob's h-transform of $(\Pbb^{\text{ref}})^R$ via the SDE
\begin{align}
    \rmd Y_t = -f(T-t)Y_t\rmd t +  \beta(T-t)\{s^{\text{ref}}_{T-t}(Y_t) + h_{T-t}(Y_t)\} \rmd t + \sqrt{\beta(T-t)} \rmd B_t \eqsp, Y_0 \sim \Pbb^\star_T \eqsp ,
\end{align}
where the Doob's control function $h:\ccint{0,T}\times \rset^d \to \rset^d$ is defined, for any $y_t\in \rset^d$ and any $t\in \ccint{0,T}$, by
\begin{align}
    h_{T-t}(y_t)= \nabla \log \mathbb{E}_{(\Pbb^{\text{ref}})^R_{T|t}}\left[\frac{\rmd \pi}{\rmd \piref}(Y_T) \mid Y_t=y_t\right] = \nabla \log \mathbb{E}_{\Pbb^{\text{ref}}_{0|T-t}}\left[\frac{\rmd \pi}{\rmd \piref}(X_0) \mid X_t=y_t\right] \eqsp .
\end{align}
Additionally, we have for any $x_t\in \rset^d$,
\begin{align}
    \frac{p^\star_t}{p_t^{\text{ref}}}(x_t)&=\frac{1}{p_t^{\text{ref}}(x_t)} \int_{\rset^d}p^\star_{t|0}(x_t|x_0) \rmd \pi(x_0)\\
    & = \int_{\rset^d}\frac{\rmd \pi}{\rmd \piref}(x_0) \frac{p_{t|0}^{\text{ref}}(x_t|x_0) \rmd \piref(x_0)}{p_t^{\text{ref}}(x_t)}\\
    &= \mathbb{E}_{\Pbb^{\text{ref}}_{0|t}}\left[\frac{\rmd \pi}{\rmd \piref}(X_0) \mid X_t=x_t\right] \eqsp .
\end{align}
By combining previous computations, we obtain $h_t=\nabla \log p^\star_t/p^{\text{ref}}_t=g_t$ for any $t\in \ccint{0,T}$.

\subsection{Optimal setting of isotropic Gaussian reference distribution}\label{app:gaussian-fitting}

The goal of this section is to explain how to set the hyperparameter $\sigma\in (0, \infty)$ in PIS and DDS settings so as to `optimize' the reference distribution $\piref=\densityGaussian(0, \sigma^2 \, \Idd)$, when targeting a multi-modal distribution. The same reasoning can also be applied in the DIS setting to set the base distribution $\piprior$, chosen as an isotropic Gaussian distribution.

Let $\pi \in \Pmeasure(\rset^d)$ be our target distribution. Assume that we are given a \emph{diagonal Gaussian approximation} $\tilde{\pi}=\densityGaussian(\mathbf{m}, \Sigma)$ of $\pi$, with mean $\mathbf{m}\in \rset^d$ and covariance $\Sigma=\mathrm{diag}(\Gamma^2)\in \rset^{d \times d}$, where $\Gamma\in (0, \infty)^d$. We propose to set $\piref$ as 'close' as possible to $\tilde{\pi}$ in PIS and DDS by solving the following variational problem
\begin{align}
    \arg \min \ensembleLigne{\KL(\tilde{\pi}\mid \densityGaussian(0, \sigma^2 \, \Idd))}{\sigma \in (0, \infty)} \eqsp.
\end{align}
Let $\sigma > 0$. We have 
\begin{align}
	\operatorname{KL}(\tilde{\pi}\mid  \densityGaussian(0, \sigma^2 \Idd)) &= \frac{1}{2} \left[\log \frac{\lvert \sigma^2 \Idd \rvert}{\lvert \Sigma \rvert} - d +\sigma^{-2}\mathbf{m}^T \mathbf{m} + \operatorname{Tr}(\sigma^{-2} \Sigma)\right] \\
	&= \frac{1}{2} \left[d\log(\sigma^2) - \sum_{j=1}^d \log \Gamma_j^2 -d + \sigma^{-2} \mathbf{m}^T \mathbf{m} + \sigma^{-2} \sum_{j=1}^d \Gamma_j^2\right]\eqsp.
\end{align}
Moreover, the derivative with respect to $\sigma^2$ is given by
\begin{align}\label{eq:sigma_opt_derive}
	\frac{\partial}{\partial \sigma^2}\operatorname{KL}(\tilde{\pi} \mid \densityGaussian(0, \sigma^2 \Idd)) &= \frac{1}{2} \left[d \sigma^{-2} - \sigma^{-4} \left(\mathbf{m}^T \mathbf{m} +  \sum_{j=1}^d \Gamma_j^2 \right)\right]\eqsp.
\end{align}
In particular, the optimal solution of this variational problem is
\begin{align}
    \sigma^2_{\pi}= d^{-1} \left(\mathbf{m}^T \mathbf{m} +  \sum_{j=1}^d \Gamma_j^2 \right) \eqsp ,
\end{align}
which motivates us to set $\piref=\densityGaussian(0, \sigma^2_\pi \, \Idd)$.

In practice, when we do not have samples from the target distribution, we rather rely on a diagonal Gaussian approximation of the empirical distribution $\hpiref$ (obtained with local MCMC samplers) and set $\piref=\densityGaussian(0, \sigma^2_{\hpiref} \, \Idd)$.

\paragraph{Application to Gaussian mixtures.} In the particular case where $\pi$ is a Gaussian mixture, we have an analytical formula for $\tilde{\pi}$.

Let $\pi = \sum_{j=1}^J w_j \mathcal{N}(\mathbf{m}_j, \Sigma_j)$ be our target distribution with $w_{1:J}\in \Delta_J$, where the $j$-th component has mean $\mathbf{m}_j \in \rset^d$ and covariance $\Sigma_j = \operatorname{diag}(\lambda_j)\in \rset^{d\times d}$ with $\lambda_j\in (0, \infty)^d$. Consider $X\sim \pi$ and $Z \sim \mathcal{M}(w_1, \ldots, w_J)$. We first have
\begin{align}
	\mathbb{E}[X] = \mathbb{E}\left[\mathbb{E}\left[X \mid Z\right]\right] = \sum_{j=1}^J w_j \mathbf{m}_j\eqsp.
\end{align}
We define the \emph{diagonal variance} of $\pi$ by $\operatorname{diagVar}(\pi) = (\Var[X]_1, \ldots, \Var[X]_d) \in (0, \infty)^{d}$ where $\Var[X]_i = \mathbb{E}[(X_i - \mathbb{E}[X]_i)^2]$ for any $i\in \{1, \ldots, d\}$. By the law of total variance, we have
\begin{align}
	\Var[X]_i &= \mathbb{E}\left[\Var\left[X \mid Z\right]_i\right] + \Var\left[\mathbb{E}\left[X \mid Z\right]\right]_i \\
	&= \sum_{j=1}^J w_j (\lambda_j^2)_i + \sum_{j=1}^J w_j \left((\mathbf{m}_j)_i - \mathbb{E}\left[X\right]_i\right)^2 \eqsp .
\end{align}
By characterisation of Gaussian distributions, $\densityGaussian(\mathbb{E}[X], \operatorname{diagVar}(\pi))$ is the closest diagonal Gaussian distribution to $\pi$ in the Kullback-Leibler sense. Therefore, in this particular setting, $\tilde{\pi}$ may be defined by $\mathbf{m} = \mathbb{E}[X]$ and $\Gamma^2 = \operatorname{diagVar}(\pi)$. We use this approach when computing $\sigma_\pi^2$ in the numerical experiment commented in \Cref{fig:sigma_sensi_and_weight_sensi}.

\subsection{Failure of local MCMC samplers on multimodal distributions} \label{app:failure-MCMC}

\begin{figure}[t!]
    \centering
    \includegraphics[width=\linewidth]{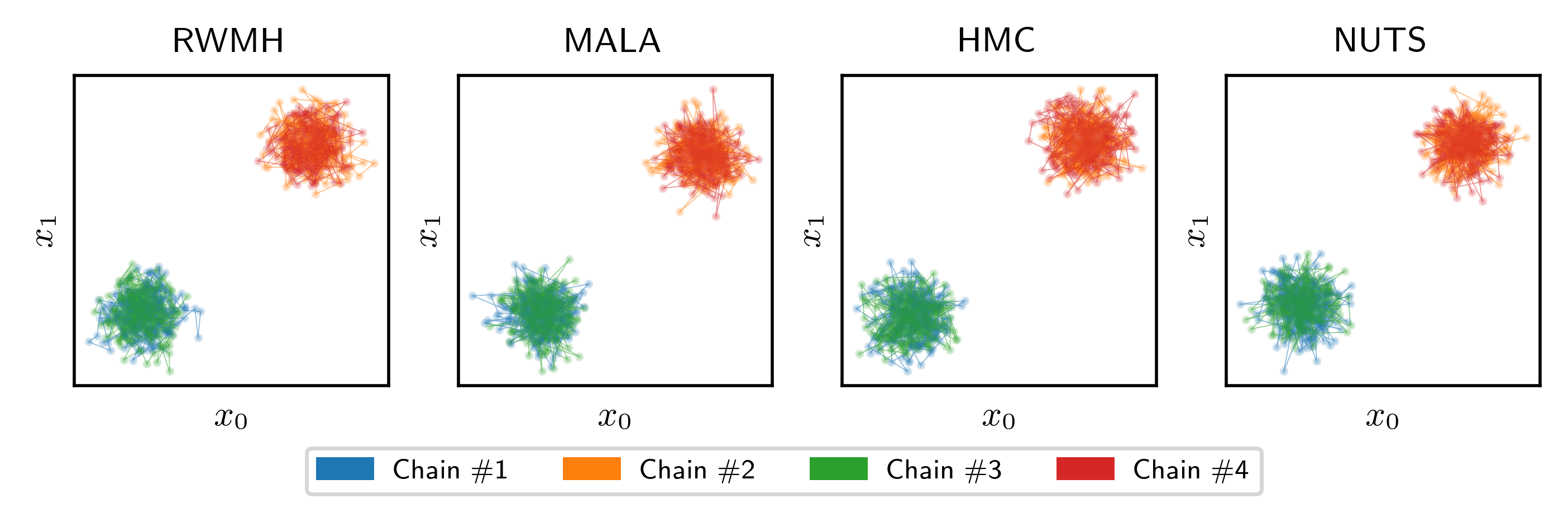}
    \caption{Samples from different MCMC samplers when sampling from a bi-modal Gaussian mixture in $2$ dimensions - There are 4 different chains (in different colors). The MCMC samplers ran for $4096$ warmup steps before producing those $1024$ samples. We only display $1$ sample every $4$ steps.}
    \label{fig:mcmc_failure}
\end{figure}

As we show in \Cref{fig:mcmc_failure}, \emph{local} MCMC samplers such as the \emph{Random Walk Metropolis Hastings} (RWMH) algorithm \citep{metropolis1953equation}, the \emph{Metropolis-adjusted Langevin Algorithm} (MALA) \citep{roberts1996exponential}, the \emph{Hamiltonian Monte Carlo} (HMC) algorithm \citep{duane1987hybrid, brooks2011handbook} or the \emph{No-U Turn Sampler} (NUTS) \citep{hoffman2014nuts} tend to produce Markov chains that get trapped in modes. The resulting samples are thus not representative of the global landscape.

\subsection{Limitations of Gaussian Mixture Models}\label{app:failure-GMM}

\begin{figure}[t!]
    \centering
    \includegraphics[width=\linewidth]{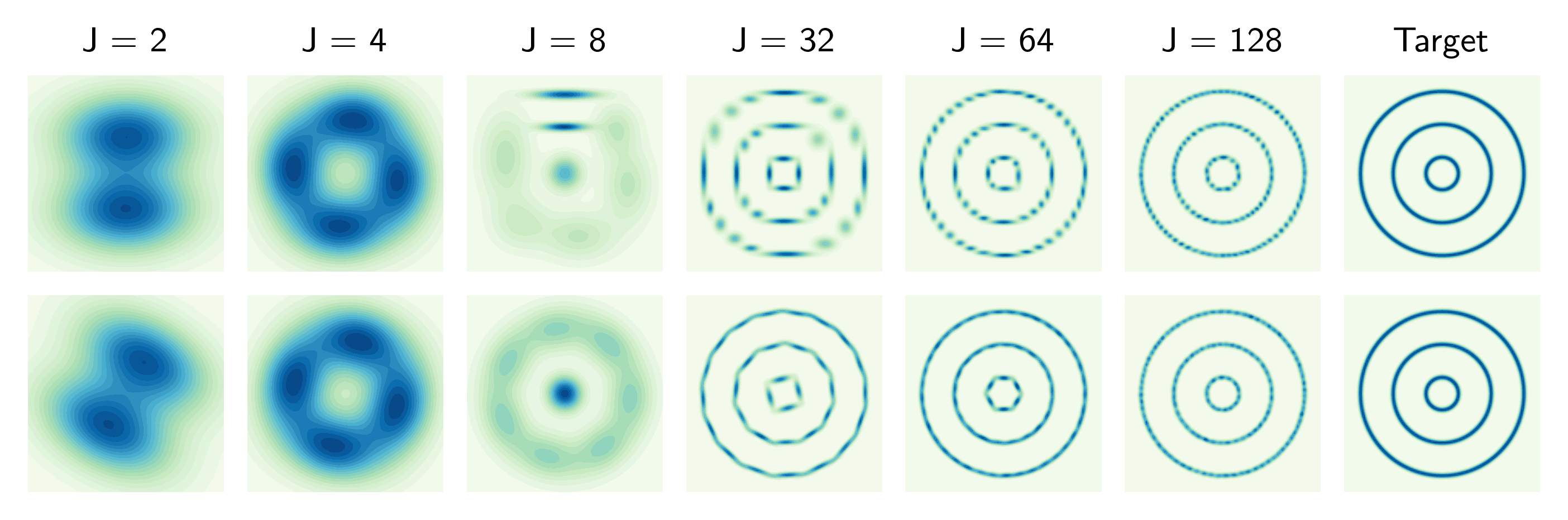}
    \caption{Increasingly expressive GMM approximations of the rings distribution - \textbf{(Top)} Approximation using diagonal covariances \textbf{(Bottom)} Approximation using full covariances}
    \label{fig:gmm_failure}
\end{figure}

\Cref{fig:gmm_failure} shows the progressive GMM approximation of the Rings distribution by leveraging an increasing number of components. This figure shows that, in this case, getting a good approximation of the distribution requires significantly more components that the number of modes (here, 3). This highlights the limitation of the approximation power of Gaussian Mixture models.

\subsection{Further experimental results}\label{app:further-exp}

In this section, we display detailed results for the experimental settings presented in \Cref{sec:numerics}, as well as additional results that assess the robustness and the superiority of LRDS compared to its competitors for a large diversity of settings. When ground truth samples are available (i.e., in all settings except $\phi^4$ and Bayesian regression), we display statistical estimations of the integral probability metrics defined in \Cref{app:preli}, namely, the regularized Wasserstein distance with regularization $\varepsilon=10^{-3}$ ($W_{2,\varepsilon}$), the Maximum Mean Discrepancy (MMD) and the sliced Kolmogorov-Smirnov distance (KS).

\subsubsection{High-dimensional bi-modal Gaussian mixtures}\label{subsec:bi_modal}

Below, we display the results of all considered samplers, including LRDS, when targeting the bi-modal Gaussian mixtures defined in \Cref{app:details_target} with increasing dimension $d\in \{8,16,32,64\}$. To assess the sampling quality at a global level, we compute for each sampler the absolute error 'Mode Err.' when estimating the strongest mode weight (in $\%$, comprised between $0\%$, the best, and $66.7\%$, the worst) via Monte Carlo approximation. All metrics are computed based on $8192$ samples. In particular,
\begin{itemize}[wide, labelindent=0pt]
    \item \Cref{tbl:app_two_modes_from_main} corresponds to Gaussian mixtures with diagonal covariance and medium conditioning, completing the results of \Cref{table:res_gaussian_mixture_main} presented in the main paper,
    \item \Cref{tbl:app_two_modes_isotropic} corresponds to Gaussian mixtures with diagonal covariance and isotropic conditioning,
    \item \Cref{tbl:app_two_modes_diag_hard} corresponds to Gaussian mixtures with diagonal covariance and hard conditioning,
    \item \Cref{tbl:app_two_modes_full_medium} corresponds to Gaussian mixtures with full covariance and medium conditioning,
    \item \Cref{tbl:app_two_modes_full_hard} corresponds to Gaussian mixtures with full covariance and hard conditioning.
\end{itemize}
For all of these bi-modal settings, we observe that the mode weight estimation error is consistent with the values of probability metrics (except for the regularized Wasserstein distance, which is not discriminative between the methods). In particular, GMM-LRDS is on par or superior to all competitors in each setting, except the least challenging setting ('Isotropic' conditioning, see \Cref{tbl:app_two_modes_isotropic}), where LV-DDS is more performant.

\begin{table}[t!]
	\caption{Results for bi-modal Gaussian mixtures with \textbf{diagonal} covariance and \textbf{medium} conditioning, averaged over $16$ sampling runs (\underline{same setting as \Cref{table:res_gaussian_mixture_main}}). Bold font indicates best result, \textcolor{orange!75}{orange} cells refer to settings with uninformative mode weight estimation (i.e., uniform mixture), \textcolor{red!75}{red} cells denote mode collapse. N/A denotes settings where results could not be obtained due to numerical issues. The MMD and KS results are displayed with 100-factor rescaling.}
	\label{tbl:app_two_modes_from_main}
	\begin{center}
		\resizebox{\columnwidth}{!}{%
			\begin{tabular}{l|rrrr|rrrr|rrrr}
				\toprule
				     & \multicolumn{4}{c}{$d = 16$}  & \multicolumn{4}{c}{$d = 32$}       & \multicolumn{4}{c}{$d = 64$}           \\
				\midrule
				\bf Algorithm    & Mode Err. $ \downarrow$ & $W_{2,\varepsilon} \downarrow$ & $\operatorname{MMD} \downarrow$  & $\operatorname{KS} \downarrow$ &  Mode Err. $\downarrow$ & $W_{2,\varepsilon} \downarrow$ & $\operatorname{MMD} \downarrow$  & $\operatorname{KS} \downarrow$ & Mode Err. $\downarrow$ & $W_{2,\varepsilon} \downarrow$ & $\operatorname{MMD} \downarrow$  & $\operatorname{KS} \downarrow$       \\
				\midrule
				\textbf{SMC} & $\cellcolor{orange!25} 11.4\% \scriptstyle \pm 9.1\%$  & $0.38 \scriptstyle \pm 0.00$ & $13.91 \scriptstyle \pm 10.68$ & $10.80 \scriptstyle \pm 7.84$  &  $\cellcolor{orange!25} 15.8\% \scriptstyle \pm 8.5\%$  & $0.70 \scriptstyle \pm 0.01$  & $18.68 \scriptstyle \pm 9.67$  & $14.63 \scriptstyle \pm 7.59$  &  $\cellcolor{orange!25} 15.2\% \scriptstyle \pm 7.5\%$  & $1.16 \scriptstyle \pm 0.01$   & $17.87 \scriptstyle \pm 8.72$  & $14.14 \scriptstyle \pm 6.71$  \\
				\textbf{RE} & $\cellcolor{orange!25} 16.5\% \scriptstyle \pm 1.3\%$  & $0.38 \scriptstyle \pm 0.00$ & $20.57 \scriptstyle \pm 1.78$  & $15.20 \scriptstyle \pm 1.31$  &  $\cellcolor{orange!25} 15.9\% \scriptstyle \pm 1.4\%$  & $0.70 \scriptstyle \pm 0.00$  & $19.23 \scriptstyle \pm 2.03$  & $14.54 \scriptstyle \pm 1.55$  &  $\cellcolor{orange!25} 17.0\% \scriptstyle \pm 1.4\%$  & $1.17 \scriptstyle \pm 0.00$   & $20.36 \scriptstyle \pm 1.83$  & $15.68 \scriptstyle \pm 1.39$ \\
                \textbf{LV-PIS} & $6.0\% \scriptstyle \pm 3.4\%$   & $0.43 \scriptstyle \pm 0.00$ & $7.63 \scriptstyle \pm 3.07$   & $6.86 \scriptstyle \pm 2.49$  & $\cellcolor{red!25} 33.2\% \scriptstyle \pm 0.1\%$  & $1.01 \scriptstyle \pm 0.03$  & $34.90 \scriptstyle \pm 0.58$  & $28.43 \scriptstyle \pm 1.52$ & $\cellcolor{red!25} 33.0\% \scriptstyle \pm 0.1\%$  & $1.62 \scriptstyle \pm 0.02$   & $34.03 \scriptstyle \pm 0.71$  & $29.33 \scriptstyle \pm 0.75$\\
                \textbf{LV-DDS} & $\cellcolor{orange!25} 11.8\% \scriptstyle \pm 9.3\%$  & $0.40 \scriptstyle \pm 0.01$ & $13.83 \scriptstyle \pm 10.23$ & $11.29 \scriptstyle \pm 8.06$   &  $\cellcolor{red!25} 31.5\% \scriptstyle \pm 2.9\%$  & $0.86 \scriptstyle \pm 0.04$  & $33.96 \scriptstyle \pm 3.29$  & $28.00 \scriptstyle \pm 2.65$  &  $\cellcolor{red!25} 33.1\% \scriptstyle \pm 0.1\%$  & $1.49 \scriptstyle \pm 0.02$   & $34.45 \scriptstyle \pm 0.61$  & $28.95 \scriptstyle \pm 0.89$ \\
                \textbf{LV-DIS} & $\cellcolor{orange!25} 16.4\% \scriptstyle \pm 0.50\%$  & $0.42 \scriptstyle \pm 0.00$ & $21.36 \scriptstyle \pm 0.75$  & $16.16 \scriptstyle \pm 0.54$  &  $\cellcolor{orange!25} 16.5\% \scriptstyle \pm 0.40\%$  & $0.77 \scriptstyle \pm 0.00$  & $21.13 \scriptstyle \pm 0.96$  & $16.18 \scriptstyle \pm 0.76$  &  $\cellcolor{orange!25} 16.8\% \scriptstyle \pm 0.50\%$  & $1.24 \scriptstyle \pm 0.01$   & $20.71 \scriptstyle \pm 0.81$  & $15.93 \scriptstyle \pm 0.56$ \\
                \textbf{LV-CMCD} & $\cellcolor{red!25} 32.2\% \scriptstyle \pm 15.4\%$ & $0.60 \scriptstyle \pm 0.06$ & $46.89 \scriptstyle \pm 17.35$ & $33.46 \scriptstyle \pm 13.03$   &  $\cellcolor{red!25} 50.1\% \scriptstyle \pm 8.8\%$ & $1.96 \scriptstyle \pm 0.52$  & $76.17 \scriptstyle \pm 13.71$ & $48.79 \scriptstyle \pm 8.03$  &  $\cellcolor{red!25} 16.3\% \scriptstyle \pm 10.6\%$ & $3.47 \scriptstyle \pm 0.31$   & $53.15 \scriptstyle \pm 1.85$ & $41.79 \scriptstyle \pm 2.64$  \\
                \textbf{iDEM} & $\cellcolor{red!25} 33.3\% \scriptstyle \pm 0.0\%$  & $1.75 \scriptstyle \pm 0.17$ & $53.82 \scriptstyle \pm 0.36$  & $36.86 \scriptstyle \pm 0.92$    &  $\cellcolor{red!25} 66.7\% \scriptstyle \pm 0.0\%$  & $4.16 \scriptstyle \pm 0.22$  & $85.52 \scriptstyle \pm 0.53$  & $61.10 \scriptstyle \pm 1.26$  &$11.7\% \scriptstyle \pm 0.4\%$  & $117.82 \scriptstyle \pm 0.14$ & $90.13 \scriptstyle \pm 0.12$  & N/A   \\
                \textbf{PDDS} & $\mathbf{0.8\% \scriptstyle \pm 0.6\%}$   & $0.40 \scriptstyle \pm 0.00$ & $\mathbf{1.66 \scriptstyle \pm 0.68}$   & $\mathbf{2.59 \scriptstyle \pm 0.29}$  &  $\cellcolor{red!25} 66.7\% \scriptstyle \pm 0.0\%$  & $11.22 \scriptstyle \pm 0.08$ & $105.11 \scriptstyle \pm 0.24$ & N/A     &  N/A   & N/A     & N/A     & N/A \\
                \midrule
                \textbf{GMM-LRDS} & $1.7\% \scriptstyle \pm 0.6\%$   & $0.38 \scriptstyle \pm 0.00$ & $2.58 \scriptstyle \pm 0.96$   & $2.69 \scriptstyle \pm 0.66$  &  $\mathbf{2.7\% \scriptstyle \pm 0.8\%}$   & $0.71 \scriptstyle \pm 0.00$  & $\mathbf{3.64 \scriptstyle \pm 1.07}$   & $\mathbf{3.38 \scriptstyle \pm 0.76}$  &  $\mathbf{4.1\% \scriptstyle \pm 0.6\%}$   & $1.19 \scriptstyle \pm 0.00$   & $\mathbf{5.25 \scriptstyle \pm 0.98}$   & $\mathbf{4.50 \scriptstyle \pm 0.70}$  \\
                \bottomrule
			\end{tabular}%
		}
	\end{center}
\end{table}

\begin{table}[t!]
	\caption{Results for bi-modal Gaussian mixtures with \textbf{diagonal} covariance and \textbf{isotropic} conditioning, averaged over $16$ sampling runs. Bold font indicates best result, \textcolor{orange!75}{orange} cells refer to settings with uninformative mode weight estimation (i.e., uniform mixture), \textcolor{red!75}{red} cells denote mode collapse. N/A denotes settings where results could not be obtained due to numerical issues. The MMD and KS results are displayed with 100-factor rescaling. }
	\label{tbl:app_two_modes_isotropic}
	\begin{center}
		\resizebox{\columnwidth}{!}{%
			\begin{tabular}{l|rrrr|rrrr|rrrr}
				\toprule
				     & \multicolumn{4}{c}{$d = 16$}  & \multicolumn{4}{c}{$d = 32$}       & \multicolumn{4}{c}{$d = 64$}           \\
				\midrule
				\bf Algorithm    & Mode Err. $\downarrow$ & $W_{2,\varepsilon} \downarrow$ & $\operatorname{MMD} \downarrow$  & $\operatorname{KS} \downarrow$ &  Mode Err. $\downarrow$ & $W_{2,\varepsilon} \downarrow$ & $\operatorname{MMD} \downarrow$  & $\operatorname{KS} \downarrow$ & Mode Err. $\downarrow$ & $W_{2,\varepsilon} \downarrow$ & $\operatorname{MMD} \downarrow$  & $\operatorname{KS} \downarrow$       \\
				\midrule
				\textbf{SMC} &  \cellcolor{orange!25} $16.0\% \scriptstyle \pm 10.1\%$ & $0.65 \scriptstyle \pm 0.01$       & $19.11 \scriptstyle \pm 12.55$ & $14.06 \scriptstyle \pm 8.74$  & \cellcolor{orange!25} $12.3\% \scriptstyle \pm 9.6\%$  & $1.18 \scriptstyle \pm 0.01$   & $14.57 \scriptstyle \pm 11.46$ & $11.03 \scriptstyle \pm 8.03$  &  \cellcolor{orange!25} $11.0\% \scriptstyle \pm 8.8\%$  & $1.94 \scriptstyle \pm 0.00$      & $12.98 \scriptstyle \pm 10.51$ & $9.80 \scriptstyle \pm 7.38$ \\
				\textbf{RE} & \cellcolor{orange!25} $15.2\% \scriptstyle \pm 1.2\%$  & $0.66 \scriptstyle \pm 0.00$       & $18.72 \scriptstyle \pm 1.66$  & $13.42 \scriptstyle \pm 1.23$  &  \cellcolor{orange!25} $16.1\% \scriptstyle \pm 1.3\%$  & $1.19 \scriptstyle \pm 0.00$   & $19.50 \scriptstyle \pm 1.70$  & $14.14 \scriptstyle \pm 1.26$  & \cellcolor{orange!25} $16.5\% \scriptstyle \pm 1.3\%$  & $1.94 \scriptstyle \pm 0.00$      & $19.46 \scriptstyle \pm 1.71$  & $14.39 \scriptstyle \pm 1.24$  \\
                \textbf{LV-PIS} & $1.9\% \scriptstyle \pm 1.2\%$   & $0.66 \scriptstyle \pm 0.00$       & $2.66 \scriptstyle \pm 1.45$   & $2.63 \scriptstyle \pm 0.89$ & $1.3\% \scriptstyle \pm 0.6\%$   & $1.20 \scriptstyle \pm 0.00$   & $2.15 \scriptstyle \pm 0.68$   & $2.23 \scriptstyle \pm 0.42$ & $2.8\% \scriptstyle \pm 0.6\%$   & $1.98 \scriptstyle \pm 0.00$      & $2.94 \scriptstyle \pm 0.90$   & $2.81 \scriptstyle \pm 0.61$\\
                \textbf{LV-DDS} & $\mathbf{0.7\% \scriptstyle \pm 0.5\%}$   & $0.65 \scriptstyle \pm 0.00$       & $\mathbf{0.86 \scriptstyle \pm 0.70}$   & $\mathbf{1.59 \scriptstyle \pm 0.30}$  &  $\mathbf{0.8\% \scriptstyle \pm 0.5\%}$   & $1.19 \scriptstyle \pm 0.00$   & $\mathbf{1.09 \scriptstyle \pm 0.60}$   & $\mathbf{1.56 \scriptstyle \pm 0.24}$   & $1.6\% \scriptstyle \pm 0.8\%$   & $1.95 \scriptstyle \pm 0.00$      & $\mathbf{2.13 \scriptstyle \pm 0.97}$   & $\mathbf{2.23 \scriptstyle \pm 0.61}$ \\
                \textbf{LV-DIS} & \cellcolor{orange!25} $16.4\% \scriptstyle \pm 1.3\%$   & $0.66 \scriptstyle \pm 0.00$       & $19.99 \scriptstyle \pm 1.73$   & $14.16 \scriptstyle \pm 1.21$ & \cellcolor{orange!25} $16.7\% \scriptstyle \pm 0.7\%$  & $1.20 \scriptstyle \pm 0.00$   & $20.12 \scriptstyle \pm 1.00$  & $14.54 \scriptstyle \pm 0.70$  & \cellcolor{orange!25} $16.6\% \scriptstyle \pm 0.6\%$  & $1.96 \scriptstyle \pm 0.00$      & $19.79 \scriptstyle \pm 0.90$  & $14.59 \scriptstyle \pm 0.65$ \\
                \textbf{LV-CMCD} & $0.90\% \scriptstyle \pm 0.60\%$ & $0.68 \scriptstyle \pm 0.00$ & $2.84 \scriptstyle \pm 0.55$ & $2.92 \scriptstyle \pm 0.45$  & $1.7\% \scriptstyle \pm 0.7\%$ & $1.24 \scriptstyle \pm 0.00$ & $2.77 \scriptstyle \pm 0.36$ & $2.76 \scriptstyle \pm 0.36$ & $1.1\% \scriptstyle \pm 0.8\%$ & $2.05 \scriptstyle \pm 0.00$ & $3.43 \scriptstyle \pm 0.75$ & $3.15 \scriptstyle \pm 0.65$ \\
                \textbf{iDEM} & \cellcolor{red!25} $66.7\% \scriptstyle \pm 0.0\%$  & $2.23 \scriptstyle \pm 0.17$       & $83.92 \scriptstyle \pm 0.50$  & $59.68 \scriptstyle \pm 2.16$  & \cellcolor{red!25} $33.3\% \scriptstyle \pm 0.0\%$  & $2.48 \scriptstyle \pm 0.15$   & $49.83 \scriptstyle \pm 0.35$  & $35.02 \scriptstyle \pm 0.62$   & $\mathbf{0.3\% \scriptstyle \pm 0.2\%}$   & $179.86 \scriptstyle \pm 0.12$    & $97.17 \scriptstyle \pm 1.05$  & N/A  \\
                \textbf{PDDS} & $0.9\% \scriptstyle \pm 0.6\%$   & $0.70 \scriptstyle \pm 0.00$       & $1.27 \scriptstyle \pm 1.25$   & $2.50 \scriptstyle \pm 0.49$  &  $0.9\% \scriptstyle \pm 0.6\%$   & $1.22 \scriptstyle \pm 0.00$   & $1.53 \scriptstyle \pm 0.87$   & $2.41 \scriptstyle \pm 0.31$  &  $0.9\% \scriptstyle \pm 0.7\%$   & $1.96 \scriptstyle \pm 0.00$      & $0.85 \scriptstyle \pm 1.25$   & $2.05 \scriptstyle \pm 0.56$  \\
                \midrule
                \textbf{GMM-LRDS} & $1.6\% \scriptstyle \pm 0.6\%$   & $0.66 \scriptstyle \pm 0.00$       & $2.07 \scriptstyle \pm 0.99$   & $2.23 \scriptstyle \pm 0.54$  &  $2.8\% \scriptstyle \pm 0.5\%$   & $1.19 \scriptstyle \pm 0.00$   & $3.72 \scriptstyle \pm 0.89$   & $3.23 \scriptstyle \pm 0.58$   &  $4.6\% \scriptstyle \pm 0.6\%$   & $1.96 \scriptstyle \pm 0.00$      & $5.65 \scriptstyle \pm 0.88$   & $4.56 \scriptstyle \pm 0.61$ \\
                \bottomrule
			\end{tabular}%
		}
	\end{center}
\end{table}

\begin{table}[t!]
	\caption{Results for bi-modal Gaussian mixtures with \textbf{diagonal} covariance and \textbf{hard} conditioning, averaged over $16$ sampling runs. Bold font indicates best result, \textcolor{orange!75}{orange} cells refer to settings with uninformative mode weight estimation (i.e., uniform mixture), \textcolor{red!75}{red} cells denote mode collapse. N/A denotes settings where results could not be obtained due to numerical issues. The MMD and KS results are displayed with 100-factor rescaling.} 
	\label{tbl:app_two_modes_diag_hard}
	\begin{center}
		\resizebox{\columnwidth}{!}{%
			\begin{tabular}{l|rrrr|rrrr|rrrr}
				\toprule
				     & \multicolumn{4}{c}{$d = 16$}  & \multicolumn{4}{c}{$d = 32$}       & \multicolumn{4}{c}{$d = 64$}           \\
				\midrule
				\bf Algorithm    & Mode Err. $\downarrow$ & $W_{2,\varepsilon} \downarrow$ & $\operatorname{MMD} \downarrow$  & $\operatorname{KS} \downarrow$ &  Mode Err. $ \downarrow$ & $W_{2,\varepsilon} \downarrow$ & $\operatorname{MMD} \downarrow$  & $\operatorname{KS} \downarrow$ & Mode Err. $ \downarrow$ & $W_{2,\varepsilon} \downarrow$ & $\operatorname{MMD} \downarrow$  & $\operatorname{KS} \downarrow$       \\
				\midrule
				\textbf{SMC} & \cellcolor{orange!25} $12.4\% \scriptstyle \pm 8.3\%$ & $0.24 \scriptstyle \pm 0.00$       & $15.40 \scriptstyle \pm 10.16$ & $12.00 \scriptstyle \pm 7.57$  &  \cellcolor{orange!25} $16.5\% \scriptstyle \pm 9.5\%$  & $0.47 \scriptstyle \pm 0.00$       & $20.22 \scriptstyle \pm 11.66$ & $15.71 \scriptstyle \pm 8.58$  &  \cellcolor{orange!25} $11.4\% \scriptstyle \pm 9.8\%$  & $0.80 \scriptstyle \pm 0.00$      & $13.88 \scriptstyle \pm 11.92$ & $11.16 \scriptstyle \pm 8.79$  \\
				\textbf{RE} & \cellcolor{orange!25} $16.3\% \scriptstyle \pm 1.4\%$ & $0.25 \scriptstyle \pm 0.00$       & $20.76 \scriptstyle \pm 1.95$  & $15.40 \scriptstyle \pm 1.38$   &  \cellcolor{orange!25} $17.0\% \scriptstyle \pm 1.7\%$  & $0.47 \scriptstyle \pm 0.00$       & $21.31 \scriptstyle \pm 2.19$  & $16.11 \scriptstyle \pm 1.58$  &  \cellcolor{orange!25} $16.4\% \scriptstyle \pm 1.4\%$  & $0.81 \scriptstyle \pm 0.00$      & $19.99 \scriptstyle \pm 1.75$  & $15.54 \scriptstyle \pm 1.38$ \\
                \textbf{LV-PIS} & $8.4\% \scriptstyle \pm 3.4\%$  & $0.45 \scriptstyle \pm 0.00$       & $13.52 \scriptstyle \pm 2.50$  & $12.48 \scriptstyle \pm 2.01$ &  \cellcolor{red!25} $32.2\% \scriptstyle \pm 0.3\%$  & $1.29 \scriptstyle \pm 0.02$       & $33.89 \scriptstyle \pm 0.60$  & $29.92 \scriptstyle \pm 0.57$   &  \cellcolor{red!25} $32.5\% \scriptstyle \pm 0.2\%$  & $2.24 \scriptstyle \pm 0.04$      & $33.91 \scriptstyle \pm 0.76$  & $30.49 \scriptstyle \pm 0.61$  \\
                \textbf{LV-DDS} & \cellcolor{red!25} $24.7\% \scriptstyle \pm 8.8\%$ & $0.32 \scriptstyle \pm 0.03$       & $26.89 \scriptstyle \pm 9.03$  & $22.92 \scriptstyle \pm 7.59$  &  \cellcolor{red!25} $40.5\% \scriptstyle \pm 13.9\%$ & $0.73 \scriptstyle \pm 0.02$       & $44.97 \scriptstyle \pm 19.24$ & $37.20 \scriptstyle \pm 12.88$  &  \cellcolor{red!25} $38.1\% \scriptstyle \pm 15.4\%$ & $1.59 \scriptstyle \pm 0.04$      & $42.64 \scriptstyle \pm 20.91$ & $35.46 \scriptstyle \pm 13.67$\\
                \textbf{LV-DIS} & \cellcolor{orange!25} $16.6\% \scriptstyle \pm 0.6\%$ & $0.38 \scriptstyle \pm 0.02$       & $23.00 \scriptstyle \pm 0.87$  & $19.69 \scriptstyle \pm 0.69$   & \cellcolor{orange!25} $16.7\% \scriptstyle \pm 0.7\%$  & $0.65 \scriptstyle \pm 0.03$       & $23.01 \scriptstyle \pm 1.12$  & $19.33 \scriptstyle \pm 0.99$   & \cellcolor{orange!25} $16.8\% \scriptstyle \pm 0.6\%$  & $1.06 \scriptstyle \pm 0.02$      & $22.94 \scriptstyle \pm 0.84$  & $18.97 \scriptstyle \pm 0.65$  \\
                \textbf{LV-CMCD} & \cellcolor{red!25} $21.5\% \scriptstyle \pm 17.8\%$ & $4.37 \scriptstyle \pm 1.19$ & $62.31 \scriptstyle \pm 7.46$  & N/A &  \cellcolor{red!25} $31.5\% \scriptstyle \pm 2.2\%$ & $1.54 \scriptstyle \pm 0.35$ & $46.02 \scriptstyle \pm 13.29$  & $37.50 \scriptstyle \pm 1.30$   &  \cellcolor{red!25} $44.1\% \scriptstyle \pm 15.3\%$ & $0.87 \scriptstyle \pm 0.03$ & $51.17 \scriptstyle \pm 19.36$ & $41.30 \scriptstyle \pm 14.11$ \\
                \textbf{iDEM} & \cellcolor{red!25} $33.3\% \scriptstyle \pm 0.0\%$ & $2.12 \scriptstyle \pm 0.11$       & $51.43 \scriptstyle \pm 0.33$  & $38.07 \scriptstyle \pm 1.02$   &  \cellcolor{red!25} $66.7\% \scriptstyle \pm 0.0\%$  & $7.53 \scriptstyle \pm 0.52$       & $85.77 \scriptstyle \pm 0.63$  & $55.56 \scriptstyle \pm 1.03$  & \cellcolor{red!25} $33.3\% \scriptstyle \pm 0.0\%$  & $175.16 \scriptstyle \pm 0.83$    & $102.13 \scriptstyle \pm 0.20$ & N/A   \\
                \textbf{PDDS} & N/A  & N/A         & N/A     & N/A   & N/A   & N/A         & N/A     & N/A   &  N/A  & N/A        & N/A     & N/A    \\
                \midrule
                \textbf{GMM-LRDS} & $\mathbf{2.1\% \scriptstyle \pm 1.0\%}$  & $0.25 \scriptstyle \pm 0.00$       & $\mathbf{3.41 \scriptstyle \pm 1.26}$   & $\mathbf{3.46 \scriptstyle \pm 0.90}$  & $\mathbf{1.7\% \scriptstyle \pm 0.9\%}$   & $0.47 \scriptstyle \pm 0.00$       & $\mathbf{3.00 \scriptstyle \pm 1.12}$   & $\mathbf{2.99 \scriptstyle \pm 0.80}$   & $\mathbf{4.5\% \scriptstyle \pm 1.8\%}$   & $0.82 \scriptstyle \pm 0.00$      & $\mathbf{6.12 \scriptstyle \pm 2.21}$   & $\mathbf{5.25 \scriptstyle \pm 1.62}$\\
                \bottomrule
			\end{tabular}%
		}
	\end{center}
\end{table}

\begin{table}[t!]
	\caption{Results for bi-modal Gaussian mixtures with \textbf{full} covariance and \textbf{medium} conditioning, averaged over $16$ sampling runs. Bold font indicates best result, \textcolor{orange!75}{orange} cells refer to settings with uninformative mode weight estimation (i.e., uniform mixture), \textcolor{red!75}{red} cells denote mode collapse. N/A denotes settings where results could not be obtained due to numerical issues. The MMD and KS results are displayed with 100-factor rescaling.} 
	\label{tbl:app_two_modes_full_medium}
	\begin{center}
		\resizebox{\columnwidth}{!}{%
			\begin{tabular}{l|rrrr|rrrr|rrrr}
				\toprule
				     & \multicolumn{4}{c}{$d = 8$}  & \multicolumn{4}{c}{$d = 16$}       & \multicolumn{4}{c}{$d = 32$}           \\
				\midrule
				\bf Algorithm    & Mode Err. $ \downarrow$ & $W_{2,\varepsilon} \downarrow$ & $\operatorname{MMD} \downarrow$  & $\operatorname{KS} \downarrow$ &  Mode Err. $ \downarrow$ & $W_{2,\varepsilon} \downarrow$ & $\operatorname{MMD} \downarrow$  & $\operatorname{KS} \downarrow$ & Mode Err. $ \downarrow$ & $W_{2,\varepsilon} \downarrow$ & $\operatorname{MMD} \downarrow$  & $\operatorname{KS} \downarrow$       \\
				\midrule
				\textbf{SMC} & \cellcolor{orange!25} $17.3\% \scriptstyle \pm 7.1\%$  & $0.17 \scriptstyle \pm 0.00$ & $21.55 \scriptstyle \pm 8.84$  & $16.07 \scriptstyle \pm 6.33$  & \cellcolor{orange!25} $13.5\% \scriptstyle \pm 4.5\%$  & $0.38 \scriptstyle \pm 0.00$ & $16.35 \scriptstyle \pm 5.47$  & $12.52 \scriptstyle \pm 4.05$  & \cellcolor{orange!25} $10.3\% \scriptstyle \pm 5.8\%$  & $0.70 \scriptstyle \pm 0.00$ & $12.18 \scriptstyle \pm 6.67$  & $9.72 \scriptstyle \pm 5.15$  \\
				\textbf{RE} & \cellcolor{orange!25} $16.6\% \scriptstyle \pm 1.0\%$  & $0.17 \scriptstyle \pm 0.00$ & $21.34 \scriptstyle \pm 1.54$  & $15.30 \scriptstyle \pm 1.11$  & \cellcolor{orange!25} $16.4\% \scriptstyle \pm 1.4\%$  & $0.38 \scriptstyle \pm 0.00$ & $20.65 \scriptstyle \pm 1.90$  & $15.23 \scriptstyle \pm 1.34$  & \cellcolor{orange!25} $16.3\% \scriptstyle \pm 1.2\%$  & $0.70 \scriptstyle \pm 0.00$ & $19.90 \scriptstyle \pm 1.74$  & $15.04 \scriptstyle \pm 1.24$  \\
                \textbf{LV-PIS}& $6.8\% \scriptstyle \pm 2.4\%$   & $0.20 \scriptstyle \pm 0.00$ & $9.89 \scriptstyle \pm 2.78$   & $8.67 \scriptstyle \pm 1.98$ & \cellcolor{red!25} $25.4\% \scriptstyle \pm 12.6\%$ & $0.52 \scriptstyle \pm 0.06$ & $28.21 \scriptstyle \pm 13.13$ & $23.37 \scriptstyle \pm 10.61$ &  \cellcolor{red!25} $33.1\% \scriptstyle \pm 0.2\%$  & $1.07 \scriptstyle \pm 0.39$ & $35.02 \scriptstyle \pm 0.70$  & $27.85 \scriptstyle \pm 3.09$  \\
                \textbf{LV-DDS} & $\mathbf{1.6\% \scriptstyle \pm 0.8\%}$   & $0.17 \scriptstyle \pm 0.00$ & $\mathbf{2.64 \scriptstyle \pm 0.87}$   & $\mathbf{2.63 \scriptstyle \pm 0.53}$ &$3.3\% \scriptstyle \pm 0.9\%$   & $0.39 \scriptstyle \pm 0.00$ & $4.83 \scriptstyle \pm 1.08$   & $4.27 \scriptstyle \pm 0.75$   &  $4.0\% \scriptstyle \pm 1.1\%$   & $0.75 \scriptstyle \pm 0.00$ & $6.85 \scriptstyle \pm 1.23$   & $5.95 \scriptstyle \pm 1.01$ \\
                \textbf{LV-DIS} & \cellcolor{orange!25}  $16.5\% \scriptstyle \pm 0.5\%$   & $0.22 \scriptstyle \pm 0.00$ & $22.50 \scriptstyle \pm 0.88$  & $17.33 \scriptstyle \pm 0.66$ & \cellcolor{orange!25} $16.3\% \scriptstyle \pm 1.1\%$  & $0.46 \scriptstyle \pm 0.02$ & $21.65 \scriptstyle \pm 1.55$  & $16.93 \scriptstyle \pm 0.13$  &  \cellcolor{orange!25} $16.7\% \scriptstyle \pm 0.6\%$  & $0.82 \scriptstyle \pm 0.01$ & $21.95 \scriptstyle \pm 0.94$  & $17.16 \scriptstyle \pm 0.72$ \\
                \textbf{LV-CMCD} & \cellcolor{orange!25} $16.6\% \scriptstyle \pm 0.5\%$ & $0.18 \scriptstyle \pm 0.00$ & $21.76 \scriptstyle \pm 1.04$ & $16.06 \scriptstyle \pm 0.81$ & \cellcolor{red!25} $20.8\% \scriptstyle \pm 12.7\%$ & $1.23 \scriptstyle \pm 0.96$ & $36.92 \scriptstyle \pm 5.06$ & $26.70 \scriptstyle \pm 4.59$  & \cellcolor{orange!25} $16.6\% \scriptstyle \pm 0.50\%$ & $0.75 \scriptstyle \pm 0.00$ & $20.89 \scriptstyle \pm 0.95$ & $15.90 \scriptstyle \pm 0.71$ \\
                \textbf{iDEM} & \cellcolor{red!25} $66.7\% \scriptstyle \pm 0.0\%$  & $2.06 \scriptstyle \pm 0.19$ & $85.27 \scriptstyle \pm 0.40$  & $61.42 \scriptstyle \pm 1.09$   & \cellcolor{red!25} $66.7\% \scriptstyle \pm 0.0\%$  & $3.95 \scriptstyle \pm 0.20$ & $84.48 \scriptstyle \pm 0.53$  & $62.98 \scriptstyle \pm 1.10$  & \cellcolor{red!25} $66.7\% \scriptstyle \pm 0.0\%$  & $6.89 \scriptstyle \pm 0.24$ & $85.60 \scriptstyle \pm 0.41$  & $62.87 \scriptstyle \pm 1.31$ \\
                \textbf{PDDS}  & $\mathbf{1.3\% \scriptstyle \pm 1.1\%}$   & $0.19 \scriptstyle \pm 0.00$ & $\mathbf{1.52 \scriptstyle \pm 1.58}$   & $\mathbf{2.65 \scriptstyle \pm 0.77}$   & $\mathbf{1.9\% \scriptstyle \pm 0.9\%}$   & $0.40 \scriptstyle \pm 0.00$ & $\mathbf{2.63 \scriptstyle \pm 1.53}$   & $\mathbf{3.09 \scriptstyle \pm 0.77}$  &  $\mathbf{0.7\% \scriptstyle \pm 0.6\%}$   & $0.73 \scriptstyle \pm 0.00$ & $\mathbf{1.56 \scriptstyle \pm 0.89}$   & $\mathbf{2.61 \scriptstyle \pm 0.37}$   \\
                \midrule
                \textbf{GMM-LRDS}  & $\mathbf{1.3\% \scriptstyle \pm 0.6\%}$   & $0.17 \scriptstyle \pm 0.00$ & $\mathbf{1.97 \scriptstyle \pm 0.85}$   & $\mathbf{2.36 \scriptstyle \pm 0.52}$  & $\mathbf{1.8\% \scriptstyle \pm 0.5\%}$   & $0.38 \scriptstyle \pm 0.00$ & $\mathbf{2.65 \scriptstyle \pm 0.87}$   & $\mathbf{2.72 \scriptstyle \pm 0.58}$ &  $2.4\% \scriptstyle \pm 0.5\%$   & $0.71 \scriptstyle \pm 0.00$ & $3.48 \scriptstyle \pm 0.89$   & $3.24 \scriptstyle \pm 0.61$ \\
                \bottomrule
			\end{tabular}%
		}
	\end{center}
\end{table}

\begin{table}[t!]
	\caption{Results for bi-modal Gaussian mixtures with \textbf{full} covariance and \textbf{hard} conditioning, averaged over $16$ sampling runs. Bold font indicates best result, \textcolor{orange!75}{orange} cells refer to settings with uninformative mode weight estimation (i.e., uniform mixture), \textcolor{red!75}{red} cells denote mode collapse. N/A denotes settings where results could not be obtained due to numerical issues. The MMD and KS results are displayed with 100-factor rescaling.} 
	\label{tbl:app_two_modes_full_hard}
	\begin{center}
		\resizebox{\columnwidth}{!}{%
			\begin{tabular}{l|rrrr|rrrr|rrrr}
				\toprule
				     & \multicolumn{4}{c}{$d = 8$}  & \multicolumn{4}{c}{$d = 16$}       & \multicolumn{4}{c}{$d = 32$}           \\
				\midrule
				\bf Algorithm    & Mode Err. $\downarrow$ & $W_{2,\varepsilon} \downarrow$ & $\operatorname{MMD} \downarrow$  & $\operatorname{KS} \downarrow$ &  Mode Err. $ \downarrow$ & $W_{2,\varepsilon} \downarrow$ & $\operatorname{MMD} \downarrow$  & $\operatorname{KS} \downarrow$ & Mode Err. $ \downarrow$ & $W_{2,\varepsilon} \downarrow$ & $\operatorname{MMD} \downarrow$  & $\operatorname{KS} \downarrow$       \\
				\midrule
				\textbf{SMC} & \cellcolor{orange!25} $14.2\% \scriptstyle \pm 15.0\%$ & $0.10 \scriptstyle \pm 0.00$       & $17.83 \scriptstyle \pm 19.13$ & $13.83 \scriptstyle \pm 13.58$  & \cellcolor{orange!25} $10.1\% \scriptstyle \pm 6.9\%$  & $0.24 \scriptstyle \pm 0.00$         & $12.91 \scriptstyle \pm 8.77$ & $10.03 \scriptstyle \pm 6.21$  & \cellcolor{orange!25} $17.9\% \scriptstyle \pm 10.2\%$ & $0.47 \scriptstyle \pm 0.00$        & $21.61 \scriptstyle \pm 12.58$ & $16.96 \scriptstyle \pm 9.50$  \\
				\textbf{RE} & \cellcolor{orange!25} $16.1\% \scriptstyle \pm 1.2\%$  & $0.11 \scriptstyle \pm 0.00$       & $20.99 \scriptstyle \pm 1.86$  & $15.12 \scriptstyle \pm 1.33$   & \cellcolor{orange!25} $16.8\% \scriptstyle \pm 1.2\%$  & $0.25 \scriptstyle \pm 0.00$         & $21.54 \scriptstyle \pm 1.76$ & $15.89 \scriptstyle \pm 1.22$   &  \cellcolor{orange!25} $16.3\% \scriptstyle \pm 1.3\%$  & $0.47 \scriptstyle \pm 0.00$        & $20.28 \scriptstyle \pm 1.79$  & $15.39 \scriptstyle \pm 1.32$ \\
                \textbf{LV-PIS} & $10.9\% \scriptstyle \pm 1.1\%$  & $0.20 \scriptstyle \pm 0.01$       & $15.75 \scriptstyle \pm 1.22$  &  $14.13 \scriptstyle \pm 0.83$ &  $2.2\% \scriptstyle \pm 1.5\%$   & $0.43 \scriptstyle \pm 0.00$         & $10.36 \scriptstyle \pm 0.80$ & $10.50 \scriptstyle \pm 0.70$ &  \cellcolor{red!25} $32.7\% \scriptstyle \pm 0.3\%$  & $1.34 \scriptstyle \pm 0.03$        & $34.25 \scriptstyle \pm 0.75$  & $30.15 \scriptstyle \pm 0.78$   \\
                \textbf{LV-DDS} & $3.2\% \scriptstyle \pm 1.8\%$   & $0.12 \scriptstyle \pm 0.01$       & $5.53 \scriptstyle \pm 1.63$   & $5.36 \scriptstyle \pm 1.05$   &  $7.5\% \scriptstyle \pm 4.1\%$   & $0.29 \scriptstyle \pm 0.00$         & $11.39 \scriptstyle \pm 4.88$ & $9.91 \scriptstyle \pm 3.53$  &  \cellcolor{orange!25} $10.5\% \scriptstyle \pm 7.8\%$  & $0.62 \scriptstyle \pm 0.00$        & $16.62 \scriptstyle \pm 8.01$  & $14.10 \scriptstyle \pm 6.05$ \\
                \textbf{LV-DIS} & \cellcolor{orange!25} $16.9\% \scriptstyle \pm 0.5\%$  & $0.28 \scriptstyle \pm 0.03$       & $24.00 \scriptstyle \pm 0.90$  & $21.85 \scriptstyle \pm 0.90$   & \cellcolor{orange!25} $16.6\% \scriptstyle \pm 0.6\%$  & $0.51 \scriptstyle \pm 0.03$         & $23.85 \scriptstyle \pm 0.96$ & $22.1 \scriptstyle \pm 0.77$   & \cellcolor{orange!25} $16.8\% \scriptstyle \pm 0.6\%$  & $0.91 \scriptstyle \pm 0.07$        & $24.34 \scriptstyle \pm 0.91$  & $22.76 \scriptstyle \pm 0.77$  \\
                \textbf{LV-CMCD} & \cellcolor{orange!25} $16.6\% \scriptstyle \pm 0.5\%$ & $1.98 \scriptstyle \pm 0.02$ & $18.76 \scriptstyle \pm 0.56$ &  $34.42 \scriptstyle \pm 1.16$  & \cellcolor{orange!25} $16.7\% \scriptstyle \pm 0.6\%$ & $3.67 \scriptstyle \pm 0.02$ & $19.86 \scriptstyle \pm 0.52$ & $36.03 \scriptstyle \pm 0.10$    & \cellcolor{orange!25}  $16.7\% \scriptstyle \pm 0.5\%$ & $6.05 \scriptstyle \pm 0.02$ & $22.65 \scriptstyle \pm 0.42$ & $35.88 \scriptstyle \pm 0.99$    \\
                \textbf{iDEM} & \cellcolor{red!25} $66.6\% \scriptstyle \pm 0.0\%$  & $0.23 \scriptstyle \pm 0.00$       & $84.95 \scriptstyle \pm 0.61$  & $60.96 \scriptstyle \pm 1.39$   &   \cellcolor{red!25} $66.7\% \scriptstyle \pm 0.0\%$  & $5.49 \scriptstyle \pm 0.12$         & $86.81 \scriptstyle \pm 0.47$ & $59.71 \scriptstyle \pm 1.59$ & \cellcolor{red!25} $66.7\% \scriptstyle \pm 0.0\%$  & $10.33 \scriptstyle \pm 0.09$       & $81.57 \scriptstyle \pm 0.65$  & $53.73 \scriptstyle \pm 1.36$  \\
                \textbf{PDDS} & N/A   & N/A         & N/A     & N/A   & N/A   & N/A           & N/A    & N/A   &  N/A   & N/A          & N/A     & N/A \\
                \midrule
                \textbf{GMM-LRDS} & $\mathbf{1.8\% \scriptstyle \pm 0.7\%}$   & $0.11 \scriptstyle \pm 0.00$       & $\mathbf{2.85 \scriptstyle \pm 1.00}$   & $\mathbf{3.22 \scriptstyle \pm 0.67}$   & $\mathbf{1.0\% \scriptstyle \pm 0.7\%}$   & $0.25 \scriptstyle \pm 0.00$         & $\mathbf{2.22 \scriptstyle \pm 0.91}$  & $\mathbf{2.60 \scriptstyle \pm 0.62}$  &  $\mathbf{2.2\% \scriptstyle \pm 1.3\%}$   & $0.47 \scriptstyle \pm 0.00$        & $\mathbf{3.19 \scriptstyle \pm 1.63}$   & $\mathbf{3.13 \scriptstyle \pm 1.17}$ \\
                \bottomrule
			\end{tabular}%
		}
	\end{center}
	\vspace{-0.3cm}
\end{table}

\paragraph{Execution time.} To demonstrate the applicability of LRDS in practice, we provide in \Cref{fig:clock_time_two_modes} the execution clock-time of each sampling method for several Gaussian mixture sampling settings. The computations were all ran on the same V100 GPU. We notice that (i) the initial MCMC sampling step to build the reference dataset, then used to initialize each sampler, and the reference fitting time from GMM-LRDS are completely negligeable compared to the training time, and that (ii) the training time of GMM-LRDS is on par with previous variational diffusion-based approaches.

\begin{figure}[h]
    \centering
    \includegraphics[width=\linewidth]{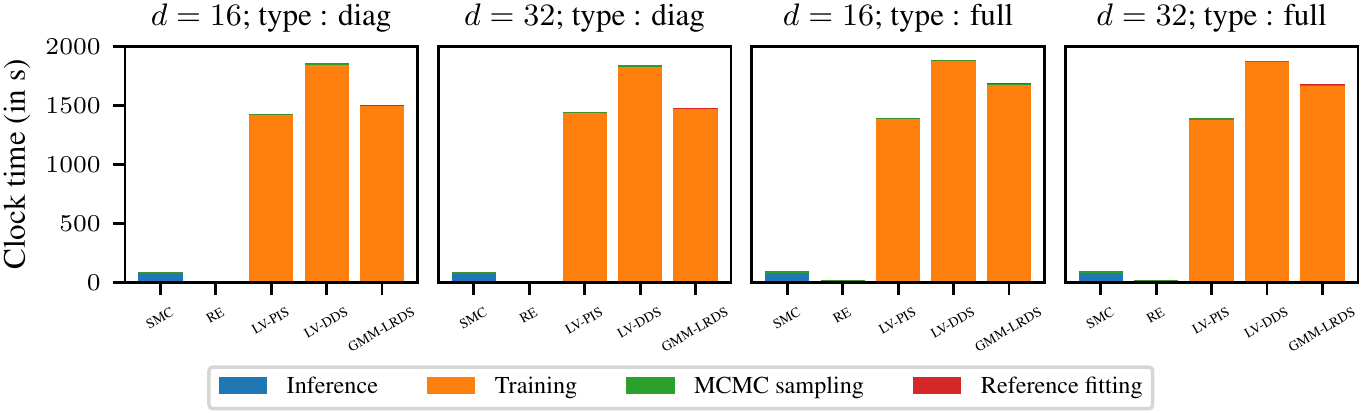}
    \caption{Execution clock-time for bi-modal Gaussian mixtures with \textbf{diagonal} covariance (left) and \textbf{full} covariance (right), averaged over the conditioning settings.}
    \label{fig:clock_time_two_modes}
\end{figure}

\vspace{-0.3cm}
\paragraph{Ablation study on GMM-LRDS: effect of the reference distribution.} To asses the robustness of GMM-LRDS with respect to the setting of the reference distribution, we conduct an ablation study that reveals the impact of modifying the location and the entropy of the reference modes. To do so, we set $d=8$ and define the target distribution $\pi : x\in \rset^d \mapsto (2/3) \densityGaussian(x; -\mathbf{1}_d, 0.05) + (1/3) \densityGaussian(x; \mathbf{1}_d, 0.05)$, as detailed in \Cref{app:details_target}. Instead of defining the reference distribution in GMM-LRDS as a Gaussian mixture fitted on MCMC samples, we propose to consider a flexible reference distribution given by the Gaussian mixture $x\in \rset^d \mapsto (2/3) \densityGaussian(x; \mathbf{m}_1, 0.05 \alpha^2 \Idd) + (1/3) \densityGaussian(x; \mathbf{m}_2, 0.05 \alpha^2 \Idd)$ where $(\mathbf{m}_1, \mathbf{m}_2)\in \rset^d\times \rset^d$ and $\alpha>0$.

To observe the impact of the location of the reference modes on GMM-LRDS, we take $\alpha=1$ (same variance setting as the target) and $(\mathbf{m}_1, \mathbf{m}_2) \sim \densityGaussian(-\mathbf{1}_d,\sigma_{\mathbf{m}}^2 \Idd)\otimes \densityGaussian(\mathbf{1}_d,\sigma_{\mathbf{m}}^2 \Idd)$ where $\sigma_{\mathbf{m}}\in \{0,0.25,0.5,0.75,1.0\}$ is a mean perturbation parameter. Hence, the higher $\sigma_{\mathbf{m}}$, the further the reference modes might be with respect to the target modes. We display in \Cref{fig:mean_perturbation} the results of GMM-LRDS for this setting, averaged over 16 sampling runs. This ablation study notably reveals the major need of precision on the location of the target modes (in practice, brought by MCMC sampling) to build the reference distribution. 

On the other hand, to understand the effect of the entropy of the reference modes, we take $(\mathbf{m}_1,\mathbf{m}_2)=(-\mathbf{1}_d,\mathbf{1}_d)$ (same mean setting as the target), and $\alpha\in \{0.5,0.75,1,1.5,2\}$, which can be seen as a variance perturbation parameter. We display in \Cref{fig:std_perturbation} the results of GMM-LRDS for this setting, averaged over 16 sampling runs. Interestingly, our ablation study demonstrates that GMM-LRDS works well as soon as the support of the reference distribution includes the support of the target distribution. In practice, this is verified by MCMC sampling. 

\begin{figure}[t!]
    \centering
    \includegraphics[width=\linewidth]{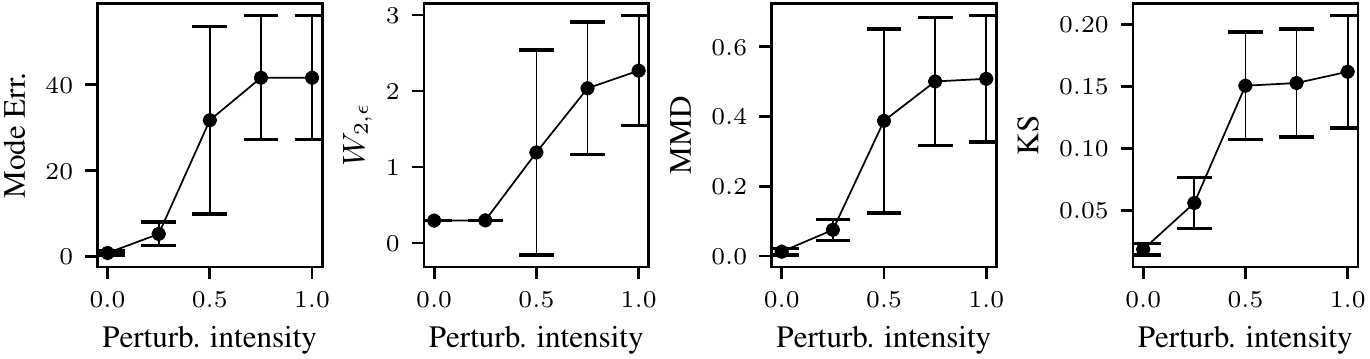}
    \caption{Results of GMM-LRDS for an 8-dimensional bi-modal Gaussian mixture when varying the location of the reference modes: the performance degrades as soon as the reference modes and the target modes are further from each other.}
    \label{fig:mean_perturbation}
    \vspace{-0.1cm}
\end{figure}

\begin{figure}[t!]
    \centering
    \includegraphics[width=\linewidth]{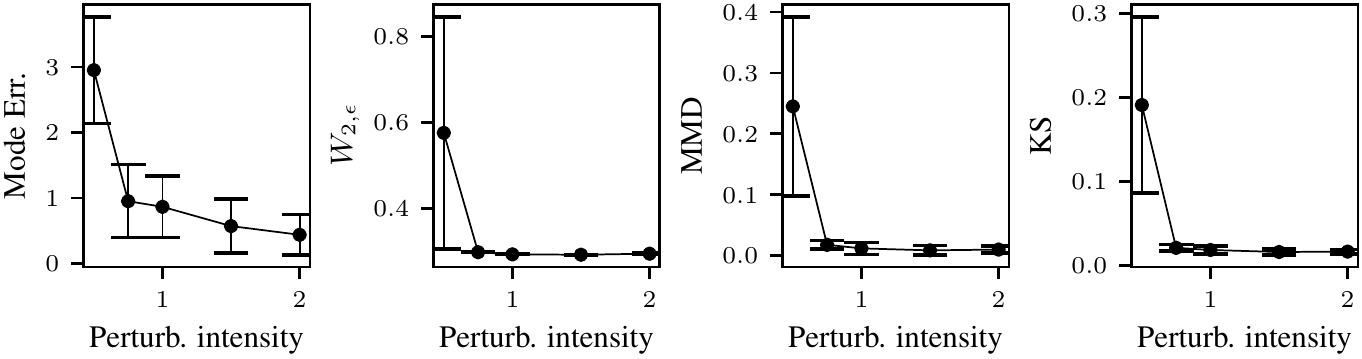}
    \caption{Results of GMM-LRDS for an 8-dimensional bi-modal Gaussian mixture when varying the variance of the reference modes: except for small reference mode variance, the performance is unchanged.}
    \label{fig:std_perturbation}
    \vspace{-0.1cm}
\end{figure}

\begin{figure}[t!]
    \centering
    \includegraphics[width=\linewidth]{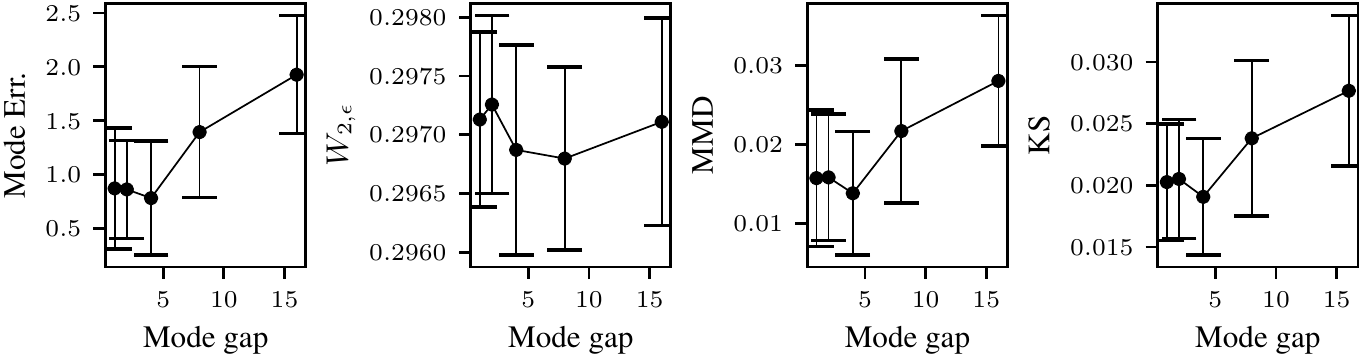}
    \caption{Results of GMM-LRDS for an 8-dimensional bi-modal Gaussian mixture when varying the distance between the modes of the target distribution: the performance does not significantly degrades, despite the increasing sampling complexity.}
    \label{fig:distance_mode}
    \vspace{-0.1cm}
\end{figure}

\paragraph{Ablation study on GMM-LRDS: effect of the distance between the target modes.} To asses the robustness of GMM-LRDS with respect to the complexity of the target distribution, we conduct a second ablation study that illustrates the behaviour of GMM-LRDS for target with higher energy barrier. To do so, we set $d=8$ and define the target distribution $\pi : x\in \rset^d \mapsto (2/3) \densityGaussian(x; -a\,\mathbf{1}_d, 0.05) + (1/3) \densityGaussian(x; a\,\mathbf{1}_d, 0.05)$, where $a\in\{1,2,4,8,16\}$ indicates the complexity level of the target. For each target, we conduct 16 sampling runs of GMM-LRDS, and display the results in \Cref{fig:distance_mode}. Our numerics show that the performance GMM-LRDS remains significantly consistent with increasing $a$, proving the interest of our method for complex multi-modal targets.

\paragraph{Limitation of LRDS: high dimension.} Although LRDS outperforms its competitors when sampling from challenging bi-modal Gaussian mixtures, its performance tends to decrease when the dimension is large, like most of samplers. To highlight this limitation, we provide in \Cref{tbl:app_two_modes_dim_max} the results of GMM-LRDS for various mixture settings with $d=128$, where we observe that it fails to recover a good estimation of the strongest mode weight (roughly, $10\%$ of relative error).

\begin{table}[t!]
	\caption{Results of GMM-LRDS for bi-modal Gaussian mixtures with \textbf{diagonal} covariance and \textbf{various conditioning} settings ($d=128$), averaged over $16$ sampling runs. The MMD and KS results are displayed with 100-factor rescaling.}
	\label{tbl:app_two_modes_dim_max}
	\begin{center}
		\resizebox{\columnwidth}{!}{%
			\begin{tabular}{rrrr|rrrr|rrrr}
				\toprule
				      \multicolumn{4}{c}{Isotropic conditioning}  & \multicolumn{4}{c}{Medium conditioning}       & \multicolumn{4}{c}{Hard conditioning}           \\
				\midrule
				 Mode Err. $\downarrow$ & $W_{2,\varepsilon} \downarrow$ & $\operatorname{MMD} \downarrow$  & $\operatorname{KS} \downarrow$ &  Mode Err. $ \downarrow$ & $W_{2,\varepsilon} \downarrow$ & $\operatorname{MMD} \downarrow$  & $\operatorname{KS} \downarrow$ & Mode Err. $ \downarrow$ & $W_{2,\varepsilon} \downarrow$ & $\operatorname{MMD} \downarrow$  & $\operatorname{KS} \downarrow$       \\
				\midrule
				 $6.6\% \scriptstyle \pm 0.6\%$   & $3.04 \scriptstyle \pm 0.00$ & $7.84 \scriptstyle \pm 0.92$ & $6.16 \scriptstyle \pm 0.67$   & $6.8\% \scriptstyle \pm 0.9\%$ & $1.86 \scriptstyle \pm 0.00$ & $8.33 \scriptstyle \pm 1.29$ & $6.81 \scriptstyle \pm 0.98$  & $6.2\% \scriptstyle \pm 3.6\%$ & $1.32 \scriptstyle \pm 0.00$ & $8.09 \scriptstyle \pm 3.83$ & $6.73 \scriptstyle \pm 2.92$ \\
                \bottomrule
			\end{tabular}%
		}
	\end{center}
\end{table}

\subsubsection{Multi-modal Gaussian mixtures}\label{subsec:many_modes}

We display in \Cref{fig:multi-modal_all} the results of all considered samplers, including LRDS, when targeting the multi-modal Gaussian mixtures defined in \Cref{app:details_target} with fixed dimension $d=8$ and increasing number of modes $L\in \{4,8,16,32,64\}$. To assess the sampling quality at a global level, we dispense for each sampler the total variation distance between the true weight histogram and the weight histogram obtained via Monte Carlo approximation. All metrics are computed based on $8192$ samples. We notably observe that GMM-LRDS has the best performance compared to all other sampling methods, independently of the number of modes in the target distribution. In particular, \Cref{fig:multi-modal_all}  demonstrates that GMM-LRDS is able to recover both global features and local features for a complex multi-modal distribution.

\begin{figure}[t!]
    \centering
    \includegraphics[width=\linewidth]{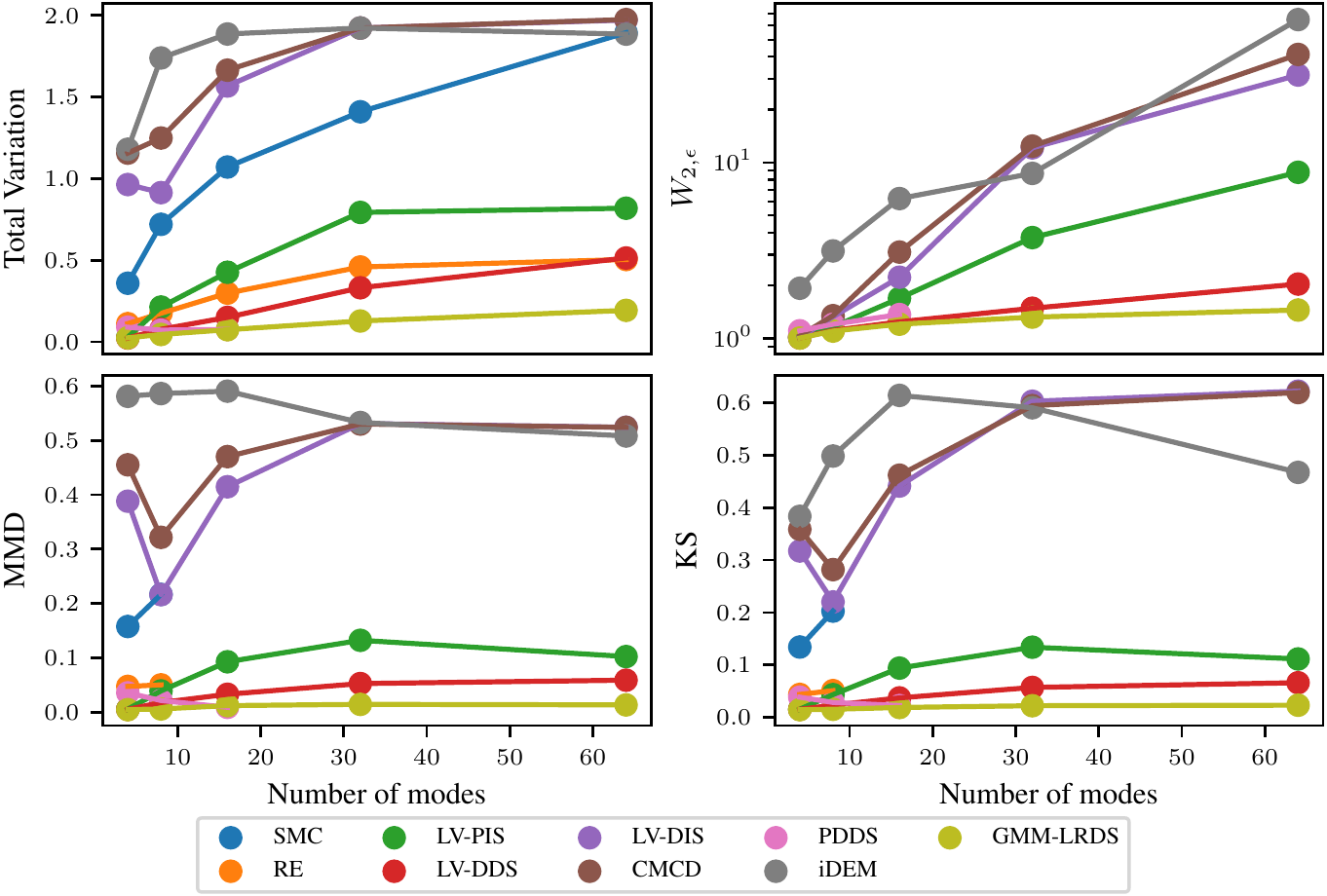}
    \caption{Results for multi-modal Gaussian mixtures, averaged over $16$ sampling runs. (\textbf{Top Left}): Total variation distance between weight histograms. (\textbf{Top Right}): Results with regularized Wasserstein distance. (\textbf{Bottom Left}): Results with MMD distance. (\textbf{Bottom Right}): Results with KS distance. Incomplete curves refer to settings with missing results due to numerical issues.}
    \label{fig:multi-modal_all}
    \vspace{-0.1cm}
\end{figure}

\paragraph{Execution time.} To demonstrate the applicability of LRDS in practice, we provide in \Cref{fig:clock_time_many_modes} the execution clock-time of each sampling method for all multi-modal Gaussian mixture sampling settings. The conclusions are the same as in the bi-modal setting: GMM-LRDS has equivalent training time to previous variational diffusion-based approaches, with negligeable extra cost due to reference fitting.

\begin{figure}[t!]
    \centering
    \includegraphics[width=\linewidth]{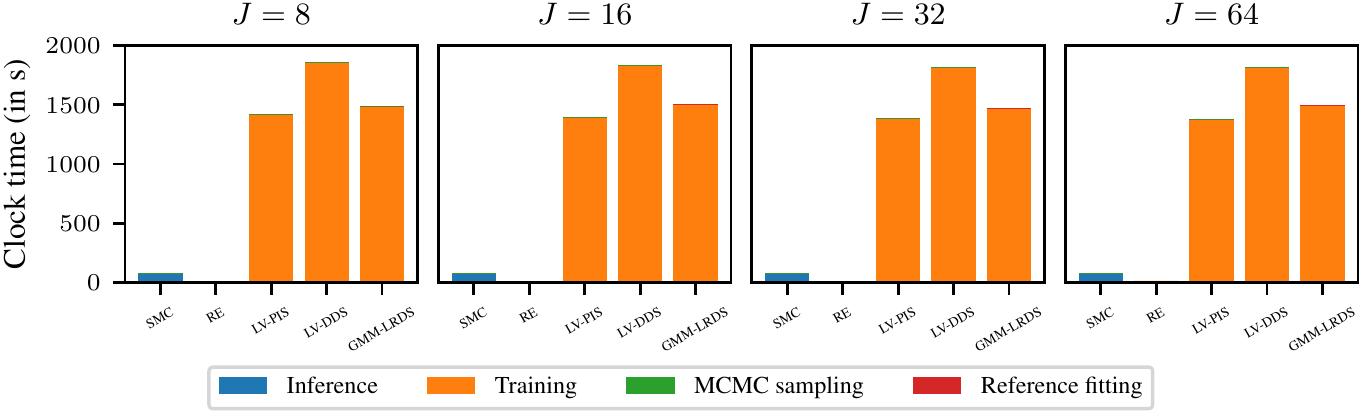}
    \caption{Execution clock-time for multi-modal Gaussian mixtures with increasing number of mixture components (from left to right).}
    \label{fig:clock_time_many_modes}
    \vspace{-0.5cm}
\end{figure}

\paragraph{Limitation of LRDS: high number of modes.} As depicted in \Cref{fig:multi-modal_all} (top left), the recovery of the mode weights for a multi-modal target distribution gets worse as soon as the number of modes is large. Although GMM-LRDS performs better than its competitors, it suffers from this limitation.

\subsubsection{Rings and Checkerboard distributions}

In \Cref{fig:rings_components} and \Cref{fig:check_components}, we illustrate the impact of the number $J$ of Gaussian mixture components when running GMM-LRDS to sample from Rings and Checkerboard distributions, detailed in \Cref{app:details_target},by taking $J\in \{1,8,16,32,64\}$. In particular, we consider mixture models with diagonal covariance (first two rows) and full covariance (last two rows), fitted on MCMC samples from the target via EM algorithm. For both settings, we observe that setting $J$ large enables to recover with more precision the support of the target distribution, leading consequently to better performance of GMM-LRDS. In the special Rings setting, where the target energy landscape is narrow and complex, we also notice that mixtures with full covariance provide better estimation of the support of the distribution. On the other hand, we remark that GMM-LRDS struggles to recover the geometry of the Checkerboard distribution, while EBM-LRDS is more performant, see \Cref{fig:checkerboard}.

\begin{figure}[t!]
    \centering
    \includegraphics[width=\linewidth]{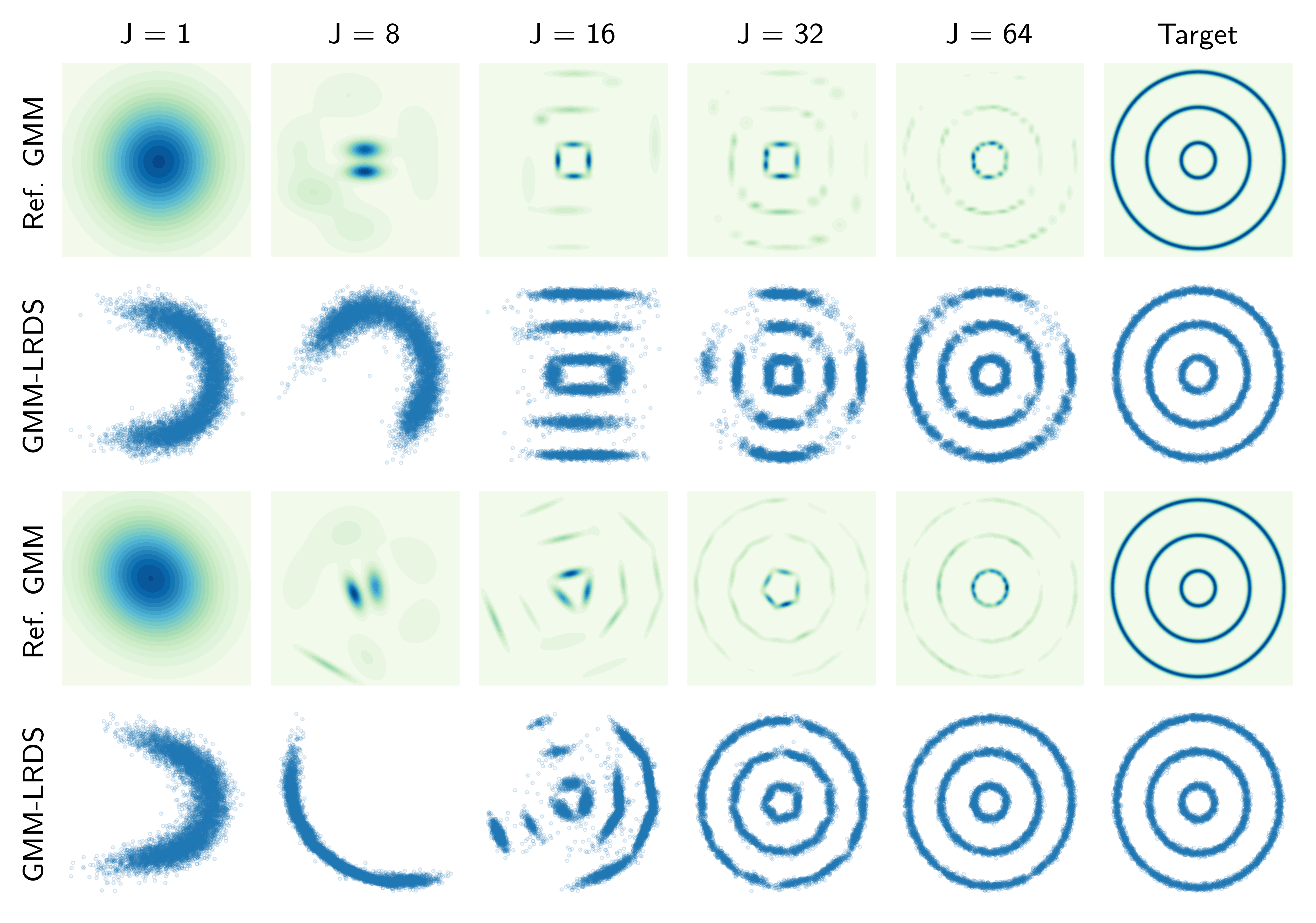}
    \caption{Results of GMM-LRDS for Rings distribution with increasingly expressive GMM for reference distribution - \textbf{(Top two rows)} GMM density and GMM-LRDS samples for \textbf{diagonal} covariance parameterization \textbf{(Bottom two rows)} GMM density and GMM-LRDS samples for \textbf{full} covariance parameterization.}
    \label{fig:rings_components}
\end{figure}

\begin{figure}[t!]
    \centering
    \includegraphics[width=\linewidth]{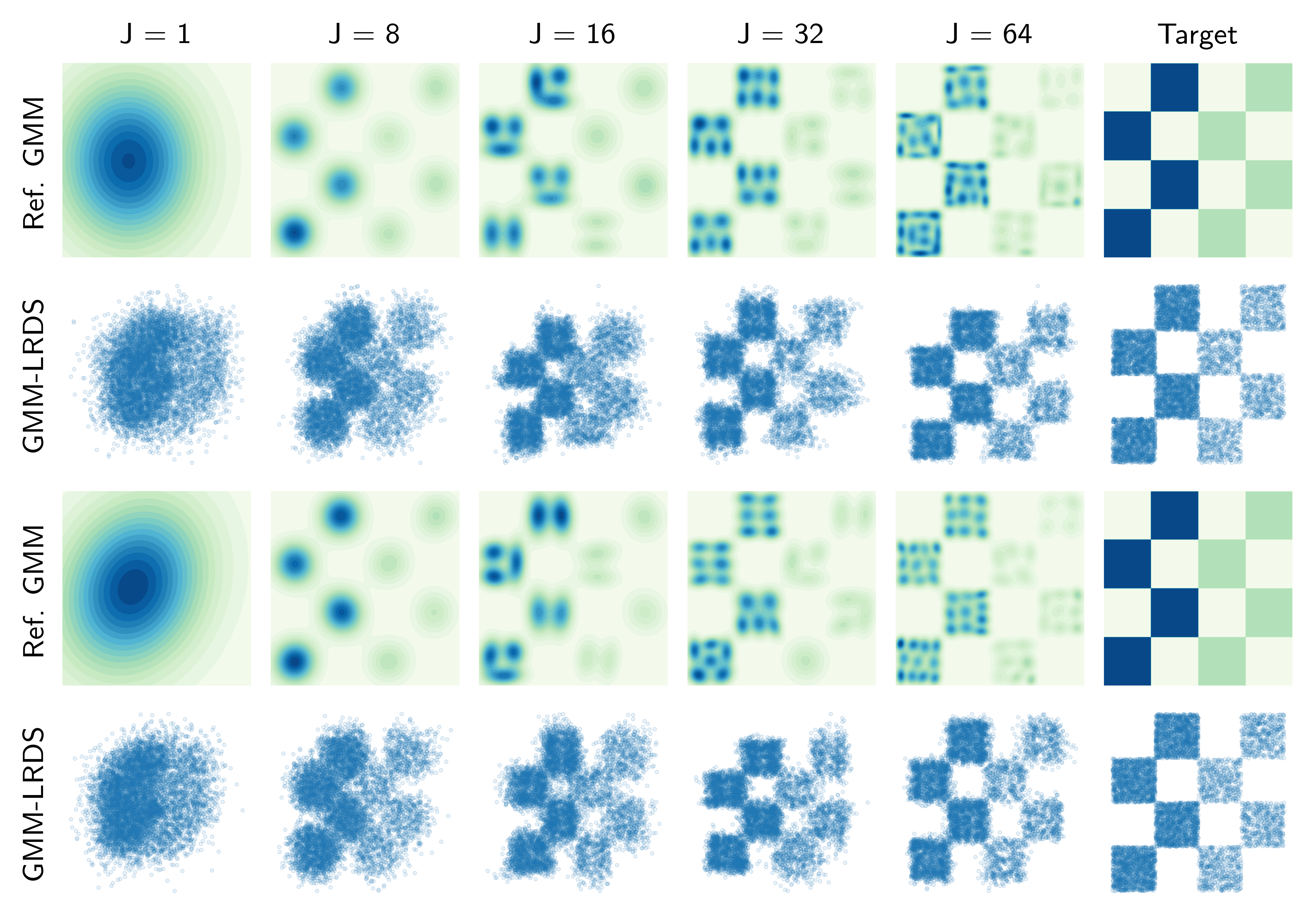}
    \caption{Results of GMM-LRDS for Checkerboard distribution with increasingly expressive GMM for reference distribution - \textbf{(Top two rows)} GMM density and GMM-LRDS samples for \textbf{diagonal} covariance parameterization \textbf{(Bottom two rows)} GMM density and GMM-LRDS samples for \textbf{full} covariance parameterization.}
    \label{fig:check_components}
\end{figure}

\subsubsection{$\phi^4$ field system}\label{subsec:phi_four}

\Cref{tbl:app_phi_four_additional} shows that all competing algorithms fail at recovering the weight ratio of the $\phi^4$ system as they are highly prone to mode collapse either on the 'negative' mode or on the 'positive' mode.

\begin{table}[t!]
	\caption{\textbf{Estimation of the weight ratio for the $\phi^4$ system} with different values of $h$, averaged over 16 runs. The Laplace approximations, that stand for ground truth, are displayed at the bottom of the table. \textcolor{orange!75}{Orange} cells refer to settings with uninformative mode weight estimation (i.e., uniform mixture), \textcolor{red!75}{red} cells denote mode collapse. N/A denotes settings where results could not be obtained due to numerical issues.}
	\label{tbl:app_phi_four_additional}
	\begin{center}
		\resizebox{\columnwidth}{!}{%
			\begin{tabular}{l|rrrrr}
				\toprule
				\bf Algorithm            & $h = 0$                   & $h = 9 \times 10^{-4}$      & $h = 2 \times 10^{-3}$          & $h = 2.5 \times 10^{-3}$    & $h = 3.5 \times 10^{-3}$    \\
				\midrule
				\textbf{SMC}             & $1.28 \scriptstyle \pm 1.19$             & $1.71 \scriptstyle \pm 1.47$            & $1.51 \scriptstyle \pm 1.38$                 & $1.60 \scriptstyle \pm 1.52$             & $1.57 \scriptstyle \pm 1.43$             \\
				\textbf{RE}              & $1.00 \scriptstyle \pm 0.12$             & \cellcolor{orange!25} $1.01 \scriptstyle \pm 0.12$             & \cellcolor{orange!25} $1.04 \scriptstyle \pm 0.13$                 & \cellcolor{orange!25} $1.11 \scriptstyle \pm 0.13$             & \cellcolor{orange!25} $1.07 \scriptstyle \pm 0.14$             \\
				\textbf{LV-PIS}          & \cellcolor{red!25}  $\infty$                  & \cellcolor{red!25}  $\infty$                 & \cellcolor{red!25}  $\infty$                    & \cellcolor{red!25}  $\infty$ & \cellcolor{red!25} $\infty$                  \\
				\textbf{LV-DDS}          & \cellcolor{red!25} $0.00 \scriptstyle \pm 0.00$               & \cellcolor{red!25} $\infty$                  & \cellcolor{red!25} $0.00 \scriptstyle \pm 0.00$                       & \cellcolor{red!25} $\infty$                   & \cellcolor{red!25} $\infty$                  \\
				\textbf{LV-DIS}          & $1.07 \scriptstyle \pm 0.03$ & \cellcolor{orange!25}  $1.05 \scriptstyle \pm 0.03$ & \cellcolor{orange!25} $1.05 \scriptstyle \pm 0.06$ & \cellcolor{orange!25} $1.16 \scriptstyle \pm 0.12$ & \cellcolor{orange!25}  $1.17 \scriptstyle \pm 0.07$        \\
				\textbf{CMCD}            & $1.01 \scriptstyle \pm 0.05$ & \cellcolor{orange!25} $0.98 \scriptstyle \pm 0.07$ & \cellcolor{orange!25} $1.01  \scriptstyle \pm 0.07$ & \cellcolor{orange!25} $0.96 \scriptstyle \pm 0.07$ & \cellcolor{orange!25} $0.96 \scriptstyle \pm 0.02$ \\
				\textbf{iDEM}            & \cellcolor{red!25} $0.59 \scriptstyle \pm 0.01$ & \cellcolor{red!25}  $35.9 \scriptstyle \pm 1.43$ & \cellcolor{red!25}  $98.6 \scriptstyle \pm 10.6$ & \cellcolor{red!25}  $1.81 \scriptstyle \pm 0.03$ & \cellcolor{red!25}  $170 \scriptstyle \pm 17.8$ \\
				\textbf{PDDS}            & N/A                   & N/A                  & N/A                      & N/A                  & N/A                \\
				\midrule
				\textbf{Laplace 0\textsuperscript{th} order} & $1.00$                      & $1.35$                      & $1.95$                          & $2.30$                      & $3.22$                      \\
				\textbf{Laplace  2\textsuperscript{nd} order} & $1.00$                      & $1.34$                      & $1.90$                          & $2.23$                      & $3.08$                      \\
				\bottomrule
			\end{tabular}%
		}
	\end{center}
\end{table}

\subsubsection{Bayesian logistic regression models}\label{subsec:bayesian}

Finally, we display in \Cref{tbl:app_bayesian} the results of all considered samplers, including LRDS, when targeting the Bayesian posterior distributions obtained from real-world data and defined in \Cref{app:details_target}. To assess the sampling quality, we compute the average predictive log-likelihood (i.e. the expected posterior distribution on the test dataset) via Monte Carlo approximation with 8192 samples. Since these distributions do not exhibit explicit multi-modal characteristics, we observe that standard MCMC-based techniques such as SMC and RE perform better than deep learning-based approaches. Interestingly, LRDS is on par or superior to previous diffusion-based approaches. For completeness, we also provide in \Cref{fig:clock_time_bayesian} the execution clock-time of all samplers in this setting.

\begin{table}[t!]
	\caption{\textbf{Average predictive log-likelihood} for Bayesian posterior distributions obtained from logistic regression tasks, averaged over 16 runs. Bold font indicates best performance.}
	\vspace{-0.3cm}
	\label{tbl:app_bayesian}
	\begin{center}
		\resizebox{\columnwidth}{!}{%
			\begin{tabular}{l|rrrr}
				\toprule
				\bf Algorithm            & \bf Ionosphere ($d=36$) $\uparrow$ & \bf Sonar ($d=62$)   $\uparrow$               & \bf German Credit ($d=26$) $\uparrow$ & \bf Breast Cancer  ($d=32$)  $\uparrow$   \\
				\midrule
				\textbf{SMC}&  $-139.4 \scriptstyle \pm 0.3$ & $-191.4 \scriptstyle \pm 0.2$ & $\mathbf{-122.2 \scriptstyle \pm 0.1}$ & $\mathbf{-89.2 \scriptstyle \pm 0.5}$  \\
				\textbf{RE} & $-139.5 \scriptstyle \pm 0.2$ & $-191.3 \scriptstyle \pm 0.1$ & $\mathbf{-122.2 \scriptstyle \pm 0.1}$ & $-90.1 \scriptstyle \pm 0.3$    \\
				\textbf{LV-PIS}& $-156.0 \scriptstyle \pm 1.0$     &$-202.2 \scriptstyle \pm 0.2$& $-153.7 \scriptstyle \pm 2.2$  & $-203.9 \scriptstyle \pm 3.3$ \\
				\textbf{LV-DDS} & $-147.7 \scriptstyle \pm 0.5$  & $-195.2 \scriptstyle \pm 0.7$  & $-151.2 \scriptstyle \pm 1.3$ & $-128.4 \scriptstyle \pm 0.9$  \\
				\textbf{LV-DIS} & $-224.3 \scriptstyle \pm 22.6$ & $-318.7 \scriptstyle \pm 47.0$ & $-206.5 \scriptstyle \pm 1.0$   & $-152.1 \scriptstyle \pm 1.9$         \\
				\textbf{CMCD} & $-970.6 \scriptstyle \pm 2.7$  & $-1645.4 \scriptstyle \pm 6.2$ & $-171.9 \scriptstyle \pm 0.0$   & $-175.3 \scriptstyle \pm 5.7$  \\
				\textbf{iDEM}      & $-153.7 \scriptstyle \pm 0.2$ & $-4634.1 \scriptstyle \pm 12.9$  & $-132.4 \scriptstyle \pm 0.1$ & $-320.8 \scriptstyle \pm 0.9$ \\
				\textbf{PDDS} & $-139.6 \scriptstyle \pm 0.3$ & $-191.3 \scriptstyle \pm 0.1$  & $\mathbf{-122.2 \scriptstyle \pm 0.1}$  & $-102.2 \scriptstyle \pm 0.1$      \\
				\midrule
				\textbf{GMM-LRDS} &  $\mathbf{-138.0 \scriptstyle \pm 0.5}$ & $\mathbf{-190.5 \scriptstyle \pm 1.0}$  & $-129.0 \scriptstyle \pm 2.7$   & $-100.1 \scriptstyle \pm 2.1$     \\
				\textbf{EBM-LRDS} & $-148.0 \scriptstyle \pm 15.4$ & $-191.8 \scriptstyle \pm 0.9$  & $-131.2 \scriptstyle \pm 5.2$  & $-92.7 \scriptstyle \pm 4.0$     \\
				\bottomrule
			\end{tabular}%
		}
	\end{center}
\end{table}

\begin{figure}[t!]
    \centering
    \includegraphics[width=\linewidth]{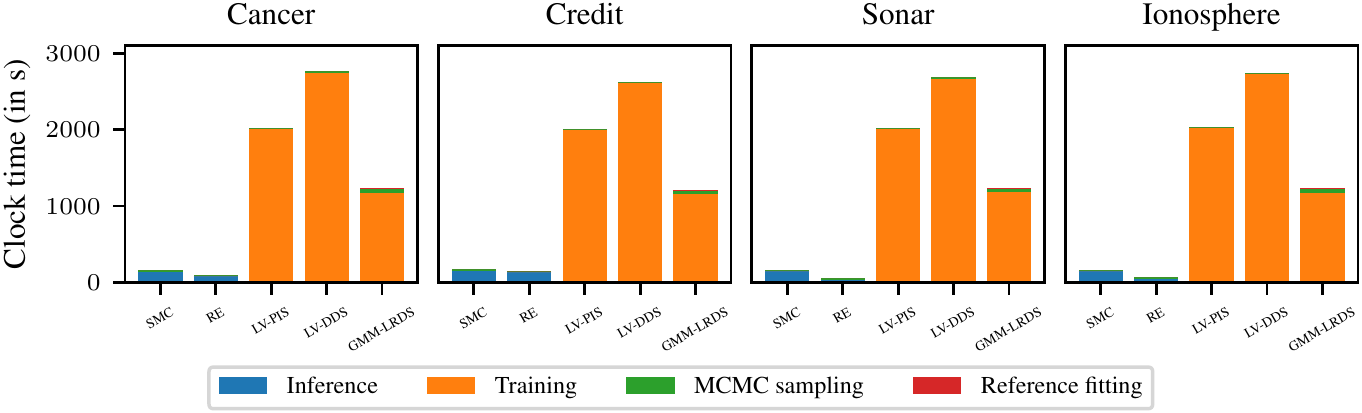}
    \caption{Execution clock-time for Bayesian posterior distributions obtained from logistic regression tasks 'Ionosphere', 'Sonar', 'German Credit' and 'Breast Cancer' (from left to right).}
    \label{fig:clock_time_bayesian}
    \vspace{-0.1cm}
\end{figure}

\end{document}